\colorlet{MyRed}{Crimson!75!black}
\colorlet{MyGreen}{DarkGreen!80!black}
\colorlet{MyBlue}{MediumBlue!80!black}
\pgfplotsset{compat=newest}
\def\cleartheorem#1{\expandafter\let\csname#1\endcsname\relax
    \expandafter\let\csname c@#1\endcsname\relax
}
\newtheorem{theorem}{Theorem}[section]
\newtheorem{lemma}[theorem]{Lemma} 
\newtheorem{proposition}[theorem]{Proposition} 
\newtheorem{corollary}[theorem]{Corollary}
\newtheorem{definition}[theorem]{Definition}
\newtheorem{remark}[theorem]{Remark}
\newtheorem{example}[theorem]{Example}
\theoremstyle{definition}
\newtheorem{assumption}{Assumption}		%
\newcounter{proofpart}
\numberwithin{example}{section}		%
\crefname{algo}{Algorithm}{Algorithms}
\crefname{assumption}{Assumption}{Assumptions}
\crefname{lemma}{Lemma}{Lemmas}
\newcommand{\debug}[1]{#1}		%
\newcommand{\newmacro}[2]{\newcommand{#1}{{\debug{#2}}}}		%
\newcommand{\newop}[2]{\DeclareMathOperator{#1}{\debug{#2}}}		%
\DeclarePairedDelimiter{\braces}{\{}{\}}		%
\DeclarePairedDelimiter{\bracks}{[}{]}		%
\DeclarePairedDelimiter{\parens}{(}{)}		%
\DeclarePairedDelimiter{\abs}{\lvert}{\rvert}		%
\DeclarePairedDelimiter{\pospart}{[}{]_{+}}		%
\DeclarePairedDelimiterX{\setdef}[2]{\{}{\}}{#1:#2}		%
\DeclarePairedDelimiterXPP{\exclude}[1]{\mathopen{}\setminus}{\{}{\}}{}{#1}
\newcommand{\R}{\mathbb{R}}		%
\DeclareMathOperator*{\argmax}{arg\,max}		%
\DeclareMathOperator*{\argmin}{arg\,min}		%
\DeclareMathOperator{\bd}{bd}		%
\DeclareMathOperator{\bigoh}{\mathcal{O}}		%
\DeclarePairedDelimiterXPP{\bigof}[1]{\mathcal{O}}{(}{)}{}{#1}		%
\DeclareMathOperator{\conv}{conv}		%
\DeclareMathOperator{\dist}{dist}		%
\DeclareMathOperator{\grad}{\nabla}		%
\DeclareMathOperator{\Rgrad}{grad}		%
\DeclareMathOperator{\RHess}{Hess}		%
\DeclareMathOperator{\Rexp}{exp}		%
\DeclareMathOperator{\one}{\mathds{1}}		%
\DeclareMathOperator{\relint}{ri}		%
\DeclareMathOperator{\supp}{supp}		%
\newcommand{\eg}{e.g.,\xspace}		%
\newcommand{\ie}{i.e.,\xspace}		%
\newcommand{\wrt}{w.r.t.\xspace}		%
\newcommand{\alt}[1]{#1'}		%
\newcommand{\altalt}[1]{#1''}		%
\newmacro{\dd}{\mathrm{d}}		%
\newcommand{\eps}{\varepsilon}		%
\newcommand{\sml}{\eta} %
\newmacro{\const}{c}		%
\newmacro{\Const}{C}		%
\NewDocumentCommand{\coef}{O{\lambda}}{\debug{#1}}
\newmacro{\param}{\theta}		%
\newmacro{\params}{\Theta}		%
\newmacro{\beforestart}{0}		%
\newmacro{\start}{1}		%
\newmacro{\afterstart}{2}		%
\newmacro{\running}{\start,\afterstart,\dotsc}		%
\newmacro{\halfrunning}{1,3/2,2\dotsc}		%
\newmacro{\run}{t}		%
\newmacro{\runalt}{s}		%
\newmacro{\runaltalt}{\tau}		%
\newmacro{\nRuns}{T}		%
\newmacro{\runs}{\mathcal{\nRuns}}		%
\newmacro{\state}{X}		%
\newmacro{\statealt}{Y}		%
\newmacro{\statealtalt}{Z}		%
\newcommand{\init}[1][\state]{\debug{#1}_{\start}}		%
\newcommand{\afterinit}[1][\state]{\debug{#1}_{\afterstart}}		%
\newmacro{\tstart}{0}		%
\renewcommand{\time}{\debug{t}}		%
\newmacro{\timealt}{s}		%
\newmacro{\horizon}{T}		%
\newmacro{\traj}{x}		%
\newmacro{\trajalt}{y}		%
\newmacro{\trajaltalt}{z}		%
\newmacro{\flowmap}{\Theta}		%
\DeclarePairedDelimiterXPP{\flowof}[2]{\flowmap_{#1}}{(}{)}{}{#2}		%
\newop{\Nash}{NE}		%
\newop{\CE}{CE}		%
\newop{\CCE}{CCE}		%
\newop{\NI}{NI}		%
\newop{\brep}{br}		%
\newop{\preg}{\overline{Reg}}		%
\newop{\val}{val}		%
\newmacro{\play}{i}		%
\newmacro{\playalt}{j}		%
\newmacro{\playaltlalt}{k}		%
\newmacro{\nPlayers}{N}		%
\newmacro{\players}{\mathcal{\nPlayers}}		%
\newmacro{\pure}{\alpha}		%
\newmacro{\purealt}{\beta}		%
\newmacro{\purealtalt}{\gamma}		%
\newmacro{\nPures}{A}		%
\newmacro{\pures}{\mathcal{\nPures}}		%
\newmacro{\loss}{\ell}		%
\newmacro{\pay}{u}		%
\newmacro{\payv}{v}		%
\newmacro{\pot}{f}		%
\newmacro{\game}{\mathcal{G}}		%
\newmacro{\gamefull}{\game(\players,\points,\pay)}		%
\newmacro{\fingame}{\Gamma}		%
\newmacro{\fingamefull}{\Gamma(\players,\pures,\pay)}		%
\newmacro{\gmat}{g}		%
\newmacro{\gdist}{\dist_{\gmat}}
\newmacro{\mfld}{M}		%
\newmacro{\form}{\omega}		%
\newmacro{\curve}{\gamma}
\newmacro{\tvec}{z}		%
\newmacro{\uvec}{u}		%
\newmacro{\ball}{\overline{\mathbb{B}}}		%
\newmacro{\openball}{\mathbb{B}}		%
\newmacro{\sphere}{\mathbb{S}}		%
\newmacro{\normalexp}{E}
\newmacro{\graph}{\mathcal{G}}
\newmacro{\vertices}{\mathcal{V}}
\newmacro{\edges}{\mathcal{E}}
\newmacro{\mat}{M}		%
\newmacro{\hmat}{H}		%
\newmacro{\ones}{\mathbf{1}}		%
\newmacro{\eye}{I}		%
\newmacro{\zer}{\mathbf{0}}		%
\DeclarePairedDelimiter{\norm}{\lVert}{\rVert}		%
\DeclarePairedDelimiterXPP{\dnorm}[1]{}{\lVert}{\rVert}{_{\ast}}{#1}		%
\DeclarePairedDelimiterXPP{\sqnorm}[1]{}{\lVert}{\rVert}{^2}{#1}		%
\DeclarePairedDelimiter{\absval}{|}{|}		%
\DeclarePairedDelimiterXPP{\onenorm}[1]{}{\lVert}{\rVert}{_{1}}{#1}		%
\DeclarePairedDelimiterXPP{\twonorm}[1]{}{\lVert}{\rVert}{_{2}}{#1}		%
\DeclarePairedDelimiterXPP{\supnorm}[1]{}{\lVert}{\rVert}{_{\infty}}{#1}		%
\DeclarePairedDelimiterX{\braket}[2]{\langle}{\rangle}{#1,#2}		%
\DeclarePairedDelimiterX{\inner}[1]{\langle}{\rangle}{#1}
\newmacro{\vecspace}{\mathcal{V}}		%
\newmacro{\subspace}{\mathcal{W}}		%
\newmacro{\coord}{i}		%
\newmacro{\coordalt}{j}		%
\newmacro{\coordaltalt}{k}		%
\newmacro{\nCoords}{d}		%
\newmacro{\dims}{\nCoords}		%
\newmacro{\dimsalt}{p}		%
\newmacro{\vdim}{\nCoords}		%
\newmacro{\pvec}{v}		%
\newmacro{\bvec}{e}		%
\newmacro{\bvecs}{\mathcal{E}}		%
\newmacro{\pspace}{\mathcal{X}}		%
\newmacro{\dspace}{\pspace^{\ast}}		%
\newmacro{\dvec}{u}		%
\newmacro{\dbvec}{\eps}		%
\newmacro{\dpoint}{y}		%
\newmacro{\dpointalt}{\alt\dpoint}		%
\newmacro{\dpointaltalt}{\altalt\dpoint}		%
\newmacro{\dpoints}{\mathcal{Y}}		%
\newmacro{\dstate}{Y}		%
\newmacro{\dbase}{w}		%
\newmacro{\dualvar}{\lambda}
\newmacro{\dualvaralt}{\mu}
\newmacro{\dualfunc}{\phi}
\newmacro{\dualfuncalt}{\psi_\radius}
\newmacro{\empdualfunc}{\phi_\nsamples}
\newmacro{\lbdualvar}{\lb{\dualvar}}
\newmacro{\ubdualvar}{\ub{\dualvar}}
\newmacro{\basedualvar}{\dualvar^*_0}
\newcommand{\apprx}[1]{\overline{#1}}
\newcommand{\ddfunc}{Y} %
\newcommand{\defeq}{\coloneqq}		%
\newcommand{\from}{\colon}		%
\newop{\Opt}{Opt}		%
\newop{\Sol}{Sol}		%
\newop{\gap}{Gap}		%
\newop{\orcl}{Or}		%
\newop{\primal}{(P)}
\newop{\dual}{(Q)}
\newmacro{\tfun}{g}		%
\newmacro{\obj}{f}		%
\newmacro{\objalt}{g}		%
\newmacro{\objaltalt}{h}		%
\newmacro{\sobj}{F}		%
\newmacro{\gvec}{g}		%
\newmacro{\oper}{A}		%
\newmacro{\vecfield}{v}		%
\newcommand{\sol}[1][\point]{{#1}^{\star}}		%
\newmacro{\vecsol}{\vecfield(\sol)}		%
\newmacro{\signal}{V}		%
\newmacro{\step}{\gamma}		%
\newmacro{\learn}{\eta}		%
\newmacro{\vbound}{G}		%
\newmacro{\lips}{Lip}		%
\newmacro{\lipcst}{L}
\newmacro{\bdcst}{M}
\newmacro{\bdcstalt}{\widetilde{F}}
\newmacro{\fbound}{\bdcstalt(0)}		%
\newmacro{\varbdcst}{K}
\newmacro{\lbcst}{a}
\newmacro{\ubcst}{b}
\newmacro{\strong}{\mu^*}		%
\newmacro{\strongOpt}{\mu}		%
\newmacro{\strongMap}{\nu}
\newmacro{\smooth}{L_2}
\newmacro{\hsmooth}{L_3}
\newmacro{\plainhsmooth}{L}
\newmacro{\tmplipcst}{L}
\newmacro{\tmpbdcst}{B}
\newop{\cone}{cone}
\newop{\tspace}{T}		%
\newop{\nspace}{N}		%
\newop{\tcone}{TC}		%
\newop{\dcone}{DC}		%
\newop{\ncone}{NC}		%
\newop{\regncone}{\widehat{NC}}		%
\newop{\pcone}{PC}		%
\newop{\hull}{\Delta}		%
\newmacro{\cvx}{\mathcal{C}}		%
\newmacro{\subd}{\partial}		%
\newmacro{\minmax}{\mathcal{L}}		%
\newmacro{\minvar}{\point_{1}}		%
\newmacro{\minvaralt}{\alt\minvar}		%
\newmacro{\minvars}{\points_{1}}		%
\newmacro{\minsol}{\sol[\minvar]}		%
\newmacro{\maxvar}{\point_{2}}		%
\newmacro{\maxvaaltr}{\alt\maxvar}		%
\newmacro{\maxvars}{\points_{2}}		%
\newmacro{\maxsol}{\sol[\maxvar]}		%
\newop{\Eucl}{\Pi}		%
\newop{\logit}{\Lambda}		%
\newop{\dkl}{KL}		%
\newmacro{\hreg}{h}		%
\newmacro{\hconj}{\hreg^{\ast}}		%
\newmacro{\breg}{D}		%
\newmacro{\mprox}{P}		%
\newmacro{\mirror}{Q}		%
\newmacro{\fench}{F}		%
\newmacro{\hstr}{K}		%
\newmacro{\depth}{H}		%
\newmacro{\proxdom}{\points_{\hreg}}		%
\newmacro{\zone}{\mathbb{D}}		%
\newmacro{\bregkernel}{\theta} %
\DeclarePairedDelimiterXPP{\proxof}[2]{\mprox_{#1}}{(}{)}{}{#2}		%
\newmacro{\bregexp}{\alpha}
\newmacro{\bregcst}{M}
\newmacro{\kernelcst}{C}
\newmacro{\kernelexp}{q}
\newmacro{\point}{x}		%
\newmacro{\pointalt}{\alt\point}		%
\newmacro{\pointaltalt}{\altalt\point}		%
\newmacro{\points}{\mathcal{K}}		%
\newmacro{\intpoints}{\relint\points}		%
\newmacro{\basealt}{q}		%
\newmacro{\basealtalt}{u}		%
\newmacro{\open}{\mathcal{U}}		%
\newmacro{\closed}{\mathcal{C}}		%
\newmacro{\cpt}{\mathcal{K}}		%
\newmacro{\nbd}{\mathcal{U}}		%
\newmacro{\nhd}{\nbd}		%
\newmacro{\nbdalt}{\mathcal{V}}		%
\newmacro{\nbdaltalt}{\mathcal{W}}		%
\newmacro{\mset}{A}
\NewDocumentCommand{\ex}{E{_}{\coupling} o}{\mathbb{E}_{#1}\IfValueT{#2}{\bracks*{#2}}}		%
\newop{\prob}{P}%
\newcommand{\dirac}[1]{\debug{\delta}_{#1}}
\newcommand{\probs}[1][\samples]{\debug{\mathcal{P}}(#1)}
\newop{\empirical}{\WAedit{\widehat{\prob}_{\nsamples}}}%
\newop{\emp}{\empirical}%
\newmacro{\empex}{\frac{1}{\nsamples}\sum_{\ind=1}^\nsamples}
\newop{\probalt}{Q}
\newop{\coupling}{\pi}
\newop{\couplingalt}{\alt\pi}
\NewDocumentCommand{\couplings}{e{_} O{\samples}}{\debug{\mathcal{P}}\IfValueT{#1}{_{#1}}(#2\times#2)}
\newop{\Var}{Var}		%
\newop{\simplex}{\hull}		%
\newop{\rad}{s}
\newcommand{\partition}[1][\sample,\sdev]{Z(#1)}
\newcommand{\ubpartition}[1][\sdev]{\overline{Z}(#1)}
\DeclarePairedDelimiterXPP{\exof}[1]{\ex}{[}{]}{}{%
 #1}
\DeclarePairedDelimiterXPP{\probof}[2]{#1}{(}{)}{}{%
 #2}
\DeclarePairedDelimiterXPP{\oneof}[1]{\one}{\{}{\}}{}{%
 #1}
\newmacro{\sample}{\xi}		%
\newmacro{\samplealt}{\altsample} %
\newmacro{\altsample}{\zeta}
\newmacro{\samplealtalt}{\alt\altsample}
\newmacro{\samples}{\Xi}		%
\newmacro{\optsample}{\xi^*}
\newmacro{\optoptsample}{\xi^{**}}
\newmacro{\nsamples}{n}
\newmacro{\Nsamples}{N}
\newmacro{\nsamplesalt}{m}
\newmacro{\target}{y}
\newmacro{\Point}{X}
\newmacro{\Target}{Y}
\newmacro{\Pointalt}{\alt\Point}
\newmacro{\Targetalt}{\alt\Target}
\newmacro{\targetalt}{\alt\target}
\newmacro{\targetcost}{\kappa}
\newmacro{\distance}{d}
\newmacro{\distfunc}{D}
\newmacro{\Hdist}{D_H}
\newmacro{\cost}{\distance^2} %
\newmacro{\Cost}{C}
\newmacro{\mincost}{\sol[c]}
\newmacro{\maxcost}{\sol[C]}
\NewDocumentCommand{\wass}{s e{^} O{} o m}{\debug{W}\IfValueT{#2}{^{{#2}}}\IfValueT{#4}{^{#4}}_{#3}\IfBooleanT{#1}{\left}(#5\IfBooleanT{#1}{\right})}
\newmacro{\filter}{\mathcal{F}}		%
\newmacro{\probspace}{(\samples,\filter,\prob)}		%
\newmacro{\radius}{\rho}
\newmacro{\radiusalt}{\alt\radius}
\newmacro{\minradius}{{\overline{\rho}^2_\nsamples}}
\newmacro{\cstminradius}{{\rho_{\mathrm{thres}}}}
\newcommand{\regcrit}[1][\reg, \sdev]{\radius_c^2(#1)}
\newmacro{\GeoRad}{R}
\newmacro{\OptGeoRad}{R^*}
\newmacro{\georad}{r}
\newmacro{\ballradius}{r}
\newmacro{\ballradiusalt}{r'}
\newmacro{\Margin}{\Delta}
\newmacro{\event}{E}       %
\newmacro{\eventalt}{H}       %
\newmacro{\mean}{\mu}		%
\newmacro{\sdev}{\sigma}		%
\newmacro{\variance}{\sdev^{2}}		%
\newmacro{\level}{\alpha}
\newmacro{\levelalt}{p}
\NewDocumentCommand{\Lspace}{E{^}{{1}} O{\prob}}{\debug{L^{#1}(#2)}}
\NewDocumentCommand{\Obj}{e{^} E{_}{\radius} D(){\obj, \prob}}{
    {F}\IfValueT{#1}{^{#1}}\IfValueT{#2}{_{#2}}(#3)
}
\NewDocumentCommand{\Objalt}{e{^} E{_}{\radius} D(){\obj}}{
    {G}\IfValueT{#1}{^{#1}}\IfValueT{#2}{_{#2}}(#3)
}
\newmacro{\ind}{i}
\newmacro{\indalt}{j}
\newmacro{\reg}{\varepsilon}
\newmacro{\regalt}{\tau}
\newmacro{\basereg}{\base[\reg]}
\newmacro{\basesdev}{\base[\sdev]}
\newmacro{\baseregalt}{\base[\reg]'}
\newmacro{\basesdevalt}{\base[\sdev]'}
\newmacro{\regparam}{\varepsilon}
\newmacro{\regparamalt}{\delta}
\newmacro{\regparamaltalt}{\tau}
\newmacro{\Regparam}{\Delta}
\newmacro{\rv}{X}
\newmacro{\altrv}{Y}
\newmacro{\discrprob}{p}
\newmacro{\discrprobalt}{q}
\NewDocumentCommand{\measures}{O{\samples^2} e{^}}{\mathcal{M}\IfValueT{#2}{^{#2}}(#1)}
\newcommand{\contfuncs}[1][\samples^2]{\mathcal{C}(#1)}
\newcommand{\base}[1][\coupling]{{#1}_0}
\newcommand{\modbase}[1][\func/\reg]{\coupling_\sdev^{#1}}
\newcommand{\condmodbase}[1][\func/\reg]{\coupling_\sdev^{#1}(\cdot | \sample)}
\newcommand{\basecpl}{\coupling_\sdev}
\newcommand{\empbasecpl}{\coupling^\nsamples_\sdev}
\newmacro{\scalar}{u}
\newmacro{\exponent}{p}
\newmacro{\pexp}{p}
\newmacro{\qexp}{q}
\newmacro{\volcst}{V}
\newmacro{\funcs}{\mathcal{F}}
\newcommand{\dudley}[1][\funcs, \norm{\cdot}_\infty]{\mathcal{I}(#1)}
\newmacro{\func}{\obj}
\newmacro{\funcalt}{\objalt}
\newmacro{\funcaltalt}{\objaltalt}
\newmacro{\proper}{\tau}		%
\newmacro{\error}{Z}		%
\newmacro{\bias}{b}		%
\newmacro{\brown}{W}		%
\newmacro{\serror}{\theta}		%
\newmacro{\snoise}{\xi}		%
\newmacro{\sbias}{\psi}		%
\newmacro{\sbound}{M}		%
\newmacro{\bbound}{B}		%
\newmacro{\noisepar}{\sdev}		%
\newmacro{\noisevar}{\variance}		%
\newcounter{cnstcnt}
\newcommand{\newcst}[1]{%
\refstepcounter{cnstcnt}%
\ensuremath{c_\thecnstcnt}%
\label{#1}
}
\newcommand{\cst}[1]{\ensuremath{c_{\ref{#1}}}}
\newcommand{\assumptionRadiusAsympt}{
    \radius \leq \bdRadiusAsympt
}
\newcommand{\bdRadiusAsympt}{
    \min\parens*{\frac{\init[\reg]}{\basereg}, \frac{\basedualvar}{32(\init[\dualvar] + \smooth)},
     \frac{\strong(\basedualvar)^2}{4096 \smooth},
         \sqrt{\frac{\cst{approx_phi_rate}^{{3}}\basedualvar}{8 \basereg}} \parens*{
            \log\parens*{\frac{4096 \basereg \cst{approx_phi_coeff}}{\strong (\basedualvar)^2}}
        }^{-\frac{3}{2}}_+ %
    }
    }
\newmacro{\uncertainty}{U}
\NewDocumentCommand{\risk}{ooo}{
    {\mathcal{R}}\IfValueT{#2}{^{#2}}\IfValueT{#1}{_{#1}}(\IfValueTF{#3}{#3}{\obj})
}
\NewDocumentCommand{\emprisk}{ooo}{
    \widehat{\mathcal{R}}\IfValueT{#2}{^{#2}}\IfValueT{#1}{_{#1}}(\IfValueTF{#3}{#3}{\obj})
}
\newcommand{\half}[1][1]{\frac{#1}{2}}
\newcommand{\third}[1][1]{\frac{#1}{3}}
\newcommand{\unfrac}[2]{{#1}/{#2}}
\newcommand{\inv}[1]{\frac{1}{#1}}
\newcommand{\lb}[1]{\underline{#1}}
\newcommand{\ub}[1]{\overline{#1}}
\newmacro{\fn}{f}		%
\newmacro{\fixmap}{F}		%
\newmacro{\silver}{\Phi}		%
\newmacro{\energy}{E}		%
\newmacro{\Lyap}{L}		%
\newmacro{\diff}{\alpha} 
\newmacro{\thres}{\delta}		%
\newmacro{\basin}{\mathcal{B}}		%
\newmacro{\inhd}{\init[\nbd]}
\newmacro{\seed}{\theta}		%
\newmacro{\seeds}{\Theta}		%
\newmacro{\pdist}{P}		%
\newmacro{\history}{\mathcal{H}}		%
\newacro{LHS}{left-hand side}
\newacro{RHS}{right-hand side}
\newacro{iid}[i.i.d.]{independent and identically distributed}
\newacro{lsc}[l.s.c.]{lower semi-continuous}
\newacro{usc}[u.s.c.]{upper semi-continuous}
\newacro{rv}[r.v.]{random variable}
\newacro{NE}{Nash equilibrium}
\newacro{ABP}{abstract Bregman proximal}
\newacro{BP}{Bregman proximal}
\newacro{DGF}{distance-generating function}
\newacro{EG}{extra-gradient}
\newacro{MP}{mirror-prox}
\newacro{MD}{mirror descent}
\newacro{OMD}{optimistic mirror descent}
\newacro{OMWU}{optimistic multiplicative weights update}
\newacro{PMP}{past mirror-prox}
\newacro{AMP}{abstract mirror-prox}
\newacro{MPT}{mirror-prox template}
\newacro{VI}{variational inequality}
\newacro{VIP}{variational inequality problem}
\newacro{KKT}{Karush\textendash Kuhn\textendash Tucker}
\newacro{FOS}{first-order stationary}
\newacro{SOO}{second-order optimality}
\newacro{SOS}{second-order sufficient}
\newacro{DGF}{distance-generating function}
\newacro{SFO}{stochastic first-order oracle}
\newacro{DRO}{distributionally robust optimization}
\newacro{WDRO}{Wasserstein distributionally robust optimization}
\newacro{MMD}{Maximum Mean Discrepancy}
\newacro{ML}{machine learning}
\newacro{SVM}{support vector machines}
\newacro{ERM}{empirical risk minimization}
\newacro{OT}{optimal transport}
\newacro{ELBO}{evidence lower bound}
\newacro{MCMC}{Monte Carlo Markov Chain}
\newacro{SAEM}{stochastic approximation expectation-maximization}
\newacro{AD}{automatic differentiation}
\newacro{OR}{operational research}
\newacro{PAC}{probably approximately correct}
\newacro{SA}{stochastic approximation}
\newacro{KL}{Kullback-Leibler}
\title{Exact Generalization Guarantees for (Regularized) Wasserstein Distributionally Robust Models}
\author{%
  Waïss Azizian,~~ Franck Iutzeler,~~  Jérôme Malick \\
  Univ. Grenoble Alpes, CNRS, Grenoble INP \\
  Grenoble, 38000, France \\
  \texttt{firstname.lastname@univ-grenoble-alpes.fr} 
}
\begin{document}

\maketitle

\begin{abstract}
Wasserstein distributionally robust estimators have emerged as powerful models for prediction and decision-making under uncertainty. These estimators provide attractive generalization guarantees: the robust objective obtained from the training distribution is an exact upper bound on the true risk with high probability. However, existing guarantees either suffer from the curse of dimensionality, are restricted to specific settings, or lead to spurious error terms. 
In this paper, we show that these generalization guarantees actually hold on general classes of models, do not suffer from the curse of dimensionality, and can even cover distribution shifts at testing. We also prove that these results carry over to the newly-introduced regularized versions of Wasserstein distributionally robust problems.%
\end{abstract}

\section{Introduction}
\label{sec:introduction}

\subsection{Generalization and (Wasserstein) Distributionally Robust Models}

We consider the fundamental question of generalization of machine learning models. Let us denote by $\obj_\param$ the loss induced by a model parametrized by $\param$ for some uncertain variable $\sample$ (typically a data point). 
When $\sample$ follows some distribution $\prob$, 
seeking the best parameter $\param$ writes as minimizing the expected loss
\begin{align}\label{eq:ideal}
    \min_{\param \in \params} ~\ex_{\sample \sim \prob}[\obj_\param(\sample)]\,.
\end{align}
We usually do not have a direct knowledge of $\prob$ but rather we have access to samples  $(\sample_\ind)_{\ind=1}^\nsamples$ independently drawn from $\prob$. 
The empirical risk minimization approach then consists in minimizing the expected loss over the associated empirical distribution $\empirical = \empex \dirac{\sample_\ind}$ (as a proxy for the expected loss over $\prob$), \ie
\begin{align}
  \label{eq:sto}
  \min_{\param \in \params} ~\ex_{\sample \sim \empirical}[\obj_\param(\sample)] ~ \left(= \frac{1}{\nsamples} \sum_{\ind=1}^\nsamples \obj_\param(\sample_\ind)\right) .
\end{align}
Classical statistical learning theory ensures that, with high probability, $\ex_{\prob}[\func_\param]$ is close to $\ex_{\empirical}[\func_\param]$ up to $\bigoh \parens*{{1}/{\sqrt \nsamples}}$ error terms, see \eg the monographs \citet{boucheronconcentrationinequalitiesnonasymptotic2013,wainwright2019high}.

A practical drawback of empirical risk minimization is that it can lead to over-confident decisions (when $\ex_{\empirical}[\func_\param] <  \ex_{\prob}[\func_\param]$, the real loss can be higher that the empirical one \citep{esfahani2018data}).
In addition, this approach is also sensitive to distribution shifts between training and application.
To overcome these drawbacks, an approach gaining momentum in machine learning is \emph{distributionally robust} optimization, which consists in minimizing the \emph{worst expectation} of the loss when the distribution lives in a {neighborhood} %
of $\empirical$:
\begin{align}
    \min_{\param \in \params} ~ \sup_{\probalt \in \nhd(\empirical)}\ex_{\sample \sim \probalt}[\obj_\param(\sample)]\,,
    \label{eq:intro_dro}
\end{align}
where the inner $\sup$ is thus taken over %
$\probalt$
in the {neighborhood} $\nhd(\empirical)$ 
of $\empirical$ in the space of probability distributions.
Popular choices of distribution neighborhoods are based on the \ac{KL} divergence  \citep{laguel2020first,levy2020large}, kernel tools \citep{zhu2021kernel,staib2019distributionally,zhu2021adversarially}, moments \citep{delageDistributionallyRobustOptimization2010,gohDistributionallyRobustOptimization2010}, or Wasserstein distance \citep{shafieezadehabadehdistributionallyrobustlogistic2015,esfahani2018data}. 
If $\prob \in \nhd(\empirical)$, distributionally robust models can benefit from direct generalization guarantees as, %
\begin{align}
    \sup_{\probalt \in \nhd(\empirical)}\ex_{\sample \sim \probalt}[\obj_\param(\sample)] ~~  \geq ~~ \ex_{\sample \sim \prob}[\obj_\param(\sample)]  .
    \label{eq:conc}
\end{align}
Thus, for well-chosen neighborhoods $\nhd(\empirical)$, distributionally robust objectives are able to provide \emph{exact upper-bounds} on the expected loss over distribution $\prob$, \ie the true risk.

\ac{WDRO} problems correspond to \eqref{eq:intro_dro}  with 
\begin{align}\label{eq:nhd}
    \nhd(\empirical) = \left\{\probalt \in \probs: \wass{\empirical, \probalt} \leq \radius\right\} \, ,
\end{align}
where $\wass{\empirical, \probalt}$ denotes the  Wasserstein distance between $\empirical$ and $\probalt$  and $\radius > 0$ controls the required level of robustness around $\empirical$.  
As a natural metric to compare discrete and absolutely continuous probability distributions, the Wasserstein distance has attracted a lot of interest in both machine learning \citep{shafieezadehabadehdistributionallyrobustlogistic2015,sinha2018certifying,shafieezadehabadehregularizationmasstransportation2019,lifastepigraphicalprojectionbased2020,kwon2020principled} and operation research \citep{zhao2018data,arrigo2022wasserstein} communities; see \eg the review articles \cite{blanchet2021statistical,kuhn2019wasserstein}.

\ac{WDRO} benefits from out-of-the-box generalization guarantees in the form of \eqref{eq:conc} since it inherits the concentration properties of the Wasserstein distance \WAedit{\citep{esfahani2018data}}.
More precisely,  under mild assumptions on $\prob$, \citep{fournier2015rate} \WAedit{establishes that} $\wass{\empirical, \prob} \leq \radius$ with high probability as soon as $\radius \sim 1/\nsamples^{1/\dims}$ where $\dims$ denotes the dimension of the samples space. 
Thus, a major issue is the prescribed radius $\radius$ suffers from the curse of the dimensionality:  when $\dims$ is large, $\radius$ decreases slowly as the number of samples $\nsamples$ increases.
\WAedit{This constrasts with other \acl{DRO} ambiguity sets, such as \ac{MMD} \citep{staib2019distributionally,zeng2022generalization}, where the radius scales as $1/\sqrt \nsamples$.}
\WAedit{Moreover, the existing scaling for \ac{WDRO} is overly conservative for \ac{WDRO} objectives since recent works} \citep{blanchet2022confidence,blanchet2023statistical} prove that a radius behaving as $1/\sqrt \nsamples$ is asymptotically optimal.
\WAedit{
    The main difference with \citep{esfahani2018data} is that they --- and us --- consider the \ac{WDRO} objective as a whole, instead of proceeding in two steps: first considering the Wasserstein distance independently and invoking concentration results on the Wasserstein distance and then plugging this result in the \ac{WDRO} problem.
}

\subsection{Contributions and related works} %

In this paper, we show that \ac{WDRO} provides exact upper-bounds on the true risk with high probability. %
More precisely, we prove non-asymptotic generalization bounds of the form of \cref{eq:conc}, that hold for general classes of functions, and that only require $\radius$ to scale as $1 / \sqrt \nsamples$ and not $1 / \nsamples^{1/\dims}$.
To do so, we construct an interval for the radius $\radius$ for which it is both sufficiently large so that we can go from the empirical to the true estimator (\ie at least of the order of $1 / \sqrt \nsamples$) and  sufficiently small so that the robust problem does not become degenerate (\ie smaller than some critical radius, that we introduce as an explicit constant).
Our results imply proving concentration results on Wasserstein Distributionally Robust objectives that are of independent interest.

This work is part of a rich and recent line of research about theoretical guarantees on \ac{WDRO} for machine learning. One of this first results, \citet{lee2018minimax}, provides generalization guarantees, for a general class of models and a fixed $\radius$, that, however, become degenerate as the radius goes to zero. 
In the particular case of linear models, \ac{WDRO} models admit an explicit form 
that allows
\citet{shafieezadehabadehregularizationmasstransportation2019, chen2018robust} to provide generalization guarantees \cref{eq:conc} \WAedit{with the radius scaling as} $1 / \sqrt \nsamples$. %
The case of general classes of models\WAedit{, possibly non-linear,} is more intricate. \citet{sinha2018certifying} showed that a modified version of \cref{eq:conc} holds at the price of non-negligible error terms. \cite{gao2022finite,an2021generalization} made another step towards broad generalization guarantees for \ac{WDRO} but with error terms that vanish only when $\radius$ goes to zero.

In contrast, our analysis provides exact generalization guarantees \WAedit{in the form \cref{eq:conc}} without additional error terms, that hold for general classes of functions
and allow for a non-vanishing uncertainty radius to cover for distribution shifts at testing.
Moreover, our guarantees also carry over to the recently introduced regularized versions of \ac{WDRO} \citep{wang2021sinkhorn,azizian2022regularization}, whose statistical properties have not been studied yet.

This paper is organized as follows.
In \cref{sec:setup}, we introduce notations and our blanket assumptions.
In \cref{sec:main}, we present our main results, an idea of proof, and discussions. The complete proofs are deferred to the appendix.

\section{Setup and Assumptions}
\label{sec:setup}

In this section, we formalize our setting and introduce Wasserstein Distributionally Robust risks.

\subsection{Wasserstein Distributionally Robust risk functions} \label{sec:risk}

In this paper, we consider as a samples space $\samples$ a subset of $\R^\dims$ equipped with the Euclidean norm $\norm{\cdot}$. We rely on Wasserstein distances of order $2$, in line with the seminal work \citet{blanchet2022confidence} on generalization of \ac{WDRO}. This distance is defined for two distributions $\probalt, \alt\probalt$ in the set of probability distributions on $\samples$, denoted by  $\probs$,  as
\begin{align}
    \wass[2][]{\probalt, \alt\probalt} \defeq \left( \inf_{ \coupling \in \couplings, \coupling_1 = \probalt, \coupling_2 = \alt\probalt } \ex_{(\sample,\samplealt)\sim\coupling} \left[ \frac{1}{2} \| \sample-\samplealt \|^2  \right] \right)^{1/2}\,,
\end{align}
where $\couplings$ is the set of probability distributions in the product space $\samples\times\samples$, and $ \coupling_1$ (resp. $ \coupling_2$) denotes the first (resp. second) marginal of $\coupling$.

We denote by $\obj: \samples \to \R$ the loss function of some model over the sample space.
The model may depend on some parameter $\param$, that we drop for now to lighten the notations; instead, we consider a class of functions $\funcs$ encompassing our various models and losses of interest  (we come back to classes of parametric models of the form $ \funcs =  \{ \obj_\param : \param\in\params \}$  in \cref{sec:examples}).

We define the empirical Wasserstein Distributionally Robust risk $\emprisk[\radius^2]$ centered on $\empirical$ and similarly the true robust risk $\risk[\radius^2]$ centered on $\prob$ as
\begin{align}\label{eq:emprisk-risk}
    \emprisk[\radius^2] \defeq  \sup_{\substack{\probalt \in \probs \\ \wass[2][2]{\empirical, \probalt} \leq \radius^2}}\ex_{\sample \sim \probalt}\left[ \obj(\sample) \right] \qquad \text{and} \qquad \risk[\radius^2] \defeq  \sup_{\substack{\probalt \in \probs \\ \wass[2][2]{\prob, \probalt} \leq \radius^2 }}\ex_{\sample \sim \probalt}\left[ \obj(\sample) \right]\,.
\end{align}
Note that $\emprisk[\radius^2] $, which is based on the empirical distribution $\empirical$, is a computable proxy for the true robust risk $\risk[\radius^2]$. 
Note also that the true robust risk $\risk[\radius^2]$ immediately upper-bounds the true (non-robust) risk  $\ex_{\sample \sim \prob} \left[ \obj(\sample) \right]$ and also upper-bounds $\ex_{\sample \sim \probalt} \left[ \obj(\sample) \right]$ for neighboring distributions $\probalt$ that correspond to distributions shifts of magnitude smaller than $\radius$ in Wasserstein distance.

\subsection{Regularized versions} \label{sec:reg}

Entropic regularization of \ac{WDRO} problems was recently \WAedit{studied} in \cite{wang2021sinkhorn,blanchet2020semi,piat2022regularized,azizian2022regularization} \WAedit{and used in \citet{dapogny2023entropy,song2023provably,wang2022data,wang2022improving}}. Inspired by the entropic regularization in \ac{OT} \citep[Chap.~4]{peyre2019computational}, the idea is to regularize the objective by adding a \ac{KL} divergence, that is defined, for any transport plan $\coupling \in \probs[\samples \times \samples]$ and a fixed reference $\overline{\coupling} \in \probs[\samples \times \samples]$, by
\begin{equation}
    \dkl(\coupling |\overline{\coupling}) = 
    \begin{cases}
    \int \log\frac{\dd\coupling}{\dd\overline{\coupling}} \,\dd\coupling
    &\text{when}
    \coupling \ll \overline{\coupling}\\
        +\infty &\text{otherwise}.
    \end{cases}
\end{equation}
Unlike in \ac{OT} though, the choice of the reference measure in \ac{WDRO} is not neutral and introduces a bias in the robust objective \citep{azizian2022regularization}.
For their theoretical convenience, %
we take reference measures that have Gaussian conditional distributions
\begin{align}
    \label{eq:baseconditional}
\basecpl(\dd\samplealt | \sample) \propto \one_{\altsample \in \samples} e^{-\frac{ \sqnorm{\sample- \altsample}}{2\sdev^2}} \dd\samplealt, \qquad \text{for all }\sample \in \samples\,,
\end{align}
where $\sdev > 0$ controls the spread of the second marginals,
following \cite{wang2021sinkhorn,azizian2022regularization}.
Then, the regularized version of $\emprisk[\radius^2]$ (\ac{WDRO} empirical risk) is given by
\begin{align}
    \label{eq:emp-reg-risk}
\emprisk[\radius^2][\reg] &\defeq \hspace*{-1em} \sup_{\substack{ \coupling \in \couplings, \coupling_1 = \empirical \\ \ex_{(\sample,\samplealt)\sim\coupling} \left[ \frac{1}{2} \| \sample-\samplealt \|^2  \right] \leq \radius^2}}\hspace*{-1em} \ex_{\sample \sim \coupling_2}\left[ \obj(\sample) \right] - \reg \dkl \left(\coupling  \left| \empbasecpl \right. \right) \qquad\text{with }\empbasecpl = \empirical(\dd\sample) \basecpl(\dd\samplealt | \sample)
\end{align}
and similarly, the regularized version of $\risk[\radius^2]$ is given by
\begin{align}
    \label{eq:true-reg-risk}
\risk[\radius^2][\reg] &\defeq \hspace*{-1em} \sup_{\substack{ \coupling \in \couplings, \coupling_1 = \prob \\ \ex_{(\sample,\samplealt)\sim\coupling} \left[ \frac{1}{2} \| \sample-\samplealt \|^2  \right] \leq \radius^2}}\hspace*{-1em} \ex_{\sample \sim \coupling_2}\left[ \obj(\sample) \right] - \reg \dkl \left(\coupling \left| \basecpl  \right. \right)
\qquad\text{with }\basecpl = \prob(\dd\sample) \basecpl(\dd\samplealt | \sample)
.
\end{align}

These regularized risks have been studied in terms of computational or approximation properties, but their statistical properties have not been investigated yet. The analysis we develop for \ac{WDRO} estimators is general enough to carry over to these settings.

\WAedit{
    In \cref{eq:emp-reg-risk,eq:true-reg-risk}, note finally that the regularization is added as a penalization in the supremum, rather than in the constraint. As in \citet{wang2021sinkhorn}, penalizing in the constraint leads to an ambiguity set defined by the regularized Wasserstein distance, that we introduce in \cref{sec:main-reg}. We refer to \citet{azizian2022regularization} for a unified presentation of the two penalizations.
}

\subsection{Blanket assumptions} \label{sec:asm}

Our analysis is carried under the following set of assumptions that will be in place throughout the paper.
First, we assume that the sample space $\samples \subset \R^\dims$ is convex and compact, which is in line with previous work, \eg \citep{lee2018minimax,an2021generalization}.
\vspace*{1em}
\begin{assumption}[On the set $\samples$]\label{assumption:set}
    The sample space $\samples$ is a compact convex subset of $\R^\dims$.
\end{assumption}
Second, we require the class of loss functions $\funcs$ to be sufficiently regular. In particular, we assume that they have Lipschitz continuous gradients.
\vspace*{1em}
\begin{assumption}[On the function class]\label{assumption:funcs-simple}
    The functions of $\funcs$ are twice differentiable, uniformly bounded, and their derivatives are uniformly bounded and uniformly Lipschitz.
\end{assumption}
Finally, we assume that $\empirical$ is made of \ac{iid} samples of $\prob$ and that $\prob$ is supported on the interior of $\samples$ (which can be done without loss of generality by slightly enlarging $\samples$ if needed).
\vspace*{1em}
\begin{assumption}[On the distributions]\label{assumption:support_interior-simple}\label{assumption:iid}\label{assumption:support_interior}
   $\empirical = \frac{1}{\nsamples}\sum_{i = 1}^\nsamples \dirac{\sample_i}$ where $\sample_1,\dots,\sample_n$ are \ac{iid} samples of $\prob$.
    We further assume that there is some $\GeoRad > 0$ such that $\prob$ satisfies $\supp \prob + \ball(0, \GeoRad) \subset \samples$.
\end{assumption}

\section{Main results and discussions} \label{sec:main}

The main results of our paper establish that the empirical robust risk provide high probability bounds, of the form of \cref{eq:conc}, on the true risk. %
Since the results and assumptions slightly differ between the \ac{WDRO} models and their regularized counterparts, we present them separately in \cref{sec:main-unreg} and \cref{sec:main-reg}.
In \cref{sec:outline}, we provide the common outline for the proofs of these results, the proofs themselves being provided in the appendix. Finally, in \cref{sec:examples}, we detail some examples.

\subsection{Exact generalization guarantees for \ac{WDRO} models}\label{sec:main-unreg}

In this section, we require the two following additional assumptions on the function class.
The first assumption is common in the \ac{WDRO} litterature, see \eg \cite{blanchet2022confidence,blanchet2023statistical,gao2022finite,an2021generalization}.
\begin{assumption}%
    \label{assumption:gradient-funcs}
    The quantity $\inf_{\func \in \funcs} \ex_{\prob}[\norm{\grad \func}^2]$ is positive. 
\end{assumption}
The second assumption we consider in this section makes use of the notation $\distance(\sample, A)$, for a set $A \subset \samples$ and a point $\sample \in \samples$, to denote the distance between $\sample$ and $A$, \ie $\distance(\sample, A) = \inf_{\samplealt \in A} \norm{\sample - \samplealt}$.
\begin{assumption}
\label{assumption:add-geometric-simple}\label{assumption:add-geometric}
\leavevmode
    \begin{enumerate}
    \item For any $\GeoRad > 0$, there exists $\Margin > 0$ such that,\label{item:criterion-Hausdorff-compactness}
        \begin{align}
            \forall \func \in \funcs,\, \forall \samplealt \in \samples, \quad
            \distance(\samplealt, \argmax \func) \geq \GeoRad \implies \func(\samplealt) - \max \func \leq - \Margin\,.
        \end{align}
    \item The following growth condition holds: there exist $\strongOpt > 0$ and $\plainhsmooth > 0$ such that, for all $\func \in \funcs$, $\sample \in \samples$ and $\optsample$ a projection of $\sample$ on $\argmax \func$, \ie $\optsample \in \argmin_{\argmax \func} \norm{\sample - \cdot}$,
        \label{item:criterion-quadratic-growth}
        \begin{align}
            \func(\optsample) \geq \func(\sample) + \half[\strongOpt] \norm{\sample - \optsample}^2 - \frac{\plainhsmooth}{6} \norm{\sample - \optsample}^3\,.
        \end{align}
    \end{enumerate}
\end{assumption}

The first item of this assumption has a natural interpretation: we show in \cref{lemma:Hausdorff-compactness}, that it is equivalent to the relative compactness of the function space $\funcs$ \wrt to the distance
\begin{equation}
        \distfunc(\func, \funcalt) \defeq \norm{\func - \funcalt}_{\infty} + \Hdist(\argmax \func, \argmax \funcalt)\,,
\end{equation}
where $\Hdist$ denotes the (Hausdorff) distance between sets and $\norm{\func}_\infty \defeq \sup_{\sample \in \samples} \abs{\func(\sample)}$ is the infinity norm. %
The last one is a structural assumption on the functions $\funcs$ that is new in our context but is actually very close the so-called parametric Morse-Bott condition, introduced in of bilevel optimization \citep{arbelNonConvex}, see \cref{sec:morse-bott}.

We now state our main generalization result for \ac{WDRO} risks.

\begin{theorem}\label{thm:informal-unreg}
    Under \cref{assumption:gradient-funcs,assumption:add-geometric-simple}, there is an explicit constant $\radius_c$ depending only on $\funcs$ and $\prob$ such that for any $\thres\in(0,1)$ and $\nsamples\geq 1$, if
    \begin{equation}\label{eq:thm-concentration-wdro-unreg-radius}
        \bigoh \parens*{\sqrt{\frac{1 + {\log 1/\thres}}{ \nsamples}}} \leq \radius \leq \half[\radius_c] - \bigoh \parens*{\sqrt{\frac{1 + \log 1/\thres}{\nsamples}}} 
    \end{equation}
  then, there is $\radius_\nsamples = \bigoh \parens*{\sqrt{\frac{1 + {\log 1/\thres}}{ \nsamples}}} $ such that, with probability $1 - \thres$, 
    \begin{align}\label{eq:thm-concentration-wdro-unreg-rob}
        \forall \obj\in\funcs, \quad    \emprisk[\radius^2] \geq \ex_{\sample \sim \probalt} \left[ \obj(\sample) \right] \qquad
        \text{for all }  \probalt \text{such that $\wass[2][2]{\prob, \probalt} \leq \radius(\radius -  \radius_\nsamples)$}\,.
    \end{align}
    In particular, with probability $1-\thres$, we have 
    \begin{align}\label{eq:thm-concentration-wdro-unreg-gen}
    \forall \obj\in\funcs, \quad    \emprisk[\radius^2] \geq \ex_{\sample \sim \prob} \left[ \obj(\sample) \right] .
    \end{align}
\end{theorem}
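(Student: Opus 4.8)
The core idea is the two-step comparison announced in the introduction: we want to show that, with high probability, the empirical robust risk $\emprisk[\radius^2]$ dominates $\risk[\radius(\radius-\radius_\nsamples)]$, i.e.\ a robust risk centered at the \emph{true} distribution $\prob$ but with a slightly shrunken radius. Once this is established, \cref{eq:thm-concentration-wdro-unreg-rob} follows immediately from the definition of $\risk$ in \cref{eq:emprisk-risk} (any $\probalt$ with $\wass[2][2]{\prob,\probalt}\le\radius(\radius-\radius_\nsamples)$ is feasible for that supremum), and \cref{eq:thm-concentration-wdro-unreg-gen} follows by taking $\probalt=\prob$, since $\prob$ is trivially feasible. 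So the entire content is the high-probability inequality $\emprisk[\radius^2]\ge\risk[\radius(\radius-\radius_\nsamples)]$, uniformly over $\funcs$.

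\textbf{Dual reformulation and the critical radius.} I would first pass to the dual of the inner supremum. For a fixed $\obj$, strong duality for the Wasserstein-penalized problem (Kantorovich-type duality with a single moment constraint) gives $\emprisk[\radius^2]=\inf_{\dualvar\ge0}\{\dualvar\radius^2+\ex_{\sample\sim\empirical}[\sup_{\samplealt\in\samples}(\obj(\samplealt)-\dualvar\tfrac12\norm{\sample-\samplealt}^2)]\}$, and similarly for $\risk$ with $\empirical$ replaced by $\prob$. The key structural point — and the role of \cref{assumption:add-geometric} together with \cref{assumption:gradient-funcs} — is that for $\radius$ below the explicit critical threshold $\radius_c$, the optimal dual multiplier $\dualvar^\star$ is bounded away from $0$ and from $\infty$ uniformly over $\funcs$ (this is where the quadratic-growth/Morse--Bott condition in item~2 controls the Hessian behavior near $\argmax\obj$, and item~1 plus $\GeoRad$ from \cref{assumption:support_interior} keep the $\sup$ over $\samples$ localized near $\argmax\obj$, away from the boundary). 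With $\dualvar^\star$ in a compact interval, the inner function $\sample\mapsto\sup_{\samplealt}(\obj(\samplealt)-\dualvar\tfrac12\norm{\sample-\samplealt}^2)$ becomes, in a neighborhood of $\supp\prob$, itself smooth with controlled Lipschitz constant, uniformly over $\obj\in\funcs$ and $\dualvar$ near $\dualvar^\star$.

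\textbf{Concentration step.} The difference $\risk[\radius^2]-\emprisk[\radius^2]$ is then, up to lower-order terms, controlled by $\sup_{\obj\in\funcs,\,\dualvar}\,|\ex_{\empirical}[\Phi_{\obj,\dualvar}]-\ex_{\prob}[\Phi_{\obj,\dualvar}]|$ where $\Phi_{\obj,\dualvar}(\sample)=\sup_{\samplealt\in\samples}(\obj(\samplealt)-\dualvar\tfrac12\norm{\sample-\samplealt}^2)$. Since $\{\Phi_{\obj,\dualvar}\}$ is a uniformly bounded, uniformly Lipschitz family indexed by a relatively compact set (\cref{lemma:Hausdorff-compactness} provides the compactness of $\funcs$ under $\distfunc$, and $\dualvar$ ranges in a compact interval), its Rademacher complexity / Dudley entropy integral is $\bigoh(1/\sqrt\nsamples)$ with no dimension in the exponent, and a bounded-differences / McDiarmid argument upgrades this to a high-probability bound with the $\sqrt{(1+\log1/\thres)/\nsamples}$ tail — this produces $\radius_\nsamples$. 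The small radius perturbation from $\radius^2$ to $\radius(\radius-\radius_\nsamples)$ comes from converting an additive $\bigoh(1/\sqrt\nsamples)$ gap in the objective into a multiplicative shrink of the radius, using the Lipschitz dependence of $\risk[\cdot^2]$ on its radius (again via the bounded $\dualvar^\star$). The lower bound on $\radius$ in \cref{eq:thm-concentration-wdro-unreg-radius} is exactly what guarantees $\radius-\radius_\nsamples>0$ and that the shrunken-radius true risk is still meaningful.

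\textbf{Main obstacle.} The delicate part is \emph{not} the concentration inequality — that is fairly standard empirical-process machinery once the right function class is identified — but establishing the uniform two-sided bound on the optimal dual multiplier $\dualvar^\star$ and, correspondingly, pinning down the explicit constant $\radius_c$. One must show that for $\radius<\radius_c$ the worst-case transport plan moves mass only a little, keeps it inside $\samples$ (using $\GeoRad$), and that the resulting value function inherits smoothness uniformly over the whole class $\funcs$ — this is where the full strength of \cref{assumption:add-geometric} (both the separation property and the cubic-corrected quadratic lower bound) is consumed, and where the interplay with \cref{assumption:gradient-funcs} (ensuring the robust problem is genuinely active, so $\dualvar^\star$ does not collapse to $0$) matters. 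I would expect the appendix proof to devote most of its length to this quantitative dual-stability analysis, with the statistical step being comparatively short.
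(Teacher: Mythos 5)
Your outline follows the same route as the paper: dualize both risks, bound the optimal dual multiplier of the empirical problem over a compact interval $[\lbdualvar,\ubdualvar]$ for $\radius$ below the critical radius, concentrate the dual generator uniformly over $\funcs\times[\lbdualvar,\ubdualvar]$ via a Dudley-entropy plus bounded-differences argument, and absorb the resulting error into a shrunken radius. Your identification of where each assumption is consumed (item~1 of the growth assumption for localization, item~2 for the quadratic control near $\argmax\obj$, the gradient assumption to keep the constraint active) is also accurate.

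One quantitative point in your sketch would, as written, fail to deliver the theorem over the full stated range of $\radius$. You assert that $\dualvar^\star$ is ``bounded away from $0$ and from $\infty$ uniformly''; if $\lbdualvar$ were merely a positive constant, the concentration error in the dual objective would be an additive $\bigoh\parens{1/(\lbdualvar\sqrt\nsamples)}=\bigoh\parens{1/\sqrt\nsamples}$ perturbation of $\radius^2$, which is only reabsorbable into a positive radius when $\radius\gtrsim\nsamples^{-1/4}$ --- not the claimed $\nsamples^{-1/2}$. The paper's proof needs the sharper bound $\lbdualvar\gtrsim1/\radius$, obtained in the small-radius regime through a Laplace/Taylor approximation of the dual generator and strong convexity of the approximate dual objective (and, separately, a near-critical-radius analysis where $\lbdualvar\propto(\radius_c^2-\radius^2)$); this $1/\radius$ scaling is exactly what turns the error into $\radius\cdot\bigoh\parens{1/\sqrt\nsamples}$ and hence the form $\radius(\radius-\radius_\nsamples)$. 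Relatedly, the multiplier bound is needed for the \emph{empirical} (random) dual problem, so establishing it with high probability requires its own concentration step comparing empirical and population dual objectives at two values of $\dualvar$ --- it is not a purely deterministic ``dual-stability'' fact about $\prob$. With these two refinements your strategy is the paper's.
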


The second part of the result, \cref{eq:thm-concentration-wdro-unreg-gen}, is an \emph{exact} generalization bound: it is an actual upper-bound on the true risk $\ex_{\sample \sim \prob}[\func(\sample)]$, that we cannot access in general, through a quantity that we can actually compute with $\empirical$. %
The first part of the result, \cref{eq:thm-concentration-wdro-unreg-rob} gives us insight into the robustness guarantees offered by the \ac{WDRO} risk. %
Indeed, it tells us that, when $\radius$ is greater than the minimal radius $\radius_\nsamples \propto 1/\sqrt \nsamples$ by some margin, the empirical robust risk $\emprisk[\radius^2]$ is an upper-bound on the loss even with some perturbations of the true distribution. %
Hence, as long as $\radius$ is large enough, the \ac{WDRO} objective enables us to guarantee the performance of our model even in the event of a distribution shift at testing time. 
In other words, the empirical robust risk  is an \emph{exact} upper-bound on the true robust risk $\risk[\radius(\radius - \radius_\nsamples)]$ with a reduced radius. %

The range of admissible radiuses is described by \cref{eq:thm-concentration-wdro-unreg-radius}. The lower-bound, roughly proportional to $1/\sqrt \nsamples$, is optimal, following the results of \citet{blanchet2022confidence}. The upper-bound, almost independent of $\nsamples$, depends on a constant $\radius_c$, that we call \emph{critical radius} and that has an interesting interpretation, that we formalize in the following remark.
Note, finally, that, the big-O notation in this theorem has a slightly stronger meaning\footnote{Eg.,~$\radius_\nsamples\!=\!\bigoh \parens*{\!\!\sqrt{\frac{1 + {\log 1/\thres}}{ \nsamples}}}$ means that $\exists C > 0$ such that $\radius_\nsamples \leq  C {\sqrt{\frac{1 + {\log 1/\thres}}{ \nsamples}}}$  for all $\thres \in (0, 1)$ and $\nsamples \geq 1$.}
than the usual one, being non-asymptotic in $\nsamples$ and $\thres$.
\begin{remark}[Interpretation of critical radius]\label{remark:radius-c}
    The critical radius $\radius_c$, appearing in \cref{eq:thm-concentration-wdro-unreg-radius}, is defined~by
    \begin{equation}\label{eq:radius-c}
    \radius_c^2 \defeq \inf_{\func \in \funcs} \ex_{\sample \sim \prob}[\half \distance^2(\sample, \argmax \func)]\, .
\end{equation}
It can be interpreted as the threshold at which the \ac{WDRO} problem \wrt $\prob$ starts becoming degenerate. Indeed, when $\radius^2 \geq \ex_{\sample \sim \prob}[\half \distance^2(\sample, \argmax \func)]$ for some $\func \in \funcs$ that we fix,
 the distribution $\probalt$ given by the second marginal of the transport plan $\coupling$ defined by,
\begin{equation}
   \coupling(\dd \sample, \dd \samplealt) \defeq \prob(\dd \sample) \dirac{\sol[\samplealt](\sample)}(\dd \samplealt)
    \quad\text{ where }\quad
    \sol[\samplealt](\sample) \in \argmin_{\samplealt \in \argmax \func} \cost(\sample, \samplealt)\,,
\end{equation}
satisfies
\begin{equation}
    \wass[2][2]{\prob, \probalt} \leq \ex_{(\sample, \samplealt) \sim \coupling}[\half \sqnorm{\sample - \samplealt}] = \ex_{\sample \sim \prob}[\half \distance^2(\sample, \argmax \func)] \leq \rho^2\,.
\end{equation}
As a consequence, the robust problem is equal to
\begin{equation}
\risk[\radius^2] =  \sup_{\substack{\probalt \in \probs \\ \wass[2][2]{\prob, \probalt} \leq \radius^2 }}\ex_{\sample \sim \probalt}\left[ \obj(\sample) \right]
= \max_{\sample\in\samples} \func(\sample)\,.
\end{equation}
Thus, when the radius exceeds $\radius_c$, there is some $\func$ such that the robust problem becomes degenerate as it does not depend on $\prob$ nor $\radius$ anymore.
\end{remark}
Finally, note that we can obtain the same generalization guarantee as \cref{thm:informal-unreg} without \cref{assumption:add-geometric-simple} at the expense of losing the above interpration on the condition on the radius. More precisely, we have the following result. %
\begin{theorem}\label{thm:informal-unreg-weak}
    Let \cref{assumption:gradient-funcs} hold.
    For any $\thres\in(0,1)$ and $\nsamples\geq 1$, if $\radius$ satisfies \cref{eq:thm-concentration-wdro-unreg-radius}, and if, in addition, it is smaller than a positive constant which depends only on $\prob$, $\funcs$ and $\samples$, then both conclusions of \cref{thm:informal-unreg} hold.
\end{theorem}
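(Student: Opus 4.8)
The plan is to recycle the argument behind \cref{thm:informal-unreg} and to replace the single step where \cref{assumption:add-geometric} was used by a soft-analysis argument. Recall the common scheme of \cref{sec:outline}: by strong duality for \ac{WDRO} the risks rewrite as $\risk[\radius^2] = \inf_{\dualvar\geq 0}\{\dualvar\radius^2 + \ex_\prob[\Phi_{\obj,\dualvar}]\}$ and $\emprisk[\radius^2] = \inf_{\dualvar\geq 0}\{\dualvar\radius^2 + \ex_\empirical[\Phi_{\obj,\dualvar}]\}$, with $\Phi_{\obj,\dualvar}(\sample) \defeq \sup_{\samplealt\in\samples}\{\obj(\samplealt) - \half\dualvar\sqnorm{\sample-\samplealt}\}$; the conclusions \cref{eq:thm-concentration-wdro-unreg-rob,eq:thm-concentration-wdro-unreg-gen} then follow from a uniform concentration estimate $\sup_{\obj,\dualvar}\abs{\ex_\prob[\Phi_{\obj,\dualvar}] - \ex_\empirical[\Phi_{\obj,\dualvar}]} = \bigoh\parens*{\sqrt{(1+\log 1/\thres)/\nsamples}}$, combined with the monotonicity of $\radius\mapsto\risk[\radius^2]$ and a slack $\radius_\nsamples$ of the same order. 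For this to yield the advertised $1/\sqrt\nsamples$ rate, the only thing needed on top of the blanket \cref{assumption:funcs-simple} is that all optimal dual multipliers $\dualvar^\star(\obj,\radius)$ at play stay in a fixed compact interval $[\lbdualvar,\ubdualvar]\subset(0,\infty)$ depending only on $\prob$, $\funcs$, $\samples$; this is exactly what \cref{assumption:add-geometric} supplied (with an explicit $\lbdualvar$), so the whole job is to produce such an interval from \cref{assumption:gradient-funcs} alone.

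The upper bound $\ubdualvar$ is routine and already part of \cref{sec:outline}: the uniform bound on the gradients of $\funcs$ together with $\radius$ being bounded below by a positive multiple of $\sqrt{(1+\log 1/\thres)/\nsamples}$ (left inequality of \cref{eq:thm-concentration-wdro-unreg-radius}) forces, through the first-order condition for the dual, $\dualvar^\star(\obj,\radius)\leq\ubdualvar$. The lower bound $\lbdualvar$ is the crux, and I would obtain it in two stages. First, for a fixed $\obj$, the optimal multiplier is strictly positive precisely when $\radius^2 < \ex_\prob[\half\distance^2(\sample,\argmax\obj)]$ — below this threshold the ball constraint is active, since no transport plan of cost $\leq\radius^2$ can move all the mass onto $\argmax\obj$ (this is the degeneracy computation of \cref{remark:radius-c}). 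I then claim there is a constant $\const_1 > 0$, depending only on $\GeoRad$, the uniform gradient bound, and the gradient Lipschitz constant $\smooth$, such that $\distance(\sample,\argmax\obj) \geq \const_1\norm{\grad\obj(\sample)}$ for $\prob$-a.e.\ $\sample$ and every $\obj\in\funcs$: for $\sample\in\supp\prob$, \cref{assumption:support_interior} puts $\sample$ at distance $\geq\GeoRad$ from $\bd\samples$, so either $\distance(\sample,\argmax\obj)\geq\GeoRad$ or a projection $\optsample$ of $\sample$ onto $\argmax\obj$ lies in $\intr\samples$, whence $\grad\obj(\optsample) = 0$ and $\norm{\grad\obj(\sample)} = \norm{\grad\obj(\sample) - \grad\obj(\optsample)} \leq \smooth\distance(\sample,\argmax\obj)$, and in both cases the uniform gradient bound converts this into the claimed inequality. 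Taking expectations gives $\ex_\prob[\half\distance^2(\sample,\argmax\obj)] \geq \half\const_1^2\inf_{\obj\in\funcs}\ex_\prob[\norm{\grad\obj}^2] \eqdef \radius_\star^2 > 0$ by \cref{assumption:gradient-funcs}, uniformly over $\obj\in\funcs$; hence every $\obj\in\funcs$ has a strictly positive optimal multiplier for $\radius<\radius_\star$. Second, since $\radius\mapsto\dualvar^\star(\obj,\radius)$ is non-increasing it suffices to bound $\inf_{\obj\in\funcs}\dualvar^\star(\obj,\radius_\star/2)$ away from $0$, and this follows by compactness: by \cref{assumption:funcs-simple} and Arzel\`{a}--Ascoli, $\funcs$ is relatively compact in $C^1(\samples)$; were the infimum $0$, one could extract $\obj_k\to\obj_\infty$ in $C^1$ with $\dualvar^\star(\obj_k,\radius_\star/2)\to 0$, note that $\obj_\infty$ still obeys \cref{assumption:gradient-funcs} (its gradient being the uniform limit of $\grad\obj_k$), and conclude $\dualvar^\star(\obj_\infty,\radius_\star/2) = 0$ by lower semicontinuity of the smallest optimal multiplier along $C^1$-convergent sequences — which itself rests on the upper semicontinuity of $\obj\mapsto\argmax\obj$, making the degeneracy threshold jump only upward in the limit — so $\radius_\star/2\geq\radius_\star$, a contradiction. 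Set $\lbdualvar \defeq \inf_{\obj\in\funcs}\dualvar^\star(\obj,\radius_\star/2) > 0$.

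With $[\lbdualvar,\ubdualvar]$ in hand the rest is a transcription of \cref{sec:outline}: since $\dualvar\geq\lbdualvar>0$, the family $\{\Phi_{\obj,\dualvar} : \obj\in\funcs,\ \dualvar\in[\lbdualvar,\ubdualvar]\}$ is uniformly bounded and uniformly Lipschitz on the compact set $\samples$, so it has a finite Dudley entropy integral controlled by that of $\funcs$, and the uniform concentration estimate above holds with probability $1-\thres$; choosing $\radius_\nsamples$ of order $\sqrt{(1+\log 1/\thres)/\nsamples}$ and running the slack-and-monotonicity argument of \cref{sec:outline} yields \cref{eq:thm-concentration-wdro-unreg-rob}, and specializing to $\probalt = \prob$ gives \cref{eq:thm-concentration-wdro-unreg-gen}. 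The range \cref{eq:thm-concentration-wdro-unreg-radius} is met as soon as $\radius$ is below $\min\parens*{\radius_\star/2,\ \text{(a constant built from }\lbdualvar,\ubdualvar,\funcs,\prob,\samples)}$, which is the announced positive constant depending only on $\prob$, $\funcs$, $\samples$ — but, unlike $\radius_c$, with no closed form, since $\lbdualvar$ is extracted by a compactness argument rather than read off from \cref{assumption:add-geometric}.

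The main obstacle is this lower bound on the dual multipliers, and in particular its two delicate ingredients: the $\prob$-almost-sure inequality $\distance(\sample,\argmax\obj)\gtrsim\norm{\grad\obj(\sample)}$, which must be handled with care when $\obj$ attains its maximum on $\bd\samples$ (this is where \cref{assumption:support_interior} is needed), and the lower semicontinuity of the smallest optimal multiplier for the $C^1$ topology, which is what upgrades a pointwise positivity statement into a uniform one and is precisely the reason the resulting constant is non-explicit. Everything else is either already carried out in the proof of \cref{thm:informal-unreg} or routine.
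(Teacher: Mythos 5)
There is a genuine gap, and it sits exactly at the point you identify as "the crux": your lower bound $\lbdualvar$ on the dual multiplier is a \emph{radius-independent} constant, whereas the argument needs $\lbdualvar$ to scale like $1/\radius$. The concentration step of \cref{sec:outline} loses $\minradius = \bigoh\parens*{1/(\lbdualvar\sqrt\nsamples)}$ in the \emph{squared} radius, so with a constant $\lbdualvar$ the loss is an absolute $\bigoh(1/\sqrt\nsamples)$; writing $\radius^2-\minradius=\radius(\radius-\radius_\nsamples)$ then forces $\radius_\nsamples=\minradius/\radius$, which is only $\bigoh(\sqrt{(1+\log 1/\thres)/\nsamples})$ when $\radius$ is bounded below by a constant, and the "in particular" conclusion \cref{eq:thm-concentration-wdro-unreg-gen} (which needs $\radius^2\geq\minradius$) would require $\radius\gtrsim\nsamples^{-1/4}$. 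You would therefore not recover the admissible range of \cref{eq:thm-concentration-wdro-unreg-radius}, which goes down to $\nsamples^{-1/2}$ — and that range is the whole point of the theorem. The paper's proof (\cref{prop:lb_dualvar_asymptotic,lemma:lb_dualvar_population_asymptotic}) gets $\lbdualvar(\radius)=\basedualvar/(32\radius)$ from a quantitative second-order approximation of the dual generator, $\dualfunc(\func,\sample,\dualvar,0)\approx\func(\sample)+\norm{\grad\func(\sample)}^2/(2\dualvar)$ for $\dualvar$ large (\cref{lemma:approx_laplace_dualfunc}), whose explicit minimization over $\dualvar$ of $\dualvar\radius^2+\ex_\prob[\norm{\grad\func}^2]/(2\dualvar)$ gives $\sol[\dualvar]\asymp\sqrt{\inf_{\func\in\funcs}\ex_\prob[\norm{\grad\func}^2]}/\radius$. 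This is where \cref{assumption:gradient-funcs} actually enters, and it yields fully explicit constants — no Arzel\`a–Ascoli extraction, no non-explicit $\lbdualvar$, and the upper bound on $\radius$ is the explicit $\cstminradius$ of \cref{thm:precise-unreg-weak} rather than your $\radius_\star/2$.

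Two further points. First, even granting your constant $\lbdualvar$, it bounds the multiplier of the \emph{population} dual problem, whereas \cref{prop:template_general} requires a high-probability bound on the multiplier of the \emph{empirical} dual problem; transferring one to the other is itself a concentration argument (the paper compares the empirical dual objective at $\sol[\dualvar]/2$ and $2\sol[\dualvar]$ and uses strong convexity of the approximate dual), and your proposal does not supply it. Second, your first-stage inequality $\distance(\sample,\argmax\func)\geq\norm{\grad\func(\sample)}/\smooth$ and the resulting lower bound on the critical radius are correct and would be useful for the \emph{large}-radius regime of \cref{thm:informal-unreg} — but that regime is precisely the one this weaker theorem gives up on, replacing it by the explicit constant $\cstminradius$ coming from the small-radius analysis.
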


This theorem can be compared to existing results, and in particular with \citet{gao2022finite,an2021generalization}. These two papers provide generalization bounds for \ac{WDRO} under a similar assumption on $\funcs$ and a weakened version of \cref{assumption:iid}. However, these generalization bounds involve extra error terms, that require $\radius$ to be vanishing. %
In comparison, with a similar set of assumptions, \cref{thm:informal-unreg-weak} improves on these two issues, by allowing $\radius$ not to vanish as $\nsamples \to \infty$ and by providing the exact upper-bound \cref{eq:thm-concentration-wdro-unreg-gen}.
Allowing non-vanishing radiuses is an attractive feature of our results that enables us to cover distribution shifts.

\subsection{Regularized \ac{WDRO} models}\label{sec:main-reg}

The analysis that we develop for the standard \ac{WDRO} estimators is general enough to also cover the regularized versions presented in \cref{sec:reg}. We thus obtain the following \cref{thm:informal-reg} which is the first generalization guarantee for regularized \ac{WDRO}. This theorem is very similar to \cref{thm:informal-unreg} with still a couple of differences.
First, the regularization leads to ambiguity sets defined in terms of $\wass[2,\regalt]{\prob, \cdot}$, the \emph{regularized} Wasserstein distance to the true distribution $\prob$, defined, for some regularization parameter $\regalt>0$, as  %
\begin{align}\label{eq:reg-wass}
        \wass[2,\regalt][2]{\prob, \probalt} \defeq  \inf \left\{\ex_{\coupling} [ \half \sqnorm{\sample - \samplealt}] + \regalt \dkl( \coupling | \basecpl) : \coupling \in \couplings, \coupling_1 = \prob,\, \coupling_2 = \probalt\right\}\,,
    \end{align}
where $\basecpl$ appears in the definition of the regularized robust risk \cref{eq:true-reg-risk}.
    Besides, the regularization allows us to avoid \cref{assumption:gradient-funcs,assumption:add-geometric-simple} to show our generalization result.

\begin{theorem}
    \label{thm:informal-reg}
    For $\sdev = \base[\sdev] \radius$ with $\base[\sdev] > 0$, $\reg = \base[\reg] \radius$ with $\base[\reg] > 0$ such that $\base[\reg] / \base[\sdev]^2$ is small enough depending on $\funcs$, $\prob$, $\samples$,
    there is an explicit constant $\radius_c$ depending only on $\funcs$, $\prob$ and $\samples$ such that for all $\thres\in(0,1)$ and $\nsamples\geq 1$, if
    \begin{equation}
        \bigoh \parens*{\sqrt{\frac{1 + \log 1/\thres}{\nsamples}}} \leq \radius \leq \half[\radius_c] - \bigoh \parens*{\frac{1}{\sqrt \nsamples}}\,,
\quad\text{ and }\quad
\radius_c \geq \bigoh \parens*{\frac{1}{\nsamples^{1/6}} + \parens*{\frac{ 1 + \log 1/\thres}{\nsamples}}^{1/4}}\,,
    \end{equation}
then, there are $\regalt = \bigoh \parens*{\reg \radius}$ and $\radius_\nsamples = \bigoh \parens*{\sqrt{\frac{1 + {\log 1/\thres}}{ \nsamples}}}$ such that, with probability at least $1 - \thres$,
    \begin{align}\label{eq:thm-concentration-wdro-reg-rob}
        \forall \obj\in\funcs, \quad    \emprisk[\radius^2][\reg] \geq \ex_{\sample \sim \probalt} \left[ \obj(\sample) \right] 
        \qquad \text{for all } \probalt \text{ such that }\wass[2,\regalt][2]{\prob, \probalt} \leq \radius(\radius -  \radius_\nsamples)\,.
    \end{align}
    Furthermore, when $\basesdev$ and $\sdev$ are small enough depending on $\prob$ and $\samples$, with probability $1 - \thres$,
    \begin{align}\label{eq:thm-concentration-wdro-reg-gen}
    \forall \obj\in\funcs, \quad    \emprisk[\radius^2][\reg] \geq \ex_{\sample \sim \prob} \ex_{\samplealt \sim \basecpl(\cdot | \sample)} \left[ \obj(\samplealt) \right] .
    \end{align}
\end{theorem}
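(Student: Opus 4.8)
The plan is to run the common template of \cref{sec:outline} in the regularized setting: (i) pass to dual representations of the robust risks, (ii) localize the associated multiplier uniformly over $\funcs$, (iii) establish a uniform concentration of the resulting dual functional, and (iv) evaluate, at the empirical multiplier, the dual of the \emph{constraint}-penalized true robust risk at a slightly reduced radius; then \cref{eq:thm-concentration-wdro-reg-gen} follows from \cref{eq:thm-concentration-wdro-reg-rob} by a specific choice of $\probalt$. Throughout, the coupled scalings $\sdev=\base[\sdev]\radius$ and $\reg=\base[\reg]\radius$ turn every regularization-induced correction into a power of $\radius$ times a factor governed by $\base[\reg]/\base[\sdev]^2$, which is why the side conditions in the statement are the ones that keep those corrections below the margins arising in each step.

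\emph{Steps 1--2 (duality and localization of the multiplier).} Since the feasible set of transport plans with prescribed first marginal is convex, $\coupling\mapsto\ex_{\coupling_2}[\obj]$ is linear and $\coupling\mapsto-\reg\dkl(\coupling\,|\,\basecpl)$ is strongly concave, a minimax exchange (as in \citet{wang2021sinkhorn,azizian2022regularization}) yields
\[
\risk[\radius^2][\reg]=\inf_{\dualvar\geq 0}\bigl\{\dualvar\radius^2+\ex_{\sample\sim\prob}[\psi^{\reg}_{\obj,\dualvar}]\bigr\},\qquad
\emprisk[\radius^2][\reg]=\inf_{\dualvar\geq 0}\bigl\{\dualvar\radius^2+\ex_{\sample\sim\empirical}[\psi^{\reg}_{\obj,\dualvar}]\bigr\},
\]
where $\psi^{r}_{\obj,\dualvar}(\sample):=r\log\ex_{\samplealt\sim\basecpl(\cdot|\sample)}e^{(\obj(\samplealt)-\half[\dualvar]\sqnorm{\sample-\samplealt})/r}$. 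By \cref{assumption:set,assumption:funcs-simple}, $\psi^{r}_{\obj,\dualvar}$ is finite, uniformly bounded, jointly smooth in $(\obj,\dualvar)$, nonincreasing in $r$, and --- after $\samplealt=\sample+\sdev w$ --- expandable in powers of $\radius$ uniformly on $\samples$. I would then show the infima are attained, at $\hat\dualvar=\hat\dualvar(\obj)$ and $\dualvar^{\star}(\obj)$, with $\radius\,\hat\dualvar\in[\lbdualvar,\ubdualvar]\subset(0,\infty)$ uniformly over $\obj\in\funcs$: the upper bound follows from boundedness and smoothness of $\funcs$ on the compact $\samples$; the lower bound --- activeness of the transport constraint and non-degeneracy --- is the counterpart of what \cref{assumption:gradient-funcs,assumption:add-geometric-simple} grant in the unregularized case (where $\radius\,\hat\dualvar$ is of order $(\ex_\prob[\norm{\grad\obj}^2])^{1/2}$, positive by \cref{assumption:gradient-funcs}), and holds below the explicit threshold $\half[\radius_c]$ up to the stated $\bigoh\parens*{1/\sqrt\nsamples}$ corrections, on the high-probability event on which the empirical quantities are close to their population versions, the reference measure merely shifting the effective radius by a $\base[\reg]/\base[\sdev]^2$-controlled multiple of lower-order powers of $\radius$. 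I expect \emph{this} to be the main obstacle: making the reference-measure bias and its interplay with the degeneracy threshold $\radius_c$ quantitative and uniform over $\funcs$ --- the requirement $\radius_c\geq\bigoh\parens*{\frac{1}{\nsamples^{1/6}}+\parens*{\frac{1+\log 1/\thres}{\nsamples}}^{1/4}}$ is exactly the price of keeping that bias, and the empirical fluctuation of $\radius_c$, below the margin $\half[\radius_c]-\radius$. (For $\obj$ on which the regularized constraint is inactive, $\hat\dualvar=0$ and the bound below is checked directly.)

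\emph{Steps 3--4 (concentration and the reduced-radius comparison).} With $\hat\dualvar$ localized, a routine uniform-convergence estimate --- $\psi^{\reg}_{\obj,\dualvar}$ is uniformly bounded on $\samples$, the relevant $\dualvar$-range has length $\bigoh\parens*{1/\radius}$, and $\funcs$ is regular by \cref{assumption:funcs-simple} --- gives, with probability $\geq 1-\thres$,
\[
\sup_{\obj\in\funcs}\ \sup_{\dualvar\in[\lbdualvar/\radius,\ubdualvar/\radius]}\ \bigl|\ex_{\empirical}[\psi^{\reg}_{\obj,\dualvar}]-\ex_{\prob}[\psi^{\reg}_{\obj,\dualvar}]\bigr|\ \leq\ \delta_\nsamples=\bigoh\parens*{\sqrt{\tfrac{1+\log 1/\thres}{\nsamples}}}.
\]
Put $\radius_\nsamples:=\delta_\nsamples/\lbdualvar=\bigoh\parens*{\sqrt{\tfrac{1+\log 1/\thres}{\nsamples}}}$ and $\regalt:=\reg\radius/\lbdualvar=\bigoh\parens*{\reg\radius}$, which is large enough that $\regalt\,\hat\dualvar\geq\reg$ for every $\obj\in\funcs$. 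The dual (again by a minimax exchange) of the constraint-penalized true robust risk with transport budget $\radius(\radius-\radius_\nsamples)$ is $\inf_{\dualvar\geq 0}\{\dualvar\radius(\radius-\radius_\nsamples)+\ex_{\prob}[\psi^{\dualvar\regalt}_{\obj,\dualvar}]\}$; evaluating this at $\dualvar=\hat\dualvar$, using that $r\mapsto\psi^{r}_{\obj,\dualvar}$ is nonincreasing (so $\psi^{\hat\dualvar\regalt}_{\obj,\hat\dualvar}\leq\psi^{\reg}_{\obj,\hat\dualvar}$), and combining with the concentration bound and $\hat\dualvar\geq\lbdualvar/\radius$, gives
\[
\emprisk[\radius^2][\reg]-\inf_{\dualvar\geq 0}\bigl\{\dualvar\radius(\radius-\radius_\nsamples)+\ex_{\prob}[\psi^{\dualvar\regalt}_{\obj,\dualvar}]\bigr\}\ \geq\ \hat\dualvar\radius\radius_\nsamples-\delta_\nsamples\ \geq\ \lbdualvar\radius_\nsamples-\delta_\nsamples\ \geq\ 0,
\]
provided $\radius\leq\half[\radius_c]-\bigoh\parens*{1/\sqrt\nsamples}$ so that Step~2 still applies at the reduced radius.

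\emph{Conclusions.} By strong duality for the constraint-penalized problem (\cf the discussion after \cref{eq:true-reg-risk}, the definition \cref{eq:reg-wass}, and \citet{azizian2022regularization}), $\inf_{\dualvar\geq 0}\{\dualvar\radius(\radius-\radius_\nsamples)+\ex_{\prob}[\psi^{\dualvar\regalt}_{\obj,\dualvar}]\}=\sup\{\ex_{\samplealt\sim\probalt}[\obj(\samplealt)]:\wass[2,\regalt][2]{\prob,\probalt}\leq\radius(\radius-\radius_\nsamples)\}$, so the last display is precisely \cref{eq:thm-concentration-wdro-reg-rob}. For \cref{eq:thm-concentration-wdro-reg-gen}, apply \cref{eq:thm-concentration-wdro-reg-rob} to $\probalt$ equal to the second marginal of the reference $\basecpl=\prob(\dd\sample)\basecpl(\dd\samplealt|\sample)$: then $\dkl(\basecpl\,|\,\basecpl)=0$ and $\ex_{\basecpl}[\half\sqnorm{\sample-\samplealt}]\leq\bigoh\parens*{\sdev^2}=\bigoh\parens*{\radius^2}$, hence $\wass[2,\regalt][2]{\prob,\probalt}\leq\bigoh\parens*{\radius^2}\leq\radius(\radius-\radius_\nsamples)$ once $\basesdev$ (and thus $\sdev$) is small enough depending on $\prob$ and $\samples$, and \cref{eq:thm-concentration-wdro-reg-rob} reads $\emprisk[\radius^2][\reg]\geq\ex_{\sample\sim\prob}\ex_{\samplealt\sim\basecpl(\cdot|\sample)}[\obj(\samplealt)]$.
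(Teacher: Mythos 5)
Your proposal follows the paper's own architecture step for step: strong duality for both risks, localization of the dual multiplier in a window of width proportional to $1/\radius$ uniformly over $\funcs$, a uniform concentration bound over $\funcs\times[\lbdualvar/\radius,\ubdualvar/\radius]$ absorbed into a radius reduction $\radius\radius_\nsamples$, identification of the resulting population dual with the $\wass[2,\regalt]{}$-ball (only weak duality is needed for the direction you use; your monotonicity-in-temperature observation $\psi^{\hat\dualvar\regalt}\le\psi^{\reg}$ for $\hat\dualvar\regalt\ge\reg$ is exactly how the paper's $\regalt=\reg/\lbdualvar_\nsamples(\radius)$ arises in \cref{cor:precise-conclusion-reg}), and finally the choice of $\probalt$ as the second marginal of $\prob(\dd\sample)\basecpl(\dd\samplealt|\sample)$ for the smoothed conclusion, matching \cref{lemma:reg-nbd}. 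The steps you actually carry out are correct, and your bookkeeping of $\regalt=\bigoh(\reg\radius)$ and $\radius_\nsamples=\bigoh(\sqrt{(1+\log 1/\thres)/\nsamples})$ agrees with \cref{thm:precise-reg}.

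The substantive omission is the multiplier localization itself, which you flag as the main obstacle but do not execute, and which is where most of the paper's appendix is spent. Two distinct mechanisms are required. For small $\radius$, the paper (\cref{prop:lb_dualvar_asymptotic}) replaces $\dualfunc$ by the Laplace/Taylor surrogate $\apprx{\dualfunc}$ of \cref{lemma:approx_laplace_dualfunc}, shows the surrogate dual objective is strongly convex in $\dualvar$ with explicit minimizer of order $\basedualvar/\radius$ where $\basedualvar=\basereg\dims+\sqrt{(\basereg\dims)^2+8\inf_{\func}\ex_{\prob}\norm{\grad\func}^2}$ is positive thanks to $\basereg>0$ (this is precisely why \cref{assumption:gradient-funcs} is dispensable here), and transfers the localization to the empirical dual by comparing its values at half and twice the population minimizer and invoking convexity. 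For $\radius$ near $\radius_c$, the argument is entirely different (\cref{prop:lb_dualvar_reg_crit}): one bounds $\partial^2_\dualvar\dualfunc$ by $\Var(\reg,\sdev)/\reg$ and concentrates $\func\mapsto\ex_{\samplealt\sim\modbase(\cdot|\sample)}[\half\sqnorm{\sample-\samplealt}]$; it is this variance-over-$\reg$ factor, with $\reg\propto\radius$ and $\Var(\reg,\sdev)$ of order $\sdev^4$, that produces the condition $\radius_c\gtrsim \nsamples^{-1/6}+((1+\log 1/\thres)/\nsamples)^{1/4}$, which your sketch attributes generically to bias and fluctuation control without exhibiting the mechanism. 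Without these two propositions, the high-probability event on which your Step~3 concentration and Step~4 comparison are performed is not established, so as written the argument is an accurate roadmap of the paper's proof rather than a complete alternative to it.
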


The first part of the theorem, \eqref{eq:thm-concentration-wdro-reg-rob}, guarantees that the empirical robust risk is an upper-bound on the loss even with some perturbations of the true distribution.
As in \ac{OT}, the regularization added to the Wasserstein metric induces a bias that may prevent $\wass[2,\regalt][2]{\prob, \prob}$ from being null. 
As a result, the second part of the theorem involves a smoothed version of the true risk: the empirical robust risk provides an \emph{exact} upper-bound the true expectation of a convolution of the loss with $\basecpl$.

A few additional comments are in order:
\begin{itemize}
    \item  Our result prescribes the scaling of the regularization parameters: $\reg$ and $\sdev$ should be taken proportional to $\radius$.
    \item The critical radius $\radius_c$ has a slighlty more intricate definition, yet the same interpretation as in the standard \ac{WDRO} case in\cref{remark:radius-c}; see \cref{sec:criticalreg}.
    \item The regularized \ac{OT} distances do not suffer from the curse of dimensionality \citep{genevay2019sample}. However this property does not directly carry over to regularized \ac{WDRO}. Indeed, we cannot choose the same reference measure as in \ac{OT} and {we have to fix the  measure $\basecpl$}, introducing a bias. As a consequence, we have to extend the analysis of the previous section to obtain the claimed guarantees that avoid the curse of dimensionality.
\end{itemize}

\subsection{Idea of the proofs} \label{sec:outline}

In this section, we present the main ideas of the proofs of \cref{thm:informal-unreg}, \cref{thm:informal-unreg-weak}, and \cref{thm:informal-reg}. The full proofs are detailed in appendix; we point to relevant sections along the discussion.
First, we recall the duality results for \ac{WDRO} that play a crucial role in our analysis. Second, we present a rough sketch of proofs that is common to both the standard and the regularized cases. Finally, we provide a refinement of our results that is a by-product of our analysis.

\paragraph{Duality in \ac{WDRO}.}
Duality has been a central tool in both the theoretical analyses and computational schemes of \ac{WDRO} from the onset \citep{shafieezadehabadehdistributionallyrobustlogistic2015,esfahani2018data}.
The expressions of the dual of \ac{WDRO} problems for both the standard case \citep{gao2016distributionally,blanchet2019quantifying} and the regularized case \citep{wang2021sinkhorn,azizian2022regularization} can be written with the following dual generator function $\dualfunc$ defined as 
\begin{align}\label{eq:dual_gen}
    \dualfunc(\func, \sample, \dualvar, \reg,\sdev)  &\defeq  \left\{ 
\begin{array}{ll}
    \sup_{\samplealt \in \samples} \braces*{\func(\samplealt) - \half[\dualvar] \sqnorm{\sample - \samplealt}} 
        &  \text{ if } \reg = 0\\
    \reg\log \parens*{\ex_{\samplealt \sim \basecpl(\cdot | \sample)} \exp\parens*{
            \frac{\func(\samplealt) - \dualvar \sqnorm{ \sample - \samplealt }/2}{\reg}}} 
            &\text{ if } \reg > 0\,,
\end{array}
    \right.
\end{align}
where $\dualvar$ is the dual variable associated to the Wasserstein constraint in \cref{eq:emprisk-risk,eq:emp-reg-risk,eq:true-reg-risk}.
The effect of regularization appears here clearly as a smoothing of the supremum. Note also that this function depends on the conditional reference measures $\basecpl(\cdot | \sample)$ but not on other probability distributions.
Then, under some general assumptions (specified in \cref{sec:asm} in appendix), the existing strong duality results yield that the (regularized) empirical robust risk writes
\begin{align}
    \emprisk[\radius^2][\reg] &= \inf_{\dualvar \geq 0} \dualvar \radius^2 + \ex_{\sample\sim\empirical} \left[ \dualfunc(\func, \sample, \dualvar, \reg,\sdev) \right]\,, \label{eq:dualreg} 
\end{align}
and, similarly, the (regularized) true robust risk writes
\begin{align}
    \risk[\radius^2][\reg] &= \inf_{\dualvar \geq 0} \dualvar \radius^2 + \ex_{\sample\sim\prob} \left[ \dualfunc(\func, \sample, \dualvar, \reg,\sdev) \right]  \label{eq:dualregtrue} \,.
\end{align}
These expressions for the risks are the bedrock of our analysis.

\paragraph{Sketch of proof.}
In both the standard case and the regularized case, our proof is built on two main parts: the first part is to obtain a concentration bound on the dual problems that crucially relies on a lower bound of the dual multiplier; the second part then consists in establishing such a lower bound.
All the bounds are valid with high probability, and we drop the dependency on the confidence level $\thres$ of the theorems for simplicity.

For the first part of the proof (\cref{sec:concentration}), we assume that there is a deterministic lower-bound $\lbdualvar > 0$ on the optimal dual multiplier in \cref{eq:dualreg} that holds with high-probability. As a consequence, we can restrict the range of $\dualvar$ in \cref{eq:dualreg} to obtain:
   \begin{align}
            &\emprisk[\radius^2][\reg] = \inf_{\dualvar\geq\lbdualvar} \left\{ \dualvar \radius^2 + \ex_{\sample\sim\empirical} \left[ \dualfunc(\func, \sample, \dualvar, \reg,\sdev) \right] \right\}\\
                                            &= \inf_{\dualvar\geq\lbdualvar}\left\{ \dualvar \radius^2 + \ex_{\sample\sim\prob} \left[ \dualfunc(\func, \sample, \dualvar, \reg,\sdev) \right] - \dualvar \frac{\ex_{\sample \sim \prob}[\dualfunc(\func, \sample, \dualvar, \reg,\sdev)] - \ex_{\sample \sim \emp}[\dualfunc(\func, \sample, \dualvar, \reg,\sdev)]}{\dualvar} \right\}\\
                                            &\geq \inf_{\dualvar\geq\lbdualvar}\left\{ \dualvar \radius^2 + \ex_{\sample\sim\prob} \left[ \dualfunc(\func, \sample, \dualvar, \reg,\sdev) \right] - \dualvar \sup_{\alt\dualvar\geq\lbdualvar}\frac{\ex_{\sample \sim \prob}[\dualfunc(\func, \sample, \alt\dualvar, \reg,\sdev)] - \ex_{\sample \sim \emp}[\dualfunc(\func, \sample, \alt \dualvar, \reg,\sdev)]}{\alt\dualvar} \right\}\\
                                            &\geq \inf_{\dualvar\geq\lbdualvar}\left\{ \dualvar \radius^2 + \ex_{\sample\sim\prob} \left[ \dualfunc(\func, \sample, \dualvar, \reg,\sdev) \right] - \dualvar \minradius\right\}\\
                                            &\geq \risk[\radius^2- \minradius][\reg]\,.
                                            \label{eq:sketch-of-proof}
        \end{align}        
        In the above, we used that the inner supremum, which is random, can be bounded by a deterministic and explicit quantity that we call $\minradius$, \ie
        \begin{align}
            \minradius \geq \sup_{\alt\dualvar\geq\lbdualvar}\frac{\ex_{\sample \sim \prob}[\dualfunc(\func, \sample, \alt\dualvar, \reg,\sdev)] - \ex_{\sample \sim \emp}[\dualfunc(\func, \sample, \alt \dualvar, \reg,\sdev)]}{\alt\dualvar}
            \quad \text{ with high probability.}
        \end{align}
        Hence, we obtain an upper-bound on the robust risk \wrt the true distribution with radius $\radius^2 - \minradius$. Moreover,  we show that $\minradius = \bigoh \parens*{1/\parens*{\lbdualvar \sqrt \nsamples}}$ which highlights the need for a precise lower bound $\lbdualvar$ to control the decrease in radius.

The second part of the proof thus consists in showing that the dual variable is indeed bounded away from~$0$, which means that the Wasserstein constraint is \emph{sufficiently active}. We have to handle two cases differently:
\begin{itemize}
\item when $\radius$ is small, \ie close to $\radius_\nsamples$ (\cref{sec:small}),
\item when $\radius$ is large, \ie close to the critical radius $\radius_c$ (\cref{sec:critical}).
    Note that the additional \cref{assumption:add-geometric-simple} is required here: where we need to control the behaviors of $\func \in \funcs$ close to their maxima (see \cref{eq:dual_gen} for $\reg = 0$ and small $\dualvar$).
\end{itemize}
In both cases we obtain that $\lbdualvar$ scales as $1/\radius$ for the respective ranges of admissible radiuses.  As a consequence $\minradius$ is bounded by $\radius \radius_n$ with $\radius_n = \bigoh \parens*{1 / \sqrt \nsamples}$ and \cref{eq:sketch-of-proof} becomes
\begin{equation}
    \label{eq:sketch-of-proof-end}
    \emprisk[\radius^2][\reg]
    \geq
    \risk[\radius(\radius - \radius_\nsamples)][\reg]\,,
\end{equation}
which leads to our main results.

\paragraph{Extension: upper and lower bounds on the empirical robust risk.}
The proof that we sketched above actually shows that $\risk[\radius(\radius - \radius_n)][\reg]$ is a lower bound of $\emprisk[\radius^2][\reg]$. This proof technique also yields an upper bound by exchanging the roles of $\prob$ and $\empirical$.
\begin{theorem}\label{thm:sandwich}
    In the setting of either \cref{thm:informal-unreg}, \cref{thm:informal-unreg-weak} or \cref{thm:informal-reg} (with $\reg = 0$ or $\reg > 0$), with probability at least $1 - \thres$, it holds that
    \begin{equation}
        \forall \func \in \funcs,\quad
        \risk[\radius (\radius - \radius_\nsamples)][\reg]
        \leq 
        \emprisk[\radius^2][\reg]
        \leq 
        \risk[\radius(\radius + \radius_\nsamples)][\reg]\,,
    \end{equation}
    with $\radius_\nsamples = \bigoh \parens*{\sqrt \frac{1 + \log 1 / \thres}{\nsamples}}$.
\end{theorem}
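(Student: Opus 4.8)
The plan is to reuse, essentially verbatim, the two ingredients on which the proofs of \cref{thm:informal-unreg,thm:informal-unreg-weak,thm:informal-reg} rest. First, the \emph{uniform deviation bound}: on an event of probability at least $1-\thres$, one has
$\abs{\ex_{\sample\sim\prob}[\dualfunc(\func,\sample,\dualvar,\reg,\sdev)] - \ex_{\sample\sim\empirical}[\dualfunc(\func,\sample,\dualvar,\reg,\sdev)]} \leq \dualvar\,\minradius$
for every $\func\in\funcs$ and every $\dualvar\geq\lbdualvar$, with $\minradius = \bigoh(\sqrt{(1+\log 1/\thres)/\nsamples}\,/\lbdualvar)$ (\cref{sec:concentration}). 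Second, the \emph{lower bound on the dual multiplier}: the optimal dual variables of the primal--dual pairs \cref{eq:dualreg,eq:dualregtrue} are bounded below by some $\lbdualvar \gtrsim 1/\radius$ --- deterministically for the problems centered at $\prob$, and with high probability for those centered at $\empirical$ --- for every radius in the admissible interval of \cref{eq:thm-concentration-wdro-unreg-radius} (\cref{sec:small,sec:critical}). The only extra observation needed is that this interval is stable under an $\bigoh(1/\sqrt\nsamples)$ perturbation of $\radius$: the $\bigoh$ terms at both ends of \cref{eq:thm-concentration-wdro-unreg-radius} can be taken large enough so that $\radius\pm\radius_\nsamples$ stays admissible once $\radius_\nsamples = \bigoh(\sqrt{(1+\log1/\thres)/\nsamples})$ is fixed, and then $\minradius \leq \radius\radius_\nsamples$.

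The lower inequality is then exactly the chain \cref{eq:sketch-of-proof}: restricting the infimum \cref{eq:dualreg} to $\dualvar\geq\lbdualvar$ (licit by the second ingredient for the empirical problem), replacing $\ex_{\sample\sim\empirical}[\dualfunc]$ by $\ex_{\sample\sim\prob}[\dualfunc] - \dualvar\minradius$ via the deviation bound, and relaxing back to $\dualvar\geq 0$ yields $\emprisk[\radius^2][\reg] \geq \risk[\radius^2 - \minradius][\reg]$; the monotonicity of $\radiusalt^2 \mapsto \risk[\radiusalt^2][\reg]$ together with $\minradius\leq\radius\radius_\nsamples$ gives $\emprisk[\radius^2][\reg] \geq \risk[\radius(\radius-\radius_\nsamples)][\reg]$.

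For the upper inequality I would run the symmetric computation, with $\empirical$ now playing the role of ``reference'' and $\prob$ that of ``perturbation''. Let $\dualvar^\star$ be an optimal dual multiplier for $\risk[\radius(\radius+\radius_\nsamples)][\reg]$ in \cref{eq:dualregtrue}; by the second ingredient, $\dualvar^\star\geq\lbdualvar$ (this is the point where $\radius+\radius_\nsamples$ must be admissible). Plugging $\dualvar^\star$ as a feasible point into the infimum \cref{eq:dualreg}, then using the deviation bound in the form $\ex_{\sample\sim\empirical}[\dualfunc(\func,\sample,\dualvar^\star,\reg,\sdev)] \leq \ex_{\sample\sim\prob}[\dualfunc(\func,\sample,\dualvar^\star,\reg,\sdev)] + \dualvar^\star\minradius$ together with $\minradius\leq\radius\radius_\nsamples$ and $\dualvar^\star\geq 0$, gives
\begin{align*}
    \emprisk[\radius^2][\reg]
    &\leq \dualvar^\star\radius^2 + \ex_{\sample\sim\empirical}[\dualfunc(\func,\sample,\dualvar^\star,\reg,\sdev)]\\
    &\leq \dualvar^\star\bigl(\radius^2 + \radius\radius_\nsamples\bigr) + \ex_{\sample\sim\prob}[\dualfunc(\func,\sample,\dualvar^\star,\reg,\sdev)]
    = \risk[\radius(\radius+\radius_\nsamples)][\reg]\,,
\end{align*}
the final equality holding because $\dualvar^\star$ is optimal for that problem. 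Taking $\radius_\nsamples$ to be the larger of the two $\bigoh(\sqrt{(1+\log1/\thres)/\nsamples})$ quantities produced above and working on the single high-probability event of the proofs of \cref{thm:informal-unreg,thm:informal-unreg-weak,thm:informal-reg} yields both inequalities, for all $\func\in\funcs$ at once.

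The only genuine difficulty is bookkeeping around the dual-multiplier lower bound: one must check that the argument of \cref{sec:small} (small $\radius$, where $\lbdualvar$ comes from the constraint being active near $\radius_\nsamples$) and that of \cref{sec:critical} (large $\radius$, where $\lbdualvar$ comes from the behavior of $\func$ near $\argmax\func$ and the gap to $\radius_c$) both remain valid when $\radius$ is shifted by $\pm\radius_\nsamples$; since $\lbdualvar$ scales like $1/\radius$, it is within a constant factor of $1/(\radius\pm\radius_\nsamples)$, so the conclusions survive with updated constants. Nothing in the regularized case ($\reg>0$) changes, since the dual generator $\dualfunc$ of \cref{eq:dual_gen} already treats $\reg=0$ and $\reg>0$ on the same footing.
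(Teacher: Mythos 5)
Your proposal is correct and follows essentially the same route as the paper: the lower inequality is the chain of \cref{eq:sketch-of-proof}, and the upper inequality is obtained by exchanging the roles of $\prob$ and $\empirical$ in the concentration step, using the (simpler, deterministic) dual-multiplier bounds for the problem centered at $\prob$ (the paper's \cref{prop:template_general_alt,prop:lb_dualvar_unreg_crit_alt,prop:lb_dualvar_reg_crit_alt}). The only cosmetic difference is that you plug the optimal multiplier of $\risk[\radius(\radius+\radius_\nsamples)][\reg]$ directly into the empirical dual, whereas the paper applies the symmetric template at the shifted radius $\radius+\radius_\nsamples$ and absorbs the resulting factor $3\radius_\nsamples$ into the big-O; both amount to the same argument.
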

This result shows how two robust objectives \wrt$\prob$ provide upper and lower bounds on the empirical robust risk, with only slight variations in the radius.
Furthermore, when the number of data points $\nsamples$ grows, both \WAedit{sides} of the bound converge to the same quantity $\risk[\radius^2][\reg]$. Hence our generalization bounds of the form \cref{eq:sketch-of-proof-end} are asymptotically tight.

\WAedit{
    As a final remark, we underline that the proofs of this theorem and of the previous ones rely on the cost being the squared Euclidean norm and the extension to more general cost functions is left as future work. In particular, the Laplace approximation of \cref{sec:approx-dual-generator} in the regularized case and the analysis of \cref{sec:criticalunreg} in the standard \ac{WDRO} case would need further work to accomodate general cost functions.
}

\section{Examples: parametric models}
\label{sec:examples}

Our main theorems \cref{thm:informal-unreg,thm:informal-unreg-weak,thm:informal-reg} involve a general class $\funcs$ of loss functions. 
We explain in this section how to instantiate our results in the important class of parametric models. We then illustrate this setting with logistic regression and linear regression in \cref{ex:logistic,ex:l2}.

Let us consider the class of functions of the form
\begin{equation}
    \label{eq:parametric-models}
    \funcs = \setdef{\sample\mapsto\func(\param, \sample)}{\param \in \params}
    \qquad 
    \text{ with }
    \func:\params \times \samples \longrightarrow \R
\end{equation}
where $\params$, the parameter space, is a subset of $\R^\dimsalt$ and $\samples$, the sample space, is a subset of $\R^\dims$.

For instance, this covers the case of linear models of the form $\func(\param, \sample) = \ell(\inner{\sample, \param})$ with $\ell$ a convex loss. This class of models is studied by \citet{shafieezadehabadehregularizationmasstransportation2019,chen2018robust} in a slightly different setting, where they obtain a closed form for the robust objective and then establish a generalization bound similar to~\cref{eq:thm-concentration-wdro-unreg-gen}. %

Let us show how to instantiate our theorems in the case of \cref{eq:parametric-models}.
\begin{itemize}
    \item If $\func$ is twice continuously differentiable on a neighborhood of $\params \times \samples$ with $\params$ and $\samples$ both compact, then \cref{assumption:funcs-simple} is immediately satisfied.
Therefore, \cref{thm:informal-reg} can be readily applied and its generalization guarantee hold.
\item As for \cref{assumption:gradient-funcs}, it is equivalent to, for all $\param \in \params$, $\prob \parens*{\grad_\sample \func(\param, \sample) \neq 0} > 0$.
    \WAedit{Thus disregarding the degenerate case of $\grad_\sample \func(\param, \sample)$ being null for $\prob$-almost every $\sample$ (\eg when the loss does not depend on $\sample$),} we are in the setting of \cref{thm:informal-unreg-weak}.
\item Satisfying \cref{assumption:add-geometric-simple}, needed for \cref{thm:informal-unreg}, requires some problem-dependent developments, \WAedit{see the examples below}.
Note though that the second item of \cref{assumption:add-geometric-simple} is implied by the parametric Morse-Bott property \citep{arbelNonConvex}; see \cref{sec:morse-bott}.%
\end{itemize}

\WAedit{
We discuss linear and non-linear examples of this framework. In light of the above, we focus our discussion on \cref{assumption:add-geometric-simple}.
We first present the examples of linear models, \cref{ex:logistic,ex:l2}, where the latter assumption is satisfied. %
We then consider several examples of nonlinear models: kernel regression (\cref{ex:kernelreg}), smooth neural networks (\cref{ex:nn}) and families of invertible mappings (\cref{ex:diffeo}).
In \cref{app:num}, we also provide numerical illustrations for linear models.
}
\begin{example}[Logistic Regression]\label{ex:logistic}
    For a training sample $(\point, \target) \in \R^\dimsalt \times \{-1,+1\}$, the logistic loss for a parameter $\param \in \R^\dimsalt$ is given by $\log \parens*{1 + e^{-\target \inner{\point, \param}}}$. It fits into our framework by defining $\func(\param, \sample) = \log \parens*{1 + e^{\inner{\sample, \param}}}$ with $\sample$ playing the role of $-\target \times \point$.
    We assume that $\params$ is a compact set that does not include the origin, and, for the sake of simplicity, we take $\samples$ as a closed Euclidean ball, \ie $\samples = \ball(0, \ballradius)$. \    %
    We are going to show that \cref{assumption:add-geometric-simple} is satisfied, and, for this, we need the following elements. 
    For any $\param$, the maximizer of $\func(\param, \cdot)$ over $\samples = \ball(0, \ballradius)$ is reached at $\optsample \defeq \frac{\ballradius \param}{\norm{\param}}$. Besides, for any $\sample \in \samples$, it holds that
    \begin{equation}
        \ballradius^2
        \geq
        \sqnorm{\sample} 
        = \sqnorm{\optsample} + 2 \inner{\optsample, \sample - \optsample} +
        \sqnorm{\sample - \optsample} \, ,
    \end{equation}
    so that, since $\norm{\optsample} = \ballradius$, we have
    \begin{equation}
        \label{eq:ex-logistic-scalar-norm}
         \inner{\optsample, \optsample - \sample}
         \geq \half
        \sqnorm{\sample - \optsample}\,.
    \end{equation}
    We can now turn to the verification of \cref{assumption:add-geometric-simple}.
    \begin{enumerate}
        \item Take some $\GeoRad > 0$ and some $\sample \in \samples$ such that $\norm{\sample - \optsample} \geq \GeoRad$. Then, \cref{eq:ex-logistic-scalar-norm} yields
            \begin{equation}
                \label{eq:scalar}
                \inner{\param, \sample} - \inner{\param, \optsample}
                = \frac{\norm{\param}}{\ballradius} \inner{\optsample, \sample - \optsample}
                \leq -\frac{\norm{\param}}{2\ballradius} \sqnorm{\sample - \optsample}
                \leq -\frac{\distance(0, \params)\GeoRad^2}{2\ballradius}\,.
            \end{equation}
            Since $\scalar\mapsto\log(1+e^{\scalar})$ is increasing, this yields that $\func(\param, \sample) - \func(\param, \optsample)$ is bounded away from 0 by a negative constant uniformly in $\param$ .  The first item of \cref{assumption:add-geometric-simple} is thus satisfied.
        \item %
            Fix $\param \in \params$; by Taylor expanding  $\scalar\mapsto\log(1+e^{\scalar})$  around $\inner{\param, \optsample}$ we get 
            \begin{equation}
                \func(\param, \sample)
                =
                \func(\param, \optsample)
                +
                \frac{1}{1 + e^{-\inner{\param, \optsample}}}
                \inner{\param, \sample - \optsample}
                +
              \bigoh  \parens*{\inner{\param, \sample - \optsample}}^2 \,,
            \end{equation}
            where the big-O remainder is uniform over $\param \in \params$.
            Using the first inequality in \eqref{eq:scalar}, we get 
            for $\sample$ close enough to $\optsample$ %
\begin{align}
                \func(\param, \sample)
                &\leq 
                \func(\param, \optsample)
                -
                \frac{1}{2(1 + e^{-\inner{\param, \optsample}})}
                \inner{\param, \sample - \optsample}
                \leq 
                \func(\param, \optsample)
                 - \frac{\norm{\param}}{4\ballradius} \sqnorm{\sample - \optsample}
                \,.
            \end{align}
            This shows that the second item of \cref{assumption:add-geometric-simple} is satisfied locally around $\optsample$. It can be made global by using the uniform Lipschitz-continuity of $\func$, which introduces a term of the form $\frac{\plainhsmooth}{6} \norm{\sample - \optsample}^3$.
    \end{enumerate}
\end{example}

\begin{example}[Linear Regression]\label{ex:l2}
    With samples of the form $\sample = (\point, \target) \in \R^\dimsalt \times \R$ and parameters $\param \in \R^{\dimsalt}$, the loss is given by $ \func(\param, \sample) = \half (\inner{\param, \point} - \target)^2$. 
    Similarly to the previous example, we take $\params$ as a compact set of $\R^\dims$ that does not include the origin and $\samples$ of the form $\ball(0, \ballradius) \times [-\ballradiusalt, \ballradiusalt]$. 
    The maximizers of $\func(\param, \cdot)$ on $\samples$ are $\sample^*_1 = \parens*{\ballradius \param / \norm{\param}, -\ballradiusalt}$ and $\sample^*_2 = \parens*{-\ballradius \param / \norm{\param}, \ballradiusalt}$. 
    By symmetry, one can restrict to the case of $\sample^*_1$ and  $\inner{\param, \point} - \target\geq 0$; the same rationale as above can then be applied.
\end{example}

\WAedit{
    \begin{example}[Kernel Ridge Regression]\label{ex:kernelreg}
Using a kernel $k : \mathcal{X} \times \mathcal{X} \to \mathbb R$ with $\mathcal{X}$ compact and $k$ smooth, for instance Gaussian or polynomial,  we consider the following class of loss functions:
\begin{equation}f(\theta, \xi) = \frac{1}{2} \left(\sum_{i = 1}^m \alpha_i k(x, x_i) - y\right)^2 + \frac{\mu}{2} \|\alpha\|^2_2\,.\end{equation}
where $\sample = (x, y)$, $\samples$ is some compact subset of $\mathcal X \times \mathbb R$,  $\param=(\alpha_1,\dots,\alpha_m,x_1,\dots,x_m)$, $\params = A_m \times \mathcal X_m$, $m$ is a fixed integer, $A$ is a compact subset of $\mathbb R^m$, $\mathcal X_m$ can be any closed subset of $\mathcal X^m$ and $\mu \geq 0$ is the regularization parameter. A typical choice for $\mathcal X_m$ would be the datapoints of the training set.
This class then fits into our framework of parametric models above.
Finally, further information on the kernel would be needed to ensure that \cref{assumption:add-geometric-simple} is satisfied.

\end{example}

\begin{example}[Smooth Neural Networks]\label{ex:nn}
Denote by $\mathcal{NN}(x, \theta, \sigma)$ a multi-linear perceptron that takes $x$ as input, has weights $\theta$ and a smooth activation function $\sigma$, for instance the hyperbolic tangent or the Gaussian Error Linear Units (GELU). We choose $\ell(\hat y, y)$ a smooth loss function and we consider the loss $\func(\param, (x, y)) = \ell(\mathcal{NN}(x, \theta, \sigma), y)$ with $\param \in \Theta$ some compact set.
Provided that the inputs $(x, y)$ lie in a compact set $\Xi$, this class fits the parametric framework above.
Note that we require $\sigma$ to be smooth, further work would be required for non-smooth activation functions.
\end{example}

\begin{example}[Family of diffeomorphisms]\label{ex:diffeo}
    Consider maps $\funcaltalt \from \samples \to \samples$ and $(\param, \sample) \in\params \times \samples \mapsto \funcalt_\param(\sample) \in\samples$ and define the parametric loss $\func(\param, \sample)
    = \funcaltalt(\funcalt_\param(\sample))$. Assume that these functions are twice differentiable, that ${\funcaltalt}$ satisfies the second item of \cref{assumption:add-geometric-simple} and that, for every  $\param \in \params$, $\funcalt_\param$ has a inverse $\funcalt_\param^{-1}$ which is also continuously differentiable in a neighborhood of $\params \times \samples$.
    
    As before, this setting fits into the framework above. We now show that \cref{assumption:add-geometric-simple} is satisfied.
    \begin{enumerate}
        \item Since $\funcaltalt$ is continuous, $\funcaltalt$ satisfies the first item of \cref{assumption:add-geometric-simple}. It is satisfied by $\funcs$ as well thanks to $\funcalt_\param^{-1}$ being Lipschitz-continuous in $\sample$ uniformly in $\theta$ by compactness of $\params \times \samples$.
        \item 
 Take $C$ such that both $\funcalt_\param$ and $\funcalt_\param^{-1}$ are $C$-Lipschitz in $\sample$ uniformly in $\param$.
             Since $\argmax \func(\param, \cdot) = \funcalt_\param^{-1}(\argmax \funcaltalt)$, it holds that $\min_{\sol[\samplealt] \in \argmax \funcaltalt} \norm{\funcalt_\param(\sample) - \sol[\samplealt]}
                =
                \min_{\sol[\sample] \in \argmax \func(\param, \cdot)} \norm{\funcalt_\param(\sample) - \funcalt_\param(\sol[\sample])}$ which lies between ${C}^{-1} \min_{\sol[\sample] \in \argmax \func(\param, \cdot)} \norm{\sample - \sol[\sample]}$ and $C \min_{\sol[\sample] \in \argmax \func(\param, \cdot)} \norm{\sample - \sol[\sample]}$.
                Combined with $\funcaltalt$ satisfying the second item of \cref{assumption:add-geometric-simple}, this shows that $\func$ satisfies this condition as well.
    \end{enumerate}
\end{example}
}

\section{Conclusion and perspectives}
\label{sec:cl}

In this work,  we provide generalization guarantees for \ac{WDRO} models that improve over existing literature in the following aspects:
our results avoid the curse of dimensionality, provide exact upper bounds without spurious error terms, and allow for distribution shifts during testing.
We obtained these bounds through the development of an original concentration result on the dual of \ac{WDRO}.
\WAedit{Our framework is general enough to cover regularized versions of the \ac{WDRO} problem: they enjoy similar generalization guarantees as standard \ac{WDRO}, with less restrictive assumptions.}

Our work could be naturally extended in several ways. For instance, it might be possible to relax any of the assumptions (on the sample space, the sampling process, the Wasserstein metric, and the class of functions)
at the expense of additional technical work. \WAedit{Moreover, the crucial role played by the radius of the Wasserstein ball calls for a principled and efficient procedure to select it.
} %

\begin{ack}
This work has been supported by MIAI Grenoble Alpes (ANR-19-P3IA-0003).
\end{ack}

\bibliography{references}

\def\cprime{$'$} \def\cdprime{$''$} \def\cprime{$'$} \def\cprime{$'$}
  \def\cprime{$'$} \def\cdprime{$''$} \def\cprime{$'$} \def\cprime{$'$}
\begin{thebibliography}{48}
\providecommand{\natexlab}[1]{#1}
\providecommand{\url}[1]{\texttt{#1}}
\expandafter\ifx\csname urlstyle\endcsname\relax
  \providecommand{\doi}[1]{doi: #1}\else
  \providecommand{\doi}{doi: \begingroup \urlstyle{rm}\Url}\fi

\bibitem[An and Gao(2021)]{an2021generalization}
Y.~An and R.~Gao.
\newblock Generalization bounds for (wasserstein) robust optimization.
\newblock \emph{Advances in Neural Information Processing Systems},
  34:\penalty0 10382--10392, 2021.

\bibitem[Arbel and Mairal(2022)]{arbelNonConvex}
M.~Arbel and J.~Mairal.
\newblock {Non-Convex Bilevel Games with Critical Point Selection Maps}.
\newblock \emph{Advances in Neural Information Processing Systems},
  36:\penalty0 1--34, 2022.

\bibitem[Arrigo et~al.(2022)Arrigo, Ordoudis, Kazempour, De~Gr{\`e}ve, Toubeau,
  and Vall{\'e}e]{arrigo2022wasserstein}
A.~Arrigo, C.~Ordoudis, J.~Kazempour, Z.~De~Gr{\`e}ve, J.-F. Toubeau, and
  F.~Vall{\'e}e.
\newblock Wasserstein distributionally robust chance-constrained optimization
  for energy and reserve dispatch: An exact and physically-bounded formulation.
\newblock \emph{European Journal of Operational Research}, 296\penalty0
  (1):\penalty0 304--322, 2022.

\bibitem[Azizian et~al.(2023)Azizian, Iutzeler, and
  Malick]{azizian2022regularization}
W.~Azizian, F.~Iutzeler, and J.~Malick.
\newblock Regularization for {W}asserstein distributionally robust
  optimization.
\newblock \emph{ESAIM: COCV}, 29:\penalty0 33, 2023.

\bibitem[Blanchet and Kang(2020)]{blanchet2020semi}
J.~Blanchet and Y.~Kang.
\newblock Semi-supervised learning based on distributionally robust
  optimization.
\newblock \emph{Data Analysis and Applications 3: Computational,
  Classification, Financial, Statistical and Stochastic Methods}, 5:\penalty0
  1--33, 2020.

\bibitem[Blanchet and Murthy(2019)]{blanchet2019quantifying}
J.~Blanchet and K.~Murthy.
\newblock Quantifying distributional model risk via optimal transport.
\newblock \emph{Mathematics of Operations Research}, 44\penalty0 (2):\penalty0
  565--600, 2019.

\bibitem[Blanchet and Shapiro(2023)]{blanchet2023statistical}
J.~Blanchet and A.~Shapiro.
\newblock Statistical limit theorems in distributionally robust optimization.
\newblock \emph{arXiv preprint arXiv:2303.14867}, 2023.

\bibitem[Blanchet et~al.(2021)Blanchet, Murthy, and
  Nguyen]{blanchet2021statistical}
J.~Blanchet, K.~Murthy, and V.~A. Nguyen.
\newblock Statistical analysis of wasserstein distributionally robust
  estimators.
\newblock In \emph{Tutorials in Operations Research: Emerging Optimization
  Methods and Modeling Techniques with Applications}, pages 227--254. INFORMS,
  2021.

\bibitem[Blanchet et~al.(2022{\natexlab{a}})Blanchet, Murthy, and
  Si]{blanchet2022confidence}
J.~Blanchet, K.~Murthy, and N.~Si.
\newblock Confidence regions in wasserstein distributionally robust estimation.
\newblock \emph{Biometrika}, 109\penalty0 (2):\penalty0 295--315,
  2022{\natexlab{a}}.

\bibitem[Blanchet et~al.(2022{\natexlab{b}})Blanchet, Murthy, and
  Zhang]{blanchet2022optimal}
J.~Blanchet, K.~Murthy, and F.~Zhang.
\newblock Optimal transport-based distributionally robust optimization:
  Structural properties and iterative schemes.
\newblock \emph{Mathematics of Operations Research}, 47\penalty0 (2):\penalty0
  1500--1529, 2022{\natexlab{b}}.

\bibitem[Boucheron et~al.(2013)Boucheron, Lugosi, and
  Massart]{boucheronconcentrationinequalitiesnonasymptotic2013}
S.~Boucheron, G.~Lugosi, and P.~Massart.
\newblock \emph{Concentration {{Inequalities}}: A {{Nonasymptotic Theory}} of
  {{Independence}}}.
\newblock {Oxford University Press}, 2013.

\bibitem[Boumal(2023)]{boumal2020intromanifolds}
N.~Boumal.
\newblock \emph{An introduction to optimization on smooth manifolds}.
\newblock Cambridge University Press, 2023.

\bibitem[Chen and Paschalidis(2018)]{chen2018robust}
R.~Chen and I.~C. Paschalidis.
\newblock A robust learning approach for regression models based on
  distributionally robust optimization.
\newblock \emph{Journal of Machine Learning Research}, 19\penalty0 (13), 2018.

\bibitem[Dapogny et~al.(2023)Dapogny, Iutzeler, Meda, and
  Thibert]{dapogny2023entropy}
C.~Dapogny, F.~Iutzeler, A.~Meda, and B.~Thibert.
\newblock Entropy-regularized wasserstein distributionally robust shape and
  topology optimization.
\newblock \emph{Structural and Multidisciplinary Optimization}, 66\penalty0
  (3):\penalty0 42, 2023.

\bibitem[Delage and Ye(2010)]{delageDistributionallyRobustOptimization2010}
E.~Delage and Y.~Ye.
\newblock Distributionally {{Robust Optimization Under Moment Uncertainty}}
  with {{Application}} to {{Data}}-{{Driven Problems}}.
\newblock \emph{Operations Research}, 58:\penalty0 595--612, 2010.

\bibitem[Esfahani and Kuhn(2018)]{esfahani2018data}
P.~M. Esfahani and D.~Kuhn.
\newblock Data-driven distributionally robust optimization using the
  wasserstein metric: Performance guarantees and tractable reformulations.
\newblock \emph{Mathematical Programming}, 171\penalty0 (1):\penalty0 115--166,
  2018.

\bibitem[Feydy et~al.(2019)Feydy, S{\'e}journ{\'e}, Vialard, Amari, Trouv{\'e},
  and Peyr{\'e}]{feydy2019interpolating}
J.~Feydy, T.~S{\'e}journ{\'e}, F.-X. Vialard, S.-i. Amari, A.~Trouv{\'e}, and
  G.~Peyr{\'e}.
\newblock {Interpolating between optimal transport and MMD using Sinkhorn
  divergences}.
\newblock In \emph{The 22nd International Conference on Artificial Intelligence
  and Statistics}, pages 2681--2690. PMLR, 2019.

\bibitem[Fournier and Guillin(2015)]{fournier2015rate}
N.~Fournier and A.~Guillin.
\newblock On the rate of convergence in wasserstein distance of the empirical
  measure.
\newblock \emph{Probability Theory and Related Fields}, 162\penalty0
  (3):\penalty0 707--738, 2015.

\bibitem[Gao(2022)]{gao2022finite}
R.~Gao.
\newblock Finite-sample guarantees for wasserstein distributionally robust
  optimization: Breaking the curse of dimensionality.
\newblock \emph{Operations Research}, 2022.

\bibitem[Gao and Kleywegt(2016)]{gao2016distributionally}
R.~Gao and A.~J. Kleywegt.
\newblock Distributionally robust stochastic optimization with wasserstein
  distance.
\newblock \emph{Mathematics of Operations Research}, 2016.

\bibitem[Genevay et~al.(2016)Genevay, Cuturi, Peyré, and
  Bach]{genevayStochasticOptimizationLargescale2016}
A.~Genevay, M.~Cuturi, G.~Peyré, and F.~Bach.
\newblock Stochastic {{Optimization}} for {{Large}}-scale {{Optimal
  Transport}}.
\newblock In \emph{{{NIPS}} 2016 - {{Thirtieth Annual Conference}} on {{Neural
  Information Processing System}}}, Proc. {{NIPS}} 2016, 2016.

\bibitem[Genevay et~al.(2019)Genevay, Chizat, Bach, Cuturi, and
  Peyr{\'e}]{genevay2019sample}
A.~Genevay, L.~Chizat, F.~Bach, M.~Cuturi, and G.~Peyr{\'e}.
\newblock Sample complexity of sinkhorn divergences.
\newblock In \emph{The 22nd International Conference on Artificial Intelligence
  and Statistics}, pages 1574--1583. PMLR, 2019.

\bibitem[Goh and Sim(2010)]{gohDistributionallyRobustOptimization2010}
J.~Goh and M.~Sim.
\newblock Distributionally {{Robust Optimization}} and {{Its Tractable
  Approximations}}.
\newblock \emph{Operations Research}, 58:\penalty0 902--917, 2010.

\bibitem[Kuhn et~al.(2019)Kuhn, Esfahani, Nguyen, and
  Shafieezadeh-Abadeh]{kuhn2019wasserstein}
D.~Kuhn, P.~M. Esfahani, V.~A. Nguyen, and S.~Shafieezadeh-Abadeh.
\newblock Wasserstein distributionally robust optimization: Theory and
  applications in machine learning.
\newblock In \emph{Operations Research \& Management Science in the Age of
  Analytics}. INFORMS, 2019.

\bibitem[Kwon et~al.(2020)Kwon, Kim, Won, and Paik]{kwon2020principled}
Y.~Kwon, W.~Kim, J.-H. Won, and M.~C. Paik.
\newblock Principled learning method for wasserstein distributionally robust
  optimization with local perturbations.
\newblock In \emph{International Conference on Machine Learning}, pages
  5567--5576. PMLR, 2020.

\bibitem[Laguel et~al.(2020)Laguel, Malick, and Harchaoui]{laguel2020first}
Y.~Laguel, J.~Malick, and Z.~Harchaoui.
\newblock First-order optimization for superquantile-based supervised learning.
\newblock In \emph{2020 IEEE 30th International Workshop on Machine Learning
  for Signal Processing (MLSP)}, pages 1--6. IEEE, 2020.

\bibitem[Lee and Raginsky(2018)]{lee2018minimax}
J.~Lee and M.~Raginsky.
\newblock Minimax statistical learning with wasserstein distances.
\newblock \emph{Advances in Neural Information Processing Systems}, 31, 2018.

\bibitem[Lee(2018)]{lee2018introduction}
J.~M. Lee.
\newblock \emph{Introduction to Riemannian manifolds}, volume~2.
\newblock Springer, 2018.

\bibitem[Levy et~al.(2020)Levy, Carmon, Duchi, and Sidford]{levy2020large}
D.~Levy, Y.~Carmon, J.~C. Duchi, and A.~Sidford.
\newblock Large-scale methods for distributionally robust optimization.
\newblock \emph{Advances in Neural Information Processing Systems}, 33, 2020.

\bibitem[Li et~al.(2020)Li, Chen, and
  So]{lifastepigraphicalprojectionbased2020}
J.~Li, C.~Chen, and A.~M.-C. So.
\newblock Fast {{Epigraphical Projection}}-based {{Incremental Algorithms}} for
  {{Wasserstein Distributionally Robust Support Vector Machine}}.
\newblock In \emph{Advances in {{Neural Information Processing Systems}}},
  volume~33, pages 4029--4039. {Curran Associates, Inc.}, 2020.

\bibitem[Paty and Cuturi(2020)]{patyRegularizedOptimalTransport2020}
F.-P. Paty and M.~Cuturi.
\newblock Regularized {{Optimal Transport}} is {{Ground Cost Adversarial}}.
\newblock In \emph{{{ICML}}}, 2020.

\bibitem[Peyr{\'e} and Cuturi(2019)]{peyre2019computational}
G.~Peyr{\'e} and M.~Cuturi.
\newblock Computational optimal transport: With applications to data science.
\newblock \emph{Foundations and Trends{\textregistered} in Machine Learning},
  11\penalty0 (5-6):\penalty0 355--607, 2019.

\bibitem[Piat et~al.(2022)Piat, Fadili, Jurie, and
  da~Veiga]{piat2022regularized}
W.~Piat, J.~Fadili, F.~Jurie, and S.~da~Veiga.
\newblock Regularized robust optimization with application to robust learning.
\newblock 2022.

\bibitem[Rockafellar and Wets(1998)]{rockafellarVariationalAnalysis1998}
R.~T. Rockafellar and R.~J.-B. Wets.
\newblock \emph{Variational {{Analysis}}}.
\newblock Grundlehren Der Mathematischen {{Wissenschaften}}. {Springer-Verlag},
  1998.

\bibitem[Rudin(1987)]{rudinrealcomplexanalysis1987}
W.~Rudin.
\newblock \emph{Real and Complex Analysis}.
\newblock {McGraw-Hill}, 1987.

\bibitem[Shafieezadeh~Abadeh et~al.(2015)Shafieezadeh~Abadeh,
  Mohajerin~Esfahani, and
  Kuhn]{shafieezadehabadehdistributionallyrobustlogistic2015}
S.~Shafieezadeh~Abadeh, P.~M. Mohajerin~Esfahani, and D.~Kuhn.
\newblock Distributionally {{Robust Logistic Regression}}.
\newblock In \emph{Advances in {{Neural Information Processing Systems}}},
  volume~28. {Curran Associates, Inc.}, 2015.

\bibitem[{Shafieezadeh-Abadeh} et~al.(2019){Shafieezadeh-Abadeh}, Kuhn, and
  Esfahani]{shafieezadehabadehregularizationmasstransportation2019}
S.~{Shafieezadeh-Abadeh}, D.~Kuhn, and P.~M. Esfahani.
\newblock Regularization via {{Mass Transportation}}.
\newblock \emph{Journal of Machine Learning Research}, 20:\penalty0 1--68,
  2019.

\bibitem[Sinha et~al.(2018)Sinha, Namkoong, and Duchi]{sinha2018certifying}
A.~Sinha, H.~Namkoong, and J.~Duchi.
\newblock Certifying some distributional robustness with principled adversarial
  training.
\newblock In \emph{International Conference on Learning Representations}, 2018.

\bibitem[Song et~al.(2023)Song, He, Ding, and Zhao]{song2023provably}
J.~Song, N.~He, L.~Ding, and C.~Zhao.
\newblock Provably convergent policy optimization via metric-aware trust region
  methods.
\newblock \emph{arXiv preprint arXiv:2306.14133}, 2023.

\bibitem[Staib and Jegelka(2019)]{staib2019distributionally}
M.~Staib and S.~Jegelka.
\newblock Distributionally robust optimization and generalization in kernel
  methods.
\newblock \emph{Advances in Neural Information Processing Systems}, 32, 2019.

\bibitem[Wainwright(2019)]{wainwright2019high}
M.~J. Wainwright.
\newblock \emph{High-dimensional statistics: A non-asymptotic viewpoint},
  volume~48.
\newblock Cambridge university press, 2019.

\bibitem[Wang and Xie(2022)]{wang2022data}
J.~Wang and Y.~Xie.
\newblock A data-driven approach to robust hypothesis testing using sinkhorn
  uncertainty sets.
\newblock In \emph{2022 IEEE International Symposium on Information Theory
  (ISIT)}, pages 3315--3320. IEEE, 2022.

\bibitem[Wang et~al.(2022)Wang, Moore, Xie, and
  Kamaleswaran]{wang2022improving}
J.~Wang, R.~Moore, Y.~Xie, and R.~Kamaleswaran.
\newblock Improving sepsis prediction model generalization with optimal
  transport.
\newblock In \emph{Machine Learning for Health}, pages 474--488. PMLR, 2022.

\bibitem[Wang et~al.(2023)Wang, Gao, and Xie]{wang2021sinkhorn}
J.~Wang, R.~Gao, and Y.~Xie.
\newblock Sinkhorn distributionally robust optimization.
\newblock \emph{arXiv preprint arXiv:2109.11926}, 2023.

\bibitem[Zeng and Lam(2022)]{zeng2022generalization}
Y.~Zeng and H.~Lam.
\newblock Generalization bounds with minimal dependency on hypothesis class via
  distributionally robust optimization.
\newblock \emph{Advances in Neural Information Processing Systems},
  35:\penalty0 27576--27590, 2022.

\bibitem[Zhao and Guan(2018)]{zhao2018data}
C.~Zhao and Y.~Guan.
\newblock Data-driven risk-averse stochastic optimization with wasserstein
  metric.
\newblock \emph{Operations Research Letters}, 46\penalty0 (2):\penalty0
  262--267, 2018.

\bibitem[Zhu et~al.(2021{\natexlab{a}})Zhu, Jitkrittum, Diehl, and
  Sch{\"o}lkopf]{zhu2021kernel}
J.-J. Zhu, W.~Jitkrittum, M.~Diehl, and B.~Sch{\"o}lkopf.
\newblock Kernel distributionally robust optimization: Generalized duality
  theorem and stochastic approximation.
\newblock In \emph{International Conference on Artificial Intelligence and
  Statistics}, pages 280--288. PMLR, 2021{\natexlab{a}}.

\bibitem[Zhu et~al.(2021{\natexlab{b}})Zhu, Kouridi, Nemmour, and
  Sch{\"o}lkopf]{zhu2021adversarially}
J.-J. Zhu, C.~Kouridi, Y.~Nemmour, and B.~Sch{\"o}lkopf.
\newblock Adversarially robust kernel smoothing.
\newblock \emph{arXiv preprint arXiv:2102.08474}, 2021{\natexlab{b}}.

\end{thebibliography}
\bibliographystyle{abbrvnat}

\newpage
\renewcommand{\appendixpagename}{Appendices}%
\begin{appendices}

We provide here the proofs of our main results \cref{thm:informal-reg,thm:informal-unreg,thm:informal-unreg-weak}, along with detailed versions that include explicit bounds.
We start, in \cref{sec:prelim}, by referencing preliminary results, reformulate some of our assumptions, and introduce quantities that appear in the final bounds.

As sketched in \cref{sec:outline}, our proof is built on two main parts.
The first part is to obtain a concentration bound on the dual problems \cref{eq:dualreg,eq:dualregtrue} by leveraging on a lower bound on the dual multiplier.
This concentration result is presented in \cref{sec:concentration} where we assume that such a lower-bound is given.
The second part of the proof then consists in establishing the lower-bound. We have to distinguish two cases: when $\radius$ is small (\cref{sec:small}) and when $\radius$ is close to the critical radius $\radius_c$ (\cref{sec:critical}). For the latter, we also need to treat separately the cases where the \ac{WDRO} problem is regularized or not (respectively \cref{sec:criticalreg} and \cref{sec:criticalunreg}): this is where the \cref{assumption:gradient-funcs,assumption:add-geometric-simple}, that are not required in the regularized case, come into play.
Putting together these two parts, we obtain our precise theorems in \cref{sec:proofcl} and show how they imply our main results.
\cref{sec:upperbounds} then complements our theorems to obtain \cref{thm:sandwich}.
Finally, some variations of known results and technical computations are compiled in \cref{app:lemmas} as standalone lemmas \WAedit{and, in \cref{app:num}, we provide numerical illustrations for linear models.}

\startcontents[app]{}
\printcontents[app]{l}{1}[2]{}

\section{Preliminaries} \label{sec:prelim}
This section presents preliminary results before we start the proofs in \cref{sec:concentration}.
In the first part of this section \cref{sec:detailedhyp}, we present a weaker and more detailed version of \cref{assumption:funcs-simple}, namely \cref{assumption:funcs}, that will suffice for all the proofs in the appendix. We also introduce several quantities that will appear in the final bounds.
Then, in \cref{sec:detailedhyp} we recall the dual problems introduced in \cref{sec:outline} and justify that strong duality holds.
Preliminary approximation results on the dual are then given in \cref{sec:approx-dual-generator}.
We then proceed to show the relative compactness of the class $\funcs$ \wrt several metrics in \cref{sec:compactness}. These properties provide a convenient way of ensuring that quantities involving $\funcs$ are finite, \eg complexity measures or supremums over $\funcs$.
Finally, we introduce the so-called parametric Morse-Bott condition of \citet{arbelNonConvex} and show how it implies the second item of \cref{assumption:funcs-simple} in a Riemannian setting.

\subsection{Detailed assumption on the function class and important quantities}\label{sec:detailedhyp}

Here we present the precise assumptions that we will refer to in the proofs. 
While \cref{assumption:set,assumption:support_interior-simple} are used as presented in the main text, we slightly weaken \cref{assumption:funcs-simple} to \cref{assumption:funcs}. 
We also introduce some quantities that we will be of interest for the proofs and the final results.

\begin{assumption}[On the function class]\label{assumption:funcs}
    Consider $\funcs$ a set of real-valued non-negative continuous functions on $\samples$.
    We assume that:
    \begin{itemize}
        \item the functions $\func \in \funcs$ are uniformly $\smooth$-smooth;
        \item the gradients are uniformly bounded, \ie
        \begin{align}
            \vbound &\defeq \sup_{\func \in \funcs} \sup_{\sample \in \supp \prob} \norm{\nabla \func(\sample)}_2 < + \infty
        \end{align}
        \item when $\regparam=0$, the supremum in \eqref{eq:dual_gen} is finite, \ie 
        \begin{align}
            \bdcstalt(\dualvar) &\defeq \sup_{\func \in \funcs} \sup_{\sample \in \supp \prob} ~~\sup_{\argmax\{\func - \half[\dualvar]\sqnorm{\sample -  \cdot}\}} {\func} < + \infty \,.
        \end{align}
    \end{itemize}
\end{assumption}

Note that the non-negativity assumption is without loss of generality since otherwise, it suffices to consider $\widetilde \funcs \defeq \setdef{\func - \min \func}{\func \in \funcs}$ and our results are invariant by addition of a constant.

\newcommand{\blanketapp}{\cref{assumption:set,assumption:funcs,assumption:support_interior-simple}\xspace}%
\textbf{The blanket assumptions for the remaining of the appendix will be \blanketapp.}

The following finite quantities are relevant for the proofs and appear in the quantitative versions of \cref{thm:informal-unreg,thm:informal-unreg-weak,thm:informal-reg}.
\begin{align}
    \maxcost &\defeq \sup \setdef*{\half \sqnorm{\sample -  \sol[\sample]}}{\sample \in \supp \prob,\, \func \in \funcs,\, \sol[\sample] \in \argmax \func}\,, \\
    \Cost(\sdev) &\defeq \sup_{\sample \in \supp \prob} \ex_{\samplealt\sim\basecpl(\cdot | \sample)} \left[ \half \sqnorm{\sample -  \samplealt} \right]\,,\\
    \Cost_\funcs(\reg, \sdev) &\defeq \sup_{\func \in \conv(\funcs)} \sup_{\sample \in \supp \prob} \ex_{\samplealt \sim \modbase(\cdot | \sample)} \half \sqnorm{\sample -  \samplealt}\,,\\
   \text{and } ~~~ \Var(\reg,\sdev) &\defeq \sup_{\func \in \conv(\funcs)} \sup_{\sample \in \supp \prob}  \sup_{\dualvar \geq 0} \Var_{\samplealt \sim \modbase[\frac{\func - \dualvar \sqnorm{\sample -  \cdot}/2}{\reg}](\cdot | \sample)} \half \sqnorm{\sample -  \samplealt}\,, 
\end{align}
where  $\reg > 0$, $\sdev > 0$, and  $\basecpl$ is given by \cref{eq:baseconditional}.

\subsection{Strong duality}\label{sec:strong-duality}

As mentioned in \cref{sec:outline}, duality plays a central role in our proofs. 
Let us recall the central notion of dual generator functions, introduced in \eqref{eq:dual_gen}: for any $\func \in \funcs$, $\sample \in \samples$, $\dualvar, \reg \geq 0$ and $\sdev > 0$, the dual generator $\dualfunc$ is given as 
\begin{align}%
    \dualfunc(\func, \sample, \dualvar, \reg,\sdev)  &\defeq  \left\{ 
\begin{array}{ll}
    \sup_{\samplealt \in \samples} \braces*{\func(\samplealt) - \half[\dualvar] \sqnorm{\sample - \samplealt}} 
        &  \text{ if } \reg = 0\\
    \reg\log \parens*{\ex_{\samplealt \sim \basecpl(\cdot | \sample)} \exp\parens*{
            \frac{\func(\samplealt) - \dualvar \sqnorm{\sample - \samplealt}/2}{\reg}}} 
            &\text{ if } \reg > 0 \, .
\end{array}
    \right. 
\end{align}

Our proofs are based on the (strong) dual formulations of \ac{WDRO}, as given by the following lemma that summarizes results of the literature for the regularized and unregularized cases.
\begin{lemma}\label{lemma:strong-duality}
    Under the blanket assumptions,
    for $\func \in \funcs$, $\radius > 0$, $\reg \geq 0$ and $\sdev > 0$,
\begin{align}
    \emprisk[\radius^2][\reg] &= \inf_{\dualvar \geq 0} \dualvar \radius^2 + \ex_{\sample\sim\empirical} \left[ \dualfunc(\func, \sample, \dualvar, \reg,\sdev) \right]   \\
   \text{ and } ~~  \risk[\radius^2][\reg] &= \inf_{\dualvar \geq 0} \dualvar \radius^2 + \ex_{\sample\sim\prob} \left[ \dualfunc(\func, \sample, \dualvar, \reg,\sdev) \right]   \,.
\end{align}
\end{lemma}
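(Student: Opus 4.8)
Both identities are instances of Lagrangian strong duality applied to the coupling reformulations of the robust risks in \eqref{eq:emp-reg-risk} and \eqref{eq:true-reg-risk} (together with their $\reg = 0$ specialisations), and they coincide with the dual representations of \citet{gao2016distributionally,blanchet2019quantifying} in the unregularized case and of \citet{wang2021sinkhorn,azizian2022regularization} in the regularized one. The plan is either to invoke those results after checking that \blanketapp ensure their hypotheses, or---for self-containedness---to reproduce the short argument below, treating $\reg = 0$ and $\reg > 0$ in parallel; I detail it for $\emprisk[\radius^2][\reg]$, the statement for $\risk[\radius^2][\reg]$ being obtained verbatim with $\empirical$ and $\empbasecpl$ replaced by $\prob$ and $\basecpl$.

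\textbf{Lagrangian relaxation and fiberwise reduction.} First I would fix $\func \in \funcs$, $\radius > 0$, $\sdev > 0$ and, for $\dualvar \geq 0$, dualise the transport-cost constraint $\ex_\coupling[\half \sqnorm{\sample - \samplealt}] \leq \radius^2$; weak duality immediately gives $\emprisk[\radius^2][\reg] \leq \inf_{\dualvar \geq 0} \{ \dualvar \radius^2 + L(\dualvar) \}$, where $L(\dualvar)$ is the supremum, over couplings $\coupling$ with first marginal $\empirical$, of $\ex_{\samplealt \sim \coupling_2}[\func(\samplealt)] - \dualvar \ex_\coupling[\half \sqnorm{\sample - \samplealt}]$, minus $\reg \dkl(\coupling \,|\, \empbasecpl)$ when $\reg > 0$. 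Disintegrating $\coupling$ along its fixed first marginal, $\coupling(\dd\sample, \dd\samplealt) = \empirical(\dd\sample)\, \coupling(\dd\samplealt \,|\, \sample)$, decouples this supremum over $\sample$: for each $\sample$ one maximises over probability measures $\coupling(\cdot | \sample)$ on $\samples$ the quantity $\ex_{\samplealt \sim \coupling(\cdot|\sample)}[\func(\samplealt) - \dualvar \sqnorm{\sample - \samplealt}/2]$ (minus $\reg \dkl(\coupling(\cdot|\sample) \,|\, \basecpl(\cdot|\sample))$ if $\reg > 0$). When $\reg = 0$ the objective is linear in the measure, so the optimum is a Dirac at a point of $\argmax \{ \func - \dualvar \sqnorm{\sample - \cdot}/2 \}$---nonempty with finite value since $\samples$ is compact and $\func$ continuous---and its value is $\dualfunc(\func, \sample, \dualvar, 0, \sdev)$. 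When $\reg > 0$ this is a Gibbs variational problem whose optimum, by the Donsker--Varadhan identity, is attained at the measure proportional to $\exp((\func(\samplealt) - \dualvar \sqnorm{\sample - \samplealt}/2)/\reg)\, \basecpl(\dd\samplealt | \sample)$ and equals the log-partition function $\dualfunc(\func, \sample, \dualvar, \reg, \sdev)$, which is finite because $\func$ and $\samples$ are bounded. A measurable-selection and Fubini argument then lets one integrate the fiber values against $\empirical$, yielding $L(\dualvar) = \ex_{\sample \sim \empirical}[\dualfunc(\func, \sample, \dualvar, \reg, \sdev)]$.

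\textbf{Closing the duality gap---the main obstacle.} The only nontrivial point is the reverse inequality, \ie the absence of a duality gap, $\sup_\coupling \inf_{\dualvar \geq 0} = \inf_{\dualvar \geq 0} \sup_\coupling$. Here I would use that, since $\radius > 0$, the identity coupling of $\empirical$ with itself has transport cost $0 < \radius^2$ and is thus a strictly feasible (Slater) point; that the set of couplings with first marginal $\empirical$ is weak-$*$ compact because $\samples$ is compact; that $\coupling \mapsto \ex_{\samplealt \sim \coupling_2}[\func(\samplealt)] - \reg \dkl(\coupling \,|\, \empbasecpl)$ is concave and upper semicontinuous (an affine term minus a lower-semicontinuous convex penalty); and that the Lagrangian is affine in $\dualvar$ for each fixed coupling; Sion's minimax theorem then permits the exchange and closes the gap. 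For $\reg > 0$ one must additionally check that the Gibbs optimiser is admissible (it is a probability measure supported in $\samples$) and that the penalty stays finite along a maximising sequence, both of which follow from boundedness of $\func$ and $\samples$ and from $\basecpl(\cdot | \sample)$ having full support in $\samples$. The empirical case being finite-dimensional, integrability there is automatic; for the $\prob$ case one needs only the uniform boundedness in \cref{assumption:funcs}. Alternatively, as noted, the whole lemma follows by quoting \citet{gao2016distributionally,blanchet2019quantifying} and \citet{wang2021sinkhorn,azizian2022regularization}, whose hypotheses are met under \blanketapp---in particular $\bdcstalt(\dualvar) < \infty$ for the $\reg = 0$ branch.
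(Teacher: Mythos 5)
Your proposal is correct and, via its closing alternative, reduces to exactly the paper's own proof, which consists of a single citation to \citet{blanchet2019quantifying,gao2016distributionally} for $\reg=0$ and \citet{azizian2022regularization} for $\reg>0$. The self-contained argument you sketch beforehand (Lagrangian relaxation, disintegration along the fixed first marginal, Donsker--Varadhan for the Gibbs fiber problem, and Sion's minimax with Slater/compactness to close the gap) is a sound elaboration of what those references establish, so either route is acceptable.
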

\begin{proof}
See \citet{blanchet2019quantifying,gao2016distributionally} for the unregularized case and \citet{azizian2022regularization} for the regularized case.
\end{proof}

\subsection{Approximation of the dual generator $\dualfunc$}\label{sec:approx-dual-generator}

Important preliminary results for our upcoming concentration bounds (in \cref{sec:concentration}) are quantitative approximations of the dual generator $\dualfunc$, namely \cref{lemma:bound_dualfunc,lemma:approx_laplace_dualfunc}. In particular, these results also imply bounds on $\dualfunc$ in \cref{lemma:bound-rv}.

\begin{proposition}[Bounding the distance between $\dualfunc$ and $\func$]\label{lemma:bound_dualfunc}
    There are positive constants $\init[\dualvar]$, $\init[\reg]$, $\init[\sdev]$, $\cst{approx_phi_coeff}$, $\cst{approx_phi_rate}$ which depend on $\vbound$, $\GeoRad$ and $\dims$ such that taking some $\ubdualvar \geq \lbdualvar  \geq \init[\dualvar] + \smooth $, we have for any $\func \in \funcs$, $\sample \in \supp \prob$, $\dualvar \in [\lbdualvar,\ubdualvar]$, $\reg \in [0, \init[\reg] ] $ and $\sdev \in (0, \init[\sdev]]$
    \begin{align}
         \abs{{\dualfunc}(\func, \sample, \dualvar , \reg,\sdev) - \func(\sample)} \leq  
         \bdcst(\lbdualvar, \ubdualvar, \reg, \sdev)
    \end{align}
    where 
    \begin{align}
        \bdcst(\lbdualvar, \ubdualvar, \reg, \sdev)
       \defeq 
        \inv{2 \lbdualvar} \vbound^2 + \half[\reg \dims] \log \parens*{\frac{\ubdualvar}{\reg} + \inv{\sdev^2}}
        + \reg \log 2+ \reg \dims \abs{\log \sdev}+ 
    \reg\cst{approx_phi_coeff} e^{-\cst{approx_phi_rate}\parens*{\frac{\lbdualvar - \smooth}{\reg}}^{\third}}\,.
    \end{align}
\end{proposition}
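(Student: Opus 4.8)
The plan is to split the estimate according to the two branches of $\dualfunc$. First I would handle the unregularized case $\reg=0$, where $\dualfunc(\func,\sample,\dualvar,0,\sdev)=\sup_{\samplealt\in\samples}\braces*{\func(\samplealt)-\half[\dualvar]\sqnorm{\sample-\samplealt}}$ is a one-sided Moreau envelope, and then observe that the elementary bound $\reg\log\mathbb{E}[e^{X/\reg}]\le\sup X$ makes the \emph{upper} bound for $\reg>0$ a consequence of the one for $\reg=0$; only the \emph{lower} bound for $\reg>0$ then remains, and it is the substance of the argument.

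\textbf{The case $\reg=0$.} The lower bound $\dualfunc(\func,\sample,\dualvar,0,\sdev)\ge\func(\sample)$ holds by taking $\samplealt=\sample$. For the upper bound, fix $\samplealt\in\samples$; since $\samples$ is convex (\cref{assumption:set}) the segment $[\sample,\samplealt]$ lies in $\samples$, so the descent lemma for the $\smooth$-smooth $\func$ gives $\func(\samplealt)\le\func(\sample)+\inner{\grad\func(\sample),\samplealt-\sample}+\half[\smooth]\sqnorm{\samplealt-\sample}$; using $\norm{\grad\func(\sample)}\le\vbound$ (valid since $\sample\in\supp\prob$) and maximizing the concave quadratic $w\mapsto\inner{\grad\func(\sample),w}-\tfrac{\dualvar-\smooth}{2}\sqnorm{w}$ (valid since $\dualvar\ge\lbdualvar>\smooth$), one gets $\func(\samplealt)-\half[\dualvar]\sqnorm{\sample-\samplealt}\le\func(\sample)+\tfrac{\vbound^2}{2(\dualvar-\smooth)}$. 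Taking the supremum over $\samplealt$ and recalling $\dualvar\ge\lbdualvar\ge\init[\dualvar]+\smooth$ with $\init[\dualvar]$ large, this gives the first term $\inv{2\lbdualvar}\vbound^2$ of $\bdcst$ — the only one active at $\reg=0$; en passant, the maximizer obeys $\norm{\samplealt-\sample}\le\vbound/(\dualvar-\smooth)\le\GeoRad$ once $\init[\dualvar]\ge\vbound/\GeoRad$, so the ball $\ball(\sample,\GeoRad)\subset\samples$ (\cref{assumption:support_interior-simple}) already captures the relevant geometry. For $\reg>0$ the upper bound is then immediate: $\dualfunc(\func,\sample,\dualvar,\reg,\sdev)\le\dualfunc(\func,\sample,\dualvar,0,\sdev)\le\func(\sample)+\inv{2\lbdualvar}\vbound^2\le\func(\sample)+\bdcst(\lbdualvar,\ubdualvar,\reg,\sdev)$.

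\textbf{The lower bound for $\reg>0$.} Write $\basecpl(\dd\samplealt\mid\sample)=\partition^{-1}\one_{\samplealt\in\samples}e^{-\sqnorm{\sample-\samplealt}/(2\sdev^2)}\dd\samplealt$ with $\partition=\int_\samples e^{-\sqnorm{\sample-\samplealt}/(2\sdev^2)}\dd\samplealt$, so that $\dualfunc(\func,\sample,\dualvar,\reg,\sdev)-\func(\sample)=\reg\log\bigl(\partition^{-1}\int_\samples e^{(\func(\samplealt)-\func(\sample))/\reg-\mu\sqnorm{\sample-\samplealt}}\dd\samplealt\bigr)$ with $\mu=\tfrac{\dualvar}{2\reg}+\tfrac1{2\sdev^2}$. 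Since $\sample\in\supp\prob$, $\partition\le(2\pi)^{\dims/2}\sdev^\dims$; and since $[\sample,\samplealt]\subset\samples$ and $\func$ is $\smooth$-smooth, $\func(\samplealt)-\func(\sample)\ge\inner{\grad\func(\sample),\samplealt-\sample}-\half[\smooth]\sqnorm{\samplealt-\sample}$ on all of $\samples$. Restricting the integral to $\ball(\sample,\GeoRad)\subset\samples$, substituting $v=\samplealt-\sample$, and using that $\ball(0,\GeoRad)$ is symmetric and $v\mapsto e^{-\nu\norm{v}^2}$ is even to discard the linear term (average $v\leftrightarrow-v$ and $\cosh\ge1$), with $\nu=\mu+\tfrac{\smooth}{2\reg}=\tfrac{\dualvar+\smooth}{2\reg}+\tfrac1{2\sdev^2}$, gives
\[
\int_\samples e^{(\func(\samplealt)-\func(\sample))/\reg-\mu\sqnorm{\sample-\samplealt}}\dd\samplealt
\;\ge\;\int_{\ball(0,\GeoRad)}e^{-\nu\norm{v}^2}\dd v
\;=\;\nu^{-\dims/2}\Bigl(\pi^{\dims/2}-\!\!\int_{\norm{u}>\GeoRad\sqrt\nu}\!\!e^{-\norm{u}^2}\dd u\Bigr)
\;\ge\;\nu^{-\dims/2}\pi^{\dims/2}\bigl(1-\Const e^{-\GeoRad^2\nu/4}\bigr),
\]
where $\Const$ depends only on $\dims$ and we used that $\nu\ge\init[\dualvar]/(2\init[\reg])$ is large. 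Taking $\reg\log$ and using $\partition\le(2\pi)^{\dims/2}\sdev^\dims$, $\nu\le\frac{\ubdualvar}{\reg}+\inv{\sdev^2}$ (since $\dualvar+\smooth\le2\ubdualvar$), $\log(1-x)\ge-2x$, $\nu\ge\tfrac{\dualvar-\smooth}{2\reg}$, and $\tfrac{\dualvar-\smooth}{\reg}\ge1$, one absorbs every constant into $\vbound,\GeoRad,\dims$ and obtains $\dualfunc(\func,\sample,\dualvar,\reg,\sdev)-\func(\sample)\ge-\bdcst(\lbdualvar,\ubdualvar,\reg,\sdev)$: the terms $\reg\dims\abs{\log\sdev}$, $\reg\log2$ and $\half[\reg\dims]\log\parens*{\frac{\ubdualvar}{\reg}+\inv{\sdev^2}}$ come from $-\reg\log\partition-\half[\reg\dims]\log\nu$, and the exponentially small term $\reg\cst{approx_phi_coeff}e^{-\cst{approx_phi_rate}((\dualvar-\smooth)/\reg)^{1/3}}$ from the neglected Gaussian tail.

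\textbf{Main obstacle.} There is no single hard step; the care lies in three places: (i) the symmetry/$\cosh$ trick, which eliminates the linear term \emph{for free} and is what keeps the final bound clean (integrating it instead would only reproduce, up to constants, the first term $\inv{2\lbdualvar}\vbound^2$); (ii) keeping all constants — in particular $\init[\dualvar],\init[\reg],\init[\sdev],\cst{approx_phi_coeff},\cst{approx_phi_rate}$ — dependent only on $\vbound,\GeoRad,\dims$, which forces one to absorb the smoothness constant $\smooth$ by taking $\init[\dualvar]$ large and $\init[\reg],\init[\sdev]$ small; and (iii) checking that every inequality above is uniform over $\func\in\funcs$, $\sample\in\supp\prob$, $\dualvar\in[\lbdualvar,\ubdualvar]$, $\reg\in[0,\init[\reg]]$ and $\sdev\in(0,\init[\sdev]]$. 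Note that the $\tfrac13$-power in $\bdcst$ is merely a convenient weakening: the computation above actually yields the stronger, linear-exponent tail bound $e^{-\GeoRad^2(\dualvar-\smooth)/(8\reg)}$, and $e^{-ct}\le e^{-ct^{1/3}}$ for $t\ge1$.
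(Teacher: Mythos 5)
Your proposal is correct, and it reaches the bound by a genuinely different route than the paper. The paper proves this proposition as a corollary of a two-sided Laplace approximation (\cref{lemma:approx_laplace_dualfunc}): it linearizes $\func$ at $\sample$, completes the square so that the Gaussian integral recenters at $\sample + \regalt\grad\func(\sample)$, evaluates it exactly to obtain the explicit surrogate $\apprx{\dualfunc}$ (which retains the term $\tfrac{1}{2(\dualvar+\reg/\sdev^2)}\|\grad\func(\sample)\|^2$), and controls the restriction of the integral to $\samples$ via \cref{lemma:approx_laplace}; a separate lemma then bounds the normalization constant from both sides. You instead treat the two inequalities asymmetrically: the upper bound falls out of $\reg\log\ex[e^{X/\reg}]\le\sup X$ plus the descent lemma, and the lower bound restricts the integral to $\ball(\sample,\GeoRad)$ and kills the linear term by the $\cosh\ge 1$ symmetrization. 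This is more elementary — you never need the shifted center to lie in $\samples$, nor a lower bound on the normalizing constant $Z(\sample,\sdev)$ — but it is also strictly weaker than the paper's lemma: discarding the gradient term is harmless here, where only $|\dualfunc-\func(\sample)|$ is needed, yet the explicit form of $\apprx{\dualfunc}$ is reused later (e.g.\ in \cref{lemma:lb_dualvar_population_asymptotic}) where that term is essential, which is why the paper takes the longer road. Two bookkeeping points, neither fatal: your first term comes out as $\vbound^2/(2(\lbdualvar-\smooth))$ rather than $\vbound^2/(2\lbdualvar)$ (the paper's own proof silently makes the same substitution when passing from the Laplace lemma to the displayed bound, so the statement should really carry $\lbdualvar-\smooth$, or the constants be re-tuned); and your $\pi^{\dims/2}$ vs.\ $(2\pi)^{\dims/2}$ normalizations produce a $\tfrac{\reg\dims}{2}\log 2$-type additive term where the statement has $\reg\log 2$ — again absorbable by a harmless redefinition of the constants, which the statement leaves unspecified anyway.
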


The proof of this result is based on the following second approximation result which gives a precise approximation of $\dualfunc$ that will be used several times in the upcoming proof. 

More precisely, we want to approximate $\dualfunc$ by a Taylor development $\apprx{\dualfunc}$ defined for any $\func \in \funcs$, $\sample \in \supp \prob$, $\dualvar \geq 0$, $\reg \geq 0$ and $\sdev > 0$ as
\begin{align}\label{eq:phi_bar}
    \apprx{\dualfunc}(\func, \sample, \dualvar, \reg,\sdev) \defeq \func(\sample) + \inv{2 \left(\dualvar + \frac{\reg}{\sdev^2}\right)} \norm*{\grad \func(\sample)}^2_2 - \half[\reg \dims] \log \parens*{\frac{\dualvar}{\reg} + \inv{\sdev^2}}
    + \reg \log \frac{(2\pi)^{\half[\dims]}}{\partition}
\end{align}
where $\partition \defeq \int_{\samples} e^{-\frac{\norm{\sample - \samplealt}^2_2}{2 \sdev^2}} \dd \samplealt$.
The distance between $\dualfunc$ and  $\apprx{\dualfunc}$ is then controlled by the following Laplace approximation lemma.

\begin{lemma}[Approximation of $\dualfunc$]\label{lemma:approx_laplace_dualfunc}
    There are positive constants $\init[\dualvar], \init[\reg],\, \init[\sdev],\, \cst{approx_phi_coeff},\, \cst{approx_phi_rate}$ which depend on $\vbound,\, \GeoRad$ and $\dims$ such that $\init[\reg] \leq \init[\dualvar]$ and, when $\reg \in [0, \init[\reg] ] $, $\sdev \in (0, \init[\sdev]]$ and $\dualvar \geq \init[\dualvar] + \smooth$, we have  for any $\func \in \funcs$, $\sample \in \supp \prob$
    \begin{align}
        \apprx{\dualfunc}(\func, \sample, \dualvar + \smooth, \reg,\sdev) - \reg\cst{approx_phi_coeff} e^{-\cst{approx_phi_rate}\parens*{\frac{\dualvar + \smooth}{\reg}}^{\third}}
        \leq {\dualfunc}(\func, \sample, \dualvar , \reg,\sdev) \leq 
        \apprx{\dualfunc}(\func, \sample, \dualvar - \smooth, \reg,\sdev)
        +
\reg\cst{approx_phi_coeff} e^{-\cst{approx_phi_rate}\parens*{\frac{\dualvar - \smooth}{\reg}}^{\third}}\,.
    \end{align}
\end{lemma}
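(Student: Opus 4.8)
The plan is to establish the two-sided bound by an explicit Laplace-type approximation of the $\reg\log$-integral defining $\dualfunc$. Using the Gaussian form of the conditional reference in \cref{eq:baseconditional} and the substitution $\samplealt=\sample+\uvec$, one first rewrites, for $\reg>0$,
\[
  \dualfunc(\func,\sample,\dualvar,\reg,\sdev)=\reg\log\parens*{\frac1\partition\int_{\samples-\sample}\exp\parens*{\frac{\func(\sample+\uvec)}{\reg}-\half\parens*{\frac{\dualvar}{\reg}+\inv{\sdev^2}}\sqnorm{\uvec}}\dd\uvec}\,,
\]
with $\partition=\int_{\samples}\exp(-\sqnorm{\sample-\samplealt}/(2\sdev^2))\dd\samplealt$ as in \cref{eq:phi_bar}; the factor $\partition$ cancels against the matching term in $\apprx\dualfunc$, so it never needs to be estimated. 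Write $\beta_{\pm}\defeq\frac{\dualvar\pm\smooth}{\reg}+\inv{\sdev^2}$; since $\dualvar\geq\init[\dualvar]+\smooth$ we have $\beta_-\ge\init[\dualvar]/\reg>0$, so all Gaussian integrals below converge.

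By $\smooth$-smoothness of $\func$, $\abs{\func(\sample+\uvec)-\func(\sample)-\inner{\grad\func(\sample),\uvec}}\leq\half[\smooth]\sqnorm{\uvec}$ for every $\uvec$ with $\sample+\uvec\in\samples$, so the integrand is pinched between $\exp\parens*{\frac{\func(\sample)+\inner{\grad\func(\sample),\uvec}}{\reg}-\frac{\beta_{\mp}}{2}\sqnorm{\uvec}}$. For the upper bound on $\dualfunc$, the upper pinching integrand depends on $\uvec$ only through the constants $\func(\sample)$ and $\grad\func(\sample)$, hence is a nonnegative function on all of $\R^{\dims}$; enlarging the domain from $\samples-\sample$ to $\R^{\dims}$ only increases the integral, and completing the square gives a plain Gaussian integral which, after applying $\reg\log(\cdot/\partition)$, evaluates exactly to $\apprx\dualfunc(\func,\sample,\dualvar-\smooth,\reg,\sdev)$ (so the positive error term in the statement is superfluous on this side).

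For the lower bound one cannot enlarge the domain, so we lower-bound the integrand and split $\int_{\samples-\sample}=\int_{\R^{\dims}}-\int_{\comp{(\samples-\sample)}}$. The full $\R^{\dims}$ integral evaluates, by the same square-completion, to $\apprx\dualfunc(\func,\sample,\dualvar+\smooth,\reg,\sdev)$, and it remains to bound the tail. Since $\sample\in\supp\prob$, the blanket hypothesis $\supp\prob+\ball(0,\GeoRad)\subseteq\samples$ gives $\ball(0,\GeoRad)\subseteq\samples-\sample$, so $\comp{(\samples-\sample)}\subseteq\braces{\norm{\uvec}>\GeoRad}$. Completing the square, the ratio of the tail to the $\R^{\dims}$ integral equals the probability that a Gaussian vector with law $\mathcal N(\uvec_0,\beta_+^{-1}\eye)$ has norm exceeding $\GeoRad$, where $\uvec_0=\grad\func(\sample)/(\reg\beta_+)$ satisfies $\norm{\uvec_0}\leq\vbound/(\reg\beta_+)\leq\vbound/(\dualvar+\smooth)$. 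Choosing $\init[\dualvar]\geq 2\vbound/\GeoRad$ forces $\norm{\uvec_0}\leq\GeoRad/2$, and choosing $\init[\reg]$ small (and $\init[\reg]\le\init[\dualvar]$, $\init[\sdev]$ as needed) forces $\GeoRad\sqrt{\beta_+}\geq\GeoRad\sqrt{\init[\dualvar]/\init[\reg]}\geq 2\sqrt\dims$; a standard Gaussian norm-concentration bound then gives a tail $\leq\cst{approx_phi_coeff}\exp(-\cst{approx_phi_rate}\beta_+\GeoRad^2)\leq\cst{approx_phi_coeff}\exp(-\cst{approx_phi_rate}(\dualvar+\smooth)/\reg)$. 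Hence $\dualfunc\geq\apprx\dualfunc(\func,\sample,\dualvar+\smooth,\reg,\sdev)+\reg\log(1-\cst{approx_phi_coeff}e^{-\cst{approx_phi_rate}(\dualvar+\smooth)/\reg})$, and using $\log(1-x)\geq-2x$ for $x\leq\half$ together with $e^{-u}\leq\cst{approx_phi_coeff}e^{-\cst{approx_phi_rate}u^{1/3}}$ for all $u\geq 0$ (after harmlessly enlarging the constants, and absorbing the $\dims$-dependent tail constant) puts the bound in the stated form; collecting all constraints fixes $\init[\dualvar],\init[\reg],\init[\sdev],\cst{approx_phi_coeff},\cst{approx_phi_rate}$ in terms of $\vbound,\GeoRad,\dims$ only.

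The main obstacle is the tail estimate: one must keep the contribution of $\comp{(\samples-\sample)}$ exponentially small relative to the Gaussian bulk \emph{uniformly} over all $\func\in\funcs$, $\sample\in\supp\prob$, $\dualvar\geq\init[\dualvar]+\smooth$, $\reg\in[0,\init[\reg]]$ and $\sdev\in(0,\init[\sdev]]$ --- in particular for small $\reg$, where the bulk concentrates on a ball of radius $\sim\sqrt{\reg/\dualvar}\to 0$, so that controlling the off-center shift $\uvec_0$ and the interplay between $\GeoRad$, $\dualvar$, $\reg$ and $\dims$ in the Gaussian tail is the delicate point; everything else is Taylor expansion and an exact Gaussian integral.
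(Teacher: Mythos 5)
Your proof is correct and follows essentially the same route as the paper's: a first-order Taylor pinch that absorbs the second-order error into the shift $\dualvar\pm\smooth$, an exact Gaussian square-completion identifying $\apprx{\dualfunc}$, and a control of the mass over $\comp{(\samples-\sample)}$ using $\ball(0,\GeoRad)\subset\samples-\sample$ (the paper packages this last step as \cref{lemma:approx_laplace} via the pointwise inequality $\norm{\uvec-\uvec_0}^2\ge\tfrac16\norm{\uvec}^2+\tfrac16\GeoRad^2$, which avoids your extra requirement $\GeoRad\sqrt{\beta_+}\ge 2\sqrt{\dims}$ at the price of a $6^{\dims/2}$ prefactor that the $\log(1-x)\ge-2x$ and cube-root massaging absorb anyway). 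The only omission is the endpoint $\reg=0$, which the statement covers: there the same localization argument shows the unconstrained maximizer $\sample+\grad\func(\sample)/\dualvar$ of the Taylor model lies in $\ball(\sample,\GeoRad)\subset\samples$, so the sup over $\samples$ equals the unconstrained sup and the approximation is exact, with the error terms vanishing.
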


\begin{proof}
    Fix $\func \in \funcs$, $\sample \in \supp \prob$, $\dualvar \geq 0$, $\reg \geq 0$ and $\sdev > 0$.
    To bound the error between $\dualfunc$ and its approximation $\apprx{\dualfunc}$, we introduce an intermediate approximation $\widetilde{\dualfunc}$ defined as 
    \begin{align}
        \widetilde{\dualfunc}(\func, \sample, \dualvar, \reg,\sdev) \defeq
        \begin{cases}
    \reg\log \parens*{\ex_{\samplealt \sim \basecpl(\cdot | \sample)} \exp\parens*{
                \frac{\func(\sample) + \inner{\grad \func(\sample), \samplealt - \sample} - \dualvar \sqnorm{\sample -  \samplealt}/2}{\reg}
    }} &\text{if } \reg > 0\\
    \sup_{\samplealt \in \samples} [{\func(\sample) + \inner{\grad \func(\sample), \samplealt - \sample} - \half[\dualvar]\sqnorm{\sample -  \samplealt}}]
    &\text{if } \reg = 0\,,
        \end{cases}
    \end{align}
    which corresponds to $\dualfunc$ applied to the Taylor approximation of $\func$ at $\sample$ (instead of $\func$ itself).
    By smoothness of the functions in $\funcs$ (\cref{assumption:funcs}), we readily have that,
    \begin{align}
        \widetilde{\dualfunc}(\func, \sample, \dualvar + \smooth, \reg,\sdev)
        \leq {\dualfunc}(\func, \sample, \dualvar , \reg,\sdev) \leq 
        \widetilde{\dualfunc}(\func, \sample, \dualvar - \smooth, \reg,\sdev)\,.
    \end{align}
    Now, all that is left to bound, is the error between $\widetilde{\dualfunc}$ and $\apprx{\dualfunc}$.
    Consider first the case where $\reg > 0$ and let us rewrite $\widetilde{\dualfunc}$ by using the definition of $\basecpl$:
    \begin{align}
        \widetilde{\dualfunc}(\func, \sample, \dualvar, \reg,\sdev) &=
    \reg\log \parens*{\ex_{\samplealt \sim \basecpl(\cdot | \sample)} \exp\parens*{
                \frac{\func(\sample) + \inner{\grad \func(\sample), \samplealt - \sample} - \dualvar \sqnorm{\sample -  \samplealt}/2}{\reg}
            }}\\
    &=\reg\log \parens*{\int_{\samples} \exp\parens*{\frac{1}{\reg} \parens*{
            \func(\sample) + \inner{\grad \func(\sample), \samplealt - \sample} - \left({\dualvar} + \frac{\reg}{\sdev^2}\right) \half \sqnorm{\sample -  \samplealt}
        }} \dd \samplealt
        } - \reg \log \partition\,.
    \end{align}
    But, looking at the inner expression, we have that
    \begin{align}
{\func(\sample) + \inner{\grad \func(\sample), \samplealt - \sample} } - \left({\dualvar} + \frac{\sdev^2}{\reg}\right) \half \sqnorm{\sample -  \samplealt}
=
    \func(\sample) + \inv{2\parens{\dualvar + \frac{\sdev^2}{\reg}}}\norm{\grad \func(\sample)}^2_2 - \frac{1}{2 \regalt}\norm*{\samplealt - \sol[\samplealt](\regalt)}_2^2\,,
    \label{eq:proof-approx-rewriting}
    \end{align}
    where we defined $\inv{\regalt} \defeq \dualvar + \frac{\reg}{\sdev^2}$ and $\sol[\samplealt](\regalt) = \sample +\regalt\grad \func(\sample)$.
    Hence,
   \begin{align}
        \widetilde{\dualfunc}(\func, \sample, \dualvar, \reg,\sdev) 
        &= \apprx{\dualfunc}(\func, \sample, \dualvar, \reg,\sdev)  +
        \reg\log \parens*{\int_{\samples} \exp\parens*{
                 - \frac{1}{2\reg \regalt}\norm*{\samplealt - \sol[\samplealt](\regalt)}_2^2            } \dd \samplealt
             } - \half[\reg \dims]\log \parens*{2\pi\reg\regalt} %
             \,.
    \end{align}

    Define $\init[\dualvar]' \defeq \frac{\sqrt 6 \vbound}{\GeoRad}$ and $\init[\regalt] \defeq \inv{\init[\dualvar]'} = \frac{\GeoRad}{2 \vbound}$ so that $\dualvar\geq\init[\dualvar]' $ implies that $\regalt\leq\frac{1}{\init[\dualvar]'} = \init[\regalt]$. Let us now check that the conditions of \cref{lemma:approx_laplace} are satisfied.
    \begin{enumerate}
    \item Since $\sol[\samplealt](0) = \sample$, by \cref{assumption:support_interior}, $\ball(\sol[\samplealt](0), \GeoRad)$ is contained in $\samples$.
    \item For $\regalt \leq \init[\regalt]$, we have that $\regalt^2 \norm{\grad \func(\sample)}^2_2 \leq \frac{\vbound^2}{(\init[\dualvar]')^2} = \frac{\GeoRad^2}{6}$ by definition.
        \end{enumerate}
        Hence, we can apply \cref{lemma:approx_laplace} to get that, for any $\dualvar \geq  \init[\dualvar]'$

    \begin{align}
       \reg \log \parens*{ 1 - 
 6^{\dims / 2}e^{-\frac{\GeoRad^2}{12 \reg \regalt}}}
        \leq 
        \widetilde{\dualfunc}(\func, \sample, \dualvar, \reg,\sdev)
        - \apprx{\dualfunc}(\func, \sample, \dualvar, \reg,\sdev)
        \leq 
        \reg\log \parens*{ 1 + 
 6^{\dims / 2}e^{-\frac{\GeoRad^2}{12 \reg \regalt}}}\,.
    \end{align}
        Now, using \cref{lemma:log_ineq}, we get that there are positive constants $\init[\reg]$, $\init[\dualvar]$, $\newcst{approx_phi_coeff}$,$\newcst{approx_phi_rate}$ depending on $\GeoRad$, $\vbound$ and $\dims$ such that,
    if $\reg \leq \init[\reg]$ and $\dualvar \geq \init[\dualvar]$, then
\begin{align}
|\widetilde{\dualfunc}(\func, \sample, \dualvar, \reg,\sdev)
        - \apprx{\dualfunc}(\func, \sample, \dualvar, \reg,\sdev)|
        \leq \reg\cst{approx_phi_coeff} e^{-\cst{approx_phi_rate}\parens*{\frac{\dualvar}{\reg}}^{\third}}\,.
\end{align}
    Moreover, $\init[\reg]$ can be reduced so that it is less than $\init[\dualvar]$ if it is not the case originally.

    To finish the proof, let us now come back to the case $\reg = 0$.
    First, note that \cref{eq:proof-approx-rewriting} is still valid even with $\reg = 0$ so that we have
    \begin{align}
        \widetilde{\dualfunc}(\func, \sample, \dualvar, 0)
        =
        \apprx{\dualfunc}(\func, \sample, \dualvar, 0)
- \inv{2 \regalt} \inf_{\samplealt \in \samples}\norm*{\samplealt - \sol[\samplealt](\regalt)}_2^2 \,.
    \end{align}
    But as seen above,
 for $\regalt = \dualvar^{-1} \leq \init[\regalt]$, $\sol[\samplealt](\regalt)$ is inside $\ball(\sol[\samplealt](0), \GeoRad)$ so that 
$\widetilde{\dualfunc}(\func, \sample, \dualvar, 0) 
        = \apprx{\dualfunc}(\func, \sample, \dualvar, 0)$. 
        
We conclude the proof by noticing that the obtained bounds are valid for any $0 < \reg \leq \init[\reg] $, $\dualvar \geq \init[\dualvar] + \smooth$, $\func \in \funcs$ and $\sample \in \supp \prob$.
\end{proof}

The following lemma is needed for the proof of \cref{lemma:bound_dualfunc}.

\begin{lemma}\label{lemma:bd-normalization-cst}
    There is a positive constant $\init[\sdev] > 0$ which depends on $\GeoRad$ and $\dims$ such that, for $\sdev \in (0, \init[\sdev]]$ and $\sample \in \supp \prob$,
    \begin{equation}
        \absval*{\log \frac{\partition}{(2 \pi )^{\dims / 2}}} \leq \dims \abs{\log \sdev} + \log 2\,.
    \end{equation}
\end{lemma}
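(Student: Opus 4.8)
The plan is to sandwich $\partition$ between two explicit constant multiples of $(2\pi)^{\dims/2}\sdev^{\dims}$; taking logarithms then gives the claimed two-sided bound at once. For the upper bound I would first enlarge the domain of integration from $\samples$ to all of $\R^{\dims}$ and evaluate the Gaussian integral exactly, getting $\partition \le \int_{\R^{\dims}} e^{-\sqnorm{\sample-\samplealt}/(2\sdev^2)}\,\dd\samplealt = (2\pi\sdev^2)^{\dims/2} = (2\pi)^{\dims/2}\sdev^{\dims}$, so that $\log\bigl(\partition/(2\pi)^{\dims/2}\bigr) \le \dims\log\sdev \le \dims\abs{\log\sdev}$ with no restriction on $\sdev$.

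For the lower bound I would invoke \cref{assumption:support_interior}: since $\sample \in \supp\prob$, the closed ball $\ball(\sample,\GeoRad)$ lies inside $\samples$, hence $\partition \ge \int_{\ball(\sample,\GeoRad)} e^{-\sqnorm{\sample-\samplealt}/(2\sdev^2)}\,\dd\samplealt$. The change of variables $\samplealt = \sample + \sdev v$ turns the right-hand side into $\sdev^{\dims}\int_{\ball(0,\GeoRad/\sdev)} e^{-\sqnorm{v}/2}\,\dd v$. By monotone convergence $\int_{\ball(0,t)} e^{-\sqnorm{v}/2}\,\dd v \to \int_{\R^{\dims}} e^{-\sqnorm{v}/2}\,\dd v = (2\pi)^{\dims/2}$ as $t\to\infty$, so one can fix $t_0>0$, depending only on $\dims$, with $\int_{\ball(0,t_0)} e^{-\sqnorm{v}/2}\,\dd v \ge \tfrac12 (2\pi)^{\dims/2}$, and set $\init[\sdev] \defeq \GeoRad/t_0$. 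Then for $\sdev \in (0,\init[\sdev]]$ we have $\GeoRad/\sdev \ge t_0$, whence $\partition \ge \tfrac12 (2\pi)^{\dims/2}\sdev^{\dims}$ and $\log\bigl(\partition/(2\pi)^{\dims/2}\bigr) \ge \dims\log\sdev - \log 2 \ge -\dims\abs{\log\sdev} - \log 2$.

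Combining the two estimates gives $\abs{\log\bigl(\partition/(2\pi)^{\dims/2}\bigr)} \le \dims\abs{\log\sdev} + \log 2$, which is exactly the claim. The only step where ``small enough'' genuinely plays a role --- and the closest thing here to an obstacle --- is the choice of $\init[\sdev]$: it merely records how small $\sdev$ must be for the rescaled ball $\ball(0,\GeoRad/\sdev)$ to capture at least half of the total standard Gaussian mass, and this is precisely where the geometric margin $\GeoRad$ of \cref{assumption:support_interior} enters. Everything else is an elementary Gaussian computation.
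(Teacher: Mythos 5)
Your proposal is correct and follows essentially the same route as the paper: both arguments sandwich $\partition$ between $\tfrac12(2\pi\sdev^2)^{\dims/2}$ and $(2\pi\sdev^2)^{\dims/2}$, with the upper bound obtained by extending the integral to $\R^{\dims}$ and the lower bound obtained from the geometric margin of \cref{assumption:support_interior}. The only (inessential) difference is that the paper gets the lower bound by invoking its Laplace-restriction lemma, which gives the explicit tail bound $6^{\dims/2}e^{-\GeoRad^2/(12\sdev^2)}$ and hence an explicit $\init[\sdev]$, whereas you choose $t_0$ by monotone convergence; both yield a constant depending only on $\GeoRad$ and $\dims$, as required.
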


\begin{proof}
    It suffices to show that 
    \begin{equation}
        \label{eq:proof-bd-normalizing-cst-precise}
        \frac{(2 \pi \sdev^2)^{\dims /2}}{2} \leq  \partition \leq  (2 \pi \sdev^2)^{\dims /2}\,,
    \end{equation}
    for any $\sdev \in (0, \init[\sdev]]$ with some $\init[\sdev] > 0$ suitably defined.
    We prove the \ac{RHS} by removing the constraint $\samples$ in the integral defining $\partition$:
    \begin{equation}
    \partition = \int_{\samples} e^{-\frac{\norm{\sample - \samplealt}^2}{2 \sdev^2}} \dd \samplealt
    \leq 
    \int_{\R^\dims} e^{-\frac{\norm{\sample - \samplealt}^2}{2 \sdev^2}} \dd \samplealt = (2 \pi \sdev^2)^{\dims / 2}\,.
    \end{equation}
    For the \ac{LHS}, we invoke 
    \cref{lemma:approx_laplace} using \cref{assumption:support_interior} to get that
    $\partition \geq \sdev^{\dims}$
    when $\sdev \leq \init[\sdev]$ with $\init[\sdev] > 0$ satisfying
    \begin{equation}
         {1 - 6^{\dims / 2}e^{-\frac{\GeoRad^2}{12 \init[\sdev]^2}}}
        \geq
        \half \,.
    \end{equation}
\end{proof}
We are now in a position to prove the main result of the section.

\begin{proof}[Proof of \Cref{lemma:bound_dualfunc}]
    Applying \cref{lemma:approx_laplace_dualfunc,lemma:bd-normalization-cst} and using the definition of $\vbound$ readily gives us that
        \begin{align}
             \abs{{\dualfunc}(\func, \sample, \dualvar , \reg,\sdev) - \func(\sample)} \leq  
        \inv{2 \dualvar} \vbound^2 + \half[\reg \dims] \abs{\log \parens*{\frac{\dualvar}{\reg} + \inv{\sdev^2}}}
        + \reg (\dims \absval{\log \sdev} + \log 2) + \reg \ubpartition + 
    \reg\cst{approx_phi_coeff} e^{-\cst{approx_phi_rate}\parens*{\frac{\dualvar - \smooth}{\reg}}^{\third}}\,.
        \end{align}
        Since $\dualvar$ is always greater or equal than $\reg$, ${\log \parens*{\frac{\dualvar}{\reg} + \inv{\sdev^2}}}$ is always non-negative and, with $\dualvar$ belonging to $[\lbdualvar, \ubdualvar]$, we get that
        \begin{align}
             \abs{{\dualfunc}(\func, \sample, \dualvar , \reg,\sdev) - \func(\sample)} \leq  
        \inv{2 \lbdualvar} \vbound^2 + \half[\reg \dims] \log \parens*{\frac{\ubdualvar}{\reg} + \inv{\sdev^2}}
            +  \reg (\dims \absval{\log \sdev} + \log 2)+ 
    \reg\cst{approx_phi_coeff} e^{-\cst{approx_phi_rate}\parens*{\frac{\lbdualvar - \smooth}{\reg}}^{\third}}\,,
        \end{align}
        which is the desired result.
    \end{proof}

    As a consequence of this result, we have the following bound on the dual generator.

    \begin{corollary}\label{lemma:bound-rv}
        For any $\func \in \funcs$, $\sample \in \supp \prob$, $\dualvar \in [\lbdualvar,\ubdualvar] $, $\reg \geq 0$ and $\sdev > 0$, the bound
        \begin{align}
            -\lbcst(\lbdualvar, \ubdualvar, \reg, \sdev)
            \leq 
            \dualfunc(\func, \sample, \dualvar, \reg,\sdev)
            \leq 
            \bdcstalt(\lbdualvar)\,,
        \end{align}
        holds where 
        \begin{align}
            \lbcst(\lbdualvar, \ubdualvar, \reg, \sdev) &\defeq
            \begin{cases}
                {0} &\text{when } \reg = 0\\
                { \bdcst(\lbdualvar, \ubdualvar, \reg, \sdev)}  &\text{when $0< \reg \leq \init[\reg]$, $0 < \sdev \leq  \init[\sdev]$, $\lbdualvar \geq \init[\dualvar] + \smooth$}\\
                 \vbound \sqrt{2\Cost(\sdev)} + \left({\smooth}+\ubdualvar\right) \Cost(\sdev)&\text{otherwise.}
            \end{cases}
        \end{align}
        with $\bdcst(\lbdualvar, \ubdualvar, \reg, \sdev)$ the bounding term appearing in \cref{lemma:bound_dualfunc}, as well as  $\init[\reg]$, $\init[\sdev]$, $\init[\dualvar]$.    
    \end{corollary}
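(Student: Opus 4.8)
The plan is to prove the two inequalities separately.

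For the \textbf{upper bound}, when $\reg = 0$ I would use that $\dualvar \mapsto \dualfunc(\func,\sample,\dualvar,0) = \sup_{\samplealt\in\samples}\braces*{\func(\samplealt) - \half[\dualvar]\sqnorm{\sample - \samplealt}}$ is non-increasing in $\dualvar$, so on $\dualvar \geq \lbdualvar$ it is at most $\dualfunc(\func,\sample,\lbdualvar,0)$; attaining this supremum at some $\sol[\samplealt] \in \argmax\{\func - \half[\lbdualvar]\sqnorm{\sample - \cdot}\}$ (nonempty by compactness of $\samples$ and continuity of $\func$) and discarding the nonnegative quadratic penalty gives $\dualfunc(\func,\sample,\lbdualvar,0) \leq \func(\sol[\samplealt]) \leq \bdcstalt(\lbdualvar)$, the last step holding by the definition of $\bdcstalt$ in \cref{assumption:funcs} since $\func \in \funcs$ and $\sample \in \supp \prob$. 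For $\reg > 0$, since $\basecpl(\cdot | \sample)$ is a probability measure on $\samples$ and $\func(\samplealt) - \half[\dualvar]\sqnorm{\sample - \samplealt} \leq \dualfunc(\func,\sample,\dualvar,0)$ for every $\samplealt \in \samples$, I would conclude $\dualfunc(\func,\sample,\dualvar,\reg,\sdev) = \reg\log\parens*{\ex_{\samplealt \sim \basecpl(\cdot | \sample)} \exp\parens*{\frac{\func(\samplealt) - \half[\dualvar]\sqnorm{\sample - \samplealt}}{\reg}}} \leq \dualfunc(\func,\sample,\dualvar,0) \leq \bdcstalt(\lbdualvar)$, reducing this case to the previous one.

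For the \textbf{lower bound} I would go through the three branches defining $\lbcst$. When $\reg = 0$, evaluating the defining supremum at $\samplealt = \sample$ gives $\dualfunc(\func,\sample,\dualvar,0) \geq \func(\sample) \geq 0 = -\lbcst(\lbdualvar,\ubdualvar,\reg,\sdev)$, using the non-negativity of $\funcs$. When $0 < \reg \leq \init[\reg]$, $0 < \sdev \leq \init[\sdev]$ and $\lbdualvar \geq \init[\dualvar] + \smooth$, \cref{lemma:bound_dualfunc} applies verbatim and yields $\dualfunc(\func,\sample,\dualvar,\reg,\sdev) \geq \func(\sample) - \bdcst(\lbdualvar,\ubdualvar,\reg,\sdev) \geq -\bdcst(\lbdualvar,\ubdualvar,\reg,\sdev)$, again by $\func \geq 0$. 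In the remaining case we have $\reg > 0$, so I would first bound $\dualfunc$ below by Jensen's inequality, $\dualfunc(\func,\sample,\dualvar,\reg,\sdev) \geq \ex_{\samplealt \sim \basecpl(\cdot | \sample)}\left[\func(\samplealt) - \half[\dualvar]\sqnorm{\sample - \samplealt}\right]$, and then use the $\smooth$-smoothness of $\func$ on the convex set $\samples$ (\cref{assumption:set,assumption:funcs}) to write $\func(\samplealt) \geq \func(\sample) + \inner{\grad\func(\sample),\samplealt - \sample} - \half[\smooth]\sqnorm{\sample - \samplealt} \geq \inner{\grad\func(\sample),\samplealt - \sample} - \half[\smooth]\sqnorm{\sample - \samplealt}$. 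Taking expectations, the linear term is controlled via Cauchy--Schwarz and Jensen by $\ex_{\samplealt \sim \basecpl(\cdot | \sample)}\left[\inner{\grad\func(\sample),\samplealt - \sample}\right] \geq -\vbound\sqrt{\ex_{\samplealt \sim \basecpl(\cdot | \sample)}\left[\sqnorm{\sample - \samplealt}\right]} \geq -\vbound\sqrt{2\Cost(\sdev)}$, while the quadratic terms satisfy $\ex_{\samplealt \sim \basecpl(\cdot | \sample)}\left[\half[\smooth+\dualvar]\sqnorm{\sample - \samplealt}\right] \leq (\smooth + \ubdualvar)\Cost(\sdev)$ since $\dualvar \leq \ubdualvar$; combining these with $\func(\sample) \geq 0$ gives $\dualfunc(\func,\sample,\dualvar,\reg,\sdev) \geq -\vbound\sqrt{2\Cost(\sdev)} - (\smooth + \ubdualvar)\Cost(\sdev) = -\lbcst(\lbdualvar,\ubdualvar,\reg,\sdev)$.

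None of these steps is technically involved and the case split is exhaustive ($\reg = 0$, or $\reg > 0$ with the conditions on $\reg,\sdev,\lbdualvar$ holding or not). I expect the only branch requiring genuine care is the ``otherwise'' case of the lower bound, where Jensen's inequality must be combined with the quadratic smoothness lower bound on $\func$ while tracking which terms get absorbed into $\vbound$ and which into $\Cost(\sdev)$; everything else follows immediately from \cref{assumption:funcs}, \cref{lemma:bound_dualfunc}, and elementary monotonicity and non-negativity.
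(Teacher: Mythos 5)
Your proof is correct and follows essentially the same route as the paper: the upper bound via monotonicity in $\dualvar$ and reduction of $\reg>0$ to $\reg=0$, the $\reg=0$ lower bound by evaluating at $\samplealt=\sample$ with non-negativity of $\funcs$, the middle case by invoking \cref{lemma:bound_dualfunc}, and the otherwise case by Jensen plus $\smooth$-smoothness, Cauchy--Schwarz, and the definition of $\Cost(\sdev)$. The only cosmetic difference is the order in which you apply Jensen and the smoothness expansion in the last case (the paper expands $\func$ inside the exponential before applying Jensen, you apply Jensen first); both give the identical bound.
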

    \begin{proof}
        For the upper-bound, it suffices to note that
        \begin{align}
           \dualfunc(\func, \sample, \dualvar, \reg,\sdev) &\leq \dualfunc(\func, \sample, \dualvar, 0) \leq \dualfunc(\func, \sample, \lbdualvar, 0) \leq \bdcstalt(\lbdualvar)
        \end{align}
        by definition of $\bdcstalt(\lbdualvar)$.
        Let us now turn to the lower bound. 
        
        \noindent When $\reg  = 0$, we have that $\dualfunc(\func, \sample, \dualvar, \reg,\sdev)\geq \func(\sample) \geq 0$.

        \noindent When $0< \reg \leq \init[\reg]$ and $\lbdualvar \geq \init[\dualvar] + \smooth$, we have from \cref{lemma:bound_dualfunc}
     \begin{align}
         \dualfunc(\func, \sample, \dualvar, \reg,\sdev) 
         &\geq \func(\sample) -  \bdcst(\lbdualvar, \ubdualvar, \reg, \sdev) \geq -  \bdcst(\lbdualvar, \ubdualvar, \reg, \sdev)\,.
     \end{align}

     \noindent  Otherwise, the  bound comes from the smoothness of $\func$ and Jensen's inequality as
     \begin{align}
         \dualfunc(\func, \sample, \dualvar, \reg,\sdev) &\geq  \reg\log \parens*{\ex_{\samplealt \sim \basecpl(\cdot | \sample)} \exp\parens*{
            \frac{\func(\sample) + \inner{\grad \func(\sample), \samplealt - \sample} - (\smooth + \dualvar) \sqnorm{\sample -  \samplealt}/2}{\reg} }} \\
         &\geq 
      \ex_{\samplealt \sim \basecpl(\cdot | \sample)}[
    \func(\sample) + \inner{\grad \func(\sample), \samplealt - \sample} - (\smooth + \dualvar) \half \sqnorm{\sample -  \samplealt}] \\
     &\geq  -\parens*{{\vbound \sqrt{2\Cost(\sdev)}} + \left({\smooth}+ \dualvar\right) \Cost(\sdev)} \geq  -\parens*{{\vbound \sqrt{2\Cost(\sdev)}} + \left({\smooth}+ \ubdualvar\right) \Cost(\sdev)}\,.
     \end{align}
    \end{proof}

\subsection{Relative compactness of the class $\funcs$ of loss functions}
\label{sec:compactness}

In this section we prove the relative compactness of the class $\funcs$ \wrt several metrics. First, we show in \cref{lemma:funcs-compact} that, under our blanket assumptions, $\funcs$ is relatively compact for for the infinity norm over $\samples$, defined by \ie $\norm{\func}_\infty \defeq \sup_{\sample \in \samples} \abs{\func(\sample)}$.
Then, in \cref{lemma:Hausdorff-compactness}, we establish the equivalence between the first item of \cref{assumption:add-geometric-simple} and the relative compactness of $\funcs$ \wrt another distance that we introduce, as mentioned below \cref{assumption:add-geometric-simple} in \cref{sec:main-unreg}.
Finally, we leverage these compactness properties to ensure that the Dudley integral of $\funcs$ \wrt those metrics, a standard complexity measure in concentration theory, is finite in \cref{lemma:dudley-finite}.
\begin{lemma}\label{lemma:funcs-compact}
    $\funcs$ and $\conv(\funcs)$ are relatively compact for the topology of the uniform convergence.
\end{lemma}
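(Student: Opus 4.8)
The plan is to invoke the Arzel\`a--Ascoli theorem. Since $\samples$ is a compact metric space (\cref{assumption:set}), the topology of uniform convergence on $C(\samples)$ is the one induced by $\norm{\cdot}_\infty$, and a subset is relatively compact for it if and only if it is uniformly bounded and equicontinuous. All the elements of $\funcs$ and of $\conv(\funcs)$ are continuous, so it suffices to check these two properties, first for $\funcs$ and then for $\conv(\funcs)$.

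For equicontinuity of $\funcs$, I would first upgrade the gradient bound $\vbound$, which a priori only holds on $\supp\prob$, to a bound on all of $\samples$. As $\prob$ is a probability measure, $\supp\prob \neq \emptyset$; fix $\sample_0 \in \supp\prob$. Using that every $\func \in \funcs$ is $\smooth$-smooth on $\samples$ (\cref{assumption:funcs}) and that $\samples$ is bounded (\cref{assumption:set}), for any $\sample \in \samples$,
\[
  \norm{\grad\func(\sample)}_2 \leq \norm{\grad\func(\sample_0)}_2 + \smooth\norm{\sample - \sample_0}_2 \leq \vbound + \smooth\diam(\samples) \eqqcolon L\,.
\]
Hence every $\func \in \funcs$ is $L$-Lipschitz on the convex set $\samples$, which provides a uniform modulus of continuity, i.e.\ equicontinuity. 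For uniform boundedness, non-negativity of the $\func \in \funcs$ gives the lower bound $0$, while the upper bound is exactly $\fbound = \bdcstalt(0) < \infty$ from \cref{assumption:funcs}: for $\dualvar = 0$ the set $\argmax\{\func - \half[\dualvar]\sqnorm{\sample - \cdot}\}$ reduces to $\argmax\func$, so $\bdcstalt(0) = \sup_{\func\in\funcs}\max_{\samples}\func$. By Arzel\`a--Ascoli, $\funcs$ is then relatively compact.

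It remains to treat $\conv(\funcs)$, and here I would just note that both properties are stable under finite convex combinations. Indeed, if $\func = \sum_k \lambda_k \func_k$ with $\func_k \in \funcs$, $\lambda_k \geq 0$ and $\sum_k \lambda_k = 1$, then $0 \leq \func \leq \sum_k \lambda_k \fbound = \fbound$, and for $\sample, \samplealt \in \samples$,
\[
  \abs{\func(\sample) - \func(\samplealt)} \leq \sum_k \lambda_k \abs{\func_k(\sample) - \func_k(\samplealt)} \leq L\norm{\sample - \samplealt}_2\,,
\]
so $\conv(\funcs)$ is uniformly bounded by $\fbound$ and uniformly $L$-Lipschitz, hence equicontinuous; Arzel\`a--Ascoli again yields its relative compactness. (Alternatively, one may invoke that in a Banach space the closed convex hull of a compact set is compact, applied to $\cl(\funcs)$.)

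I do not expect a genuine obstacle here; the only points requiring care are that the gradient bound in \cref{assumption:funcs} is stated on $\supp\prob$ rather than on $\samples$ — which is why $\smooth$-smoothness together with boundedness of $\samples$ is used to propagate it — and the identification $\bdcstalt(0) = \sup_{\func\in\funcs}\max_\samples\func$, which turns the third bullet of \cref{assumption:funcs} into uniform boundedness of $\funcs$.
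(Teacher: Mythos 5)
Your proof is correct and follows essentially the same route as the paper: propagate the gradient bound from $\supp\prob$ to all of $\samples$ via $\smooth$-smoothness to get uniform Lipschitz continuity (hence equicontinuity), combine with uniform boundedness, and apply Arzelà--Ascoli, with the extension to $\conv(\funcs)$ by stability of both properties under convex combinations. You are in fact slightly more explicit than the paper on the uniform boundedness step, where the paper only says "using the compactness of $\samples$ again" while you correctly pin it down via non-negativity and $\fbound=\bdcstalt(0)$.
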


\begin{proof}
    First, the functions of $\funcs$ are uniformly Lipschitz-continuous: fix $\sample \in \supp \prob$, then, for any $\samplealt \in \samples$ $\norm{\grad \func(\samplealt)} \leq \smooth \norm{\sample - \samplealt} + \vbound \leq \smooth \sup_{(\sample, \samplealt) \in \samples}\norm{\sample - \samplealt} + \vbound$ which is finite by compactness. Using the compactness of $\samples$ again, the functions in $\funcs$ are also uniformly bounded. As a consequence, the functions in $\conv(\funcs)$ are also uniformly Lipschitz-continuous and uniformly bounded. By the Arzelà-Ascoli theorem, see \eg\citep[Thm.~11.28]{rudinrealcomplexanalysis1987}, $\funcs$ and $\conv(\funcs)$ are then relatively compact for the topology of uniform convergence.
\end{proof}

Recall that, for a set $A \subset \samples$ and a point $\sample \in \samples$, we denote by $\distance(\sample, A)$ the distance between $\sample$ and $A$, \ie $\distance(\sample, A) = \inf_{\samplealt \in A} \norm{\sample - \samplealt}$.
\begin{lemma}\label{lemma:Hausdorff-compactness}
    Consider the distance, defined on continuous functions on $\samples$ by
    \begin{equation}
        \distfunc(\func, \funcalt) \defeq \norm{\func - \funcalt}_{\infty} + \Hdist(\argmax \func, \argmax \funcalt)
    \end{equation}
    where $\Hdist$ denotes the Hausdorff distance between sets associated to $\distance$, \ie for $A, B \subset \samples$,
    \begin{align}
        \Hdist(A, B) &\defeq \max \parens*{\sup_{\sample \in A} \distance(\sample, A), \sup_{\sample \in B} \distance(\sample, B)}\,.
    \end{align}

    Under the blanket assumptions, we have that 
     \cref{item:criterion-Hausdorff-compactness} of \cref{assumption:add-geometric}, \ie that
        for any $\GeoRad > 0$, there exists $\Margin > 0$ such that,
        \begin{align}\label{eq:Hausdorff-compactness-criterion}
            \forall \func \in \funcs,\, \forall \samplealt \in \samples,\,
            \distance(\samplealt, \argmax \func) \geq \GeoRad \implies \func(\samplealt) - \max \func \leq - \Margin\,,
        \end{align}
    is equivalent to $\funcs$ being relatively compact for $\distfunc$.
\end{lemma}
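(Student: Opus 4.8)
Recall that ``$\funcs$ relatively compact for $\distfunc$'' means that every sequence in $\funcs$ admits a subsequence that $\distfunc$-converges to some element of $\mathcal{C}(\samples)$; since $\distfunc(\func_n,\func)\to 0$ is equivalent to $\norm{\func_n-\func}_\infty\to 0$ \emph{together with} $\Hdist(\argmax\func_n,\argmax\func)\to 0$, this says that every sequence in $\funcs$ has a subsequence that converges uniformly to some $\func$ and whose $\argmax$ sets converge to $\argmax\func$ in Hausdorff distance. By \cref{lemma:funcs-compact} the uniform convergence is free, so the real content of the statement is the stability of the $\argmax$ map, which is exactly what \cref{item:criterion-Hausdorff-compactness} encodes. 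Throughout I will use that $\argmax\func$ is nonempty and compact for any $\func\in\mathcal{C}(\samples)$ (as $\samples$ is compact), that $p\mapsto\distance(p,A)$ is $1$-Lipschitz, that $\abs{\max\func-\max\funcalt}\le\norm{\func-\funcalt}_\infty$, the one-sided Hausdorff bound $\distance(p,A)\ge\distance(p,B)-\Hdist(A,B)$ for nonempty compact $A,B\subseteq\samples$ (obtained by approximating a point of $A$ by a point of $B$), and the Blaschke selection theorem: the nonempty compact subsets of $\samples$ form a compact space under $\Hdist$.

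\emph{Relative compactness $\Rightarrow$ \cref{item:criterion-Hausdorff-compactness}.} Argue by contradiction: if \cref{item:criterion-Hausdorff-compactness} fails for some $\GeoRad>0$, then for each $n$ there are $\func_n\in\funcs$ and $\samplealt_n\in\samples$ with $\distance(\samplealt_n,\argmax\func_n)\ge\GeoRad$ and $\func_n(\samplealt_n)-\max\func_n>-1/n$, hence $\func_n(\samplealt_n)-\max\func_n\to 0$ (the difference is also $\le 0$). By relative compactness, extract a subsequence with $\func_n\to\func$ uniformly and $\Hdist(\argmax\func_n,\argmax\func)\to 0$, and by compactness of $\samples$ a further subsequence with $\samplealt_n\to\samplealt$. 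Uniform convergence gives $\func_n(\samplealt_n)\to\func(\samplealt)$ and $\max\func_n\to\max\func$, so $\func(\samplealt)=\max\func$, i.e. $\samplealt\in\argmax\func$. But the one-sided Hausdorff bound with $A=\argmax\func_n$, $B=\argmax\func$, $p=\samplealt_n$ yields $\distance(\samplealt_n,\argmax\func)\ge\GeoRad-\Hdist(\argmax\func_n,\argmax\func)$, and passing to the limit (using continuity of $\distance(\cdot,\argmax\func)$) gives $\distance(\samplealt,\argmax\func)\ge\GeoRad>0$, contradicting $\samplealt\in\argmax\func$.

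\emph{\cref{item:criterion-Hausdorff-compactness} $\Rightarrow$ relative compactness.} Let $(\func_n)\subset\funcs$. By \cref{lemma:funcs-compact} extract a subsequence with $\func_n\to\func$ uniformly for some $\func\in\mathcal{C}(\samples)$, and by the Blaschke selection theorem extract further so that $\argmax\func_n\to K$ in $\Hdist$ for some nonempty compact $K\subseteq\samples$. It suffices to show $K=\argmax\func$, for then $\distfunc(\func_n,\func)=\norm{\func_n-\func}_\infty+\Hdist(\argmax\func_n,\argmax\func)\to 0$, establishing relative compactness. The inclusion $K\subseteq\argmax\func$ holds with no assumption: any $z\in K$ is a limit of points $z_n\in\argmax\func_n$, and $\func_n(z_n)=\max\func_n\to\max\func$ while $\func_n(z_n)\to\func(z)$, so $\func(z)=\max\func$. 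For $\argmax\func\subseteq K$, take $\samplealt^\star\in\argmax\func$ and suppose $c:=\distance(\samplealt^\star,K)>0$; then the one-sided Hausdorff bound gives $\distance(\samplealt^\star,\argmax\func_n)\ge c/2$ for all large $n$, so \cref{item:criterion-Hausdorff-compactness} applied with $\GeoRad=c/2$ provides $\Margin>0$ with $\func_n(\samplealt^\star)\le\max\func_n-\Margin$ for those $n$; letting $n\to\infty$ gives $\func(\samplealt^\star)\le\max\func-\Margin<\max\func$, contradicting $\samplealt^\star\in\argmax\func$. Hence $K=\argmax\func$.

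\emph{Main obstacle.} The argument is short; the single load-bearing step is the inclusion $\argmax\func\subseteq K$ in the second direction. Without \cref{item:criterion-Hausdorff-compactness}, the Hausdorff limit $K$ of $\argmax\func_n$ may be a proper subset of $\argmax\func$ (a shallow near-maximizer of $\func_n$ can become a genuine maximizer of $\func$ in the limit), and it is precisely the uniform ``well-posedness of the $\argmax$ map'' encoded by \cref{item:criterion-Hausdorff-compactness} that rules this out. The rest is routine bookkeeping with $\Hdist$, handled by the one-sided bound recorded above.
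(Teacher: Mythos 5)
Your proof is correct. The direction ``relative compactness $\Rightarrow$ \cref{item:criterion-Hausdorff-compactness}'' is essentially identical to the paper's $(\impliedby)$ argument (contradiction, extraction of convergent subsequences, the one-sided Hausdorff bound, and continuity of $\distance(\cdot,\argmax\func)$). The converse direction, however, takes a genuinely different route. The paper argues directly: for each $\sml>0$ it obtains a margin $\Margin$ from \cref{eq:Hausdorff-compactness-criterion}, then has to manufacture an analogous margin \emph{for the uniform limit $\func$ itself} (by shrinking $\Margin$ using compactness of $\{\samplealt:\distance(\samplealt,\argmax\func)\geq\sml\}$), and finally shows by a two-sided near-maximizer argument that $\Hdist(\argmax\func_\run,\argmax\func)\leq\sml$ for large $\run$. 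You instead invoke the Blaschke selection theorem to extract a further subsequence with $\argmax\func_n\to K$ in Hausdorff distance and then identify $K=\argmax\func$: the inclusion $K\subseteq\argmax\func$ is free from uniform convergence, and only $\argmax\func\subseteq K$ uses the margin condition, applied solely to the functions $\func_n\in\funcs$. Your version buys a cleaner argument that never needs a margin for the limit function and isolates exactly which inclusion the hypothesis is responsible for; the paper's version buys self-containedness, avoiding the appeal to compactness of the hyperspace of compact subsets under $\Hdist$. Both are complete; the only cosmetic slip is the labeling of $A$ and $B$ when you apply the one-sided Hausdorff bound to $\samplealt_n$ in the first direction (you need $\distance(p,B)\geq\distance(p,A)-\Hdist(A,B)$, which holds by symmetry of $\Hdist$, so nothing breaks).
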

\begin{proof}
    \begin{itemize}
        \item[$(\implies)$]
            Let us begin by showing that \cref{eq:Hausdorff-compactness-criterion} implies the relative compactness of $\funcs$ for $\distfunc$, \ie that the adherence of $\funcs$ is compact for $\distfunc$.

            Take $(\func_\run)_{\run = \running}$ a sequence of functions from $\funcs$, and we will show that there is a subsequence which converges to some function in $\funcs$ for $\distfunc$.
            By compactness of $\funcs$ for the infinity norm, \cref{lemma:funcs-compact}, there readily is a subsequence of $(\func_\run)_{\run = \running}$ that converges uniformly to some continuous function $\func: \samples \to \R$. Without loss of generality, let us assume that the whole sequence $(\func_\run)_{\run = \running}$ converges uniformly to $\func$, \ie that $\norm{\func - \func_\run}_\infty \to 0$ as $\run \to + \infty$. As a consequence, it holds also holds that $\max_\samples \func_\run$ converges to $\max_{\samples} \func$.

            We now show that $\Hdist(\argmax \func_\run, \argmax \func)$ converges to 0. $\funcs$ satisfy \cref{eq:Hausdorff-compactness-criterion} by assumption. Hence, for any fixed $\sml > 0$, we can invoke \cref{eq:Hausdorff-compactness-criterion} with $\GeoRad \gets \sml$ and it gives us some $\Margin > 0$.
            Now, since $\func$ is continuous, $\setdef{\samplealt \in \samples}{\distance(\samplealt, \argmax \func) \geq \sml}$ is a closed set inside a compact and therefore is compact as well. Hence, $\func$ reaches its maximum over this set and it is strictly less than $\max_\samples \func$ by construction. Substituting $\Margin$ with $\min \parens*{\Margin, \max_{\samples} \func - \max \setdef{\func(\samplealt)}{\samplealt \in \samples,\,\distance(\samplealt, \argmax \func) \geq \sml}}$ which is still positive, we get that, for any $\samplealt \in \samples$, both,
\begin{equation}
     \distance(\samplealt, \argmax \func) \geq \sml \implies \func(\samplealt) - \max \func \leq - \Margin\,,
\end{equation}
and, for any $\run = \running$,
\begin{equation}
     \distance(\samplealt, \argmax \func_\run) \geq \sml \implies \func_\run(\samplealt) - \max \func_\run \leq - \Margin\,.
\end{equation}

            By convergence of the sequence, as mentioned above, there is some $\nRuns \geq 1$ such that, for any $\run \geq \nRuns$, $\norm{\func - \func_\run}_\infty \leq \Margin / 3$ and $\abs{\max_\samples \func_\run - \max_\samples \func} \leq \Margin / 3$.
            These two inequalities imply that, for any $\sample \in \argmax \func$,
            \begin{align}
              \max_\samples \func_\run -  \func_\run(\sample)
              \leq 
              \max_\samples \func + \frac{\Margin}{3} - \func(\sample) + \frac{\Margin}{3} = \frac{2 \Margin}{3}\,.
            \end{align}\
            Therefore, by definition of $\Margin$, it holds that $\distance(\sample, \argmax \func_\run) < \sml$.
            Similarly, when $\sample \in \argmax \func_\run$, one shows that $ \max_\samples \func -  \func(\sample) \leq {2 \Margin}/{3}$ so that we have $\distance(\sample, \argmax \func) < \sml$ as well. Hence, for any $\run \geq \nRuns$, $\Hdist(\argmax \func_\run, \argmax \func)$ is at most $\sml$.

            Therefore, we have shown that $\Hdist(\argmax \func_\run, \argmax \func)$
            goes to zero. Since $\norm{\func - \func_\run}_{\infty}$ converges to zero as well by construction, this means that $\distfunc(\func_\run, \func)$ converges to zero, which concludes the proof.%
        \item[$(\impliedby)$]
            Let us proceed by contradiction, \ie assume that there is some $\GeoRad > 0$, some sequence $(\func_\run)_{\run = \running}$ of functions from $\funcs$ and some sequence $(\sample_\run)_{\run = \running}$ of points from $\samples$ such that,
            \begin{equation}
                \forall \run = \running,\, \distance(\argmax \func_\run, \sample_\run) \geq \GeoRad\quad\text{ yet }\quad 
                \func_\run(\sample_\run) - \max_\samples \func_\run \to 0\text{ as }\run \to +\infty\,.
            \end{equation}
            Since $\samples$ is compact and since we assume $\funcs$ to be relatively compact for $\distfunc$, without loss of generality, we can assume that $(\sample_\run)_{\run=\running}$ converges to some $\sample \in \samples$ while $(\func_\run)_{\run = \running}$ converges to some continuous function $\func$ for $\distfunc$.
            On the one hand, by definition of the Hausdorff distance, we have that, for any $\run = \running$,
            \begin{align}
                \distance(\argmax \func, \sample)
                &\geq
                \distance(\argmax \func_\run, \sample) - \Hdist(\argmax \func_\run, \argmax \func)\\
                &\geq
                \distance(\argmax \func_\run, \sample_\run) - (\distance(\sample, \sample_\run) + \Hdist(\argmax \func_\run, \argmax \func))\,,
            \end{align}
            so that, by taking $\run \to +\infty$, we get that $\distance(\argmax \func, \sample) \geq \GeoRad$.
            On the other hand, by uniform convergence, 
            one has that
            \begin{equation}
                \func(\sample) - \max \func = \lim_{\run \to +\infty} \func_\run(\sample_\run) - \max \func_\run = 0\,,
            \end{equation}
            which yields the contradiction since $\sample$ cannot belong to $\argmax \func$.
    \end{itemize}
\end{proof}

Note that, for parametric models (\cref{sec:examples}), this lemma  gives a computation-free approach to verifying the second item of \cref{assumption:add-geometric-simple}.

\begin{corollary}
    Consider $\params$ a compact subset of $\R^\dimsalt$ and $\func : \params \times \samples \to \R$ a continuous function. If the map $\param \in \params \mapsto \func(\param, \cdot)$ is continuous from $\params$ to the space of continuous functions on $\samples$ equipped with the distance $\distfunc$ defined in \cref{lemma:Hausdorff-compactness}, then $\funcs \defeq \setdef{\func(\param, \cdot)}{\param \in \params}$ is compact for $\distfunc$. 
\end{corollary}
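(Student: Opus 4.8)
The plan is to reduce the statement to the elementary topological fact that the continuous image of a compact set is compact, applied to the map $\Phi\from\params\to\mathcal{C}(\samples)$ given by $\Phi(\param)\defeq\func(\param,\cdot)$, where $\mathcal{C}(\samples)$ is equipped with the distance $\distfunc$ from \cref{lemma:Hausdorff-compactness}.

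First I would check that $\distfunc$ is a genuine metric on $\mathcal{C}(\samples)$, so that the word \emph{compact} is unambiguous. Since $\samples$ is compact and each $\func\in\mathcal{C}(\samples)$ is continuous, $\argmax\func$ is a nonempty compact subset of $\samples$; hence $\Hdist(\argmax\func,\argmax\funcalt)$ is finite, and $\norm{\func-\funcalt}_\infty$ is finite by compactness of $\samples$, so $\distfunc$ is finite-valued. Symmetry is immediate, the triangle inequality for $\distfunc$ follows by adding the triangle inequalities of $\norm{\cdot}_\infty$ and of $\Hdist$, and $\distfunc(\func,\funcalt)=0$ forces $\norm{\func-\funcalt}_\infty=0$, hence $\func=\funcalt$. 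Thus $(\mathcal{C}(\samples),\distfunc)$ is a metric space.

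Then, since $\params\subset\R^\dimsalt$ is compact and $\Phi$ is continuous into $(\mathcal{C}(\samples),\distfunc)$ by hypothesis, the image $\funcs=\Phi(\params)$ is a compact subset of $(\mathcal{C}(\samples),\distfunc)$, which is exactly the claim. Concretely, given a sequence $(\func(\param_\run,\cdot))_{\run=\running}$ in $\funcs$, compactness of $\params$ yields a subsequence along which $\param_\run\to\param\in\params$; continuity of $\Phi$ then gives $\distfunc(\func(\param_\run,\cdot),\func(\param,\cdot))\to0$ along that subsequence, and the limit $\func(\param,\cdot)$ lies in $\funcs$. So every sequence in $\funcs$ has a subsequence converging in $\funcs$ for $\distfunc$, i.e. $\funcs$ is compact.

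There is essentially no serious obstacle: the only point requiring a moment's care is confirming that $\distfunc$ is finite-valued and separates points — which is precisely where compactness of $\samples$ and continuity of the functions enter — after which the result is the standard ``continuous image of a compact is compact'' argument.
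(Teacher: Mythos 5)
Your proof is correct and matches the intended argument: the paper states this corollary without proof, treating it as the immediate consequence of the fact that the continuous image of the compact set $\params$ under $\param\mapsto\func(\param,\cdot)$ is compact in $(\mathcal{C}(\samples),\distfunc)$. Your preliminary check that $\distfunc$ is a genuine (finite-valued, point-separating) metric is a reasonable extra precaution and raises no issues.
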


In particular, this corollary allows one to easily check that \cref{ex:logistic,ex:l2} satisfy the second item of \cref{assumption:add-geometric-simple}.

We finally introduce Dudley's integral, which is a standard complexity measure in concentration theory.
\label{sec:dudley}
\begin{definition}\label{def:dudley}
    Dudley's entropy integral $\dudley[\pspace, \dist]$ is defined for a metric space $(\pspace, \dist)$ as
        \begin{align}
            \dudley[\pspace, \dist] \defeq \int_{0}^{+\infty} \sqrt{ \log N(t, \pspace, \dist)} \dd t
        \end{align}
        where $N(t, \pspace, \dist)$ denotes the $t$-packing number of $\pspace$, which is the maximal number of points in $\pspace$ which are at least at a distance $t$ from each other.
\end{definition}

\begin{lemma}\label{lemma:dudley-finite}
    The Dudley integral of $\funcs$ \wrt $\norm{\cdot}_\infty$, that we denote by $\dudley$, is finite.
    Under \cref{assumption:add-geometric}, the Dudley integral of $\funcs$ \wrt $\distfunc$, denoted by $\dudley[\funcs, \distfunc]$ is finite as well.
\end{lemma}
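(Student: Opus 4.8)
The plan is to reduce both assertions to the behaviour of the packing numbers $N(t,\funcs,\cdot)$ as $t\downarrow 0$, and then to feed in a classical metric–entropy estimate. First, since the functions of $\funcs$ are uniformly bounded (\cref{assumption:funcs}) and every $\argmax\func$ is contained in the bounded set $\samples$, both metric spaces $(\funcs,\norm{\cdot}_\infty)$ and $(\funcs,\distfunc)$ have finite diameter $D$; hence $N(t,\funcs,\norm{\cdot}_\infty)=N(t,\funcs,\distfunc)=1$ and the integrand of Dudley's integral vanishes for $t>D$. So it is enough to bound $\int_0^{D}\sqrt{\log N(t,\funcs,\cdot)}\,\dd t$, i.e.\ to show that $\log N(t,\funcs,\cdot)$ does not blow up too fast near $0$ (packing and covering numbers differ only by a bounded factor in the argument, so either may be used).

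For the uniform norm, I would recall from the proof of \cref{lemma:funcs-compact} that, under the blanket assumptions, $\funcs$ is a uniformly bounded and uniformly Lipschitz family on the compact convex set $\samples\subset\R^\dims$; moreover, by \cref{assumption:funcs} the functions are uniformly $\smooth$-smooth with uniformly bounded gradients, so $\funcs$ is a bounded family of $C^{1,1}$ functions on $\samples$. The classical Kolmogorov–Tikhomirov entropy estimate for such a class on a bounded subset of $\R^\dims$ then controls $\log N(t,\funcs,\norm{\cdot}_\infty)$ by a power of $1/t$, with a constant depending only on $\dims$, $\smooth$, $\vbound$, an upper bound for $\funcs$, and the geometry of $\samples$; this makes $\dudley=\int_0^{D}\sqrt{\log N(t,\funcs,\norm{\cdot}_\infty)}\,\dd t$ finite.

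For the distance $\distfunc$ the new ingredient is the Hausdorff term $\Hdist(\argmax\func,\argmax\funcalt)$. Here I would use the first item of \cref{assumption:add-geometric} to turn $\norm{\cdot}_\infty$-proximity into Hausdorff proximity of the argmax sets: if $\norm{\func-\funcalt}_\infty\le s$ then, for every $\sample\in\argmax\funcalt$, one has $\func(\sample)\ge\max\funcalt-s\ge\max\func-2s$, so, $\Margin$ being the margin associated to a radius $\GeoRad$ by that assumption, the contrapositive forces $\distance(\sample,\argmax\func)<\GeoRad$ as soon as $2s<\Margin$; arguing symmetrically, $\Hdist(\argmax\func,\argmax\funcalt)<\GeoRad$ whenever $\norm{\func-\funcalt}_\infty<\Margin/2$. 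Consequently an $(\Margin/2)$-net of $\funcs$ in $\norm{\cdot}_\infty$ is a $(\GeoRad+\Margin/2)$-net in $\distfunc$, so $N(\cdot,\funcs,\distfunc)$ is bounded in terms of $N(\cdot,\funcs,\norm{\cdot}_\infty)$ (note $\GeoRad+\Margin/2\to 0$ as $\GeoRad\to 0$ since one takes the largest admissible $\Margin$), and the previous paragraph then yields a finite $\dudley[\funcs,\distfunc]$; alternatively, \cref{lemma:Hausdorff-compactness} already provides the relative $\distfunc$-compactness ensuring each $N(t,\funcs,\distfunc)$ is finite, to be combined with the diameter argument.

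The delicate step, and the main obstacle, is the near-zero estimate on $\log N(t,\funcs,\cdot)$: for $\norm{\cdot}_\infty$ this is exactly where the full $C^{1,1}$ regularity of $\funcs$ — not merely Lipschitzness — must be exploited, and for $\distfunc$ it additionally requires translating $\norm{\cdot}_\infty$-closeness into Hausdorff closeness of argmax sets, which is precisely the content of the first item of \cref{assumption:add-geometric} (equivalently, by \cref{lemma:Hausdorff-compactness}, of the relative $\distfunc$-compactness of $\funcs$).
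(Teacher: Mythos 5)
Your proof takes a different route from the paper's (which invokes relative compactness of $\funcs$ via \cref{lemma:funcs-compact,lemma:Hausdorff-compactness} and then cites a finiteness result for balls), but it has a gap at the decisive step. Kolmogorov--Tikhomirov for a uniformly $C^{1,1}$ family on a compact subset of $\R^\dims$ gives $\log N(t,\funcs,\norm{\cdot}_\infty)\asymp t^{-\dims/2}$; Dudley's integrand is then of order $t^{-\dims/4}$, which is integrable near $0$ only when $\dims<4$. So ``controlled by a power of $1/t$'' does not by itself make the integral finite, and your argument as written fails for $\dims\ge 4$. For the $\distfunc$ part, the $\norm{\cdot}_\infty$-to-$\distfunc$ net translation is correct as far as it goes, but it only uses the \emph{positivity} of $\Margin(\GeoRad)$. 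Making the argument uniform in scale gives $N(s,\funcs,\distfunc)\le N(\eta(s),\funcs,\norm{\cdot}_\infty)$ with $\eta(s)$ essentially $\Margin(s/2)/2$; without a quantitative rate for $\Margin(\GeoRad)$ as $\GeoRad\to 0$ (which \cref{assumption:add-geometric} does not supply), $\eta(s)$ may vanish arbitrarily fast and the entropy of $(\funcs,\distfunc)$ may grow arbitrarily fast, so the integral can diverge. Relative compactness alone (the paper's route) suffers the same limitation: it gives $N(t)<\infty$ for each $t>0$ but says nothing about integrability.

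Both obstacles are absent in the parametric setting of \cref{sec:examples}: if $\funcs=\setdef{\func(\param,\cdot)}{\param\in\params}$ with $\params\subset\R^\dimsalt$ compact and $\param\mapsto\func(\param,\cdot)$ Lipschitz for $\norm{\cdot}_\infty$ (and for $\distfunc$), then $\log N(t)\lesssim\dimsalt\log(1/t)$, and Dudley's integral is finite for both metrics. To make your proof airtight you would need to make such a quantitative entropy hypothesis explicit; neither the uniform smoothness in \cref{assumption:funcs} nor the purely qualitative margin in \cref{assumption:add-geometric} forces it.
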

\begin{proof}
    \Cref{lemma:funcs-compact} shows that $\funcs$ is relatively compact for the norm $\norm{\cdot}_\infty$ and in particular bounded. Since Dudley's entropy integral is finite for balls \citep[Ex.~5.18]{wainwright2019high} and $\funcs$ is now included in some ball for $\norm{\cdot}_\infty$, the integral $\dudley$ is indeed finite.
    The second assertion is proven using the same reasoning and \cref{lemma:Hausdorff-compactness}.
\end{proof}

\subsection{Parametric Morse-Bott objectives}\label{sec:morse-bott}

In this section, we discuss the quadratic growth condition of the second item of \cref{assumption:add-geometric-simple} and its relation to the parametric Morse-Bott assumption of \citet{arbelNonConvex}.
Indeed, in the context of smooth manifolds and parametric models, we prove that the parametric Morse-Bott assumption implies  the quadratic growth condition of \cref{assumption:add-geometric-simple}.
In \cref{assumption:morse-bott}, we introduce the Riemannian and parametric settings that are necessary to formulate the parametric Morse-Bott condition and we then present a version of this condition adapted to our context.
We refer to \citet{lee2018introduction} for definitions relevant to Riemannian geometry.
The main result of this section is then \cref{prop:morse-bott}, which relies on \cref{lemma:morse-bott} for its proof.
\begin{assumption}[Parametric Morse-Bott]\label{assumption:morse-bott}
    Let $\funcs = \setdef{\sample\mapsto\func(\param, \sample)}{\param \in \params}$ where :
    \begin{itemize}
        \item $\samples$, $\params$ are smooth compact (connected embedded) submanifolds of $\R^\dims$ and $\R^\dimsalt$ respectively, endowed with the induced Euclidean metric.
        \item $\func : \params \times \samples \to \R$ is thrice continuously differentiable on the product manifold.
        \item $\func$ is a parametric Morse-Bott function \citep[Def.~2]{arbelNonConvex}: the set of augmented critical points of $\funcs$, defined as
            \begin{equation}
                \mfld \defeq \setdef{(\param, \sample) \in \params \times \samples}{\Rgrad_\sample \func(\param, \sample) = 0}\,.
            \end{equation}
            is a \WAedit{smooth} (embedded) submanifold of \WAedit{$\params \times \samples \setminus \bd \samples$} whose dimension at $(\param, \sample) \in \mfld$ is 
            $\dim_{\param}(\params) + \dim( \ker \RHess_\sample \func(\param, \sample))$\,.
    \end{itemize}
\end{assumption}

Under this assumption \cref{assumption:morse-bott}, the following result thus guarantees that the quadratic growth condition of \cref{assumption:add-geometric-simple} holds.

\begin{proposition}\label{prop:morse-bott}
    Under \cref{assumption:morse-bott} and the first item of \cref{assumption:add-geometric-simple}, the second item of \cref{assumption:add-geometric-simple} holds, \ie
        there exists $\strongOpt, \hsmooth > 0$ such that, for all $\param \in \params$, $\sample \in \samples$ and $\optsample \in \argmax \func$ a projection of $\sample$ on $\argmax \func$, \ie $\optsample \in \argmin_{\argmax \func} \norm{\sample - \cdot}$, it holds that
        \begin{align}
            \func(\param, \optsample) \geq \func(\param, \sample) + \half[\strongOpt] \norm{\sample - \optsample}^2 - \frac{\hsmooth}{6} \norm{\sample - \optsample}^3\,.
        \end{align}
\end{proposition}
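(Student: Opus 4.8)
The plan is to establish the quadratic growth inequality by splitting $\samples$ into a region far from $\argmax\func(\param,\cdot)$, where it is automatic, and a neighbourhood of it, where it follows from a uniform Morse--Bott expansion. For the far region I would apply the first item of \cref{assumption:add-geometric-simple} with an arbitrary radius $r>0$, obtaining $\Margin>0$ with $\func(\sample)-\max\func\leq-\Margin$ whenever $\distance(\sample,\argmax\func)\geq r$. Since $\diam(\samples)<\infty$ (\cref{assumption:set}) and the cubic term $-\tfrac{\hsmooth}{6}\norm{\sample-\optsample}^3$ is non-positive, the desired inequality already holds for every $\sample$ with $\norm{\sample-\optsample}\geq r$ provided $\strongOpt\leq 2\Margin/\diam(\samples)^2$, whatever $\hsmooth$ is. So it only remains to find positive $\strongOpt,\hsmooth$ and a (possibly smaller) $r$ handling the case $\norm{\sample-\optsample}<r$.

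For the local regime I would use \cref{assumption:morse-bott}. Fix $\param$ and $\optsample\in\argmin_{\argmax\func(\param,\cdot)}\norm{\sample-\cdot}$. Here maximizers of $\func(\param,\cdot)$ are critical points, so $\Rgrad_\sample\func(\param,\optsample)=0$ and $\RHess_\sample\func(\param,\optsample)\mleq 0$. The key step --- which I would isolate as a lemma --- is that the parametric Morse--Bott condition makes each fibre $\mfld_\param=\{\sample:\Rgrad_\sample\func(\param,\sample)=0\}$ of the augmented critical set a submanifold with $T_\sample\mfld_\param=\ker\RHess_\sample\func(\param,\sample)$. Since $\func(\param,\cdot)$ is constant on connected components of $\mfld_\param$, the maximizer set $\argmax\func(\param,\cdot)$ is a union of such components, so $T_{\optsample}\argmax\func(\param,\cdot)=\ker\RHess_\sample\func(\param,\optsample)$. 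Being a nearest point, $\optsample$ makes the displacement (read in Riemannian normal coordinates at $\optsample$) orthogonal to that tangent space, hence spanned by the negative eigenvectors of $\RHess_\sample\func(\param,\optsample)$; writing $\lambda(\param,\optsample)>0$ for its smallest nonzero absolute eigenvalue and Taylor-expanding $\func(\param,\cdot)$ to third order at $\optsample$ (the linear term vanishing, the remainder uniformly bounded by $C^3$-ness and compactness) gives, for $\norm{\sample-\optsample}$ small, $\func(\param,\sample)\leq\func(\param,\optsample)-\tfrac12\lambda(\param,\optsample)\norm{\sample-\optsample}^2+\tfrac{\plainhsmooth}{6}\norm{\sample-\optsample}^3$, where the cubic constant $\plainhsmooth$ also absorbs the $O(\norm{\sample-\optsample}^3)$ gap between the ambient and the intrinsic distances.

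It then remains to bound $\lambda$ uniformly from below. The set $S\defeq\setdef{(\param,\sample)}{\sample\in\argmax\func(\param,\cdot)}$ is closed --- passing to the limit in $\func(\param_k,\sample_k)=\max_\samples\func(\param_k,\cdot)$ using continuity of $(\param,\sample)\mapsto\func(\param,\sample)$ and of $\param\mapsto\max_\samples\func(\param,\cdot)$ --- hence compact, and $S\subseteq\mfld$. On the manifold $\mfld$, $\dim\ker\RHess_\sample\func$ is the (locally constant) fibre dimension over $\params$, so $\rank\RHess_\sample\func$ is locally constant, whence the nonzero eigenvalues of $\RHess_\sample\func$ vary continuously on $\mfld$ without ever touching $0$; thus $\lambda$ is continuous on $\mfld$ and $\strongOpt_0\defeq\min_S\lambda>0$. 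Taking $\strongOpt\defeq\min(\strongOpt_0,2\Margin/\diam(\samples)^2)$ and $\hsmooth\defeq\plainhsmooth$ closes both regimes, since shrinking $\strongOpt$ only weakens the local estimate.

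The hard part is precisely this uniformity of $\lambda$: the smallest nonzero eigenvalue of a symmetric matrix is not continuous in general and can degenerate to $0$ where the rank drops, and it is exactly the Morse--Bott hypothesis (constant rank of $\RHess_\sample\func$ along the critical manifold) that forbids this; everything else is compactness together with a standard Taylor estimate. The remaining points to handle with care are the bookkeeping between the ambient norm $\norm{\sample-\optsample}$ and the Riemannian normal coordinates, and the (routine) verification that maximizers lie in $\samples\setminus\bd\samples$ so that they are genuine critical points.
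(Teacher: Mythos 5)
Your proposal is correct and follows essentially the same route as the paper's proof: a local quadratic estimate obtained by Taylor expansion along directions orthogonal to $\ker \RHess_\sample \func$, with the eigenvalue lower bound made uniform via the locally-constant-rank consequence of \cref{assumption:morse-bott} plus compactness, combined with the first item of \cref{assumption:add-geometric-simple} and the boundedness of $\samples$ to dispatch points far from $\argmax \func(\param,\cdot)$. The only organizational difference is that the paper reaches the nearby critical point through the normal exponential map (tubular neighborhood of $\mfld$) and then invokes the first item of \cref{assumption:add-geometric-simple} to identify that point with a genuine maximizer, whereas you project directly onto $\argmax\func(\param,\cdot)$ in the ambient metric and absorb the normal-coordinate bookkeeping into the cubic term --- both of which work.
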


To show this result, we rely on the following lemma that relates \cref{assumption:morse-bott} to a local quadratic growth condition.
\begin{lemma}\label{lemma:morse-bott}
    Under \cref{assumption:morse-bott}, for any $(\base[\param], \base[\sample]) \in \mfld$ such that $\base[\sample]$ is a local maximum of $\func(\base[\param], \cdot)$ and any neighborhood $\nbdaltalt$ of $(\base[\param], \base[\sample])$ in $\mfld$, there exists a neighborhood $\nbd$ of $(\base[\param], \base[\sample])$ in $\params \times \sample$ and $\strongOpt > 0$ such that, for any $(\param, \sample) \in \nbd$, there exists $\optsample \in \samples$ such that $(\param, \optsample) \in \nbdaltalt$ and
 \begin{align}
     \func(\param, \optsample) \geq \func(\param, \sample) + \half[\strongOpt] \sqnorm{\sample - \optsample}
\,.
        \end{align}
\end{lemma}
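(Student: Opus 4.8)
The plan is to work entirely in a neighborhood of the augmented critical point $(\base[\param], \base[\sample]) \in \mfld$ and exploit the Morse-Bott structure: near such a point, after restricting to the slice $\{\param\} \times \samples$, the function $\func(\param, \cdot)$ has a critical manifold (namely $\mfld \cap (\{\param\}\times\samples)$, locally), and its Hessian is nondegenerate in the directions transverse to this manifold. First I would use the defining property of \cref{assumption:morse-bott}: at $(\base[\param],\base[\sample])$ the tangent space $T_{(\base[\param],\base[\sample])}\mfld$ projects onto $T_{\base[\param]}\params$ with kernel exactly $\ker \RHess_\sample \func(\base[\param],\base[\sample])$ inside $T_{\base[\sample]}\samples$. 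This means that for each $\param$ close to $\base[\param]$, the fiber $\mfld_\param \defeq \setdef{\sample}{(\param,\sample)\in\mfld}$ is, locally, a smooth submanifold of $\samples$ whose tangent space at a point is $\ker \RHess_\sample \func(\param,\sample)$, and these fibers depend smoothly on $\param$.

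Next I would set up Riemannian normal coordinates (or a tubular neighborhood) around $\mfld_{\base[\param]}$ in $\samples$, so that a point $\sample$ near $\base[\sample]$ decomposes as a point $\optsample$ on $\mfld_\param$ (the nearest point, which exists and is smooth by the tubular neighborhood theorem once $\nbd$ is taken small enough) plus a normal displacement $\sample - \optsample$ lying in the normal space, which is exactly the span of the non-zero eigenvectors of $\RHess_\sample \func(\param,\optsample)$. Since $\base[\sample]$ is a \emph{local maximum} of $\func(\base[\param],\cdot)$, the non-zero eigenvalues of $\RHess_\sample\func(\base[\param],\base[\sample])$ are all strictly negative; by continuity of the Hessian (we have $C^3$ regularity) and compactness of a small closed neighborhood of $(\base[\param],\base[\sample])$ in $\mfld$ intersected with $\{\text{local maxima}\}$, there is a uniform $\strongOpt > 0$ such that $\RHess_\sample \func(\param, \optsample) \preccurlyeq -\strongOpt\, I$ on the normal space, for all $(\param,\optsample)$ in that neighborhood. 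Then a second-order Taylor expansion of $\runalt \mapsto \func(\param, \Rexp_{\optsample}(\runalt(\sample-\optsample)))$ at $\optsample$ — whose first-order term vanishes because $\optsample \in \mfld_\param$ is a critical point of $\func(\param,\cdot)$ and the displacement direction is tangent to $\samples$ — gives $\func(\param, \sample) \leq \func(\param, \optsample) - \half[\strongOpt]\sqnorm{\sample-\optsample} + o(\sqnorm{\sample-\optsample})$; shrinking $\nbd$ absorbs the remainder, which yields the claim with $\optsample$ the tubular-neighborhood projection (so $(\param,\optsample)\in\nbdaltalt$ for $\nbd$ small). The only point needing care is that $\optsample$ must be taken in the prescribed neighborhood $\nbdaltalt$ of $(\base[\param],\base[\sample])$ in $\mfld$: since the nearest-point projection onto $\mfld_\param$ is continuous and maps $(\base[\param],\base[\sample])$ to itself, choosing $\nbd$ small enough guarantees this.

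The main obstacle I anticipate is handling the $\param$-dependence uniformly: the fibers $\mfld_\param$ change with $\param$, the normal bundle changes, and one must make sure the tubular neighborhood, the smoothness of the projection, and the uniform negative-definiteness of the transverse Hessian all hold on a single neighborhood $\nbd$ that does not degenerate as $\param \to \base[\param]$. This is where the fact that $\mfld$ is a genuine \emph{smooth embedded submanifold} of $\params\times\samples\setminus\bd\samples$ (not merely a union of fiber manifolds) is essential — it lets me apply the tubular neighborhood theorem to $\mfld$ inside $\params\times\samples$ and then fiber the construction over $\params$, and it lets me use compactness of small closed pieces of $\mfld$ to extract the uniform constant $\strongOpt$. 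A secondary technical point is that $\base[\sample]$ lies in the interior of $\samples$ (guaranteed by $\mfld \subset \params\times\samples\setminus\bd\samples$), so the exponential map / straight-line interpolation stays inside $\samples$ for $\nbd$ small, and no boundary effects intervene in the Taylor expansion.
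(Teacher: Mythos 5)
Your proposal follows essentially the same route as the paper's proof: identify the normal bundle of $\mfld$ in $\params\times\samples$ via the Morse--Bott structure, apply the tubular-neighborhood / normal-exponential-map theorem to get a decomposition $\sample = \Rexp_{\optsample}(\tvec)$ with $(\param,\optsample)\in\mfld$, use continuity of the Hessian together with the local-max condition to extract a uniform $\strongOpt>0$ on the transverse directions near $(\base[\param],\base[\sample])$, Taylor-expand along the geodesic (noting the first-order term vanishes since $\optsample$ is critical), and shrink $\nbd$ to both absorb the remainder and guarantee $(\param,\optsample)\in\nbdaltalt$. The only cosmetic difference is that the paper controls the remainder with an explicit third-order bound $\hsmooth$ and a quantitative radius $\norm{\tvec}<\frac{3\strongOpt}{4\hsmooth}$, whereas you use a $o(\sqnorm{\sample-\optsample})$ argument; and you phrase the uniform lower bound on $-\RHess$ via compactness, where the paper simply uses continuity of $\strongMap(\param,\sample)=\inf\{\langle\tvec,-\RHess_\sample\func(\param,\sample)\tvec\rangle : \tvec\in(\ker\RHess_\sample\func)^\perp,\,\norm{\tvec}=1\}$ (handling separately the degenerate case $\strongMap=+\infty$, which you leave implicit but which causes no trouble). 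These are presentational, not substantive, differences.
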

\begin{proof}
    By assumption, the tangent space of $\mfld$ at $(\param, \sample)$ is given by
    \begin{equation}
        \tspace_{(\param, \sample)} \mfld
        = \tspace_\param \params \times \ker \RHess_\sample \func(\param, \sample)\,,
    \end{equation}
    and so its normal space (in $\params \times \samples$) is equal to
    \begin{equation}
        \nspace_{(\param, \sample)} \mfld
        = \braces{0} \times \parens*{\ker \RHess_\sample \func(\param, \sample)}^\bot \subset \tspace_\param \params \times \tspace_\sample \samples\,.
    \end{equation}
    \WAedit{Applying the inverse function theorem to the normal exponential map $\normalexp: (\param, \sample, (0, \tvec)) \in N \mfld \mapsto (\param, \Rexp_\sample(\tvec))$ following the proof of \citet[Thm.~5.25]{lee2018introduction}}, there exists $\sml > 0$ and a neighborhood $\nbd_\sml$ of $\mfld$ in $\params \times \sample$ such that, with
    \begin{equation}
        \nbdalt_\sml \defeq \setdef*{(\param, \sample, \tvec)}{(\param, \sample) \in \mfld,\, \tvec \in \parens*{\ker \RHess_\sample \func(\param, \sample)}^\bot,\, \norm{\tvec} < \sml}\,,
    \end{equation}   
    \WAedit{the normal exponential map $\normalexp$} is a diffeomorphism from $\nbdalt_\sml$ to $\nbd_\sml$.
   Note that $\nbdalt_\sml$ is relatively compact and, as a consequence, the third derivative of $\time \in [0, 1] \mapsto \func(\param, \exp_\sample(\time\tvec))$ is a continuous function of $\time \in [0, 1]$ and $(\param, \sample, \tvec) \in \nbdalt_\sml$ and as a consequence is bounded uniformly by some constant $\hsmooth > 0$.
   Fix $(\base[\param], \base[\sample]) \in \mfld$ such that $\base[\sample]$ is a local maximum of $\func(\param, \cdot)$.
   Consider the map
   \begin{equation}
       \strongMap : (\param, \sample) \mapsto \inf 
       \setdef*{
           \inner*{\tvec, -\RHess_\sample \func(\param, \sample) \tvec}
       }{\tvec \in \parens*{\ker \RHess \func(\param, \sample)}^\bot,\, \norm{\tvec} = 1}\,.
   \end{equation}
   If $\strongMap(\base[\param], \base[\sample])$ is $ + \infty$, \ie if $\ker \RHess \func(\base[\param], \base[\sample])$ is equal to the whole $\tspace _{\base[\sample]} \samples$, then, since the dimension of a manifold is locally constant, there is a neighborhood of $(\base[\param], \base[\sample])$ in $\params \times \samples$ on which $\strongMap$ is identically equal to $+\infty$.
   Otherwise, if $\strongMap(\base[\param], \base[\sample])$ is finite, then it is positive by construction. Hence, the continuity of $\strongMap$ implies there is a positive constant $\strongOpt > 0$ and a neighborhood of $(\base[\param], \base[\sample])$ in $\params \times \samples$ on which $\strongMap$ is lower-bounded by $\strongOpt$.

   Hence, in both cases, there is $\strongOpt > 0$ and $\alt\nbdalt$ a neighborhood of $(\base[\param], \base[\sample])$ in $\params \times \samples$ such that $\strongMap$ is at least greater or equal to $\strongOpt$ on $\alt\nbdalt$. %
   Finally, take
   \begin{equation}
       \nbd \defeq \nbd_\sml \cap \normalexp \parens*{
       \setdef*{(\param, \sample, \tvec) \in \nbdalt_\sml}{(\param, \sample) \in \alt\nbdalt \cap \nbdaltalt,\, \norm{\tvec} < \frac{3 \strongOpt}{4 \hsmooth}}}\,.
   \end{equation}

   We are now in a position to prove the result. 
   Take $(\param, \sample) \in \nbd$. Since $\nbd$ is included in $\nbd_\sml$, there is some $\optsample \in \samples$ and $\tvec \in \parens*{\ker \RHess \func(\param, \sample)}^\bot$ such that $(\param, \optsample) \in \mfld \cap \alt\nbdalt \cap \nbdaltalt$, $\norm{\tvec} < \frac{3 \strongOpt}{4 \hsmooth}$ and $\Rexp_\optsample(\tvec) = \sample$.
   Let $\curve(\time) \defeq \Rexp_\optsample(\time \tvec)$ for $\time \in [0,1]$ denotes the geodesic curve going from $\optsample$ to $\sample$. Then, by the Taylor inequality applied to $\time \mapsto \func(\sample, \curve(\time))$ (see \citet[\S~5.9]{boumal2020intromanifolds}) and by definition of $\hsmooth$,   \begin{align}
       \func(\param, \sample) 
       \leq 
       &\func(\param, \optsample)
       + \inner{\Rgrad_\sample \func(\param, \optsample), \tvec}
       + \half \inner{\RHess_\sample \func(\param, \optsample) \tvec, \tvec}\\
       &+ \half \inner{\Rgrad_\sample \func(\param, \optsample), \curve''(0)}
       + \frac{\hsmooth}{6} \norm{\tvec}^3\,.
   \end{align}
   But $\curve''(\time)$ is null since $\curve$ is a geodesic and $\Rgrad_\sample \func(\param, \optsample)$ too by definition.
   Moreover, since $(\param, \optsample) \in \alt\nbdalt$ and $\tvec \in (\ker \RHess_\sample \func(\param, \optsample))^\bot$, the term $\inner{\RHess_\sample \func(\param, \optsample) \tvec, \tvec}$ is bounded by $- \strongOpt \norm{\tvec}^2$. But $\norm{\tvec}$ is also equal to $\norm{\sample - \optsample}$ by definition of $\tvec$ so we get,

   \begin{align}
       \func(\param, \sample) 
       \leq 
       &\func(\param, \optsample)
   - \half[\strongOpt] \sqnorm{\sample - \optsample}
   + \frac{\hsmooth}{6} \norm{\sample - \optsample}^3\\
       \leq 
       &\func(\param, \optsample)
       - \frac{\strongOpt}{4} \sqnorm{\sample - \optsample}\,,
   \end{align}
   since $\norm{\tvec} = \norm{\sample - \optsample} \leq \frac{3 \strongOpt}{4 \hsmooth}$,
   which gives the result.
\end{proof}

We are now ready to prove \cref{prop:morse-bott}.
\begin{proof}[Proof of \cref{prop:morse-bott}]
    We build upon the result of \cref{lemma:morse-bott}.
    Fix $(\base[\param], \base[\sample]) \in \mfld$ such that $\base[\sample]$ is a maximum of $\func(\base[\param], \cdot)$ and let $\ballradius > 0$ such that $\ball \parens*{(\base[\param], \base[\sample]), \ballradius} \cap \mfld$ is diffeomorphic to an Euclidean ball. Invoke the first item of \cref{assumption:add-geometric-simple}, with $\GeoRad \gets \ballradius / 2$ and let $\Margin > 0$ be the given positive quantity.
    Let $\nbd$, $\strongOpt$ be given by \cref{lemma:morse-bott} invoked with $\nbdaltalt \defeq \ball \parens*{(\base[\param], \base[\sample]), \half[\ballradius]} \cap \setdef*{(\param, \sample) \in \mfld}{\func(\param, \sample) > \max_{\samples} \func(\param, \cdot) - \Margin}$.

    Hence, for any $(\param, \sample) \in \nbd$, there is $\optsample \in \samples$ such that $(\param, \optsample) \in \mfld \cap \nbdaltalt$ and
 \begin{align}
     \func(\param, \optsample) \geq \func(\param, \sample) + \half[\strongOpt] \sqnorm{\sample - \optsample}\,.\label{eq:proof:prop:morse-bott-res-lemma}
        \end{align}
        But $(\param, \optsample)$ also satisfies ${\func(\param, \optsample) > \max_{\samples} \func(\param, \cdot) - \Margin}$ so that $\distance(\optsample, \argmax_\samples \func(\param, \cdot)) < \half[\ballradius]$ by definition of $\Margin$, \ie there exists $\optoptsample$ that is a maximizer of $\func(\param, \cdot)$ and that is at distance at most $< \half[\ballradius]$ from $\optsample$. But then both $\optsample$ and $\optoptsample$ belong to $\ball \parens*{(\base[\param], \base[\sample]), \ballradius}$ that is diffeomorphic to an Euclidean ball. Hence, since the derivative of $\func(\param, \cdot)$ is null on $\mfld$, $\func(\param, \optsample) = \func(\param, \optoptsample) = \max_{\samples} \func(\param, \cdot)$ so that $\optsample$ is a maximizer of $\func(\param, \cdot)$ too.
        Therefore, \cref{eq:proof:prop:morse-bott-res-lemma} becomes
 \begin{align}
     \max_{\samples} \func(\param, \cdot) = \func(\param, \optsample) \geq \func(\param, \sample) + \half[\strongOpt] \sqnorm{\sample - \optsample}
\geq \func(\param, \sample) + \half[\strongOpt] \distance^2\left(\sample, \argmax_\samples \func(\param, \cdot)\right)\,.
\end{align}

The final statement of the proposition follows by compactness and uniform Lipschitz-continuity of $\funcs$ (see the proof of \cref{lemma:funcs-compact}).
\end{proof}

\section{From empirical to true risk via duality} \label{sec:concentration}

In this part of the proof, our objective is to show that: if the dual variable $\dualvar$ in \eqref{eq:dualreg} can be bounded uniformly in $[\lbdualvar,\ubdualvar]$ with probability $1-\thres$, then we can concentrate the empirical expectation in \eqref{eq:dualreg} towards the one in \eqref{eq:dualregtrue}. 
The concentration error induces a loss in the radius, fortunately, captured by the variable $\minradius(\thres, \lbdualvar, \ubdualvar, \reg,\sdev)$ that we take as 
\begin{align}\label{eq:def-minradius}
        \minradius(\thres, \lbdualvar, \ubdualvar, \reg,\sdev) \defeq
        \frac{117}{\sqrt \nsamples \lbdualvar}
        \parens*{
             \dudley[\funcs, \norm{\cdot}_\infty]
            + \max \parens*{\bdcstalt(\lbdualvar), \lbcst(\lbdualvar, \ubdualvar, \reg, \sdev)}
            \parens*{
            1
    + \sqrt{ \log \frac{1}{\thres}}}}\,,
        \end{align}
        where $\dudley[\funcs, \norm{\cdot}_\infty]$ is the Dudley integral of $\funcs$ \wrt the infinity norm (\cref{def:dudley}), $\bdcstalt$ is defined in \cref{assumption:funcs}, and  $\lbcst(\lbdualvar, \ubdualvar, \reg, \sdev)$ is the bounding term appearing in \cref{lemma:bound-rv}.

    The main result of this part is \cref{prop:template_general}, stated below, and the remainder of the section will consist in proving it.

\begin{proposition}\label{prop:template_general}
    for $\radius > 0$, $\reg \geq 0$, $\sdev > 0$ and $\thres \in (0, 1)$, assume that there is some $0<\lbdualvar\leq\ubdualvar < +\infty$ such that, with probability at least $1 - \half[\thres]$,
    \begin{align}\label{condition_for_concentration}
        \forall \func \in \funcs,\quad
        \emprisk[\radius^2][\reg] &= \inf_{\lbdualvar \leq \dualvar \leq \ubdualvar} \dualvar \radius^2 + \ex_{\sample\sim\empirical} \left[ \dualfunc(\func, \sample, \dualvar, \reg,\sdev) \right] \,.
    \end{align}
    then, when $\radius^2 \geq \minradius(\thres, \lbdualvar, \ubdualvar, \reg,\sdev)$,  with probability $1 - \thres$, %
 \begin{align}
    \forall \func \in \funcs,\quad \emprisk[\radius^2][\reg] \geq \risk[\radius^2- \minradius(\thres, \lbdualvar, \ubdualvar, \reg,\sdev)][\reg] \,.
    \end{align}
\end{proposition}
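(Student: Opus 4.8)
The plan is to convert the high-probability lower bound on the dual multiplier supplied by \cref{condition_for_concentration} into a concentration statement by following the chain of inequalities sketched in \cref{sec:outline}; the only genuinely new ingredient is a uniform concentration bound for the empirical process attached to the dual generator $\dualfunc$.

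I would first condition on the event where \cref{condition_for_concentration} holds, of probability at least $1-\half[\thres]$. Fix $\func\in\funcs$ and abbreviate $\dualfunc_\dualvar \defeq \dualfunc(\func,\cdot,\dualvar,\reg,\sdev)$. Writing $\ex_\emp[\dualfunc_\dualvar]=\ex_\prob[\dualfunc_\dualvar]-(\ex_\prob[\dualfunc_\dualvar]-\ex_\emp[\dualfunc_\dualvar])$ inside the infimum of \cref{condition_for_concentration}, factoring $\dualvar>0$ out of the discrepancy and bounding the resulting ratio by the random quantity $\Delta_\nsamples(\func)\defeq\sup_{\lbdualvar\le\dualvar\le\ubdualvar}\dualvar^{-1}\bigl(\ex_\prob[\dualfunc_\dualvar]-\ex_\emp[\dualfunc_\dualvar]\bigr)$, one obtains
\begin{equation*}
\emprisk[\radius^2][\reg]\;\ge\;\inf_{\lbdualvar\le\dualvar\le\ubdualvar}\Bigl\{\dualvar\bigl(\radius^2-\Delta_\nsamples(\func)\bigr)+\ex_\prob[\dualfunc_\dualvar]\Bigr\}\;\ge\;\inf_{\dualvar\ge0}\Bigl\{\dualvar\bigl(\radius^2-\Delta_\nsamples(\func)\bigr)+\ex_\prob[\dualfunc_\dualvar]\Bigr\}\;=\;\risk[\radius^2-\Delta_\nsamples(\func)][\reg],
\end{equation*}
the second inequality merely enlarging the feasible set and the last equality being the dual formula of \cref{lemma:strong-duality}, valid once $\radius^2-\Delta_\nsamples(\func)\ge0$. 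Since $\delta\mapsto\risk[\delta][\reg]$ is nondecreasing, it therefore suffices to prove that, with probability at least $1-\half[\thres]$, one has $\sup_{\func\in\funcs}\Delta_\nsamples(\func)\le\minradius(\thres,\lbdualvar,\ubdualvar,\reg,\sdev)$; a union bound over the two events, together with the hypothesis $\radius^2\ge\minradius(\thres,\lbdualvar,\ubdualvar,\reg,\sdev)$ (which guarantees $\radius^2-\Delta_\nsamples(\func)\ge0$), then yields the claim for all $\func\in\funcs$ simultaneously.

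The core step, and the one I expect to be the main obstacle, is this uniform concentration bound. Since $\dualvar\ge\lbdualvar$ we have $\Delta_\nsamples(\func)\le\lbdualvar^{-1}\sup_{\lbdualvar\le\dualvar\le\ubdualvar}\bigl|\ex_\prob[\dualfunc_\dualvar]-\ex_\emp[\dualfunc_\dualvar]\bigr|$, so it is enough to control the empirical process supremum over the class $\mathcal G\defeq\setdef{\sample\mapsto\dualfunc(\func,\sample,\dualvar,\reg,\sdev)}{\func\in\funcs,\ \dualvar\in[\lbdualvar,\ubdualvar]}$. I would argue as follows: (i) by \cref{lemma:bound-rv} every element of $\mathcal G$ is bounded in absolute value by $\max\bigl(\bdcstalt(\lbdualvar),\lbcst(\lbdualvar,\ubdualvar,\reg,\sdev)\bigr)$; (ii) the map $\func\mapsto\dualfunc(\func,\sample,\dualvar,\reg,\sdev)$ is non-expansive for $\norm{\cdot}_\infty$ (both the supremum form and the log--sum--exp form are $1$-Lipschitz in $\func$) and $\dualvar\mapsto\dualfunc(\func,\sample,\dualvar,\reg,\sdev)$ is monotone and $\half\diam(\samples)^2$-Lipschitz, so the $\norm{\cdot}_\infty$-covering numbers of $\mathcal G$ are at most those of $\funcs$ times a one-dimensional contribution from $\dualvar$, whence $\dudley[\mathcal G,\norm{\cdot}_\infty]$ is bounded by $\dudley[\funcs,\norm{\cdot}_\infty]$ plus a finite constant of order $\max\bigl(\bdcstalt(\lbdualvar),\lbcst(\lbdualvar,\ubdualvar,\reg,\sdev)\bigr)$; (iii) symmetrization followed by a chaining (Dudley) bound controls the expected supremum by $\bigoh(1/\sqrt\nsamples)$ times $\dudley[\funcs,\norm{\cdot}_\infty]$ plus the uniform bound, its finiteness being ensured by \cref{lemma:dudley-finite}; (iv) a bounded-differences (McDiarmid) inequality, using the uniform bound again, upgrades this to a tail bound holding with probability $1-\half[\thres]$ and produces the $\sqrt{\log(1/\thres)}$ term. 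Tracking the constants through (i)--(iv) reproduces the expression \cref{eq:def-minradius}, including the prefactor $1/(\lbdualvar\sqrt\nsamples)$.

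Putting the pieces together, on the intersection of the two events the display of the second paragraph holds for every $\func\in\funcs$ at once, which is exactly the asserted inequality $\emprisk[\radius^2][\reg]\ge\risk[\radius^2-\minradius(\thres,\lbdualvar,\ubdualvar,\reg,\sdev)][\reg]$. The delicate part is the bookkeeping in (ii)--(iv): one must route the metric-entropy control so that the dependence falls on $\dudley[\funcs,\norm{\cdot}_\infty]$ alone, absorbing the $\dualvar$-direction and the symmetrization/chaining constants into the single numerical factor and the boundedness term, so that the final bound is literally of the form \cref{eq:def-minradius}.
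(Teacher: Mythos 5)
Your overall blueprint---condition on the dual-bound event, push the empirical/population discrepancy inside the dual infimum into a reduced radius, control that discrepancy uniformly over $\funcs\times[\lbdualvar,\ubdualvar]$ by symmetrization plus chaining plus a bounded-differences inequality, then union-bound---matches the paper's route (compare \cref{lemma:lipschitz,lemma:template_general_lemma} and the proof of \cref{prop:template_general}), and the reduction of the proposition to a single uniform concentration bound is sound.

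The genuine gap is in your step (ii). You treat the process $(\func,\dualvar)\mapsto\dualfunc(\func,\sample,\dualvar,\reg,\sdev)$ on $\funcs\times[\lbdualvar,\ubdualvar]$ with the $\half\diam(\samples)^2$-Lipschitz bound in $\dualvar$. Chaining over $[\lbdualvar,\ubdualvar]$ with that constant contributes a term of order $\diam(\samples)^2\,(\ubdualvar-\lbdualvar)$ to the Dudley integral, so after pulling out $\lbdualvar^{-1}$ your bound would read $\lbdualvar^{-1}\bigl(\dudley+\diam(\samples)^2(\ubdualvar-\lbdualvar)+\cdots\bigr)/\sqrt\nsamples$. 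That is \emph{not} $\minradius(\thres,\lbdualvar,\ubdualvar,\reg,\sdev)$, whose only $\dualvar$-related constant is the range bound $\max\bigl(\bdcstalt(\lbdualvar),\lbcst(\lbdualvar,\ubdualvar,\reg,\sdev)\bigr)$. In the applications of this proposition $\ubdualvar$ can be much larger than $\lbdualvar$ (in the regularized critical-radius regime it is exponential in $\norm{\func}_\infty/\reg$, see \cref{prop:lb_dualvar_reg_crit}), so the two expressions are not interchangeable. Your parenthetical appeal to monotonicity and boundedness of $\dualfunc$ does not close the gap: the range bound controls the oscillation of $\dualvar\mapsto\dualfunc(\func,\sample,\dualvar,\reg,\sdev)$ for \emph{each fixed} $\sample$ and $\func$, but a finite net of $[\lbdualvar,\ubdualvar]$ that makes the increment small simultaneously for all $\sample$ and all $\func$---which is what an $\norm{\cdot}_\infty$-cover of the class requires---is not delivered by one-dimensional monotonicity, since the steep portions of these curves can sit at different places for different $\sample$.

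The paper closes this gap with a reparametrization that your plan is missing. Set $\dualvaralt\defeq\dualvar^{-1}$ and concentrate the \emph{normalized} process $(\func,\dualvaralt)\mapsto\dualvaralt\,\dualfunc(\func,\sample,\dualvaralt^{-1},\reg,\sdev)=\dualvar^{-1}\dualfunc$ over $\funcs\times[\ubdualvar^{-1},\lbdualvar^{-1}]$. By \cref{lemma:lipschitz}(b), this map is Lipschitz in $\dualvaralt$ with constant exactly $\max\bigl(\bdcstalt(\lbdualvar),\lbcst(\lbdualvar,\ubdualvar,\reg,\sdev)\bigr)$: its $\dualvaralt$-derivative equals $\dualfunc-\dualvar\,\partial_\dualvar\dualfunc$, which in the unregularized case collapses to the loss value at the inner maximizer---a quantity controlled by $\bdcstalt$---rather than the transport cost that controls $\partial_\dualvar\dualfunc$. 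The interval $[\ubdualvar^{-1},\lbdualvar^{-1}]$ has Dudley integral $\bigoh\parens{\lbdualvar^{-1}}$ by \cref{lemma:dudley-segment}, and the product-space Dudley bound of \cref{lemma:dudley-product-space} then gives precisely $\lbdualvar^{-1}\bigl(\dudley+\Const\max(\bdcstalt(\lbdualvar),\lbcst(\lbdualvar,\ubdualvar,\reg,\sdev))\bigr)$, which matches the form of $\minradius$. Once you swap in this parametrization, the rest of your argument---the algebraic chain, the union bound, and the observation that $\radius^2\ge\minradius$ keeps the reduced radius nonnegative so that strong duality applies---goes through as written.
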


The proof of this result mainly consists in verifying that under our standing assumptions, we can apply the concentration result presented in \cref{lemma:concentration_basic} in order to concentrate $\emprisk[\radius^2][\reg]$ towards $\risk[\radius^2][\reg]$ through their dual formulations.

We begin by showing that the dual generator \emph{divided by $\dualvar$} is Lipchitz continuous in $\func$ and in $\dualvar^{-1}$ (for convenience, we use the notation $\dualvaralt=\dualvar^{-1}$). 

\begin{lemma}\label{lemma:lipschitz}\label{lemma:lipschitz-dualvaralt}
    Fix some $\ubdualvar\geq \lbdualvar > 0$. For any $\sample \in \samples$, $\reg \geq 0$ and $\sdev > 0$ we have that
    \begin{itemize}
        \item[(a)] for any $\dualvar \in [\lbdualvar,\ubdualvar]$,  $\func\mapsto  \dualvar^{-1} \dualfunc(\func, \sample, \dualvar, \reg,\sdev)$ is $\lbdualvar^{-1}$-Lipschitz continuous \wrt the norm $\norm{\cdot}_\infty$;
        \item[(b)] for any $\func \in \funcs$, $\dualvaralt \mapsto \dualvaralt\dualfunc(\func, \sample, \dualvaralt^{-1}, \reg,\sdev)$ is $\max \parens*{\bdcstalt(\lbdualvar), \lbcst(\lbdualvar, \ubdualvar, \reg, \sdev)}$-Lipschitz continuous on $\bracks*{\ubdualvar^{-1}, \lbdualvar^{-1}}$.
    \end{itemize}
    \end{lemma}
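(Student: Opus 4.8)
I would dispatch part (a) first, since it is a direct consequence of the fact that both operations that build the dual generator — the supremum $\sup_{\samplealt\in\samples}$ when $\reg=0$, and the log--sum--exp $\funcalt\mapsto\reg\log\ex_{\samplealt\sim\basecpl(\cdot\mid\sample)}\exp(\funcalt(\samplealt)/\reg)$ when $\reg>0$ — are monotone and translation-equivariant in their argument. Fixing $\dualvar\in[\lbdualvar,\ubdualvar]$ and writing $\eps\defeq\norm{\func-\funcalt}_\infty$, the pointwise bound $\func(\samplealt)\le\funcalt(\samplealt)+\eps$ propagates through either operation (inside the supremum, or inside the exponential and then through $\log$ and $\ex$), giving $\dualfunc(\func,\sample,\dualvar,\reg,\sdev)\le\dualfunc(\funcalt,\sample,\dualvar,\reg,\sdev)+\eps$; by symmetry $\abs{\dualfunc(\func,\sample,\dualvar,\reg,\sdev)-\dualfunc(\funcalt,\sample,\dualvar,\reg,\sdev)}\le\eps$, and dividing by $\dualvar\ge\lbdualvar$ gives the $\lbdualvar^{-1}$-Lipschitz claim.

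For part (b), set $h(\dualvaralt)\defeq\dualvaralt\,\dualfunc(\func,\sample,\dualvaralt^{-1},\reg,\sdev)$ and $\chi(\dualvar)\defeq\dualfunc(\func,\sample,\dualvar,\reg,\sdev)$. The first observation is that $\chi$ is convex and non-increasing in $\dualvar$ (a supremum, resp. a log--sum--exp, of maps that are affine and non-increasing in $\dualvar$), so $h$ — being the restriction to the line $\{x=1\}$ of the perspective $(x,\dualvaralt)\mapsto\dualvaralt\,\chi(x/\dualvaralt)$ of a convex function — is convex in $\dualvaralt$; for $\reg=0$ this is even more transparent, since then $h(\dualvaralt)=\sup_{\samplealt\in\samples}\{\dualvaralt\func(\samplealt)-\half\sqnorm{\sample-\samplealt}\}$ is a supremum of affine functions of $\dualvaralt$. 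The Lipschitz constant of a convex function on an interval equals the larger of the absolute values of its one-sided derivatives at the two endpoints, so it suffices to bound those. Differentiating (Danskin's theorem when $\reg=0$, differentiation under the integral when $\reg>0$) gives, at $\dualvar=1/\dualvaralt$,
\[
    h'(\dualvaralt)=\chi(\dualvar)+\dualvar\,\ex_{\samplealt\sim\pi^\star}\!\bracks*{\half\sqnorm{\sample-\samplealt}},\qquad \pi^\star\propto\exp\!\parens*{\unfrac{\func(\cdot)-\dualvar\sqnorm{\sample-\cdot}/2}{\reg}}\basecpl(\cdot\mid\sample),
\]
with $\pi^\star$ a Dirac at a maximizer of $\func-\half[\dualvar]\sqnorm{\sample-\cdot}$ when $\reg=0$, in which case the two summands combine to $h'(\dualvaralt)=\func(\sol[\samplealt])$; when $\reg>0$ the Gibbs variational identity rewrites the right-hand side as $\ex_{\samplealt\sim\pi^\star}[\func(\samplealt)]-\reg\,\dkl(\pi^\star\mid\basecpl(\cdot\mid\sample))$. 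From the displayed form, $h'(\dualvaralt)\ge\chi(\dualvar)\ge-\lbcst(\lbdualvar,\ubdualvar,\reg,\sdev)$ by \cref{lemma:bound-rv}; and $h'(\dualvaralt)\le\ex_{\samplealt\sim\pi^\star}[\func(\samplealt)]$ (drop the nonnegative $\dkl$ term, resp. it equals $\func(\sol[\samplealt])$ when $\reg=0$). It remains to bound this last expectation by $\max(\bdcstalt(\lbdualvar),\lbcst(\lbdualvar,\ubdualvar,\reg,\sdev))$: when $\reg=0$, a two-line exchange argument — adding the optimality inequalities at $\dualvar$ and at $\lbdualvar$ so that the $\func$-terms cancel — shows that $\sqnorm{\sample-\sol[\samplealt]}$, and then $\func(\sol[\samplealt])$, are non-increasing in $\dualvar$, so $\func(\sol[\samplealt])\le\bdcstalt(\lbdualvar)$; when $\reg>0$, $\ex_{\samplealt\sim\pi^\star}[\func(\samplealt)]\le\dualfunc(\func,\sample,0,\reg,\sdev)$ (Gibbs variational principle), and one controls this by $\func(\sample)$ plus a smoothness correction bounded via Jensen exactly as in the proof of \cref{lemma:bound-rv}, while $\func(\sample)\le\bdcstalt(\lbdualvar)$ because any maximizer of $\func-\half[\lbdualvar]\sqnorm{\sample-\cdot}$ has $\func$-value at least $\func(\sample)$ (take $\sample$ itself as competitor). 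Collecting everything yields $\abs{h'}\le\max(\bdcstalt(\lbdualvar),\lbcst(\lbdualvar,\ubdualvar,\reg,\sdev))$ on $[\ubdualvar^{-1},\lbdualvar^{-1}]$.

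The routine parts are (a), the convexity of $h$, and the derivative identity. The step I expect to be the real bottleneck is the uniform upper bound on $\ex_{\samplealt\sim\pi^\star}[\func(\samplealt)]$ in the regularized case: it must sit below $\max(\bdcstalt(\lbdualvar),\lbcst)$ uniformly over $\dualvar\ge\lbdualvar$, $\sample\in\supp\prob$ and $\func\in\funcs$, which forces one to retrace the smoothness/variance bookkeeping underlying the definition of $\lbcst(\lbdualvar,\ubdualvar,\reg,\sdev)$ and to split along the same three regimes as in \cref{lemma:bound-rv}.
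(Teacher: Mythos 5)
Your part (a) is correct and arguably cleaner than the paper's (which differentiates along a segment for $\reg>0$): monotonicity and translation-equivariance of $\sup$ and of log-sum-exp give the $1$-Lipschitz property of $\func\mapsto\dualfunc(\func,\sample,\dualvar,\reg,\sdev)$ in one stroke. Your part (b) is structurally the paper's argument too — the same derivative $h'(\dualvaralt)=\dualfunc(\func,\sample,\dualvar,\reg,\sdev)+\dualvar\,\mathbb{E}[\tfrac{1}{2}\sqnorm{\sample-\samplealt}]$ (expectation under the Gibbs tilting by $(\func-\dualvar\sqnorm{\sample-\cdot}/2)/\reg$), the same lower bound from \cref{lemma:bound-rv}, and your $\reg=0$ exchange argument is exactly the monotonicity of $\bdcstalt$ that the paper exploits.

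The genuine gap is the step you flag yourself: the upper bound for $\reg>0$. You route through $\dualfunc(\func,\sample,0,\reg,\sdev)=\reg\log\mathbb{E}_{\samplealt\sim\basecpl(\cdot\mid\sample)}[e^{\func(\samplealt)/\reg}]$, which carries no quadratic penalty; it can be as large as $\sup_{\samplealt\in\samples}\func(\samplealt)$, and that is not controlled by $\bdcstalt(\lbdualvar)$, which is by definition only the supremum of $\func$ over the $\argmax$ of the $\lbdualvar$-\emph{penalized} problem. Nor does $\func(\sample)\le\bdcstalt(\lbdualvar)$ help: Jensen places $\dualfunc(\func,\sample,0,\reg,\sdev)$ \emph{above} $\func(\sample)$, and the approximation machinery of \cref{lemma:bound_dualfunc,lemma:bound-rv} only gives a finite correction for $\dualvar\ge\init[\dualvar]+\smooth>0$, not for $\dualvar=0$. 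There is also a smaller slip: the Gibbs variational principle yields $\mathbb{E}[\func]-\reg\dkl(\cdot\mid\basecpl)\le\dualfunc(\func,\sample,0,\reg,\sdev)$, not $\mathbb{E}[\func]\le\dualfunc(\func,\sample,0,\reg,\sdev)$, so the two-step ``drop the KL term, then invoke Gibbs'' is incoherent; since $h'=\mathbb{E}[\func]-\reg\dkl$ is exactly the left-hand side, you should apply the variational bound to $h'$ directly — but the corrected chain still stops at the wrong intermediate. The paper instead keeps the quadratic penalty at the current $\dualvar$: after \cref{lemma:softmax_ineq} it passes through the $\dualvar$-penalized surrogate $\dualfunc(\func,\sample,\dualvar,0)$, which sits below $\bdcstalt(\dualvar)\le\bdcstalt(\lbdualvar)$ by the very monotonicity your exchange argument establishes. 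Replacing your entropic, unpenalized intermediate with this $\dualvar$-penalized one is what closes the argument.
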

    \begin{proof}
        \noindent\underline{Item (a).}
    When $\reg = 0$, $\func\mapsto\dualfunc(\func, \sample, \dualvar, \reg,\sdev)$ is a supremum of $1$-Lipschitz functions and is thus $1$-Lipschitz. For $\reg > 0$, take $\func,\, \funcalt \in \funcs$ and, for $t \in [0,1]$, define $\func_t = \func + t(\funcalt - \func)$. Differentiating $t \mapsto \dualfunc(\func_t, \sample, \dualvar, \reg,\sdev)$ yields
        \begin{align}
           \abs*{\frac{\dd}{\dd t} \dualfunc(\func_t, \sample, \dualvar, \reg,\sdev)} 
           = \abs*{ \frac{\ex_{\samplealt \sim \basecpl(\cdot | \sample)}[ (\funcalt(\samplealt) - \func(\samplealt)) e^\frac{\func_t(\samplealt) - \dualvar \sqnorm{\sample -  \samplealt}/2}{\reg}]}{\ex_{\samplealt' \sim \basecpl(\cdot | \sample)} [ e^\frac{\func_t(\samplealt) - \dualvar \sqnorm{\sample -  \samplealt'}/2}{\reg}] } } 
           \leq  \norm{\funcalt - \func}_{\infty}\,,
        \end{align}
            which gives that $\func\mapsto\dualfunc(\func, \sample, \dualvar, \reg,\sdev)$ is 1-Lipschitz continuous \wrt the norm $\norm{\cdot}_\infty$.
           
    Since this bound is uniform in $\dualvar$, we immediately get that $\func\mapsto\dualvar^{-1}\dualfunc(\func, \sample, \dualvar, \reg,\sdev)$ is $\lbdualvar^{-1}$-Lipschitz continuous for all $\dualvar \geq \lbdualvar$.

     \noindent\underline{Item (b).} Fix $\func \in \funcs$, $\sample \in \samples$, $\reg \geq 0$, $\sdev > 0$ and define $\funcalt(\dualvar) \defeq \dualvar \mapsto \dualvar^{-1}\dualfunc(\func, \sample, \dualvar, \reg,\sdev)$.

     Let us first begin with the case $\reg = 0$.
     Take $\dualvar, \dualvar' \in  [\lbdualvar,\ubdualvar] $. Without loss of generality, we can suppose that $ \funcalt(\dualvar) \geq \funcalt(\alt \dualvar)$.
     Since $\func$ is continuous and $\samples$ is a compact set, choose $\samplealt \in \argmax_{\samplealt\in\samples} \braces*{\func(\samplealt) - \half[\dualvar]\sqnorm{\sample -  \samplealt}}$.
     Then, the claim comes from the fact that
     \begin{align}
         0\leq \funcalt(\dualvar) - \funcalt(\alt \dualvar)
         \leq 
         \dualvar^{-1} \func(\samplealt) - \half \sqnorm{\sample -  \samplealt}
         -
         \parens*{
             \alt \dualvar^{-1} \func(\samplealt) - \half \sqnorm{\sample -  \samplealt}
         }%
         \leq 
         \abs{\dualvar^{-1} - \alt \dualvar^{-1}} \, \bdcstalt(\lbdualvar)  \,,
     \end{align}
     where we use that since $\func$ is non-negative by assumption, $\abs*{\bdcstalt\left(\dualvar\right)} ={\bdcstalt\left(\dualvar\right)} \leq \bdcstalt\left(\lbdualvar\right)$.
     
     Let us now turn to the case where $\reg > 0$, for which $\funcalt$ is differentiable on $[\lbdualvar,\ubdualvar]$ with derivative
     \begin{align}
         \funcalt'(\dualvar) = - \frac{1}{\dualvar^2} \dualfunc(\func, \sample, \dualvar, \reg,\sdev)
         - \frac{1}{\dualvar}  \frac{\ex_{\samplealt \sim \basecpl(\cdot | \sample)}[ \half \sqnorm{\sample -  \samplealt'} e^\frac{\func(\samplealt) - \dualvar \sqnorm{\sample -  \samplealt}/2}{\reg}]}{\ex_{\samplealt' \sim \basecpl(\cdot | \sample)} [ e^\frac{\func(\samplealt') - \dualvar \sqnorm{\sample -  \samplealt'}/2}{\reg}] }\,.
     \end{align}
     Since the claimed result is the Lipchitz continuity of $\funcaltalt: \dualvaralt \mapsto \funcalt( \dualvaralt^{-1} )$, it suffices to bound its derivative, \ie to bound $-\dualvar^{-2}  \funcalt'(\dualvar)$ for all $\ubdualvar \geq \dualvar \geq \lbdualvar$. 
     On the one hand, thanks to \cref{lemma:softmax_ineq}, it is bounded above as
     \begin{align}
        -\frac{1}{\dualvar^{2}} \funcalt'(\dualvar)
         &\leq   \frac{\ex_{\samplealt \sim \basecpl(\cdot | \sample)}[ (\func(\samplealt) - \half[\dualvar]\sqnorm{\sample -  \samplealt}) e^\frac{\func(\samplealt) - \dualvar \sqnorm{\sample -  \samplealt}/2}{\reg}]}{\ex_{\samplealt' \sim \basecpl(\cdot | \sample)} [ e^\frac{\func(\samplealt') - \dualvar \sqnorm{\sample -  \samplealt'}/2}{\reg}] }    + \dualvar  \frac{\ex_{\samplealt \sim \basecpl(\cdot | \sample)}[ \half \sqnorm{\sample -  \samplealt} e^\frac{\func(\samplealt) - \dualvar \sqnorm{\sample -  \samplealt}/2}{\reg}]}{\ex_{\samplealt' \sim \basecpl(\cdot | \sample)} [ e^\frac{\func(\samplealt') - \dualvar \sqnorm{\sample -  \samplealt'}/2}{\reg}] }\\
          &\leq \dualfunc(\func, \sample, \dualvar, 0)%
         \leq \bdcstalt(\dualvar) \leq \bdcstalt(\lbdualvar)\,.
     \end{align}

     On the other hand, invoking \cref{lemma:bound-rv} also yields that
          \begin{align}
        -\frac{1}{\dualvar^{2}}   \funcalt'(\dualvar) 
         &\geq    \reg\log \parens*{\ex_{\samplealt \sim \basecpl(\cdot | \sample)} e^{
            \frac{\func(\samplealt) - \dualvar \sqnorm{\sample -  \samplealt}/2}{\reg} }} 
            \geq - \lbcst(\lbdualvar, \ubdualvar, \reg, \sdev)\,,
     \end{align}
     which concludes the proof.
    \end{proof}

    We can now apply standard concentration for bounded Lipschitz quantities to bound the difference between the expectation of the dual generator over the empirical distribution $\empirical$ and true one $\prob$.

    \begin{lemma}\label{lemma:template_general_lemma}
        For $\radius > 0$, $\reg \geq 0$, $\sdev > 0$, $\thres \in (0, 1)$ and some  $0<\lbdualvar\leq\ubdualvar < +\infty$, we have with probability at least $1 - \half[\thres]$ that
            \begin{align}
                \sup_{(\func, \dualvar) \in \funcs \times [\lbdualvar, \ubdualvar]} \braces*{\frac{\ex_{\sample \sim \prob}[\dualfunc(\func, \sample, \dualvar, \reg,\sdev)] - \ex_{\sample \sim \emp}[\dualfunc(\func, \sample, \dualvar, \reg,\sdev)]}{\dualvar}}
            \leq 
            \minradius(\thres, \lbdualvar, \ubdualvar, \reg,\sdev)\,.
        \end{align}
    \end{lemma}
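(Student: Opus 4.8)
The plan is to recognize the supremum to be controlled as the one-sided deviation of an empirical process indexed by a product of the function class $\funcs$ with a bounded interval, and to bound it through Dudley's entropy integral (\cref{def:dudley}) combined with a bounded-differences argument. First I would pass to the reciprocal multiplier: writing $\mathcal{J} \defeq \funcs \times [\ubdualvar^{-1}, \lbdualvar^{-1}]$, setting $\dualvaralt = \dualvar^{-1}$, and defining $\Psi(\func, \dualvaralt, \sample) \defeq \dualvaralt\, \dualfunc(\func, \sample, \dualvaralt^{-1}, \reg, \sdev)$, the quantity in the statement equals
\[ \sup_{(\func, \dualvaralt) \in \mathcal{J}} \parens*{\ex_{\sample \sim \prob}[\Psi(\func, \dualvaralt, \sample)] - \ex_{\sample \sim \emp}[\Psi(\func, \dualvaralt, \sample)]}. \]
From \cref{lemma:lipschitz} I would extract, for every $\sample \in \samples$, that (i) $\func \mapsto \Psi(\func, \dualvaralt, \sample)$ is $\lbdualvar^{-1}$-Lipschitz for $\norm{\cdot}_\infty$, \emph{uniformly} in $\dualvaralt \in [\ubdualvar^{-1}, \lbdualvar^{-1}]$, and (ii) $\dualvaralt \mapsto \Psi(\func, \dualvaralt, \sample)$ is $\max(\bdcstalt(\lbdualvar), \lbcst(\lbdualvar, \ubdualvar, \reg, \sdev))$-Lipschitz; and from \cref{lemma:bound-rv} the uniform bound $\abs{\Psi(\func, \dualvaralt, \sample)} \leq \lbdualvar^{-1}\max(\bdcstalt(\lbdualvar), \lbcst(\lbdualvar, \ubdualvar, \reg, \sdev))$, using $0 \leq \dualvaralt \leq \lbdualvar^{-1}$ and $-\lbcst(\lbdualvar, \ubdualvar, \reg, \sdev) \leq \dualfunc \leq \bdcstalt(\lbdualvar)$.

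Next I would equip $\mathcal{J}$ with the metric
\[ \dist\bigl((\func, \dualvaralt), (\funcalt, \dualvaralt')\bigr) \defeq \lbdualvar^{-1}\norm{\func - \funcalt}_\infty + \max(\bdcstalt(\lbdualvar), \lbcst(\lbdualvar, \ubdualvar, \reg, \sdev))\,\abs{\dualvaralt - \dualvaralt'}, \]
for which $(\func, \dualvaralt) \mapsto \Psi(\func, \dualvaralt, \sample)$ is $1$-Lipschitz for every $\sample$. A $2t$-packing of $\mathcal{J}$ factors through a $t$-packing of $\funcs$ for $\lbdualvar^{-1}\norm{\cdot}_\infty$ times a $t$-packing of an interval of $\dist$-length at most $\lbdualvar^{-1}\max(\bdcstalt(\lbdualvar), \lbcst(\lbdualvar, \ubdualvar, \reg, \sdev))$; since the Dudley integral scales linearly with the metric and that of an interval of length $\ell$ is $O(\ell)$, this gives $\dudley[\mathcal{J}, \dist] \leq \Const\,\lbdualvar^{-1}\bigl(\dudley + \max(\bdcstalt(\lbdualvar), \lbcst(\lbdualvar, \ubdualvar, \reg, \sdev))\bigr)$ for a universal $\Const$, with $\dudley < +\infty$ by \cref{lemma:dudley-finite}.

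It then remains to feed this into the master concentration bound for $1$-Lipschitz-indexed, uniformly bounded empirical processes, \cref{lemma:concentration_basic}: symmetrization followed by Dudley's chaining bounds the \emph{expected} supremum by $O(\nsamples^{-1/2})\,\dudley[\mathcal{J}, \dist]$, while McDiarmid's bounded-differences inequality controls the deviation of the supremum above its mean --- changing one of the $\nsamples$ samples moves any empirical average, hence the supremum, by at most $\tfrac{2}{\nsamples}\lbdualvar^{-1}\max(\bdcstalt(\lbdualvar), \lbcst(\lbdualvar, \ubdualvar, \reg, \sdev))$, which yields an additive $O\bigl(\lbdualvar^{-1}\max(\bdcstalt(\lbdualvar), \lbcst(\lbdualvar, \ubdualvar, \reg, \sdev))\sqrt{\log(1/\thres)/\nsamples}\bigr)$ term with probability at least $1 - \thres/2$. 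Summing the two contributions, collecting the symmetrization, Dudley and McDiarmid constants, and using $\sqrt{\log(2/\thres)} \leq 1 + \sqrt{\log(1/\thres)}$, reproduces exactly the bound $\minradius(\thres, \lbdualvar, \ubdualvar, \reg, \sdev)$ of \eqref{eq:def-minradius}.

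The obstacle here is bookkeeping rather than conceptual, with three points needing care. First, the lower bound $-\lbcst(\lbdualvar, \ubdualvar, \reg, \sdev)$ of \cref{lemma:bound-rv} is defined by a case split on the regime of $(\reg, \sdev, \lbdualvar)$, so one must track which branch is in force. Second, the Lipschitz constant of $\func \mapsto \Psi(\func, \dualvaralt, \sample)$ must be the \emph{same} $\lbdualvar^{-1}$ for every $\dualvaralt$ --- this is exactly what \cref{lemma:lipschitz}(a) provides, and it is what lets the chaining over $\funcs$ and over the interval be merged under a single product metric. Third, the explicit numerical constant $117$ in $\minradius$ is recovered only by carrying the universal constants of symmetrization, Dudley's chaining and McDiarmid through to the very end.
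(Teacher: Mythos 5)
Your proposal matches the paper's proof step for step: the same reparametrization $\dualvaralt = \dualvar^{-1}$, the same product metric $\lbdualvar^{-1}\norm{\cdot}_\infty + \max(\bdcstalt, \lbcst)\abs{\cdot}$ on $\funcs \times [\ubdualvar^{-1}, \lbdualvar^{-1}]$ drawn from \cref{lemma:lipschitz,lemma:bound-rv}, an application of \cref{lemma:concentration_basic}(b), and a bound on the product-space Dudley integral (which the paper packages as \cref{lemma:dudley-product-space,lemma:dudley-segment} but which you correctly sketch via the factored packing-number bound). The bookkeeping points you flag --- the case split hidden in $\lbcst$, the $\dualvaralt$-uniform Lipschitz constant, and absorbing $\sqrt{\log(2/\thres)}$ into $1 + \sqrt{\log(1/\thres)}$ to produce the constant $117$ --- are precisely the details the paper handles.
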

    \begin{proof}
        Our objective is to bound the quantity
        \begin{align}
            &\sup_{(\func, \dualvar) \in \funcs \times [\lbdualvar, \ubdualvar]} \braces*{\frac{\ex_{\sample \sim \prob}[\dualfunc(\func, \sample, \dualvar, \reg,\sdev)] - \ex_{\sample \sim \emp}[\dualfunc(\func, \emp, \dualvar, \reg,\sdev)]}{\dualvar}}\\
            &=
            \sup_{(\func, \dualvaralt) \in \funcs \times \bracks*{\ubdualvar^{-1}, \lbdualvar^{-1}}} \braces*{\ex_{\sample \sim \prob}[\dualvaralt\,\dualfunc(\func, \sample, \dualvaralt^{-1}, \reg,\sdev)] - \ex_{\sample \sim \emp}[\dualvaralt\,\dualfunc(\func, \sample, \dualvaralt^{-1}, \reg,\sdev)]}\\
            &=
            \sup_{(\func, \dualvaralt) \in \pspace}  \braces*{ \ex_{\sample \sim \prob}[\rv((\func, \dualvaralt), \sample)] - \ex_{\sample \sim \emp}[\rv((\func, \dualvaralt), \sample)]}\,,
        \end{align}
        where we used again the notation $\dualvaralt = \dualvar^{-1}$ and defined
        \begin{align}
            \pspace \defeq \funcs \times \bracks*{\ubdualvar^{-1}, \lbdualvar^{-1}} ~~ \text{ and } ~~ \rv((\func, \dualvaralt), \sample) \defeq \dualvaralt\, \dualfunc(\func, \sample, \dualvaralt^{-1}, \reg,\sdev)\,.
        \end{align}
        Let us endow $\pspace$ with the distance,
        \begin{align}
            \dist((\func, \dualvaralt), (\alt \func, \alt \dualvaralt)) \defeq \lbdualvar^{-1}
            \norm{\func - \alt \func}_{\infty}
            + \max \parens*{\bdcstalt(\lbdualvar), \lbcst(\lbdualvar, \ubdualvar, \reg, \sdev)} |\dualvaralt - \alt \dualvaralt|\,.
        \end{align}
        We now wish to apply \cref{lemma:concentration_basic} and check its three requirements:
        \begin{enumerate}
            \item For any $(\func, \dualvaralt) \in \funcs \times \bracks*{\ubdualvar^{-1}, \lbdualvar^{-1}}$, $\rv((\func, \dualvaralt), \cdot)$ is measurable since the functions of $\funcs$ are continuous and thus \emph{a fortiori} measurable;
            \item By \cref{lemma:lipschitz,lemma:lipschitz-dualvaralt}, for any $\reg \geq 0$ and any $\sample \in \supp \prob$, $\rv(\cdot, \sample)$ is 1-Lipschitz \wrt $ \dist$;
            \item Thanks to \cref{lemma:bound-rv}, for any $(\func, \dualvaralt) \in \pspace$, $\sample \in \supp \prob$, $\reg \geq 0$ and $\sdev > 0$, we have 
        \begin{align}
            - \frac{\lbcst(\lbdualvar, \ubdualvar, \reg, \sdev)}{\lbdualvar}
            \leq 
            \rv((\func, \dualvaralt), \sample)
            \leq 
            \frac{\bdcstalt(\lbdualvar)}{\lbdualvar}\,.
        \end{align}            
        \end{enumerate}
      
        As a consequence, applying statement $(b)$ of \cref{lemma:concentration_basic} yields that, with probability at least $1 - \half[\thres]$,
        \begin{align}
            &\sup_{(\func, \dualvar) \in \funcs \times [\lbdualvar, \ubdualvar]} \braces*{ \frac{\ex_{\sample \sim \prob}[\dualfunc(\func, \sample, \dualvar, \reg,\sdev)] - \ex_{\sample \sim \emp}[\dualfunc(\func, \emp, \dualvar, \reg,\sdev)]}{\dualvar} }\\
            &\leq 
    \frac{48\dudley[\pspace, \dist]}{\sqrt{\nsamples}}
    + \frac{2}{\lbdualvar}\left( {\bdcstalt(\lbdualvar)} + \lbcst(\lbdualvar, \ubdualvar, \reg, \sdev)\right) \sqrt{\frac{\log \frac{2}{\thres}}{2\nsamples}}\,.
        \end{align}
        We now proceed to bound $\dudley[\pspace, \dist]$. Exploiting the product space structure of $\pspace$ and $\dist$ with \cref{lemma:dudley-product-space}, one has that,
        \begin{align}
            \dudley[\pspace, \dist] &\leq 
            \lbdualvar^{-1} \dudley[\funcs, \norm{\cdot}_\infty] +\max \parens*{\bdcstalt(\lbdualvar), \lbcst(\lbdualvar, \ubdualvar, \reg, \sdev)} \dudley[{[0, \lbdualvar^{-1}], \abs{\cdot}}]\\
                                    &\leq  \lbdualvar^{-1} \parens*{\dudley[\funcs, \norm{\cdot}_\infty] + 
                                    \max \parens*{\bdcstalt(\lbdualvar), \lbcst(\lbdualvar, \ubdualvar, \reg, \sdev)}\half[1 + 2\log 2]}\,,
        \end{align}
        where we used \cref{lemma:dudley-segment}.
        Hence, we have shown that with probability at least $1 - \half[\thres]$,
            \begin{align}
                \sup_{(\func, \dualvar) \in \funcs \times [\lbdualvar, \ubdualvar]} \braces*{\frac{\ex_{\sample \sim \prob}[\dualfunc(\func, \sample, \dualvar, \reg,\sdev)] - \ex_{\sample \sim \emp}[\dualfunc(\func, \sample, \dualvar, \reg,\sdev)]}{\dualvar}}
            \leq 
            \minradius(\thres, \lbdualvar, \ubdualvar, \reg,\sdev)
        \end{align}
        where some numerical constants have been simplified.
    \end{proof}
    
    \begin{proof}[Proof of \Cref{prop:template_general}]
        Building on \cref{lemma:template_general_lemma}, we can now conclude the main result of this section. Using our boundedness assumption on $\dualvar$, we have that, with probability $1 - \thres$, the two following statements hold simultaneously
        \begin{align}
            &\bullet ~~  \forall \func \in \funcs,\quad
            \emprisk[\radius^2][\reg] = \inf_{\lbdualvar \leq \dualvar \leq \ubdualvar} \dualvar \radius^2 + \ex_{\sample\sim\empirical} \left[ \dualfunc(\func, \sample, \dualvar, \reg,\sdev) \right] \, ;\\
            &\bullet ~~ 
            \sup_{(\func, \dualvar) \in \funcs \times [\lbdualvar, \ubdualvar]} \braces*{\frac{\ex_{\sample \sim \prob}[\dualfunc(\func, \sample, \dualvar, \reg,\sdev)] - \ex_{\sample \sim \emp}[\dualfunc(\func, \sample, \dualvar, \reg,\sdev)]}{\dualvar}}
            \leq 
            \minradius(\thres, \lbdualvar, \ubdualvar, \reg,\sdev)\,.
        \end{align}
        As a consequence, on this event, for any $\func \in \funcs$,
        \begin{align}
            \emprisk[\radius^2][\reg] &= \inf_{\lbdualvar \leq \dualvar \leq \ubdualvar} \left\{ \dualvar \radius^2 + \ex_{\sample\sim\empirical} \left[ \dualfunc(\func, \sample, \dualvar, \reg,\sdev) \right] \right\}\\
                                            &= \inf_{\lbdualvar \leq \dualvar \leq \ubdualvar}\left\{ \dualvar \radius^2 + \ex_{\sample\sim\prob} \left[ \dualfunc(\func, \sample, \dualvar, \reg,\sdev) \right] - \dualvar \frac{\ex_{\sample \sim \prob}[\dualfunc(\func, \sample, \dualvar, \reg,\sdev)] - \ex_{\sample \sim \emp}[\dualfunc(\func, \sample, \dualvar, \reg,\sdev)]}{\dualvar} \right\}\\
                                            &\geq \inf_{\lbdualvar \leq \dualvar \leq \ubdualvar}\left\{ \dualvar \radius^2 + \ex_{\sample\sim\prob} \left[ \dualfunc(\func, \sample, \dualvar, \reg,\sdev) \right] - \dualvar \sup_{\lbdualvar \leq \alt\dualvar \leq \ubdualvar}\frac{\ex_{\sample \sim \prob}[\dualfunc(\func, \sample, \alt\dualvar, \reg,\sdev)] - \ex_{\sample \sim \emp}[\dualfunc(\func, \sample, \alt \dualvar, \reg,\sdev)]}{\alt\dualvar} \right\}\\
                                            &\geq \inf_{\lbdualvar \leq \dualvar \leq \ubdualvar}\left\{ \dualvar \radius^2 + \ex_{\sample\sim\prob} \left[ \dualfunc(\func, \sample, \dualvar, \reg,\sdev) \right] - \dualvar \minradius(\thres, \lbdualvar, \ubdualvar, \reg,\sdev)\right\} \\
                                            &\geq \risk[\radius^2- \minradius(\thres, \lbdualvar, \ubdualvar, \reg,\sdev)][\reg]
        \end{align}
        where $\radius^2- \minradius(\thres, \lbdualvar, \ubdualvar, \reg,\sdev) \geq 0$ by assumption.
    \end{proof}

    \begin{remark}\label{rmk:refined-template_general}
        Note that the proof of \cref{prop:template_general} actually gives us the slightly stronger result at the penultimate equation: with probability at least $1 - \thres$, for any $\func \in \funcs$,
     \begin{align}
        \emprisk[\radius^2][\reg] &\geq \inf_{\lbdualvar \leq \dualvar \leq \ubdualvar}\left\{ \dualvar ( \radius^2 -\minradius(\thres, \lbdualvar, \ubdualvar, \reg,\sdev) ) + \ex_{\sample\sim\prob} \left[ \dualfunc(\func, \sample, \dualvar, \reg,\sdev) \right] \right\}\,,
        \end{align}
        that we will require later.
    \end{remark}

\section{Dual bound when $\radius$ is small} \label{sec:small}

In this section, we show how the condition \eqref{condition_for_concentration} of \cref{prop:template_general} can be obtained when the robustness radius $\radius$ is small enough. The results of this section cover both the standard \ac{WDRO} setting of \cref{thm:informal-unreg,thm:informal-unreg-weak} and the regularized case of \cref{thm:informal-reg}.

In the following \cref{asm:rho_small}, we precise how small $\radius$ has to be; we also take $\reg$ and $\sdev$ proportional to $\radius$ in order to get close to the true risk with $\radius$, $\reg$ and $\sdev$ ``small'' at the same time. %
The main result of this section is \cref{prop:lb_dualvar_asymptotic}, whose proof relies on \cref{lemma:lb_dualvar_population_asymptotic}.

\begin{assumption}[$\radius$ is small]\label{asm:rho_small}
    Take $\reg = \basereg \radius$, $\sdev = \basesdev \radius $ with $\basereg \geq 0$, $\basesdev > 0$ and define
    \begin{align}
        \basedualvar &\defeq \basereg \dims + \sqrt{(\basereg \dims)^2 + 8 \inf_{\func \in \funcs}\ex_{\prob} \norm{\grad \func}^2_2} &
        \strong      &\defeq \frac{8 \inf_{\func \in \funcs}\ex_{\prob} \norm{\grad \func}^2_2}{(\basedualvar)^3} + \frac{2 \basereg \dims}{(\basedualvar^2)}\,.
\end{align}
Moreover, assume that $\basereg$ and $\basesdev$ satisfy
\begin{equation}
    \frac{\basereg}{\basesdev^2} \leq \frac{\basedualvar}{8}\,.
\end{equation}

    Assume that $\radius > 0$ is small enough so that,
    \begin{equation}
    \assumptionRadiusAsympt
    \end{equation}
where $\init[\dualvar], \init[\reg],\, \cst{approx_phi_coeff},\, \cst{approx_phi_rate}$ are positive constant given by \cref{lemma:approx_laplace_dualfunc} and $\sdev$ comes from \eqref{eq:baseconditional}.
\end{assumption}

Note that $\basedualvar$ and $\strong$ are both always positive, be it thanks to \cref{assumption:add-geometric} or the regularization with $\basereg > 0$.
For such values of $\radius$, the main result of this section \cref{prop:lb_dualvar_asymptotic} shows that the dual variable of \eqref{eq:dualreg} can be bounded with high probability.

\begin{proposition}\label{prop:lb_dualvar_asymptotic}
Let \cref{asm:rho_small} hold and fix a threshold $\thres \in (0, 1)$. 
Assume in addition that
    \begin{align}
        \radius \geq \frac{ 8192}{\sqrt{\nsamples} \, \strong (\basedualvar)^2} \parens*{
       12 \dudley
        + (\fbound +  \ub\bdcst(\radius)) \sqrt{1 + \log \frac{1}{\thres}}
        }
    \end{align}
    where $\dudley[\funcs, \norm{\cdot}_\infty]$, $\bdcstalt$ are defined in \cref{sec:detailedhyp} and  $\bdcst(\lbdualvar, \ubdualvar, \reg, \sdev)$, the bounding term appearing in \cref{lemma:bound_dualfunc}, is used to define
\begin{align}
    \ub\bdcst(\radius) \defeq \sup_{\alt \radius \in(0, \radius]}\bdcst \parens*{\max \parens*{\frac{\basedualvar}{32 \alt \radius},\init[\dualvar]+\smooth}, \frac{\basedualvar}{2 \alt \radius}, \basereg \alt \radius, \basesdev \alt \radius} .
\end{align}
    
Then, with probability at least $1 - \thres$, we have
\begin{align}
    \forall \func \in \funcs,\quad
        \emprisk[\radius^2][\reg] &= \inf_{\frac{\basedualvar}{32 \radius} \leq \dualvar \leq \frac{\basedualvar}{2 \radius}} \dualvar \radius^2 + \ex_{\sample\sim\empirical} \left[ \dualfunc(\func, \sample, \dualvar, \reg,\sdev) \right] \,.
\end{align}
\end{proposition}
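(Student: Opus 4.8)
The plan is to rewrite the claimed identity as a statement about where the minimizer of a one‑dimensional convex function sits, and then to locate that minimizer in two stages: a deterministic, population‑level analysis, followed by a concentration transfer. By strong duality (\cref{lemma:strong-duality}), for every $\func\in\funcs$ we have $\emprisk[\radius^2][\reg] = \inf_{\dualvar\ge 0}\est g(\dualvar)$ with $\est g(\dualvar)\defeq\dualvar\radius^2 + \ex_{\sample\sim\empirical}[\dualfunc(\func,\sample,\dualvar,\reg,\sdev)]$; moreover $\est g$ is convex in $\dualvar$ --- each $\dualvar\mapsto\dualfunc(\func,\sample,\dualvar,\reg,\sdev)$ is a supremum of affine maps when $\reg=0$ and a log‑sum‑exp when $\reg>0$ --- and $\est g(\dualvar)\to+\infty$ as $\dualvar\to\infty$. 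Hence it suffices to prove that, with probability $1-\thres$ and uniformly over $\func\in\funcs$, $\est g$ attains its minimum over $[0,\infty)$ inside $[\basedualvar/(32\radius),\,\basedualvar/(2\radius)]$; by convexity this follows once we show that the (one‑sided) derivative of $\est g$ is negative at $\basedualvar/(32\radius)$ and positive at $\basedualvar/(2\radius)$, the former also excluding the degenerate case of a minimizer at $\dualvar=0$.

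\textbf{Population step (the deterministic core).} \cref{asm:rho_small} forces $\radius$ small; in particular $\radius\le\basedualvar/(32(\init[\dualvar]+\smooth))$ so that every $\dualvar$ in the target interval exceeds $\init[\dualvar]+\smooth$, while $\reg=\basereg\radius\le\init[\reg]$ and $\sdev=\basesdev\radius\le\init[\sdev]$. Thus the Laplace approximation \cref{lemma:approx_laplace_dualfunc} applies throughout the interval and replaces $\dualfunc$ by $\apprx{\dualfunc}$ (of \cref{eq:phi_bar}) up to the remainder $\reg\cst{approx_phi_coeff}e^{-\cst{approx_phi_rate}((\dualvar\pm\smooth)/\reg)^{1/3}}$, which the remaining terms of $\bdRadiusAsympt$ render negligible against the margins obtained below. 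The corresponding approximate population objective $\apprx g(\dualvar)\defeq\dualvar\radius^2 + \ex_\prob[\apprx{\dualfunc}(\func,\sample,\dualvar,\reg,\sdev)]$ has the explicit derivative $(\apprx g)'(\dualvar) = \radius^2 - \tfrac{\ex_\prob[\norm{\grad\func}^2]}{2(\dualvar+\reg/\sdev^2)^2} - \tfrac{\reg\dims}{2(\dualvar+\reg/\sdev^2)}$. Substituting $\reg=\basereg\radius$, $\sdev=\basesdev\radius$ and $\dualvar=\mu/\radius$ clears $\radius$ from the leading terms and turns the stationarity condition into $2(\mu+\basereg/\basesdev^2)^2 = \ex_\prob[\norm{\grad\func}^2] + \basereg\dims(\mu+\basereg/\basesdev^2)$; together with the definitions of $\basedualvar$ and $\strong$ and the constraint $\basereg/\basesdev^2\le\basedualvar/8$, this both places the sign change of the population derivative in a sub‑interval strictly interior to $[\basedualvar/(32\radius),\basedualvar/(2\radius)]$ and furnishes the curvature bound $g''(\dualvar)\ge\strong\radius^3$ on that interval, where $g(\dualvar)\defeq\dualvar\radius^2 + \ex_\prob[\dualfunc(\func,\sample,\dualvar,\reg,\sdev)]$. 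Consequently there are universal constants $c_1,c_2>0$ with $g'(\basedualvar/(2\radius))\ge c_1\,\strong\radius^2\basedualvar$ and $g'(\basedualvar/(32\radius))\le -c_2\,\strong\radius^2\basedualvar$, uniformly over $\funcs$. This deterministic statement is what \cref{lemma:lb_dualvar_population_asymptotic} is there to supply.

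\textbf{Concentration transfer.} It remains to pass from $g$ to $\est g$. Reparametrizing by $\dualvar^{-1}$ and applying the uniform concentration bound underlying \cref{lemma:template_general_lemma} (in both directions) with $\lbdualvar=\basedualvar/(32\radius)$ and $\ubdualvar=\basedualvar/(2\radius)$, with probability $1-\thres$ we get $\sup_{\func\in\funcs,\,\dualvar\in[\lbdualvar,\ubdualvar]}\abs*{\ex_\prob[\dualfunc]-\ex_\empirical[\dualfunc]}\le\dualvar\,\minradius(\thres,\lbdualvar,\ubdualvar,\reg,\sdev)$; bounding $\bdcstalt(\lbdualvar)\le\fbound$ and $\lbcst(\lbdualvar,\ubdualvar,\reg,\sdev)\le\ub\bdcst(\radius)$ and using $\dualvar\le\ubdualvar$ this gives $\abs*{\est g(\dualvar)-g(\dualvar)} = \bigoh\parens*{\tfrac{1}{\sqrt\nsamples}\parens*{\dudley + (\fbound+\ub\bdcst(\radius))\sqrt{1+\log 1/\thres}}}$, uniformly on $[\lbdualvar,\ubdualvar]$ and over $\func$. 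The extra hypothesis $\radius\ge\tfrac{8192}{\sqrt\nsamples\,\strong(\basedualvar)^2}\parens*{12\dudley + (\fbound+\ub\bdcst(\radius))\sqrt{1+\log 1/\thres}}$ is calibrated precisely so that this uniform deviation --- whose size reflects the factor $\lbdualvar^{-1}\asymp\radius$ --- is dominated by $c_1\,\strong\radius^2\basedualvar$ and $c_2\,\strong\radius^2\basedualvar$; hence the derivative of $\est g$ remains negative at $\basedualvar/(32\radius)$ and positive at $\basedualvar/(2\radius)$, which yields the claimed identity.

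\textbf{Main obstacle.} The delicate part is the population step: propagating explicit constants through the Laplace approximation and checking, uniformly over the whole class $\funcs$, that $\basedualvar$ and $\strong$ genuinely capture the location of the minimizer of $g$ and its curvature on the target interval. The subsequent transfer is routine given the concentration machinery of \cref{sec:concentration}; matching its rate --- which loses the factor $\lbdualvar^{-1}\asymp\radius$ --- against $\strong$ is exactly what forces the additional $1/\sqrt\nsamples$‑type lower bound on $\radius$.
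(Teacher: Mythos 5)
Your overall architecture matches the paper's: localize the minimizer of the population dual objective via the Laplace approximation and the explicit model function (this is exactly what \cref{lemma:lb_dualvar_population_asymptotic} supplies, with $\basedualvar$ and $\strong$ calibrated as you describe), then transfer to the empirical objective by concentration and conclude by convexity. The constants and rates you identify are the right ones.

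However, there is a genuine gap in your ``concentration transfer'' step. You establish (correctly) a uniform sup-norm bound $\abs{\est g(\dualvar)-g(\dualvar)}=\bigoh\parens{\epsilon_\nsamples}$ over $\dualvar\in[\lbdualvar,\ubdualvar]$ and $\func\in\funcs$, and then assert that ``hence the derivative of $\est g$ remains negative at $\basedualvar/(32\radius)$ and positive at $\basedualvar/(2\radius)$.'' This inference is invalid: sup-norm closeness of two convex functions does not control the difference of their derivatives at a point, so comparing $\epsilon_\nsamples$ to $\abs{g'}$ at the endpoints proves nothing about $\est g'$ there. The step can be repaired, but it requires an additional idea that you have not written down: either a finite-difference argument (bound $\est g'(\dualvar)$ by $(\est g(\dualvar+\delta)-\est g(\dualvar))/\delta\leq g'(\dualvar+\delta)+2\epsilon_\nsamples/\delta$ with $\delta\asymp\basedualvar/\radius$, which is exactly where the strong-convexity modulus $\strong\radius^3$ and the separation of the endpoints from the population minimizer $\sol[\dualvar]\in[\basedualvar/(8\radius),\basedualvar/(4\radius)]$ must enter), or, as the paper does, abandon derivatives altogether and compare \emph{values}: show that $\est g$ at an interior point lies strictly below $\est g$ at each endpoint by the margin $\frac{\radius\strong}{1024}(\basedualvar)^2$ inherited from the population three-point gap of \cref{lemma:lb_dualvar_population_asymptotic}, and then invoke convexity to localize the minimizer. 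Note also that the paper only needs concentration at two fixed values of $\dualvar$ (uniform over $\func$), via \cref{lemma:concentration_basic}, rather than the heavier uniform-in-$\dualvar$ bound of \cref{lemma:template_general_lemma} that you invoke; the latter is not wrong, just unnecessary here, and it does not by itself close the gap above.
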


To show \cref{prop:lb_dualvar_asymptotic}, we need the following helper lemma.

\begin{lemma}\label{lemma:lb_dualvar_population_asymptotic}
    Let \cref{asm:rho_small} hold.  Then,
    \begin{align}
        & \left( \frac{ \basedualvar}{4\radius} - \frac{\basereg \radius}{ \sdev^2} + \smooth\right)\rho^2 +  \ex_{\sample\sim\prob}\left[\dualfunc\left(\func, \sample, \frac{ \basedualvar}{4\radius} - \frac{\basereg \radius}{ \sdev^2} + \smooth, \reg\right)\right]
        + \frac{ \radius \strong}{1024}\parens*{\basedualvar}^2  \\
        &~~~\leq  
        \min \left( \left( \frac{ \basedualvar}{8\radius} - \frac{\basereg \radius}{2 \sdev^2} - \smooth \right) \rho^2 +  \ex_{\sample\sim\prob}\left[\dualfunc\left(\func, \sample, \frac{ \basedualvar}{8\radius} - \frac{\basereg \radius}{2 \sdev^2} - \smooth, \reg\right)\right], \right. \\
        &~~~~~~~~~~~~~~ \left. \left( \frac{ \basedualvar}{2\radius} - \frac{2\basereg \radius}{ \sdev^2} + \smooth \right) \radius^2 + \ex_{\sample\sim\prob}\left[\dualfunc\left(\func, \sample, \frac{ \basedualvar}{2\radius} - \frac{2\basereg \radius}{ \sdev^2} + \smooth, \reg\right)\right]\right)
    \end{align}
    \begin{align}
     \text{and}  ~~~~~   \max\left(\frac{\basedualvar}{32\radius}, \init[\dualvar]+\smooth\right)
        \leq 
         \frac{\basedualvar}{8\radius} - \frac{\basereg \radius}{2 \sdev^2} - \smooth
         \leq 
        \frac{\basedualvar}{4\radius} - \frac{\basereg \radius}{ \sdev^2} + \smooth
        \leq 
        \frac{\basedualvar}{2\radius}
    \end{align}
\end{lemma}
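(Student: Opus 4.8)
The plan is to reduce the statement to the explicit Laplace surrogate $\apprx{\dualfunc}$ of \cref{eq:phi_bar} and then to exploit convexity of the resulting one-dimensional objective. Fix $\func\in\funcs$ and set $h_\func(\mu)\defeq\mu\radius^2+\ex_{\sample\sim\prob}[\apprx{\dualfunc}(\func,\sample,\mu,\reg,\sdev)]$; by \cref{eq:phi_bar}, up to a $\mu$-independent constant, $h_\func(\mu)=\mu\radius^2+\tfrac{\ex_\prob\norm{\grad\func}^2}{2(\mu+\reg/\sdev^2)}-\tfrac{\reg\dims}{2}\log(\tfrac\mu\reg+\tfrac{1}{\sdev^2})$, which is strictly convex on $(0,\infty)$. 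A one-line differentiation shows that $h_\func$ is minimized exactly where the effective variable $\mu+\reg/\sdev^2$ equals $(4\radius)^{-1}\bigl(\basereg\dims+\sqrt{(\basereg\dims)^2+8\,\ex_\prob\norm{\grad\func}^2}\bigr)$; since $\reg=\basereg\radius$ and $\ex_\prob\norm{\grad\func}^2\ge\inf_{\funcalt\in\funcs}\ex_\prob\norm{\grad\funcalt}^2$, the minimizer $\mu^\star_\func$ of $h_\func$ therefore satisfies $\mu^\star_\func\ge\mu_0\defeq\basedualvar/(4\radius)-\reg/\sdev^2$, with equality for the worst $\func$, and the second derivative of $h_\func$ near $\mu_0$ is of order $\radius^3\strong$. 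This is exactly where the definitions of $\basedualvar$, $\strong$ and the gap $\tfrac{\radius\strong}{1024}(\basedualvar)^2$ come from: a second-order estimate for the convex function $h_\func$ on $[\tfrac12\mu_0,2\mu_0]$ yields a surplus of that order.

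For the ordering chain I would argue directly. Since $\reg=\basereg\radius$ and $\sdev=\basesdev\radius$, one has $\basereg\radius/\sdev^2=\reg/\sdev^2=\basereg/(\basesdev^2\radius)\le\basedualvar/(8\radius)$ by the assumption $\basereg/\basesdev^2\le\basedualvar/8$ of \cref{asm:rho_small}, and $\smooth\le\init[\dualvar]+\smooth\le\basedualvar/(32\radius)$ by the bound $\radius\le\basedualvar/(32(\init[\dualvar]+\smooth))$, which is one of the terms of the upper bound on $\radius$ in \cref{asm:rho_small}. Plugging these two estimates into the definitions of the three dual points gives each of the four inequalities of the chain by elementary arithmetic (for instance, $\basedualvar/(32\radius)\le\basedualvar/(8\radius)-\reg/(2\sdev^2)-\smooth$ because $\reg/(2\sdev^2)+\smooth\le\basedualvar/(16\radius)+\basedualvar/(32\radius)\le 3\basedualvar/(32\radius)$).

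For the main inequality I would apply \cref{lemma:approx_laplace_dualfunc} at each of the three dual points. The $\pm\smooth$ shifts appearing in the statement are tuned exactly so that, after this step, the arguments of $\apprx{\dualfunc}$ become $\mu_0$ at the middle point, $\tfrac12\mu_0$ at the left point, and $2\mu_0$ (up to a further $2\smooth$) at the right point. Upper-bounding $\ex_\prob[\dualfunc(\cdot,\dualvar_0,\cdot)]$ and lower-bounding $\ex_\prob[\dualfunc(\cdot,\dualvar_\pm,\cdot)]$ by the corresponding values of $\apprx{\dualfunc}$, the claim reduces to $h_\func(\mu_0)+\tfrac{\radius\strong}{1024}(\basedualvar)^2+(\text{lower-order errors})\le\min\bigl(h_\func(\tfrac12\mu_0),h_\func(2\mu_0)\bigr)$, where the lower-order errors are $\bigof{\smooth\radius^2}$ (from shifting the linear part) and $\mathrm{err}\defeq\reg\cst{approx_phi_coeff}e^{-\cst{approx_phi_rate}(\mu/\reg)^{1/3}}$, the Laplace error of \cref{lemma:approx_laplace_dualfunc} at the relevant scale $\mu\sim\basedualvar/\radius$ (with $\mathrm{err}=0$ in the unregularized case $\basereg=0$, where \cref{lemma:approx_laplace_dualfunc} is exact). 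The second-order estimate from the first paragraph bounds $h_\func(\tfrac12\mu_0)-h_\func(\mu_0)$ and $h_\func(2\mu_0)-h_\func(\mu_0)$ from below by a multiple of $\radius\strong(\basedualvar)^2$; the $\bigof{\smooth\radius^2}$ term is then absorbed using the bound $\radius\le\strong(\basedualvar)^2/(4096\smooth)$ from \cref{asm:rho_small}, and $\mathrm{err}$ using the last term of the upper bound on $\radius$ in \cref{asm:rho_small}, which is designed precisely so that $\mathrm{err}\le\tfrac{\radius\strong}{4096}(\basedualvar)^2$ over that range of $\mu$. Keeping track of the numerical constants then closes the inequality.

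The step I expect to be the main obstacle is this last absorption, together with the bookkeeping it entails: although $\mathrm{err}$ is exponentially small in $\radius^{-2}$ while the strong-convexity surplus is only linearly small in $\radius$, one must make the comparison uniform over the entire window $\mu\in[\basedualvar/(32\radius),\basedualvar/(2\radius)]$ and keep the constants $\init[\dualvar],\init[\reg],\init[\sdev],\cst{approx_phi_coeff},\cst{approx_phi_rate}$ of \cref{lemma:approx_laplace_dualfunc,lemma:bound_dualfunc} mutually compatible. A secondary delicate point, needed for the second-order estimate, is to verify that the explicit minimizer $\mu^\star_\func$ of $h_\func$ does not lie too far to the right of $\mu_0$, so that $h_\func$ genuinely decreases across $[\tfrac12\mu_0,\mu_0]$ and increases across $[\mu_0,2\mu_0]$ by at least the required surplus; this uses the smallness of $\radius$ from \cref{asm:rho_small} and the uniform gradient bound $\vbound$ of \cref{assumption:funcs} to control $\ex_\prob\norm{\grad\func}^2$ over $\funcs$.
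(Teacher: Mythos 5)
Your proposal follows essentially the same route as the paper's proof: it introduces the same Laplace surrogate $\apprx{\dualfunc}$, identifies its minimizer via the same one-dimensional convex model (the paper packages this as \cref{lemma:study_model}), derives the strong-convexity surplus of order $\radius^3\strong(\sol[\dualvar])^2$ on $[\tfrac12\sol[\dualvar],2\sol[\dualvar]]$, applies \cref{lemma:approx_laplace_dualfunc} at the same three $\smooth$-shifted dual points, and absorbs the $\smooth\radius^2$ and exponential error terms using exactly the corresponding terms of \cref{asm:rho_small}. The ordering chain is also proved the same way, so the proposal is correct and matches the paper's argument.
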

\begin{proof}
    Fix $\func \in \funcs$.
    Consider the function $\apprx{\dualfuncalt} : \dualvar \mapsto \dualvar\radius^2 + \ex_{\sample \sim \prob}[\apprx{\dualfunc}(\func, \sample, \dualvar, \reg,\sdev)]$ where $\apprx{\dualfunc}$ is defined in \eqref{eq:phi_bar}. By \cref{lemma:study_model} invoked with $a \gets \radius^2$, $b \gets \half \ex_\prob[\norm{\grad \func}^2_2]$, $c \gets \half[\reg \dims]$ and $r \gets \frac{\reg}{\sdev^2}$, its unique minimizer is
    \begin{align}
        \sol[\dualvar] \defeq \pospart*{\frac{\basereg \dims + \sqrt{(\basereg \dims)^2 +  8 \ex_{\prob} \norm{\grad \func}^2_2}}{4 \radius} - \frac{\basereg \radius}{\sdev^2}}
        =
         \pospart*{\frac{\basedualvar}{4 \radius} - \frac{\basereg \radius}{\sdev^2}}\,.
    \end{align}
    where we used that $\reg = \basereg \radius$.
    And, since $\frac{\basereg \radius^2}{\sdev^2} = \frac{\basereg}{\basesdev^2}\leq  \frac{\basedualvar}{8}$ by \cref{asm:rho_small}, $\sol[\dualvar]$ actually satisfies
    \begin{align}\label{eq:proof_lb_dualvar_sandwich}
        \frac{\basedualvar}{8\radius} \leq  \sol[\dualvar]  \leq  \frac{\basedualvar}{4 \radius}\,.
    \end{align}
    Moreover, \cref{lemma:study_model} also shows that, on $[0, 2\sol[\dualvar]]$, $\apprx{\dualfuncalt}$ is strongly convex with modulus
    \begin{align}
        \frac{\ex_{\prob}[\norm{\grad \func}^2_2]}{(2\sol[\dualvar]+\frac{\basereg \radius}{\sdev^2})^3} + \frac{\basereg \dims \radius}{2 (2\sol[\dualvar]+\frac{\basereg \radius}{\sdev^2})^2}
        = \frac{\ex_{\prob}[\norm{\grad \func}^2_2]}{(\frac{\basedualvar}{2 \radius}-\frac{\basereg \radius}{\sdev^2})^3} + \frac{\basereg \dims \radius}{2 (\frac{\basedualvar}{2 \radius}-\frac{\basereg \radius}{\sdev^2})^2}
        \geq
        \radius^3 \strong\,.
    \end{align}

    Now, we notice that  $\reg = \basereg \radius \leq \init[\reg]$ by \cref{asm:rho_small}. Then, if $\dualvar\in[\init[\dualvar] + \smooth  , 2\sol[\dualvar] - \smooth]$, then \cref{lemma:approx_laplace_dualfunc} (applied twice) and the strong convexity of $\apprx{\dualfuncalt}$ yield
    \begin{align}
        &\dualvar \radius^2  + \ex_{\sample\sim\prob}[\dualfunc(\func, \sample, \dualvar, \reg,\sdev)] \\
         &\geq 
        \dualvar \radius^2 +  \ex_{\sample \sim \prob}[\apprx{\dualfunc}(\func, \sample, \dualvar + \smooth, \reg,\sdev)] -\reg \cst{approx_phi_coeff} e^{-\cst{approx_phi_rate}\parens*{\frac{\dualvar + \smooth}{\reg}}^{\third}} \\ 
        &\geq 
        \sol[\dualvar]  \radius^2 + \ex_{\sample \sim \prob}[\apprx{\dualfunc}(\func, \sample, \sol[\dualvar], \reg,\sdev)]
        - \radius^2 \smooth
        + \frac{ \radius^3 \strong}{2}\parens*{\sol[\dualvar] - (\dualvar + \smooth)}^2         - \reg\cst{approx_phi_coeff} e^{-\cst{approx_phi_rate}\parens*{\frac{\dualvar + \smooth}{\reg}}^{\third}} \\
        &\geq
        \sol[\dualvar]  \radius^2 + \ex_{\sample\sim\prob}[\dualfunc(\func, \sample, \sol[\dualvar] + \smooth, \reg,\sdev)]
        - \radius^2 \smooth 
        + \frac{ \radius^3 \strong}{2}\parens*{\sol[\dualvar] - (\dualvar + \smooth)}^2
        - \reg\cst{approx_phi_coeff} e^{-\cst{approx_phi_rate}\parens*{\frac{\dualvar + \smooth}{\reg}}^{\third}}
        - \reg \cst{approx_phi_coeff} e^{-\cst{approx_phi_rate}\parens*{\frac{\sol[\dualvar]}{\reg}}^{\third}} \,.
        \label{eq:proof_lb_dualvar_intermediate_ineq}
    \end{align}

    We first wish to choose $\dualvar = \frac{\sol[\dualvar]}{2} - \smooth$. By \eqref{eq:proof_lb_dualvar_sandwich}, since $\radius \leq \frac{\basedualvar}{32(\init[\dualvar] + \smooth)}$ by \cref{asm:rho_small}, this choice of $\dualvar$ is indeed greater than or equal to $\init[\dualvar]+ \smooth$ and \eqref{eq:proof_lb_dualvar_intermediate_ineq} leads to
    \begin{align}
        & \left(\frac{\sol[\dualvar]}{2} - \smooth\right) \radius^2 + \ex_{\sample\sim\prob}[\dualfunc(\func, \sample, \frac{\sol[\dualvar]}{2} - \smooth, \reg,\sdev)] \\
        &\geq
        \sol[\dualvar]  \radius^2 + \ex_{\sample\sim\prob}[\dualfunc(\func, \sample, \sol[\dualvar] + \smooth, \reg,\sdev)]
        - \radius^2 \smooth 
        + \frac{ \radius^3 \strong}{8}\parens*{\sol[\dualvar]}^2
        - \reg\cst{approx_phi_coeff} e^{-\cst{approx_phi_rate}\parens*{\frac{\sol[\dualvar]}{2\reg}}^{\third}}
        - \reg \cst{approx_phi_coeff} e^{-\cst{approx_phi_rate}\parens*{\frac{\sol[\dualvar]}{\reg}}^{\third}}\\
        &\geq
        ( \sol[\dualvar] + \smooth)  \radius^2+ \ex_{\sample\sim\prob}[\dualfunc(\func, \sample, \sol[\dualvar] + \smooth, \reg,\sdev)]
       - 2 \radius^2 \smooth 
       + \frac{ \radius \strong}{512}\parens*{\basedualvar}^2
       - 2\reg \cst{approx_phi_coeff} e^{-\cst{approx_phi_rate}\parens*{\frac{\basedualvar}{8\reg\radius}}^{\third}}\,. \label{eq:proof_lb_dualvar_cl_first_part}
    \end{align}
    where we used \eqref{eq:proof_lb_dualvar_sandwich} again for the last inequality.

    To obtain the other inequality we pick $\dualvar = 2\sol[\dualvar] - \smooth$, which is greater or equal to  $\init[\dualvar]+ \smooth$  by   \cref{asm:rho_small} and \eqref{eq:proof_lb_dualvar_sandwich} as above. Then, \eqref{eq:proof_lb_dualvar_intermediate_ineq} yields
    \begin{align}
        & \left( 2\sol[\dualvar] - \smooth \right)\radius^2 + \ex_{\sample\sim\prob}[\dualfunc(\func, \sample, 2\sol[\dualvar] - \smooth, \reg,\sdev)] \\
        &\geq 
        \sol[\dualvar]  \radius^2+ \ex_{\sample\sim\prob}[\dualfunc(\func, \sample, \sol[\dualvar] + \smooth, \reg,\sdev)]
       - \radius^2 \smooth 
       + \frac{ \radius^3 \strong}{2}\parens*{\sol[\dualvar]}^2
       - \reg\cst{approx_phi_coeff} e^{-\cst{approx_phi_rate}\parens*{\frac{2\sol[\dualvar]}{\reg}}^{\third}}
       - \reg \cst{approx_phi_coeff} e^{-\cst{approx_phi_rate}\parens*{\frac{\sol[\dualvar]}{\reg}}^{\third}} \\
       &\geq 
       ( \sol[\dualvar] + \smooth) \radius^2+\ex_{\sample\sim\prob}[\dualfunc(\func, \sample, \sol[\dualvar] + \smooth, \reg,\sdev)]
      - 2 \radius^2 \smooth 
      + \frac{ \radius \strong}{512}\parens*{\basedualvar}^2
      - 2\reg \cst{approx_phi_coeff} e^{-\cst{approx_phi_rate}\parens*{\frac{\basedualvar}{8\reg\radius}}^{\third}}
   \end{align}
    where we used \eqref{eq:proof_lb_dualvar_sandwich} again, and degraded the constants to match those of \eqref{eq:proof_lb_dualvar_cl_first_part}.

    Thus, we have that $\sol[\dualvar] = \frac{\basedualvar}{4 \radius} - \frac{\basereg \radius}{\sdev^2}$ and 
    \begin{align}
       & ( \sol[\dualvar] + \smooth)  \radius^2 + \ex_{\sample\sim\prob}[\dualfunc(\func, \sample, \sol[\dualvar] + \smooth, \reg,\sdev)]
      - 2\radius^2 \smooth 
      + \frac{ \radius \strong}{512}\parens*{\basedualvar}^2
      - 2\reg \cst{approx_phi_coeff} e^{-\cst{approx_phi_rate}\parens*{\frac{\basedualvar}{8\reg\radius}}^{\third}} \\
      &\leq \min\left(\left(\frac{\sol[\dualvar]}{2} - \smooth\right) \radius^2 + \ex_{\sample\sim\prob}[\dualfunc(\func, \sample, \frac{\sol[\dualvar]}{2} - \smooth, \reg,\sdev)] , \left( 2\sol[\dualvar] - \smooth \right)\radius^2 + \ex_{\sample\sim\prob}[\dualfunc(\func, \sample, 2\sol[\dualvar] - \smooth, \reg,\sdev)]  \right)
   \end{align}

    All that is left to show for the main result of the lemma is that 
    \begin{align}
       & - 2 \radius^2 \smooth 
        + \frac{ \radius \strong}{512}\parens*{\basedualvar}^2
        - 2\reg \cst{approx_phi_coeff} e^{-\cst{approx_phi_rate}\parens*{\frac{\basedualvar}{8\reg\radius}}^{\third}} \geq \frac{\radius\strong}{1024}\parens*{\basedualvar}^2\\
     \Leftrightarrow ~~~~~ & 2\radius \smooth 
        + 2\basereg \cst{approx_phi_coeff} e^{-\cst{approx_phi_rate}\parens*{\frac{\basedualvar}{8\basereg\radius^2}}^{\third}} \leq \frac{\strong}{1024}\parens*{\basedualvar}^2 \, .     \label{eq:boundtermexp}
    \end{align}
    This is a consequence of \cref{asm:rho_small} which states that
    \begin{align}
       & \radius \leq  \frac{\strong}{4096 \smooth} (\basedualvar)^2       \quad\text{ and }\quad
        &\radius \leq 
         \sqrt{\frac{\cst{approx_phi_rate}^{{3}}\basedualvar}{8 \basereg}} \parens*{
            \log\parens*{\frac{4096 \basereg \cst{approx_phi_coeff}}{\strong (\basedualvar)^2}}
        }^{-\frac{3}{2}}_+\,, \\
   \text{     which imply that } ~~     & 2\radius \smooth 
        \leq \frac{\strong}{2048}\parens*{\basedualvar}^2\,,
        \quad\text{ and }\quad
             &{2 \basereg \cst{approx_phi_coeff}} e^{-\cst{approx_phi_rate}\parens*{\frac{\basedualvar }{8\basereg \radius^2}}^{\third}}
                     \leq \frac{\strong}{2048}\parens*{\basedualvar}^2\,,
    \end{align}
    so that \eqref{eq:boundtermexp} indeed holds, concluding the proof of the first part of the result.
     
  The supplementary bounds follow directly from \cref{eq:proof_lb_dualvar_sandwich} and our assumptions on $\radius$.
\end{proof}

We are now in a position to show our main result when $\radius$ is small, namely \cref{prop:lb_dualvar_asymptotic}.

\begin{proof}[Proof of \Cref{prop:lb_dualvar_asymptotic}]
    Let us first take any $\dualvar \in [ \max\left(\frac{\basedualvar}{32\radius}, \init[\dualvar]+\smooth\right) , \frac{\basedualvar}{2\radius} ]$.
    We want to instante \cref{lemma:concentration_basic} with $\rv(\func, \sample) \gets \dualfunc(\func, \sample, \dualvar, \reg,\sdev)$, $(\pspace,\dist)\gets(\funcs,\norm{\cdot}_\infty)$, whose requirements are checked since:
    \begin{enumerate}
        \item For any $\func\in \funcs$,  $\dualfunc(\func, \sample, \dualvar, \reg,\sdev)$ is measurable since the functions of $\funcs$ are continuous and thus \emph{a fortiori} measurable;
        \item By the proof of \cref{lemma:lipschitz}(a), we have that for any $\reg \geq 0$, $\sdev > 0$ and any $\sample \in \supp \prob$,  $\func\mapsto\dualfunc(\func, \sample, \dualvar, \reg,\sdev)$ is 1-Lipschitz continuous \wrt the norm $\norm{\cdot}_\infty$;
        \item With \cref{lemma:bound_dualfunc} with $\lbdualvar \gets \max\left(\frac{\basedualvar}{32\radius}, \init[\dualvar]+\smooth\right)$, $\ubdualvar\gets \frac{\basedualvar}{2\radius}$, for any $\func \in \funcs$, $\sample \in \supp \prob$  and $\reg \in [0, \init[\reg] ]$ (by \cref{asm:rho_small}), we have 
    \begin{align}
        - \ub\bdcst(\radius)
        \leq \func(\sample) - \bdcst(\lbdualvar, \ubdualvar, \reg, \sdev)
        \leq {\dualfunc}(\func, \sample, \dualvar , \reg,\sdev)  
       \leq  \func(\sample) + \bdcst(\lbdualvar, \ubdualvar, \reg, \sdev)
       \leq \fbound + \ub\bdcst(\radius) 
    \end{align}         
    where  $\ub\bdcst(\radius) $ is defined in \Cref{prop:lb_dualvar_asymptotic}.
    \end{enumerate}
  
    \noindent Since $\frac{\basedualvar}{8\radius} - \frac{\basereg \radius}{2 \sdev^2} - \smooth\geq \init[\dualvar]+\smooth $ by \cref{lemma:lb_dualvar_population_asymptotic}, we can apply statement $(b)$ of \cref{lemma:concentration_basic}  with  $\dualvar \gets \frac{\basedualvar}{8\radius} - \frac{\basereg \radius}{2 \sdev^2} - \smooth$ and $\thres \gets \frac{\thres}{4}$ to have that, with probability at least $1 - \frac{\thres}{4}$, for all $\func \in \funcs$
         \begin{align}
     &\ex_{\sample\sim \prob}[\dualfunc\left(\func, \sample,\frac{\basedualvar}{8\radius} - \frac{\basereg \radius}{2 \sdev^2} - \smooth, \reg \right)]
        - \ex_{\sample\sim \emp}[\dualfunc\left(\func, \sample, \frac{\basedualvar}{8\radius} - \frac{\basereg \radius}{2 \sdev^2} - \smooth, \reg\right)] \\
       & \quad\quad \leq   \frac{48\dudley}{\sqrt{\nsamples}}
        + 4(\fbound + \ub\bdcst(\radius)) \sqrt{\frac{\log \frac{4}{\thres}}{2\nsamples}}\,.
         \end{align}

    \noindent Similarly, we can apply statement $(a)$ of \cref{lemma:concentration_basic} with
$\dualvar \gets \frac{\basedualvar}{4\radius} - \frac{\basereg \radius}{ \sdev^2} + \smooth $ and $\thres \gets \frac{\thres}{4}$ to get that, with probability at least $1 - \frac{\thres}{4}$, for all $\func \in \funcs$,
  \begin{align}
             &\ex_{\sample\sim \emp}[\dualfunc\left(\func, \sample,\frac{\basedualvar}{4\radius} - \frac{\basereg \radius}{\sdev^2}   + \smooth, \reg\right)]
        - \ex_{\sample\sim \prob}[\dualfunc\left(\func, \sample, \frac{\basedualvar}{4\radius} - \frac{\basereg \radius}{2 \sdev^2} + \smooth, \reg\right)] \\
       & \quad\quad \leq  \frac{48\dudley}{\sqrt{\nsamples}}
        + 4(\fbound + \ub\bdcst(\radius)) \sqrt{\frac{\log \frac{4}{\thres}}{2\nsamples}}\,.
         \end{align}

        Combining the two statements above and using \cref{lemma:lb_dualvar_population_asymptotic}, we get that, with probability at least $1 - \half[\thres]$, for any $\func \in \funcs$, 
        \begin{align}
            & \left( \frac{\basedualvar}{8\radius} - \frac{\basereg \radius}{2 \sdev^2} - \smooth \right) \radius^2 + \ex_{\sample\sim \emp}[\dualfunc\left(\func, \sample, \frac{\basedualvar}{8\radius} - \frac{\basereg \radius}{2 \sdev^2} - \smooth, \reg\right)] \\
        &\geq  \left( \frac{\basedualvar}{8\radius} - \frac{\basereg \radius}{2 \sdev^2} - \smooth \right) \radius^2 + \ex_{\sample\sim \prob}[\dualfunc\left(\func, \sample,\frac{\basedualvar}{8\radius} - \frac{\basereg \radius}{2 \sdev^2} - \smooth, \reg \right)] \\
        &~~~~~~~~~     - \frac{48\dudley}{\sqrt{\nsamples}}    - 4(\fbound + \ub\bdcst(\radius)) \sqrt{\frac{\log \frac{4}{\thres}}{2\nsamples}} \\
        &\geq \left( \frac{ \basedualvar}{4\radius} - \frac{\basereg \radius}{ \sdev^2} + \smooth \right)\radius^2 +  \ex_{\sample\sim\prob}\left[\dualfunc\left(\func, \sample, \frac{ \basedualvar}{4\radius} - \frac{\basereg \radius}{ \sdev^2} + \smooth, \reg\right)\right] \\
        &~~~~~~~~~      + \frac{ \radius \strong}{1024}\parens*{\basedualvar}^2
        - \frac{48\dudley}{\sqrt{\nsamples}}     - 4(\fbound + \ub\bdcst(\radius)) \sqrt{\frac{\log \frac{4}{\thres}}{2\nsamples}} \\
        &\geq \left( \frac{ \basedualvar}{4\radius} - \frac{\basereg \radius}{ \sdev^2} + \smooth \right)\radius^2 + \ex_{\sample\sim\emp}\left[\dualfunc\left(\func, \sample, \frac{ \basedualvar}{4\radius} - \frac{\basereg \radius}{ \sdev^2} + \smooth, \reg\right)\right] \\
        &~~~~~~~~~       + \frac{ \radius \strong}{1024}\parens*{\basedualvar}^2
        - \frac{96\dudley}{\sqrt{\nsamples}}     - 8(\fbound + \ub\bdcst(\radius)) \sqrt{\frac{\log \frac{4}{\thres}}{2\nsamples}} \,.
        \end{align}

        Noting that the assumption on $\radius$ in \cref{prop:lb_dualvar_asymptotic} implies that 
\begin{align}
        \frac{ \radius \strong}{1024}\parens*{\basedualvar}^2
        \geq
        &\frac{96\dudley}{\sqrt{\nsamples}}
        + 8\left(\fbound + \ub\bdcst(\radius)\right) \sqrt{\frac{\log \frac{4}{\thres}}{2\nsamples}} \,
\end{align}
we have proven that, with probability at least $1 - \half[\thres]$, for any $\func \in \funcs$,
\begin{align}
    \dualfuncalt\left(\frac{ \basedualvar}{8\radius} - \frac{\basereg \radius}{2 \sdev^2} - \smooth\right)&\geq 
    \dualfuncalt\left( \frac{ \basedualvar}{4\radius} - \frac{\basereg \radius}{ \sdev^2} + \smooth \right)
\end{align}
where $\dualfuncalt : \dualvar \mapsto \dualvar\radius^2 + \ex_{\sample \sim \emp}[{\dualfunc}(\func, \sample, \dualvar, \reg,\sdev)]$. 
Now, since $\dualfuncalt$ is convex, this means that its minimizers on $\mathbb{R}_+$ are greater than 
\begin{align}
    \frac{ \basedualvar}{8\radius} - \frac{\basereg \radius}{2 \sdev^2} - \smooth \geq  \frac{\basedualvar}{32\radius}
\end{align}
where the inequality comes from \cref{lemma:lb_dualvar_population_asymptotic}. 

Using the same reasoning, one can get that  with probability at least $1 - \half[\thres]$ the minimizers are no greater than 
\begin{align}
    \frac{\basedualvar}{4\radius} - \frac{\basereg \radius}{ \sdev^2} + \smooth
        \leq 
        \frac{\basedualvar}{2\radius}.
\end{align}

Thus, we have shown that with probability at least $1 - \half[\thres]$, for any $\func \in \funcs$,
\begin{align}
        \emprisk[\radius^2][\reg] &= \inf_{ \dualvar \geq 0 } \dualvar \radius^2 + \ex_{\sample\sim\empirical} \left[ \dualfunc(\func, \sample, \dualvar, \reg,\sdev) \right] \\
        &= \inf_{\frac{\basedualvar}{32 \radius} \leq \dualvar \leq \frac{\basedualvar}{2 \radius}} \dualvar \radius^2 + \ex_{\sample\sim\empirical} \left[ \dualfunc(\func, \sample, \dualvar, \reg,\sdev) \right] \,.
\end{align}
\end{proof}

\section{Dual bound when $\radius$ is close to this maximal radius} \label{sec:critical}

Complementary to the previous section \cref{sec:small}, we consider the case where $\radius$ is close than the critical radius.
Though the bounds of this section are much worse that the one of \cref{sec:small} when $\radius$ goes to zero, they hold for the whole ranges of $\radius$ considered in the theorems.

As mentioned in \cref{remark:radius-c}, as $\radius$ grows, the Wasserstein ball constraint can stop being active, leading to a null dual variable. Thus, it is essential that $\radius$ be lower then the critical radius to stay in the distributionally robust regime and to avoid the worst-case regime. In that case, we are able to lower-bound the dual multiplier $\dualvar$.

We defined the critical radius in standard \ac{WDRO} case in \cref{remark:radius-c} and we extend it here to cover the regularized case:
\begin{align}
    \label{eq:gen-def-radiusc}
    \regcrit &\defeq 
        \left\{
            \begin{array}{ll}
            \inf_{\func \in \funcs}  \ex_{\sample \sim \prob} \left[ \ex_{\samplealt \sim \modbase(\cdot | \sample)} [\half \sqnorm{\sample -  \samplealt}] \right] &\text{ if } \reg > 0\\
            \inf_{\func \in \funcs}  \ex_{\sample \sim \prob}[\min \setdef{\half \sqnorm{\sample -  \samplealt}}{\samplealt \in \argmax \func}] &\text{ otherwise}
        \end{array}
        \right.
\end{align}
where $ \modbase[\funcalt](\dd \samplealt | \sample) \propto e^{\funcalt(\samplealt)} \basecpl(\dd \samplealt | \sample)$ is a conditional probability distribution parametrized by an $\samples\to\mathbb{R}$ function $\funcalt$, \ie
\begin{align}
    \ex_{\samplealt \sim \modbase[\funcalt](\cdot | \sample)} \left[ h(\sample, \samplealt) \right] = \frac{ \ex_{\samplealt \sim \basecpl(\cdot | \sample)}  \left[ e^{\funcalt(\samplealt)} h(\sample, \samplealt) \right] }{\ex_{\samplealt' \sim \basecpl(\cdot | \sample)}  \left[ e^{\funcalt(\samplealt')} \right] } .
\end{align}

For this part of the proof, the case when $\reg=0$ differs from the regularized one $\reg > 0$. We thus present them in separate sections \cref{sec:criticalunreg,sec:criticalreg}.

\subsection{Standard \ac{WDRO} case} \label{sec:criticalunreg}

The main result of this section in the standard \ac{WDRO} case is \cref{prop:lb_dualvar_unreg_crit} below.

\begin{proposition}\label{prop:lb_dualvar_unreg_crit}
    Let \cref{assumption:add-geometric} hold and fix a threshold $\thres \in (0, 1)$. Assume that 
        \begin{align}
           \radius^2 &\leq \regcrit[0,0]  - \frac{2 \tmpbdcst(\thres)}{\sqrt \nsamples}        \end{align}   
        \begin{align}
   \text{with } ~~         \tmplipcst &\defeq    \frac{ 16 \sup_{\func \in \funcs} \ex_{\sample \sim \prob}[\half \distance^2(\sample, \argmax \func)] }{ \strongOpt} \\
   \text{and } ~~          \tmpbdcst(\thres) &\defeq {48\dudley[\funcs, \distfunc]}
            + 2 \sqrt{\maxcost {\log {1}/{\thres}}}\,.
        \end{align}
        where $\dudley[\funcs, \distfunc]$ and $\distfunc$ are defined in \cref{sec:dudley}, and $\afterinit[\dualvar] > 0$ is a constant depending on $\samples$, $\funcs$, $\hsmooth$, $\strongOpt$ and $\maxcost$.
        
    Then, with probability at least $1 - \thres$, we have    
    \begin{align}
        \forall \func \in \funcs,\quad
        \emprisk[\radius^2][\reg] = \inf_{\lbdualvar \leq \dualvar} \dualvar \radius^2 + \ex_{\sample\sim\empirical} \left[ \dualfunc(\func, \sample, \dualvar, 0) \right] 
    \end{align}
    where the dual bound $\lbdualvar$ is defined as
    \begin{align}
        \lbdualvar \defeq \min \parens*{{\afterinit[\dualvar]},
        \frac{
           \regcrit[0,0] -  \radius^2
    }{2 \tmplipcst}} .
    \end{align}
\end{proposition}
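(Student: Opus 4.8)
The plan is to show that, with high probability, the optimal dual multiplier $\dualvar$ in the dual formulation \eqref{eq:dualreg} of $\emprisk[\radius^2]$ is bounded away from $0$ by the explicit quantity $\lbdualvar$, which then allows restricting the infimum over $\dualvar$ to $[\lbdualvar, +\infty)$. The key mechanism is the interpretation from \cref{remark:radius-c}: the dual objective $\dualvaralt \mapsto \dualvar\radius^2 + \ex_{\empirical}[\dualfunc(\func,\sample,\dualvar,0)]$ is convex, and its derivative at $\dualvar = 0$ (in the one-sided sense) equals $\radius^2 - \ex_{\empirical}[\half\distance^2(\sample,\argmax\func)]$, since for small $\dualvar$ the supremum defining $\dualfunc(\func,\sample,\dualvar,0)$ is attained near $\argmax\func$. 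So the minimizer is bounded away from $0$ precisely when $\radius^2$ is smaller than $\ex_{\empirical}[\half\distance^2(\sample,\argmax\func)]$ by a margin, and the margin controls how far from $0$ the minimizer sits.

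First I would make the derivative-at-zero argument quantitative. Using \cref{assumption:add-geometric} — the quadratic-growth item — I would show that for $\dualvar$ in a small interval $[0, \afterinit[\dualvar]]$, the maximizer $\samplealt^*(\dualvar)$ of $\func(\samplealt) - \half[\dualvar]\sqnorm{\sample-\samplealt}$ stays within a controlled distance of $\argmax\func$ (this uses the growth condition to trap $\samplealt^*$ near the argmax, and the first item of \cref{assumption:add-geometric} together with the uniform smoothness/boundedness of $\funcs$ to rule out distant near-maximizers). This gives a two-sided estimate: $\dualfunc(\func,\sample,\dualvar,0) = \max\func - \dualvar\,\ex$-type term with an error of order $\dualvar^2$, so that the dual objective restricted to $[0,\afterinit[\dualvar]]$ behaves like $\dualvar(\radius^2 - \ex_{\empirical}[\half\distance^2(\sample,\argmax\func)]) + \max\func + O(\hsmooth\dualvar \cdot \text{(stuff)})$ — more precisely I'd want that on $[0,\afterinit[\dualvar]]$ the objective is dominated by a function whose minimizer is $\geq (\regcrit[0,0] - \radius^2)/(2\tmplipcst)$, using that $\strongOpt$ controls the curvature of the argmax neighborhood and $\tmplipcst$ is exactly the ratio that appears. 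The constant $\afterinit[\dualvar]$ is chosen so that the quadratic-growth-based control is valid on $[0,\afterinit[\dualvar]]$.

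Second I would pass from the population quantity $\ex_\prob[\half\distance^2(\sample,\argmax\func)]$ to the empirical one $\ex_\empirical[\half\distance^2(\sample,\argmax\func)]$ uniformly over $\func\in\funcs$, via a concentration argument. This is where the distance $\distfunc(\func,\funcalt) = \norm{\func-\funcalt}_\infty + \Hdist(\argmax\func,\argmax\funcalt)$ enters: the map $\func \mapsto \distance^2(\sample,\argmax\func)$ is Lipschitz in $\distfunc$ (a perturbation of the argmax by $\Hdist$ perturbs the distance-to-argmax by at most $\Hdist$), it is bounded by $2\maxcost$, and $\funcs$ is relatively compact for $\distfunc$ with finite Dudley integral $\dudley[\funcs,\distfunc]$ by \cref{lemma:dudley-finite}. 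Applying \cref{lemma:concentration_basic} then yields, with probability $1-\thres$, $\ex_\empirical[\half\distance^2(\sample,\argmax\func)] \geq \ex_\prob[\half\distance^2(\sample,\argmax\func)] - \tmpbdcst(\thres)/\sqrt\nsamples \geq \regcrit[0,0] - \tmpbdcst(\thres)/\sqrt\nsamples$ for all $\func$, matching the form of $\tmpbdcst(\thres)$ in the statement. Combining with the hypothesis $\radius^2 \leq \regcrit[0,0] - 2\tmpbdcst(\thres)/\sqrt\nsamples$ gives the margin $\regcrit[0,0] - \radius^2 - \tmpbdcst(\thres)/\sqrt\nsamples \geq \half(\regcrit[0,0]-\radius^2)$ needed for the $(2\tmplipcst)^{-1}$ factor in $\lbdualvar$; then convexity of the empirical dual objective forces every minimizer to lie beyond $\min(\afterinit[\dualvar], (\regcrit[0,0]-\radius^2)/(2\tmplipcst)) = \lbdualvar$, and strong duality (\cref{lemma:strong-duality}) lets me rewrite $\emprisk[\radius^2]$ with the infimum restricted to $[\lbdualvar,+\infty)$.

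The main obstacle is the first step: controlling the position of the maximizer $\samplealt^*(\dualvar)$ of $\func(\samplealt) - \half[\dualvar]\sqnorm{\sample-\samplealt}$ near $\argmax\func$ uniformly over $\func\in\funcs$, $\sample\in\supp\prob$ and small $\dualvar$, and turning the quadratic-growth condition (item 2 of \cref{assumption:add-geometric}, which involves the cubic correction $\tfrac{\plainhsmooth}{6}\norm{\cdot}^3$) into a clean quantitative statement that the one-sided derivative of $\dualfunc$ at $\dualvar=0$ equals $-\ex_\prob[\half\distance^2(\sample,\argmax\func)]$ up to an error linear in $\dualvar$ with the right constants — this is exactly where $\strongOpt$, $\hsmooth$, $\maxcost$ and $\samples$ all feed into $\afterinit[\dualvar]$, and getting the bookkeeping uniform is the delicate part.
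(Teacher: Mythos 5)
Your proposal matches the paper's proof in both structure and substance: the derivative-at-zero/envelope computation, the localization of the maximizer of $\func - \half[\dualvar]\sqnorm{\sample-\cdot}$ near $\argmax\func$ for small $\dualvar$ using both items of \cref{assumption:add-geometric} (this is exactly \cref{lemma:lipschitz-unreg-one-sample}), the uniform concentration of $\ex[\half\distance^2(\sample,\argmax\func)]$ over $\funcs$ in the metric $\distfunc$ via \cref{lemma:concentration_basic}, and the conclusion by convexity of the empirical dual objective are all the same steps the paper takes. The one piece you flag as ``the delicate part'' — turning the cubic-corrected growth condition into a clean $(1 - 16\dualvar/\strongOpt)$-type multiplicative bound uniformly over $\funcs\times\supp\prob$ — is indeed where the paper spends its effort, and the paper resolves it exactly as you sketch (a two-scale localization: first $\bigoh(1)$ via item 1, then $\bigoh(\dualvar)$ via item 2, with $\afterinit[\dualvar]$ chosen to absorb the cubic term).
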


Before proceeding with the proof, we need to prove the following lemma which leverages \cref{assumption:add-geometric-simple}.

\begin{lemma}\label{lemma:lipschitz-unreg-one-sample}
    Fix $\func \in \funcs$ and $\sample \in \supp \prob$. There exists a constant $\afterinit[\dualvar] > 0$ depending on $\samples$, $\funcs$, $\hsmooth$, $\strongOpt$ and $\maxcost$ such that, for $\dualvar \in [0,\afterinit[\dualvar]]$,
    \begin{align}
        \min \setdef*{\half \sqnorm{\sample -  \samplealt}}{\samplealt \in \argmax_{\samples} \func - \half[\dualvar]\sqnorm{\sample -  \cdot}}
        \geq
        \parens*{1 - \frac{16 \dualvar}{\strongOpt}}\min \setdef*{\half \sqnorm{\sample -  \samplealt}}{\samplealt \in \argmax_{\samples} \func} .
    \end{align}
\end{lemma}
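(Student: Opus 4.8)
The plan is to produce an explicit minimizer for the left-hand side, exploit the optimality it enjoys against the nearest maximizer of $\func$ to bound how much the maximum value drops, and then feed this into the growth condition of \cref{assumption:add-geometric} to force that minimizer to stay close to $\argmax \func$.

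First I would dispose of the trivial case $d \defeq \distance(\sample, \argmax_\samples \func) = 0$: then $\sample$ itself lies in $\argmax_\samples\{\func - \half[\dualvar]\sqnorm{\sample-\cdot}\}$, so the left-hand side is $0$, as is the right-hand side. Assume henceforth $d > 0$, and write $A \defeq \argmax_\samples \func$ and $A_\dualvar \defeq \argmax_\samples\{\func - \half[\dualvar]\sqnorm{\sample-\cdot}\}$. Since $\func$ is continuous and $\samples$ compact, $A$ and $A_\dualvar$ are nonempty and compact; pick $\samplealt^* \in A_\dualvar$ realizing $\half d_\dualvar^2 \defeq \min_{\samplealt \in A_\dualvar}\half\sqnorm{\sample - \samplealt}$ and pick $\optsample \in A$ with $\norm{\sample - \optsample} = d$. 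Comparing $\samplealt^*$ with $\optsample$ as competitors for the perturbed maximization gives $\func(\samplealt^*) - \half[\dualvar]d_\dualvar^2 \geq \max\func - \half[\dualvar]d^2$; since $\func(\samplealt^*) \leq \max\func$ this yields at once $d_\dualvar \leq d$ and $\max\func - \func(\samplealt^*) \leq \half[\dualvar](d^2 - d_\dualvar^2) \leq \half[\dualvar]d^2 \leq \dualvar\maxcost$, the last step using $\half d^2 = \half\sqnorm{\sample - \optsample} \leq \maxcost$ by definition of $\maxcost$ (here $\optsample \in \argmax\func$ and $\sample \in \supp\prob$).

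Next I would control $e \defeq \distance(\samplealt^*, A)$. Apply the first item of \cref{assumption:add-geometric} with $\GeoRad \gets \tfrac{3\strongOpt}{2\hsmooth}$ to obtain a constant $\Margin_0 > 0$ depending only on $\funcs, \samples, \strongOpt, \hsmooth$; set $\afterinit[\dualvar] \defeq \Margin_0/(2\maxcost)$. For $\dualvar \leq \afterinit[\dualvar]$ we then have $\max\func - \func(\samplealt^*) \leq \dualvar\maxcost \leq \Margin_0/2 < \Margin_0$, which by that same item forces $e < \tfrac{3\strongOpt}{2\hsmooth}$. With this a priori smallness of $e$ in hand, the second (growth) item of \cref{assumption:add-geometric}, applied at $\samplealt^*$, gives $\max\func - \func(\samplealt^*) \geq \half[\strongOpt]e^2 - \tfrac{\hsmooth}{6}e^3 \geq \tfrac{\strongOpt}{4}e^2$, while the triangle inequality $d \leq d_\dualvar + e$ (via a projection of $\samplealt^*$ onto $A$) together with $d_\dualvar \leq d$ gives $d^2 - d_\dualvar^2 = (d - d_\dualvar)(d + d_\dualvar) \leq 2de$. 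Chaining these with the bound from the previous paragraph,
\[
    \tfrac{\strongOpt}{4}e^2 \;\leq\; \max\func - \func(\samplealt^*) \;\leq\; \half[\dualvar]\bigl(d^2 - d_\dualvar^2\bigr) \;\leq\; \dualvar\, d\, e ,
\]
so $e \leq \tfrac{4\dualvar}{\strongOpt}d$. Hence $d_\dualvar \geq d - e \geq d\bigl(1 - \tfrac{4\dualvar}{\strongOpt}\bigr)$, and squaring — trivially if $1 - \tfrac{4\dualvar}{\strongOpt} < 0$, since then $1 - \tfrac{16\dualvar}{\strongOpt} < 0$ as well — yields $d_\dualvar^2 \geq d^2\bigl(1 - \tfrac{4\dualvar}{\strongOpt}\bigr)^2 \geq d^2\bigl(1 - \tfrac{16\dualvar}{\strongOpt}\bigr)$, which gives the claim after multiplication by $\half$.

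The one genuinely delicate point is the cubic remainder in the growth condition: the inequality $\half[\strongOpt]e^2 - \tfrac{\hsmooth}{6}e^3 \geq \tfrac{\strongOpt}{4}e^2$ is valid only once $e$ is known to lie below $\tfrac{3\strongOpt}{2\hsmooth}$, and this a priori bound cannot be bootstrapped from the growth condition itself — it must be extracted from the margin condition (item 1 of \cref{assumption:add-geometric}) combined with the smallness $\dualvar \leq \afterinit[\dualvar]$, which is precisely what fixes the threshold $\afterinit[\dualvar]$; since the margin step does not invoke item 2, the argument is not circular.
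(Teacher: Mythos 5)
Your proof is correct and follows the same three-stage strategy as the paper's: use the margin condition (item 1 of \cref{assumption:add-geometric}) together with $\dualvar \leq \afterinit[\dualvar]$ to confine the perturbed maximizer to an $\bigoh(1)$-neighborhood of $\argmax\func$, upgrade this to an $\bigoh(\dualvar)$ bound via the growth condition, and finish with a triangle-inequality computation. Where you genuinely diverge is the middle step. The paper combines the growth condition with the optimality of an arbitrary perturbed maximizer $\sol[\samplealt]$, expands the square, applies Cauchy--Schwarz, and solves the resulting quadratic inequation in $\norm{\sol[\samplealt]-\sol[\sample]}$; this costs an additional smallness constraint ($8\hsmooth\dualvar\maxcost \leq 3(\strongOpt)^2$, absorbed into $\afterinit[\dualvar]$) and yields the bound $\norm{\sol[\samplealt]-\sol[\sample]}\leq 8\dualvar\norm{\sample-\sol[\sample]}/\strongOpt$. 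You instead sandwich the value gap $\max\func-\func(\samplealt^*)$ between $\tfrac{\strongOpt}{4}e^2$ (from below, via the growth condition and the a priori bound $e<3\strongOpt/(2\hsmooth)$) and $\half[\dualvar](d^2-d_\dualvar^2)\leq \dualvar d e$ (from above, via optimality of $\samplealt^*$ against the nearest point of $\argmax\func$), and divide; this avoids the quadratic entirely, needs one fewer constraint in $\afterinit[\dualvar]$, and gives the slightly sharper $e\leq 4\dualvar d/\strongOpt$. The only price is that you must work with the specific distance-minimizing element $\samplealt^*$ of the perturbed argmax rather than an arbitrary one, which is all the lemma requires. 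Your explicit treatment of the degenerate cases ($d=0$, and $1-4\dualvar/\strongOpt<0$ at the squaring step) is also sound; the paper instead caps $\afterinit[\dualvar]$ at $\strongOpt/16$ to sidestep the latter.
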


\begin{proof}
    Fix $\func \in \funcs$ and $\sample \in \supp \prob$.
    Define, for convenience, $ \sol[\samples] \defeq \argmax \func$ and
    \begin{align}
        \ddfunc(\dualvar) \defeq \min \setdef*{\half \sqnorm{\sample -  \samplealt}}{\samplealt \in \argmax_{\samples} \func - \half[\dualvar]\sqnorm{\sample -  \cdot}}\,.
    \end{align}
    \noindent \underline{Step 1: Localization in a $\bigoh(1)$-neighborhood of $\sol[\samples]$.}
    For a fixed $\OptGeoRad >0$ that will be chosen later, we show that, for $\dualvar$ small enough, $\ddfunc(\dualvar)$ is equal to
    \begin{align}
        \min \setdef*{\half \sqnorm{\sample -  \samplealt}}{\samplealt \in \argmax_{\parens*{\sol[\samples]}^\OptGeoRad} \func - \half[\dualvar]\sqnorm{\sample -  \cdot}}\quad\text{ where }\quad
        \parens*{\sol[\samples]}^\OptGeoRad \defeq \setdef{\sample \in \samples}{\distance(\sample, \sol[\samples]) \leq \OptGeoRad}\,.
    \end{align}
    Indeed, by \cref{assumption:add-geometric}, there is some $\Margin(\OptGeoRad) > 0$ such that for all $ \func \in \funcs$ and $\samplealt \in \samples\setminus  \parens*{\sol[\samples]}^\OptGeoRad$,
    \begin{align}
        \func(\samplealt) - \max \func - \half[\dualvar]\sqnorm{\sample -  \samplealt} \leq  \func(\samplealt) - \max \func
        \leq - \Margin(\OptGeoRad)\,,
    \end{align}
    while, for any $\sol[\sample] \in \sol[\samples]$,
    \begin{align}
           \func(\sol[\sample]) - \max \func - \half[\dualvar]\sqnorm{\sample -  \sol[\sample]}
        = - \half[\dualvar]\sqnorm{\sample -  \sol[\sample]}
        \geq - \dualvar \maxcost\,.
    \end{align}
    Hence, for $\dualvar \leq \frac{\Margin(\OptGeoRad)}{\maxcost}$,  $\func(\samplealt) - \half[\dualvar]\sqnorm{\sample -  \samplealt} \leq  - \Margin(\OptGeoRad)  + \max \func \leq  - \dualvar \maxcost  + \max \func \leq  \func(\sol[\sample])  - \half[\dualvar]\sqnorm{\sample -  \sol[\sample]}$. This means that points in $ \samples\setminus  \parens*{\sol[\samples]}^\OptGeoRad$ cannot maximize $\func - \half[\dualvar]\sqnorm{\sample -  \cdot}$ and so it suffices to consider the $\argmax$ over $\parens*{\sol[\samples]}^\OptGeoRad$ in the definition of $\ddfunc(\dualvar)$.

    \noindent \underline{Step 2: Localization in a $\bigoh (\dualvar)$-neighborhood of $\sol[\samples]$.} 
    Take $\sol[\samplealt]  \in \argmax_{\parens*{\sol[\samples]}^\OptGeoRad} \func - \half[\dualvar]\sqnorm{\sample -  \cdot}$. Since $\distance(\sol[\samplealt], \sol[\samples]) \leq \OptGeoRad$, the Euclidean projection of $\sol[\samplealt]$ on $\sol[\samples]$, that we denote by $\sol[\sample]$, is  at most at distance $\OptGeoRad$ of $\sol[\samplealt]$ and $\sol[\samplealt] - \sol[\sample] \in \regncone_{\sol[\sample]}(\sol[\samples])$, see \eg \citet[Thm.~6.12]{rockafellarVariationalAnalysis1998}.
    By the growth condition of \cref{assumption:add-geometric}, we get that
    \begin{equation}
        \func(\sol[\sample])
        \geq
        \func(\sol[\samplealt])
        + \half[\strongOpt] \norm{\sol[\samplealt] - \sol[\sample]}^2
        - \frac{\hsmooth}{6} \norm{\sol[\samplealt] - \sol[\sample]}^3\,.
        \label{eq:proof-lbunregcrit-growth-cond}
    \end{equation}
    But, by definition of $\sol[\samplealt]$, we also have that
    \begin{equation}
        \func(\sol[\samplealt]) - \half[\dualvar]\sqnorm{\sample -  \sol[\samplealt]}
        \geq
        \func(\sol[\sample]) - \half[\dualvar]\sqnorm{\sample -  \sol[\sample]}\,.
    \end{equation}
    Plugging \cref{eq:proof-lbunregcrit-growth-cond} we get that
    \begin{equation}
        - \half[\dualvar]\sqnorm{\sample -  \sol[\samplealt]}
        \geq
        - \half[\dualvar]\sqnorm{\sample -  \sol[\sample]}
        + \half[\strongOpt] \norm{\sol[\samplealt] - \sol[\sample]}^2
        - \frac{\hsmooth}{6} \norm{\sol[\samplealt] - \sol[\sample]}^3\,.
    \end{equation}
    Rearranging and developing $\half \sqnorm{\sample -  \sol[\sample]}$ yields
    \begin{equation}
        \frac{\hsmooth}{6} \norm{\sol[\samplealt] - \sol[\sample]}^3
        + \dualvar \inner{\sample - \sol[\samplealt], \sol[\samplealt] - \sol[\sample]}
        \geq
        \half[\dualvar + \strongOpt] \norm{\sol[\sample] - \sol[\samplealt]}^2\,,
    \end{equation}
    which gives, by Cauchy-Schwarz inequality,
    \begin{equation}\label{eq:proof-unreg-crit-step2-2}
        \frac{\hsmooth}{6} \norm{\sol[\samplealt] - \sol[\sample]}^3
    + \dualvar \norm{\sample - \sol[\samplealt]}\norm{\sol[\samplealt] - \sol[\sample]}
        \geq
        \half[\dualvar + \strongOpt] \norm{\sol[\sample] - \sol[\samplealt]}^2\,,
    \end{equation}
  We now wish to obtain a bound on $\sol[\scalar] \defeq \norm{\sol[\samplealt] - \sol[\sample]}$. If it is zero, there is nothing to do. Otherwise, assuming that it is positive, \cref{eq:proof-unreg-crit-step2-2} gives the inequation
    \begin{equation}
        \half[\strongOpt + \dualvar] \sol[\scalar] \leq \frac{\hsmooth}{6} \parens*{\sol[\scalar]}^2 + \dualvar \norm{\sample - \sol[\sample]}\,.
    \end{equation}
    When $\frac{(\strongOpt + \dualvar)^2}{4} -  \frac{2\hsmooth \dualvar}{3} \norm{\sample - \sol[\sample]}$ is non-negative, this inequation is satisfied for
    \begin{equation}
        \sol[\scalar] \notin \bracks*{\frac{(\strongOpt + \dualvar) \pm \sqrt{(\strongOpt + \dualvar)^2 - 8 \hsmooth \dualvar \norm{\sample - \sol[\sample]}/3}}{\hsmooth / 3}}\,.
    \end{equation}
    Hence, in particular, if $\sol[\scalar] \leq \frac{3\strongOpt}{\hsmooth}$, then $\sol[\scalar]$ must be less or equal than
    \begin{equation}
    {\frac{(\strongOpt + \dualvar) - \sqrt{(\strongOpt + \dualvar)^2 - 8 \hsmooth \dualvar \norm{\sample - \sol[\sample]}/3}}{\hsmooth / 3}}
    =
    \frac{3(\strongOpt + \dualvar)}{\hsmooth}
    \parens*{1 -  \sqrt{1 - \frac{8 \hsmooth \dualvar \norm{\sample - \sol[\sample]}}{3(\strongOpt + \dualvar)^2}}}
    \leq  \frac{8 \dualvar \norm{\sample - \sol[\sample]}}{\strongOpt + \dualvar}
    \end{equation}
    when $\frac{8 \hsmooth \dualvar \norm{\sample - \sol[\sample]}}{3(\strongOpt + \dualvar)^2} \leq 1$, using that $1 - \sqrt{1 - x}\leq x$ for $x \in [0, 1]$.

    Thus, assuming that $\dualvar$ is small enough so that ${8 \hsmooth \dualvar \maxcost}\leq 3{(\strongOpt)^2}$ and choosing $\OptGeoRad \defeq \frac{3\strongOpt}{\hsmooth}$ so  that  $\sol[\scalar] \leq \frac{3\strongOpt}{\hsmooth}$ by construction, we have that for any $\sol[\samplealt]  \in \argmax_{\parens*{\sol[\samples]}^\OptGeoRad} \func - \half[\dualvar]\sqnorm{\sample -  \cdot}$, there is a point $\sol[\sample] \in \sol[\samples]$ such that 
    \begin{equation}
    \norm{\sol[\samplealt] - \sol[\sample]} \leq \frac{8 \dualvar \norm{\sample - \sol[\sample]}}{\strongOpt}\,.
    \end{equation}
\noindent \underline{Step 3: Conclusion.}
    Defining the constant 
    \begin{equation}
        \afterinit[\dualvar] \defeq \min \parens*{\frac{\Margin(\OptGeoRad)}{\maxcost}, \frac{3\strongOpt^2}{8 \hsmooth \maxcost}, \frac{\strongOpt}{16}}\,,
    \end{equation}
    and using the previous steps, we have for any $\dualvar \in [0,\afterinit[\dualvar]]$ and any $\sol[\samplealt]  \in \argmax_{\samples} \func - \half[\dualvar]\sqnorm{\sample -  \cdot}$
\begin{align}
    \half \sqnorm{\sample -  \sol[\samplealt]} = \half \norm{\sample - \sol[\samplealt]}^2
                                     &=\half \norm{\sample - \sol[\sample]}^2 +
                                     \half \norm{\sol[\sample] - \sol[\samplealt]}^2
                                     - \inner{\sample - \sol[\sample], \sol[\sample] - \sol[\samplealt]}\\
                                     &\geq \half \sqnorm{\sample -  \sol[\sample]} - \norm{\sample - \sol[\sample]}\norm{\sol[\sample] - \sol[\samplealt]}\\
                                     &\geq \left( 1- \frac{16\dualvar}{\strongOpt}  \right) \half \sqnorm{\sample -  \sol[\sample]} 
\end{align}
which concludes the proof.
\end{proof}

We can now turn to the proof of our proposition.

\begin{proof}[Proof of \cref{prop:lb_dualvar_unreg_crit}]
    Let $0 \leq \dualvar\leq \lbdualvar$. 
       For $\func \in \funcs$ and $\dualvar \geq 0$, we define $\hat{\dualfuncalt} : \dualvar \mapsto \dualvar \radius^2 +\ex_{\sample\sim\empirical} \left[ \dualfunc(\func, \sample, \dualvar, 0) \right] $ and its (right-sided) derivative  $\partial_\dualvar \hat{\dualfuncalt}$.
    This derivative is given by,
        \begin{align}
            \partial_\dualvar\hat{\dualfuncalt}(\dualvar) &=
            \radius^2 - \ex_{\sample \sim \empirical}[\min \setdef*{\half \sqnorm{\sample -  \samplealt}}{\samplealt \in \argmax_{\samples} \func - \half[\dualvar]\sqnorm{\sample -  \cdot}}] \\
            &\leq \radius^2 - \parens*{1 - \frac{16 \dualvar}{\strongOpt}} \ex_{\sample \sim \empirical}[ \min \setdef*{\half \sqnorm{\sample -  \samplealt}}{\samplealt \in \argmax_{\samples} \func} ] \label{eq:boundunreg}
            \end{align}
            where we used \cref{lemma:lipschitz-unreg-one-sample} with $\dualvar \leq \lbdualvar \leq  \afterinit[\dualvar]$. 

    We then instantiate \cref{lemma:concentration_basic} with $\rv(\func, \sample) \gets \half \distance^2(\sample, \argmax \func)$, $(\pspace,\dist)\gets(\funcs, \distfunc)$, whose requirements are checked since:
    \begin{enumerate}
        \item For any $\func\in \funcs$,  $\rv(\func, \cdot)$ is measurable since the functions of $\funcs$ are continuous and thus $\argmax \func$ is \emph{a fortiori} measurable;
        \item By definition of $\distfunc$, for any $\sample \in \supp \prob$, $\func \mapsto \distance(\sample, \argmax \func)$ is $1$-Lipschitz \wrt this distance so that $\rv(\sample, \cdot)$ is $\sqrt{2 \maxcost}$-Lipschitz.
        \item By construction, the range of values $\rv$ is included in $[0, \maxcost]$.
    \end{enumerate}
    \noindent We can thus apply statement $(b)$ of \cref{lemma:concentration_basic} to have that, with probability at least $1 -\thres$, for all $\func \in \funcs$,
    \begin{align}
\ex_{\sample \sim \empirical}[ \half \distance^2(\sample, \argmax \func) ]
\geq
\ex_{\sample \sim \prob}[\half \distance^2(\sample, \argmax \func) ] - \frac{\tmpbdcst(\thres)}{\sqrt{\nsamples}} \label{eq:concentration_l2l}
    \end{align}
    Hence, putting this bound together with \cref{eq:boundunreg} yields
 \begin{align}
            \partial_\dualvar\hat{\dualfuncalt}(\dualvar) 
            &\leq \radius^2 - \parens*{1 - \frac{16 \dualvar}{\strongOpt}} \ex_{\sample \sim \prob}[ \half \distance^2(\sample, \argmax \func)]+  \frac{\tmpbdcst(\thres)}{\sqrt{\nsamples}} \\
            &\leq \radius^2 - \radius_c^2 + \lipcst \dualvar + \frac{\tmpbdcst(\thres)}{\sqrt{\nsamples}} \,,
    \end{align}
    which is non-negative for $\dualvar \leq \lbdualvar$.
\end{proof}

\subsection{Regularized case} \label{sec:criticalreg}

The main bound on $\dualvar$ of this section are given by \cref{prop:lb_dualvar_reg_crit}.

\begin{proposition}\label{prop:lb_dualvar_reg_crit}
    Fix a threshold $\thres \in (0, 1)$. Assume that $\base[\reg] > 0$ and that
        \begin{align}
           \radius^2 \leq \regcrit
            - \parens*{ \frac{48\sqrt{\Var(\reg,\sdev)} \dudley}{\reg\sqrt{\nsamples}}
            + 2\Cost_\funcs(\reg,\sdev) \sqrt{\frac{\log \frac{1}{\thres}}{2\nsamples}}}
        \end{align}
        where $\dudley[\funcs, \norm{\cdot}_\infty]$ is defined in \cref{sec:detailedhyp}.
        
    Then, with probability at least $1 - \thres$, we have    
    \begin{align}
        \forall \func \in \funcs,\quad
        \emprisk[\radius^2][\reg] = \inf_{\lbdualvar_\nsamples \leq \dualvar \leq \ub\dualvar} \dualvar \radius^2 + \ex_{\sample\sim\empirical} \left[ \dualfunc(\func, \sample, \dualvar, \reg,\sdev) \right]
    \end{align}
    where the dual bounds are defined by
    \begin{align}
        \lbdualvar_\nsamples &\defeq \frac{\reg}{\Var(\reg,\sdev)}
        \parens*{
        \regcrit  - \radius^2
        - \parens*{ \frac{48\sqrt{\Var(\reg,\sdev)} \dudley}{\reg\sqrt{\nsamples}}
        + 2\Cost_\funcs(\reg,\sdev) \sqrt{\frac{\log \frac{1}{\thres}}{2\nsamples}}}
        }  \\
        \text{and } ~~~  \ub\dualvar &\defeq \max \parens*{\frac{12 \reg}{\GeoRad^2} \log(2\times 6^{\dims/2}),e^{\frac{\sup_{\func \in \funcs}\norm{\func}_\infty}{\reg}} \frac{\basereg}{\radius}} \, .
    \end{align}
\end{proposition}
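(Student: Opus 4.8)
Fix $\func \in \funcs$ and write $\hat{\dualfuncalt}(\dualvar) \defeq \dualvar\radius^2 + \ex_{\sample\sim\empirical}[\dualfunc(\func,\sample,\dualvar,\reg,\sdev)]$ for the dual objective of $\emprisk[\radius^2][\reg]$ furnished by \cref{lemma:strong-duality}, exactly as in the proof of \cref{prop:lb_dualvar_unreg_crit}. For $\reg>0$ the map $\dualvar\mapsto\dualfunc(\func,\sample,\dualvar,\reg,\sdev)$ is a log--moment-generating function in the variable $-\dualvar/\reg$ with statistic $\half\sqnorm{\sample-\cdot}$, hence convex and $C^\infty$ on $[0,+\infty)$, so $\hat{\dualfuncalt}$ is convex with non-decreasing derivative and is coercive. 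The plan is therefore to show, with probability at least $1-\thres$ and uniformly over $\func\in\funcs$, that $\partial_\dualvar\hat{\dualfuncalt}(\lbdualvar_\nsamples)\leq 0$ and that $\hat{\dualfuncalt}$ is increasing past $\ub\dualvar$; by convexity this confines every minimiser of $\hat{\dualfuncalt}$ on $\R_+$ to $[\lbdualvar_\nsamples,\ub\dualvar]$, which is the claim.

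\textbf{Lower bound $\lbdualvar_\nsamples$ (the main step).} Differentiating under the integral sign,
\[
    \partial_\dualvar\hat{\dualfuncalt}(\dualvar) = \radius^2 - \ex_{\sample\sim\empirical}[\psi_\sample(\dualvar)],
    \qquad
    \psi_\sample(\dualvar)\defeq\ex_{\samplealt\sim\modbase[\frac{\func-\dualvar\sqnorm{\sample-\cdot}/2}{\reg}](\cdot|\sample)}[\half\sqnorm{\sample-\samplealt}].
\]
Recognising $\psi_\sample(\dualvar)$ as the mean of an exponential family in the parameter $-\dualvar/\reg$, one gets $\psi_\sample'(\dualvar)=-\reg^{-1}\Var_{\samplealt\sim\modbase[\frac{\func-\dualvar\sqnorm{\sample-\cdot}/2}{\reg}](\cdot|\sample)}[\half\sqnorm{\sample-\samplealt}]$, so that $\abs{\psi_\sample'(\dualvar)}\leq\Var(\reg,\sdev)/\reg$ by the very definition of $\Var(\reg,\sdev)$, whence $\psi_\sample(\dualvar)\geq\psi_\sample(0)-(\Var(\reg,\sdev)/\reg)\,\dualvar$ with $\psi_\sample(0)=\ex_{\samplealt\sim\modbase[\func/\reg](\cdot|\sample)}[\half\sqnorm{\sample-\samplealt}]$. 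I would then concentrate the map $\sample\mapsto\psi_\sample(0)$ uniformly over $\funcs$: it is measurable, it takes values in $[0,\Cost_\funcs(\reg,\sdev)]$ by definition of $\Cost_\funcs$, and it is $(\sqrt{\Var(\reg,\sdev)}/\reg)$-Lipschitz in $\func$ for $\norm{\cdot}_\infty$ --- differentiate along a segment $\func_t=\func+t(\funcalt-\func)\in\conv(\funcs)$ and bound the resulting covariance of $\half\sqnorm{\sample-\cdot}$ and $\funcalt-\func$ by Cauchy--Schwarz and the definition of $\Var(\reg,\sdev)$. Applying statement $(b)$ of \cref{lemma:concentration_basic} on the metric space $(\funcs,\frac{\sqrt{\Var(\reg,\sdev)}}{\reg}\norm{\cdot}_\infty)$, whose Dudley integral equals $\frac{\sqrt{\Var(\reg,\sdev)}}{\reg}\dudley$, shows that, with probability $1-\thres$ and for every $\func\in\funcs$, $\ex_{\sample\sim\empirical}[\psi_\sample(0)]$ is at least $\ex_{\sample\sim\prob}[\psi_\sample(0)]-\frac{48\sqrt{\Var(\reg,\sdev)}\dudley}{\reg\sqrt\nsamples}-2\Cost_\funcs(\reg,\sdev)\sqrt{\frac{\log(1/\thres)}{2\nsamples}}$. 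Since $\ex_{\sample\sim\prob}[\psi_\sample(0)]\geq\regcrit$ by the definition \cref{eq:gen-def-radiusc} of the critical radius, collecting the three estimates gives
\[
    \partial_\dualvar\hat{\dualfuncalt}(\dualvar)\leq\radius^2-\regcrit+\frac{48\sqrt{\Var(\reg,\sdev)}\dudley}{\reg\sqrt\nsamples}+2\Cost_\funcs(\reg,\sdev)\sqrt{\frac{\log(1/\thres)}{2\nsamples}}+\frac{\Var(\reg,\sdev)}{\reg}\dualvar,
\]
and the right-hand side vanishes exactly at $\dualvar=\lbdualvar_\nsamples$ (and $\lbdualvar_\nsamples\geq 0$ is precisely the standing hypothesis on $\radius^2$). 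By convexity, every minimiser of $\hat{\dualfuncalt}$ is then $\geq\lbdualvar_\nsamples$.

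\textbf{Upper bound $\ub\dualvar$ (deterministic).} Let $\sol[\dualvar]$ be a minimiser; if $\sol[\dualvar]\leq\frac{12\reg}{\GeoRad^2}\log(2\cdot 6^{\dims/2})$ there is nothing to prove, so assume $\sol[\dualvar]$ exceeds that threshold. On one hand $\hat{\dualfuncalt}(\sol[\dualvar])=\emprisk[\radius^2][\reg]\leq\hat{\dualfuncalt}(0)=\ex_{\sample\sim\empirical}[\reg\log\ex_{\samplealt\sim\basecpl(\cdot|\sample)}e^{\func(\samplealt)/\reg}]\leq\sup_{\funcalt\in\funcs}\norm{\funcalt}_\infty$, using $\func\geq 0$. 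On the other hand, discarding the non-negative term $\func$ in the exponent defining $\dualfunc$ leaves a Gaussian integral over $\samples$ with precision $\dualvar/\reg+\sdev^{-2}$; since $\sol[\dualvar]\GeoRad^2/\reg\geq 12\log(2\cdot 6^{\dims/2})$, I would combine the Laplace estimate \cref{lemma:approx_laplace}, the inclusion $\ball(\sample,\GeoRad)\subset\samples$ of \cref{assumption:support_interior}, and the bound $\partition\leq(2\pi\sdev^2)^{\dims/2}$ to obtain $\dualfunc(\func,\sample,\sol[\dualvar],\reg,\sdev)\geq-\reg\log 2-\frac{\reg\dims}{2}\log(1+\sol[\dualvar]\sdev^2/\reg)$. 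Inserting this into $\sol[\dualvar]\radius^2+\ex_{\sample\sim\empirical}[\dualfunc(\func,\sample,\sol[\dualvar],\reg,\sdev)]\leq\sup_{\funcalt\in\funcs}\norm{\funcalt}_\infty$ and solving the resulting inequality in $\sol[\dualvar]$ (essentially linear after $\log(1+x)\leq 2\sqrt x$), with $\sdev=\basesdev\radius$, $\reg=\basereg\radius$ and $x\leq e^x$, bounds $\sol[\dualvar]$ by a numerical multiple of $e^{\sup_{\funcalt\in\funcs}\norm{\funcalt}_\infty/\reg}\,\basereg/\radius$ --- the second term of $\ub\dualvar$ up to constants --- so that $\hat{\dualfuncalt}$ is increasing past $\ub\dualvar$.

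\textbf{Main obstacle.} The technical heart is the quantitative second-moment control: the bound $\abs{\psi_\sample'}\leq\Var(\reg,\sdev)/\reg$ and the $\sqrt{\Var(\reg,\sdev)}/\reg$ Lipschitz estimate of $\func\mapsto\psi_\sample(0)$, both of which degenerate as $\reg\to 0$ and hence force $\reg$ (and $\sdev$) to be taken proportional to $\radius$; and, for the upper bound, making the truncated-Gaussian Laplace approximation rigorous and uniform over $\sample\in\supp\prob$ via \cref{lemma:approx_laplace}. Once these ingredients are available, the concentration step itself is a single invocation of \cref{lemma:concentration_basic}, just as in \cref{sec:concentration}.
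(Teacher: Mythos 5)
Your lower-bound argument is, step for step, the paper's proof: same derivative formula, same second-derivative bound $0\le\partial^2_\dualvar\dualfunc\le\Var(\reg,\sdev)/\reg$ giving $\psi_\sample(\dualvar)\ge\psi_\sample(0)-(\Var(\reg,\sdev)/\reg)\dualvar$, same Lipschitz-in-$\func$ estimate via Cauchy--Schwarz, same invocation of statement $(b)$ of \cref{lemma:concentration_basic}, and the same convexity conclusion. Where you diverge is the upper bound. The paper stays with the derivative: it writes the tilted expectation $-\partial_\dualvar\dualfunc$ as a ratio of truncated Gaussian integrals (bounding $e^{\func/\reg}$ by $e^{\norm{\func}_\infty/\reg}$ above and by $1$ below), lower-bounds the denominator by $\tfrac12$ via \cref{lemma:approx_laplace} once $\dualvar\ge\frac{12\reg}{\GeoRad^2}\log(2\times6^{\dims/2})$, and upper-bounds the numerator by $\reg/(2\dualvar)$, yielding $-\partial_\dualvar\dualfunc\le e^{\norm{\func}_\infty/\reg}\reg/\dualvar$ and hence non-negativity of $\partial_\dualvar\hat{\dualfuncalt}$ exactly past $e^{\norm{\func}_\infty/\reg}\basereg/\radius$ --- this is what produces the stated $\ub\dualvar$ verbatim. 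Your value-comparison route ($\hat{\dualfuncalt}(\sol[\dualvar])\le\hat{\dualfuncalt}(0)\le\sup_{\funcalt}\norm{\funcalt}_\infty$ combined with a Laplace lower bound on $\dualfunc$ at $\sol[\dualvar]$) is sound and in fact yields a bound of the sharper order $\norm{\func}_\infty/\radius^2$ plus logarithmic corrections, but after converting it to the exponential form via $x\le e^x$ and absorbing the $\log$ term you only recover $\ub\dualvar$ up to a numerical factor, so as written you prove the proposition with a modified (larger) $\ub\dualvar$ rather than the one stated. This is harmless for the qualitative conclusion but does propagate into the explicit constants ($\ub\lbcst$, $\radius_\nsamples$) of \cref{thm:precise-reg}; to match the statement exactly you should switch the upper-bound step to the derivative argument.
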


\begin{proof}
    \underline{Lower-bound:} ~
    By \cref{assumption:funcs}, for any $\func \in \funcs$, $\sample \in \supp \prob$, $\dualvar\mapsto\dualfunc(\func, \sample, \dualvar, \reg,\sdev)$ is twice differentiable and its derivatives are for any $\dualvar \geq 0$
    \begin{align}
        \partial_\dualvar \dualfunc(\func, \sample, \dualvar, \reg,\sdev) &= -\ex_{\samplealt \sim \modbase[\frac{\func- \dualvar \sqnorm{\sample -  \cdot}/2}{\reg}](\cdot | \sample)}[ \half \sqnorm{\sample -  \samplealt}]\\
        \partial^2_\dualvar \dualfunc(\func, \sample, \dualvar, \reg,\sdev) &= \frac{1}{\reg}\Var_{\samplealt \sim \modbase[\frac{\func- \dualvar \sqnorm{\sample -  \cdot}/2}{\reg}](\cdot | \sample)}[ \half \sqnorm{\sample -  \samplealt}]\,,
    \end{align}
    and using $\Var(\reg, \sdev)$ which is defined in \cref{sec:detailedhyp}, we get that,  for any $\dualvar \geq 0$,
    \begin{align}
        0 \leq \partial^2_\dualvar \dualfunc(\func, \sample, \dualvar, \reg,\sdev) &\leq
        \frac{1}{\reg}\Var(\reg,\sdev)\,.
    \end{align}

    As a consequence,
    \begin{align}
        \partial_\dualvar\left\{ \dualvar \radius^2 + \ex_{\sample\sim\empirical} \left[ \dualfunc(\func, \sample, \dualvar, \reg,\sdev) \right] \right\} &= \radius^2 + \ex_{\sample\sim\empirical} \left[ \partial_\dualvar \dualfunc(\func, \sample, \dualvar, \reg,\sdev) \right] \\
        &\leq  \radius^2 + \ex_{\sample\sim\empirical} \left[ \partial_\dualvar \dualfunc(\func, \sample, 0, \reg,\sdev) \right] +  \frac{\dualvar}{\reg}\Var(\reg,\sdev)  \\
        &=  \radius^2 - \ex_{\sample\sim\empirical} \left[ \ex_{\samplealt \sim \modbase[\func/\reg](\cdot | \sample)}[ \half \sqnorm{\sample -  \samplealt}] \right] +  \frac{\dualvar}{\reg}\Var(\reg,\sdev) \,.  \label{eq:beforeConcentration}
     \end{align}

     Now, we want to instante \cref{lemma:concentration_basic} with $\rv(\func, \sample) \defeq \ex_{\samplealt \sim \modbase(\cdot | \sample)} \half \sqnorm{\sample -  \samplealt}$, $(\pspace,\dist)\gets(\funcs,\norm{\cdot}_\infty)$, whose requirements are checked since:
    \begin{enumerate}
        \item For any $\func\in \funcs$,  $\ex_{\samplealt \sim \modbase(\cdot | \sample)} \half \sqnorm{\sample -  \samplealt}$ is measurable since the functions of $\funcs$ are continuous and thus \emph{a fortiori} measurable;
        \item To show that $\func \mapsto \rv(\func, \sample)$ is $\inv{\reg}\sqrt{\Var(\reg,\sdev)}$-Lipschitz, we take $\func, \funcalt \in \funcs$ and define, for $t \in [0, 1]$, $\func_t = \func + t(\funcalt - \func)$.
        Since, $\norm{\func - \funcalt}_\infty < +\infty$ and $ \sup_{\sample \in \supp \prob}\ex_{\samplealt \sim \condmodbase[\func_t/\reg]} \half \sqnorm{\sample -  \samplealt} < + \infty$ by compactness of $\samples$, \cref{assumption:set}, $t \mapsto \rv(\func_t, \sample)$ is differentiable with derivative,
          \begin{align}
            \frac{\dd}{\dd t} \rv(\func_t, \sample)  &= \frac{1}{\reg}\ex_{\samplealt \sim \condmodbase[\unfrac{\func_t}{\reg}]} [\half \sqnorm{\sample -  \samplealt}(\funcalt(\samplealt) - \func(\samplealt))] \\
                                                     & \quad\quad  -\frac{1}{\reg} \ex_{\samplealt \sim \condmodbase[\unfrac{\func_t}{\reg}]} [\half \sqnorm{\sample -  \samplealt}]
                                                       \ex_{\samplealt \sim \condmodbase[\unfrac{\func_t}{\reg}]} [\funcalt(\samplealt) - \func(\samplealt)]\\
                                                    &= \frac{1}{\reg}\ex_{\samplealt \sim \condmodbase[\unfrac{\func_t}{\reg}]} \left[\parens*{\half \sqnorm{\sample -  \samplealt} - \ex_{\samplealtalt\sim \condmodbase[\unfrac{\func_t}{\reg}]} [\half \sqnorm{\sample -  \samplealtalt}]} \right. \\
                                                    & \quad\quad \times \left.
                                                            \parens*{(\funcalt(\samplealt) - \func(\samplealt)) -
                                                                \ex_{\samplealtalt \sim \condmodbase[\unfrac{\func_t}{\reg}]} [\funcalt(\samplealtalt) - \func(\samplealtalt)]}\right]\,.
        \end{align}
        By using Cauchy-Schwarz inequality, we get that,
        \begin{align}
            \frac{\dd}{\dd t} \rv(\func_t, \sample) &\leq \inv{\reg}  \sqrt{\Var_{\samplealt \sim \condmodbase[\unfrac{\func_t}{\reg}]} [\half \sqnorm{\sample -  \samplealt}]}
                                                           \sqrt{\Var_{\samplealt \sim \condmodbase[\unfrac{\func_t}{\reg}]} [\funcalt(\samplealt) - \func(\samplealt)]}\\
                                                    &\leq \inv{\reg} \sqrt{\Var_{\samplealt \sim \condmodbase[\unfrac{\func_t}{\reg}]} [\half \sqnorm{\sample -  \samplealt}]}
                                                           \sqrt{\ex_{\samplealt \sim \condmodbase[\unfrac{\func_t}{\reg}]} [\parens*{\funcalt(\samplealt) - \func(\samplealt)}^2]}\\
                                                    &\leq \inv{\reg} \sqrt{\Var_{\samplealt \sim \condmodbase[\unfrac{\func_t}{\reg}]} [\half \sqnorm{\sample -  \samplealt}]}
                                                            \norm{\funcalt - \func}_\infty\,,
        \end{align}
        which gives the desired Lipschitz condition;
    \item The random variables $\rv(\func, \sample)$ lie between 0 and $\Cost_\funcs(\reg,\sdev)$, which is defined in \cref{sec:detailedhyp}.
    \end{enumerate}
  
    \noindent We can thus apply statement $(b)$ of \cref{lemma:concentration_basic} to have that, with probability at least $1 - \thres$, for all $\func \in \funcs$
         \begin{align}
     &\ex_{\sample\sim \prob}[ \ex_{\samplealt \sim \modbase(\cdot | \sample)} \half \sqnorm{\sample -  \samplealt} ]
        - \ex_{\sample\sim \emp}[\ex_{\samplealt \sim \modbase(\cdot | \sample)} \half \sqnorm{\sample -  \samplealt}] \\
       & \quad\quad \leq   \frac{48\sqrt{\Var(\reg,\sdev)} \dudley}{\reg\sqrt{\nsamples}}
       + 2\Cost_\funcs(\reg,\sdev) \sqrt{\frac{\log \frac{1}{\thres}}{2\nsamples}}\,. \label{eq:critical_concentration}
         \end{align}

         Combining \eqref{eq:beforeConcentration} and \eqref{eq:critical_concentration}, we obtain that with probability at least $1 - \thres$
         \begin{align}
            &\partial_\dualvar\left\{ \dualvar \radius^2 + \ex_{\sample\sim\empirical} \left[ \dualfunc(\func, \sample, \dualvar, \reg,\sdev) \right] \right\} \\
             &\leq  \radius^2 - \ex_{\sample\sim\prob} \left[ \ex_{\samplealt \sim \modbase[\func/\reg](\cdot | \sample)}[ \half \sqnorm{\sample -  \samplealt}] \right] +  \frac{\dualvar}{\reg}\Var(\reg,\sdev) +  \frac{48\sqrt{\Var(\reg,\sdev)} \dudley}{\reg\sqrt{\nsamples}}
            + 2\Cost_\funcs(\reg,\sdev) \sqrt{\frac{\log \frac{1}{\thres}}{2\nsamples}} \\
            &\leq  \radius^2 - \regcrit +  \frac{\dualvar}{\reg}\Var(\reg,\sdev) +  \frac{48\sqrt{\Var(\reg,\sdev)} \dudley}{\reg\sqrt{\nsamples}}
            + 2\Cost_\funcs(\reg,\sdev) \sqrt{\frac{\log \frac{1}{\thres}}{2\nsamples}} \\
            &= \frac{1}{\reg}\Var(\reg,\sdev) \left( \dualvar - \lbdualvar_\nsamples \right) 
         \end{align} 
         where $\lbdualvar_\nsamples \geq 0 $ is as defined in the statement of the result.

         Hence, for all $ 0 \leq \dualvar \leq \lbdualvar_\nsamples $, the derivative of $\dualvar \mapsto \dualvar \radius^2 + \ex_{\sample\sim\empirical} \left[ \dualfunc(\func, \sample, \dualvar, \reg,\sdev) \right]$ is negative; and since this function is convex, this means that its minimizers are greater than $ \lbdualvar_\nsamples $ with probability at least $1 - \thres$ which is our result.

         \noindent\underline{Upper-bound:} ~ Almost surely, for any $\func \in \funcs$, let us begin by bounding the $ \partial_\dualvar \dualfunc(\func, \sample, \dualvar, \reg,\sdev)$ for $\dualvar \geq 0$, $\func \in \funcs$ and $\sample \in \supp \prob$. Its expression is given by
    \begin{align}
        - \partial_\dualvar \dualfunc(\func, \sample, \dualvar, \reg,\sdev) &= \ex_{\samplealt \sim \modbase[\frac{\func- \dualvar \sqnorm{\sample -  \cdot}/2}{\reg}](\cdot | \sample)}[ \half \sqnorm{\sample -  \samplealt}]
            \leq 
            e^{\frac{\norm{\func}_\infty}{\reg}}
            \frac{
            \int_{\samples} \half \sqnorm{\sample -  \samplealt}e^{- \left(\frac{\dualvar}{\reg} + \frac{1}{\sdev^2}\right) \half \sqnorm{\sample -  \samplealt}} \dd \samplealt
        }{
            \int_{\samples} e^{- \left(\frac{\dualvar}{\reg} + \frac{1}{\sdev^2}\right) \half \sqnorm{\sample -  \samplealt}} \dd \samplealt
        }\,.
    \end{align}
    On the one hand, we lower-bound the denominator using \cref{lemma:approx_laplace} and \cref{assumption:support_interior} as
    \begin{align}
    \frac{1}{(2 \pi)^{\dims/2}} \parens*{\frac{\dualvar}{\reg} + \frac{1}{\sdev^2}}^{\dims / 2} \int_{\samples} e^{- \left(\frac{\dualvar}{\reg} + \frac{1}{\sdev^2}\right) \half \sqnorm{\sample -  \samplealt}} \dd \samplealt
    &\geq
    1 - 6^{\dims/2}e^{-\frac{\GeoRad^2}{12} \left(\frac{\dualvar}{\reg} + \frac{1}{\sdev^2}\right)}
    \geq \half\,,
    \end{align}
    where we used that $\dualvar \geq \frac{12 \reg}{\GeoRad^2} \log(2\times 6^{\dims/2})$.

    On the other hand, the denominator is upper-bounded as
    \begin{align}
\frac{1}{(2 \pi)^{\dims/2}} \parens*{\frac{\dualvar}{\reg} + \frac{1}{\sdev^2}}^{\dims / 2}
\int_{\samples} \half \sqnorm{\sample -  \samplealt}e^{- \left(\frac{\dualvar}{\reg} + \frac{1}{\sdev^2}\right) \half \sqnorm{\sample -  \samplealt}} \dd \samplealt
\leq 
\half \parens*{\frac{\dualvar}{\reg} + \frac{1}{\sdev^2}}^{-1} \leq \frac{\reg}{2 \dualvar}\,.
    \end{align}
    Hence, we have shown that
$
        - \partial_\dualvar \dualfunc(\func, \sample, \dualvar, \reg,\sdev) 
            \leq 
            e^{\frac{ \norm{\func}_\infty}{\reg}}
            \frac{\reg}{\dualvar}
$
    and, as a consequence, 
    \begin{align}
        \radius ^2 + \ex_{\sample \sim \emp}[\partial_\dualvar \dualfunc(\func, \sample, \dualvar, \reg,\sdev)]
            &\geq \radius^2 -
            e^{\frac{\norm{\func}_\infty}{\reg}}
            \frac{\reg}{\dualvar}\,,
    \end{align}
    which is non-negative for $\dualvar \geq e^{\frac{ \norm{\func}_\infty}{\reg}} \frac{\basereg}{\radius}$.

    Hence, for 
    \begin{align}
        \dualvar \geq \ub\dualvar \defeq \max \parens*{\frac{12 \reg}{\GeoRad^2} \log(2\times 6^{\dims/2}),e^{\frac{\sup_{\func \in \funcs} \norm{\func}_\infty}{\reg}} \frac{\basereg}{\radius}}\,,
    \end{align}
     the derivative of $\dualvar \mapsto \dualvar \radius^2 + \ex_{\sample \sim \emp}[\dualfunc(\func, \sample, \dualvar, \reg,\sdev)]$ is non-negative, which means that its minimizers are smaller than $\ub\dualvar$.
\end{proof}

\section{Proof of the main results} \label{sec:proofcl}

In this section, we present our main results with explicit constants.
In \cref{sec:clunreg} we treat the case of standard \ac{WDRO}, \ie the setting of \cref{thm:informal-unreg,thm:informal-unreg-weak}, while in \cref{sec:clreg} we handle the regularized setting of \cref{thm:informal-reg}.

\subsection{Standard \ac{WDRO} case}\label{sec:clunreg}

The main results of this section are \cref{thm:precise-unreg,thm:precise-unreg-weak} which are more precise versions of \cref{thm:informal-unreg,thm:informal-unreg-weak} respectively.

\begin{theorem}[Extended version of \cref{thm:informal-unreg}]\label{thm:precise-unreg}
    Under \blanketapp and the additional \cref{assumption:gradient-funcs,assumption:add-geometric-simple}, with $\radius_c = \radius_c(0, 0)$ defined in \cref{eq:gen-def-radiusc}
    for any $\thres\in(0,1)$ and $\nsamples\geq 1$, if
    \begin{align}\label{eq:app-thm-concentration-wdro-unreg-radius}
        &\max \parens*{\radius_\nsamples, \frac{8192}{\sqrt \nsamples \strong (\basedualvar)^2} \parens*{
        {12\dudley}
        + \left(\fbound + \ub\bdcst(\radius_c)\right) \sqrt{{1 + \log \frac{4}{\thres}}}
}}
         \leq \radius \\
         &\text{and}\qquad
         \radius \leq \half[\radius_c] - \frac{{96\dudley[\funcs, \distfunc]}
            + 4 \sqrt{\maxcost {\log {1}/{\thres}}}}{\sqrt \nsamples}\,.
    \end{align}
    where 
\begin{align}
    \cstminradius \defeq &\min\parens*{ \frac{\basedualvar}{8(\init[\dualvar] + \smooth)},
     \frac{\strong(\basedualvar)^2}{4096 \smooth}
    }\\
    \ub \lbcst &\defeq %
        \sup_{0 < \radiusalt \leq \cstminradius} \lbcst\parens*{
    \frac{1}{\radiusalt}
      {
          \min \parens*{
              \frac{\basedualvar}{32},
              \cstminradius \afterinit[\dualvar],
              \frac{3 \radius_c^2 \cstminradius}{8 \tmplipcst}
          }
      },
         \frac{\basedualvar}{2 \radiusalt}, 0, 0}
    \\
    \radius_\nsamples \defeq& 
    \frac{117
        \parens*{
             \dudley[\funcs, \norm{\cdot}_\infty]
             + \max \parens*{\bdcstalt \parens*{\frac{\basedualvar}{32 \radius_c}}, \ub \lbcst}
            \parens*{
            1
    + \sqrt{ \log \frac{1}{\thres}}}}
    }{\sqrt \nsamples  \min \parens*{
              \frac{\basedualvar}{32},
              \cstminradius \afterinit[\dualvar],
              \frac{3 \radius_c^2 \cstminradius}{8 \tmplipcst}
          }
    }\,,
\end{align}
  then,  with probability $1 - \thres$, 
    \begin{align}\label{eq:app-thm-concentration-wdro-unreg-rob}
        \forall \obj\in\funcs, \quad    \emprisk[\radius^2] \geq \ex_{\sample \sim \probalt} \left[ \obj(\sample) \right] \qquad
        \text{for all }  \probalt \text{such that $\wass[2][2]{\prob, \probalt} \leq \radius(\radius -  \radius_\nsamples)$}\,.
    \end{align}
    In particular, with probability $1-\thres$, we have 
    \begin{align}\label{eq:app-thm-concentration-wdro-unreg-gen}
    \forall \obj\in\funcs, \quad    \emprisk[\radius^2] \geq \ex_{\sample \sim \prob} \left[ \obj(\sample) \right] .
    \end{align}
\end{theorem}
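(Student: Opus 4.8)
The plan is to deduce the result directly from the concentration statement \cref{prop:template_general}. Its only substantive hypothesis is that the dual multiplier in the strong-duality formula for $\emprisk[\radius^2]$ (\cref{lemma:strong-duality} with $\reg=0$) can be confined, with high probability, to an interval $[\lbdualvar,\ubdualvar]$ with $\lbdualvar$ of order $1/\radius$; this is exactly what is provided, in complementary ranges of $\radius$, by \cref{prop:lb_dualvar_asymptotic} (``$\radius$ small'') and \cref{prop:lb_dualvar_unreg_crit} (``$\radius$ close to $\radius_c$''). So I would split the admissible interval \cref{eq:app-thm-concentration-wdro-unreg-radius} into two pieces around the threshold $\cstminradius$, apply the appropriate proposition on each, feed the resulting dual bound into \cref{prop:template_general}, and finally read off the two conclusions \cref{eq:app-thm-concentration-wdro-unreg-rob,eq:app-thm-concentration-wdro-unreg-gen}.

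\textbf{Step 1: localizing the dual multiplier.} On the small-radius piece I would verify that \cref{asm:rho_small} holds with $\basereg=0$: the constraint $\basereg/\basesdev^2\le\basedualvar/8$ is vacuous, $\basedualvar>0$ by \cref{assumption:gradient-funcs}, and the radius condition of \cref{asm:rho_small} becomes $\radius\le\min\!\bigl(\basedualvar/(32(\init[\dualvar]+\smooth)),\ \strong(\basedualvar)^2/(4096\smooth)\bigr)$; the extra lower bound on $\radius$ required by \cref{prop:lb_dualvar_asymptotic} is implied by the first hypothesis of the theorem after noting that $\ub\bdcst(\cdot)$ is nondecreasing and $\radius\le\radius_c$. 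Then \cref{prop:lb_dualvar_asymptotic} gives $\lbdualvar=\basedualvar/(32\radius)$ and $\ubdualvar=\basedualvar/(2\radius)$, hence $\radius\,\lbdualvar=\basedualvar/32$. On the near-critical piece I would check that $\radius\le\tfrac12\radius_c-\bigoh(1/\sqrt\nsamples)$ together with $\regcrit[0,0]=\radius_c^2$ (from \cref{eq:gen-def-radiusc}) implies the hypothesis $\radius^2\le\regcrit[0,0]-2\tmpbdcst(\thres)/\sqrt\nsamples$ of \cref{prop:lb_dualvar_unreg_crit} (here the numerical constants in the upper bound of the theorem must be matched to those of that proposition). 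That proposition yields $\lbdualvar=\min\!\bigl(\afterinit[\dualvar],(\radius_c^2-\radius^2)/(2\tmplipcst)\bigr)\ge\min\!\bigl(\afterinit[\dualvar],3\radius_c^2/(8\tmplipcst)\bigr)$, a positive constant; since $\radius\ge\cstminradius$ on this piece, $\radius\,\lbdualvar\ge\cstminradius\min\!\bigl(\afterinit[\dualvar],3\radius_c^2/(8\tmplipcst)\bigr)$. For the upper bound one takes $\ubdualvar$ above the (attained) minimizer of the convex dual objective — its value is immaterial because $\lbcst(\lbdualvar,\ubdualvar,0,0)=0$ for $\reg=0$ in \cref{lemma:bound-rv}, so it does not enter $\minradius$. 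In both cases $\radius\,\lbdualvar$ is bounded below by $\min\!\bigl(\basedualvar/32,\ \cstminradius\afterinit[\dualvar],\ 3\radius_c^2\cstminradius/(8\tmplipcst)\bigr)$, the quantity in the denominator of $\radius_\nsamples$.

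\textbf{Step 2: concentration and conclusion.} After an appropriate allocation of the failure probability (which is where the $\log(4/\thres)$ terms enter), Step 1 delivers condition \cref{condition_for_concentration} with probability $1-\thres/2$, so \cref{prop:template_general} gives, with probability $1-\thres$ and for all $\func\in\funcs$, $\emprisk[\radius^2]\ge\risk[\radius^2-\minradius(\thres/2,\lbdualvar,\ubdualvar,0,0)]$ provided $\radius^2\ge\minradius$. Substituting the lower bound on $\radius\,\lbdualvar$ into \cref{eq:def-minradius}, together with $\bdcstalt(\lbdualvar)\le\bdcstalt\!\parens*{\tfrac{\basedualvar}{32\radius_c}}$ (monotonicity of $\bdcstalt$ and non-negativity of the class) and $\lbcst(\lbdualvar,\ubdualvar,0,0)\le\ub\lbcst$, I would obtain $\minradius\le\radius\,\radius_\nsamples$ with exactly the $\radius_\nsamples$ of the statement; in particular $\minradius\le\radius\,\radius_\nsamples\le\radius^2$ since $\radius\ge\radius_\nsamples$, so \cref{prop:template_general} indeed applies. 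Finally, $\risk[\cdot]$ is nondecreasing in its radius and $\radius^2-\minradius\ge\radius(\radius-\radius_\nsamples)$, so $\emprisk[\radius^2]\ge\risk[\radius(\radius-\radius_\nsamples)]=\sup_{\wass[2][2]{\prob,\probalt}\le\radius(\radius-\radius_\nsamples)}\ex_{\sample\sim\probalt}[\func(\sample)]$, which is \cref{eq:app-thm-concentration-wdro-unreg-rob}; taking $\probalt=\prob$, feasible since $\radius(\radius-\radius_\nsamples)\ge0$, gives \cref{eq:app-thm-concentration-wdro-unreg-gen}.

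\textbf{Main obstacle.} The crux is the near-critical dual bound of Step 1: showing that as $\radius\uparrow\radius_c$ the Wasserstein constraint stays active enough that $\lbdualvar$ remains bounded below by a dimension-free constant. This is precisely where \cref{assumption:add-geometric} is used — through \cref{lemma:lipschitz-unreg-one-sample}, the uniform separation property and the (uniform) quadratic-growth condition control how the maximizers of $\func-\half[\dualvar]\sqnorm{\sample-\cdot}$ drift away from $\argmax\func$ as $\dualvar\downarrow0$, which makes the right derivative of the dual objective at worst $\radius^2-\radius_c^2+\tmplipcst\dualvar+\bigoh(1/\sqrt\nsamples)$, hence negative on $[0,\lbdualvar]$; making this quantitative and uniform over the whole function class is the technical heart. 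A secondary, bookkeeping difficulty is choosing the split point so that the two regimes tile the entire interval \cref{eq:app-thm-concentration-wdro-unreg-radius} with no gap, while keeping the $\bigoh(1/\sqrt\nsamples)$ slack consistent across the split, the internal concentration of \cref{prop:template_general}, and all union bounds.
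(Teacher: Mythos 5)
Your proposal is correct and mirrors the paper's own proof: the paper combines \cref{prop:template_general,prop:lb_dualvar_asymptotic,prop:lb_dualvar_unreg_crit} (packaged first as \cref{cor:precise-conclusion-unreg}) with exactly the same piecewise lower bound on the dual multiplier around the threshold $\cstminradius$, then simplifies $\minradius$ into $\radius\radius_\nsamples$ via the same lower bound $\radius\lbdualvar\geq\min\parens*{\basedualvar/32,\cstminradius\afterinit[\dualvar],3\radius_c^2\cstminradius/(8\tmplipcst)}$, using monotonicity of $\bdcstalt$ and $\lbcst$ to uniformize the constants. Your observation that $\ubdualvar$ is immaterial in the unregularized case because $\lbcst(\cdot,\cdot,0,0)=0$ is in fact slightly cleaner than the paper's remark that the optimal dual multiplier is non-increasing in $\radius$, and the constant-bookkeeping you flag (converting the linear upper bound on $\radius$ into the quadratic hypothesis of \cref{prop:lb_dualvar_unreg_crit}, and matching the $\log(1/\thres)$ factors across the $\thres/4$ union bounds) is left equally implicit in the paper's own argument.
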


The proof of \cref{thm:precise-unreg} relies on \cref{cor:precise-conclusion-unreg} that combines the results of the previous sections, namely propositions
\ref{prop:template_general},\ref{prop:lb_dualvar_asymptotic} and, \ref{prop:lb_dualvar_unreg_crit}.

\begin{lemma}\label{cor:precise-conclusion-unreg}
    Under the blanket assumptions \blanketapp and with the additional \cref{assumption:add-geometric}, for any threshold $\thres \in (0, 1)$, define
    \begin{align}\label{eq:def-lbdualvar-precise-unreg}
        \lbdualvar(\radius)
        &=
        \begin{cases}
 \frac{\basedualvar}{32 \radius}
        \text{ if }
        \radius \leq \cstminradius = \min \parens*{\frac{\basedualvar}{32(\init[\dualvar] + \smooth)},
        \frac{\strong(\basedualvar)^2}{4096 \smooth}}\\
   \min \parens*{{\afterinit[\dualvar]},
        \frac{
           \regcrit[0,0] -  \radius^2
    }{2 \tmplipcst}} \text{ otherwise}\\
        \end{cases}\\
        \ubdualvar(\radius)
        &=
        \frac{\basedualvar}{2 \radius} \,.
    \end{align}
    Assume that
    \begin{equation}
                \radius \geq \frac{8192}{\sqrt \nsamples \strong (\basedualvar)^2} \parens*{
        {12\dudley}
        + \left(\fbound + \ub\bdcst(\radius)\right) \sqrt{{1 + \log \frac{4}{\thres}}}
        }\,,
        \label{eq:generic-concentration-cor-precise-unreg}
    \end{equation}
    and that
    \begin{align}
           \radius^2 &\leq \regcrit[0,0]  - \frac{2 \tmpbdcst(\thres)}{\sqrt \nsamples}\,.
    \end{align}   

    Then, with probability at least $1 - \half[\thres]$,
    \begin{align}
        \forall \func \in \funcs,\quad
        \emprisk[\radius^2]= \inf_{\lbdualvar(\radius) \leq \dualvar \leq \ubdualvar(\radius)} \dualvar \radius^2 + \ex_{\sample\sim\empirical} \left[ \dualfunc(\func, \sample, \dualvar, 0) \right] 
    \end{align}
    and when $\radius^2 \geq \minradius(\thres, \lbdualvar(\radius), \ubdualvar(\radius), 0)$,
    with probability $1 - \thres$, it holds,
 \begin{align}
     \emprisk[\radius^2] &\geq                                                                           \risk[\radius^2- \minradius(\thres, \lbdualvar(\radius), \ubdualvar(\radius),0)],.
    \end{align}
    Furthermore, with probability $ 1 - \thres$,
    \begin{equation}
        \forall \func \in \funcs,\quad
      \emprisk[\radius^2] \geq
      \sup \setdef*{ \ex_{\probalt}[\func]}{\probalt \in \probs\,, \wass*[2][2]{\prob, \probalt} \leq {\radius^2 - \minradius(\thres, \lbdualvar(\radius), \ubdualvar(\radius), 0)}}\,.
    \end{equation}
\end{lemma}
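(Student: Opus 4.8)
The plan is to combine, for the unregularized case $\reg = 0$, the three dual-side results proved above: the concentration template \cref{prop:template_general}, the small-radius dual bound \cref{prop:lb_dualvar_asymptotic}, and the near-critical-radius dual bound \cref{prop:lb_dualvar_unreg_crit}. Throughout one sets $\reg = \basereg = 0$, so that $\sdev$ (resp.\ $\basesdev$) plays no role since $\dualfunc(\cdot,\cdot,\cdot,0)$ does not involve it, and $\basedualvar = \sqrt{8\inf_{\func \in \funcs}\ex_\prob\norm{\grad\func}^2}$, $\strong = 1/\basedualvar$, both positive (otherwise hypothesis~\eqref{eq:generic-concentration-cor-precise-unreg} would be unsatisfiable). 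Observe also that $\lbcst(\cdot,\cdot,0,\cdot) \equiv 0$, so that $\minradius(\thres,\lbdualvar,\ubdualvar,0)$ does not depend on $\ubdualvar$ at all.

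The first and principal step is to show that, with probability at least $1 - \thres/2$,
\[
\forall \func \in \funcs,\qquad \emprisk[\radius^2] = \inf_{\lbdualvar(\radius) \le \dualvar \le \ubdualvar(\radius)} \dualvar\radius^2 + \ex_{\sample\sim\empirical}\bracks*{\dualfunc(\func,\sample,\dualvar,0)},
\]
which follows from the case split defining $\lbdualvar(\radius)$. When $\radius \le \cstminradius$, one checks that \cref{asm:rho_small} holds with $\basereg = 0$: its condition $\basereg/\basesdev^2 \le \basedualvar/8$ is trivial, and the upper bound it imposes on $\radius$ reduces, after cancelling the vacuous terms carrying $\basereg$ in a denominator, to exactly $\radius \le \cstminradius = \min\parens*{\tfrac{\basedualvar}{32(\init[\dualvar]+\smooth)},\tfrac{\strong(\basedualvar)^2}{4096\smooth}}$; moreover the extra lower bound required by \cref{prop:lb_dualvar_asymptotic} at confidence level $\thres/2$ is implied by~\eqref{eq:generic-concentration-cor-precise-unreg} since $\log(2/\thres) \le \log(4/\thres)$. \Cref{prop:lb_dualvar_asymptotic} then yields the displayed identity with $\lbdualvar(\radius) = \basedualvar/(32\radius)$ and $\ubdualvar(\radius) = \basedualvar/(2\radius)$. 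When $\radius > \cstminradius$, one instead invokes \cref{prop:lb_dualvar_unreg_crit}, whose sole hypothesis $\radius^2 \le \regcrit[0,0] - 2\tmpbdcst(\cdot)/\sqrt\nsamples$ is the lemma's second displayed hypothesis (up to the routine confidence-level rescaling, absorbed by the slack built into the downstream theorem), giving the identity with the lower truncation $\lbdualvar(\radius) = \min\parens*{\afterinit[\dualvar],\tfrac{\regcrit[0,0]-\radius^2}{2\tmplipcst}}$; the additional upper truncation at $\ubdualvar(\radius) = \basedualvar/(2\radius)$ is legitimate because the $\smooth$-smoothness of $\funcs$ confines the maximizer of $\func - \half[\dualvar]\sqnorm{\sample-\cdot}$ to within $2\vbound/(\dualvar-\smooth)$ of $\sample$, forcing the right derivative of $\dualvar \mapsto \dualvar\radius^2 + \ex_\empirical\bracks*{\dualfunc(\func,\sample,\dualvar,0)}$ to be eventually nonnegative, hence (by convexity) its minimizers to be finite — and, as noted, the exact value of $\ubdualvar(\radius)$ does not enter $\minradius(\thres,\cdot,\cdot,0)$ in any case.

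With the dual restriction in hand, \cref{prop:template_general} applies with $\reg = 0$, $\lbdualvar = \lbdualvar(\radius)$, $\ubdualvar = \ubdualvar(\radius)$: whenever $\radius^2 \ge \minradius(\thres,\lbdualvar(\radius),\ubdualvar(\radius),0)$, it gives, with probability $1-\thres$, that $\emprisk[\radius^2] \ge \risk[\radius^2 - \minradius(\thres,\lbdualvar(\radius),\ubdualvar(\radius),0)]$ for all $\func \in \funcs$. The last assertion of the lemma is then immediate: since $\radius^2 - \minradius(\thres,\lbdualvar(\radius),\ubdualvar(\radius),0) \ge 0$, unfolding the definition~\eqref{eq:emprisk-risk} of the true robust risk turns $\risk[\radius^2 - \minradius(\thres,\lbdualvar(\radius),\ubdualvar(\radius),0)]$ into $\sup\setdef*{\ex_\probalt[\func]}{\probalt \in \probs,\ \wass[2][2]{\prob,\probalt} \le \radius^2 - \minradius(\thres,\lbdualvar(\radius),\ubdualvar(\radius),0)}$. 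The main obstacle is purely the constant bookkeeping in the first step: checking that, once $\reg = \basereg = 0$ is substituted and the confidence budget is split, the lemma's two displayed hypotheses really do specialize to the hypotheses of \cref{prop:lb_dualvar_asymptotic} and \cref{prop:lb_dualvar_unreg_crit}, and that the piecewise $\lbdualvar(\radius)$ matches the dual bounds those propositions produce.
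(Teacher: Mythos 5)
Your proof is correct and follows the paper's route exactly: the paper's own argument is the one-liner ``apply \cref{prop:lb_dualvar_asymptotic,prop:lb_dualvar_unreg_crit} with $\thres \gets \thres/4$ and combine with \cref{prop:template_general}'', and your case split on $\radius \lessgtr \cstminradius$, the specialization of \cref{asm:rho_small} to $\basereg = 0$, and the final unfolding of the true robust risk via strong duality are precisely the details it leaves implicit. The only divergence is in justifying the upper truncation at $\ubdualvar(\radius)$ when $\radius > \cstminradius$ --- the paper invokes monotonicity of the optimal multiplier in $\radius$ whereas you give a smoothness/finiteness argument --- but, as you correctly observe, this bound is immaterial in the unregularized case since $\lbcst(\cdot,\cdot,0,\cdot) \equiv 0$ makes $\minradius$ independent of $\ubdualvar$.
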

\begin{proof}
    This result is a consequence of \cref{prop:lb_dualvar_asymptotic,prop:lb_dualvar_unreg_crit} both applied with $\thres \gets \thres / 4$ and of \cref{prop:template_general}. Note that the upper-bound on the dual variable given by \cref{prop:lb_dualvar_asymptotic} holds for any $\radius$ since the optimal dual variable is non-increasing as a function of $\radius$.
\end{proof}

\begin{proof}[Proof of \cref{thm:precise-unreg}]
    The proof consists in simplifying both the assumptions and the result of \cref{cor:precise-conclusion-unreg}.

  We begin by showing that $\lbdualvar(\radius)$ can always be lower-bounded by a quantity proportional to $1/\radius$.
  Indeed,  by definition of $\lbdualvar(\radius)$, \cref{eq:def-lbdualvar-precise-unreg} in \cref{cor:precise-conclusion-unreg}, and using that $\radius$ is in particular less than $\half[\radius_c]$, it holds that, 
  \begin{equation}\label{eq:proof-precise-unreg-lbdualvar}
      \lbdualvar(\radius) \geq
      \frac{1}{\radius}
      {
          \min \parens*{
              \frac{\basedualvar}{32},
              \cstminradius \afterinit[\dualvar],
              \frac{3 \radius_c^2 \cstminradius}{8 \tmplipcst}
          }
      }
  \end{equation}
  Let us now turn our attention to the condition $\radius^2 \geq \minradius(\thres, \lbdualvar(\radius), \ubdualvar(\radius), 0, 0)$, whose \ac{RHS} was defined by \cref{eq:def-minradius} in \cref{sec:concentration}. We have that, by definition (\cref{assumption:funcs}), $\sup_{0 < \radius \leq \radius_c } \bdcstalt(\basedualvar/(32 \radius)) = \bdcstalt(\basedualvar/(32 \radius_c)) < +\infty$ and,
  \begin{align}
      \sup_{0 < \radius \leq \radius_c}\lbcst \parens*{\lbdualvar_\nsamples(\radius), \ubdualvar(\radius), 0, 0}
      &\leq 
\sup_{0 < \radiusalt \leq \cstminradius} \lbcst\parens*{
    \frac{1}{\radius}
      {
          \min \parens*{
              \frac{\basedualvar}{32},
              \cstminradius \afterinit[\dualvar],
              \frac{3 \radius_c^2 \cstminradius}{8 \tmplipcst}
          }
      },
         \frac{\basedualvar}{2 \radiusalt}, 0, 0}
\\
        &= \ub \lbcst
          < +\infty\,,
  \end{align}
  by definition and non-decreasingness of $\lbcst$ in its first argument (see \cref{lemma:bound-rv}) and \cref{eq:proof-precise-unreg-lbdualvar}. Hence, the following bound holds
  \begin{align}
          &\minradius(\thres, \lbdualvar(\radius), \ubdualvar(\radius), \reg,\sdev)\\
          &\leq 
        \frac{117}{\sqrt \nsamples \lbdualvar(\radius)}
        \parens*{
             \dudley[\funcs, \norm{\cdot}_\infty]
             + \max \parens*{\bdcstalt \parens*{\frac{\basedualvar}{32 \radius_c}},\ub \lbcst }
            \parens*{
            1
    + \sqrt{ \log \frac{1}{\thres}}}}\\
          &\leq \radius_\nsamples \radius\,,
   \end{align}
   where we plugged \cref{eq:proof-precise-unreg-lbdualvar}.
      
    Finally, since $\radius$ is in particular bounded by $\radius_c$, the condition \cref{eq:generic-concentration-cor-precise-unreg} is implied by 
\begin{equation}
                  \radius \geq \frac{8192}{\sqrt \nsamples \strong (\basedualvar)^2} \parens*{
        {12\dudley}        + \left(\fbound + \ub\bdcst(\radius_c)\right) \sqrt{1 + \log \frac{4}{\thres}}
        }\,,
\end{equation}
    with $\ub\bdcst(\radius_c) < + \infty$ by definition (\cref{prop:lb_dualvar_asymptotic}).
\end{proof}

\begin{theorem}[Extended version of \cref{thm:informal-unreg-weak}]\label{thm:precise-unreg-weak}
    Under \blanketapp,     for any $\thres\in(0,1)$ and $\nsamples\geq 1$, if
    \begin{align}\label{eq:thm-concentration-wdro-unreg-weak-radius}
        &\max \parens*{\radius_\nsamples, \frac{8192}{\sqrt \nsamples \strong (\basedualvar)^2} \parens*{
        {12\dudley}
        + \left(\fbound + \ub\bdcst(\radius_c)\right) \sqrt{{1 + \log \frac{2}{\thres}}}
}}
         \leq \radius \\
         &\text{and}\qquad
         \radius \leq \min \parens*{\cstminradius, \half[\radius_c] - \frac{{96\dudley[\funcs, \distfunc]}
         + 4 \sqrt{\maxcost {\log {1}/{\thres}}}}{\sqrt \nsamples}}\,.
    \end{align}
    where 
\begin{align}
    \cstminradius \defeq &\min\parens*{ \frac{\basedualvar}{8(\init[\dualvar] + \smooth)},
     \frac{\strong(\basedualvar)^2}{4096 \smooth}
    }\\
\ub \lbcst &\defeq 
        \sup_{0 < \radiusalt \leq \cstminradius} \lbcst\parens*{\frac{\basedualvar}{32 \radiusalt}, \frac{\basedualvar}{2 \radiusalt}, 0, 0}
\\
    \radius_\nsamples \defeq& 
    \frac{3744
        \parens*{
             \dudley[\funcs, \norm{\cdot}_\infty]
             + \max \parens*{\bdcstalt \parens*{\frac{\basedualvar}{32 \cstminradius}}, \ub \lbcst}
            \parens*{
            1
    + \sqrt{ \log \frac{1}{\thres}}}}
    }{\sqrt \nsamples  \basedualvar,
    }\,,
\end{align}
  then,  with probability $1 - \thres$, 
    \begin{align}\label{eq:thm-concentration-wdro-unreg-weak-rob}
        \forall \obj\in\funcs, \quad    \emprisk[\radius^2] \geq \ex_{\sample \sim \probalt} \left[ \obj(\sample) \right] \qquad
        \text{for all }  \probalt \text{such that $\wass[2][2]{\prob, \probalt} \leq \radius(\radius -  \radius_\nsamples)$}\,.
    \end{align}
    In particular, with probability $1-\thres$, we have 
    \begin{align}\label{eq:thm-concentration-wdro-unreg-weak-gen}
    \forall \obj\in\funcs, \quad    \emprisk[\radius^2] \geq \ex_{\sample \sim \prob} \left[ \obj(\sample) \right] .
    \end{align}
\end{theorem}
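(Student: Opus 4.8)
The plan is to run the same two-step argument that underlies \cref{thm:precise-unreg} --- first a high-probability lower bound on the optimal dual multiplier in the dual formulation \cref{eq:dualreg}, then the transfer from the empirical to the true robust risk via \cref{prop:template_general} --- but \emph{without} invoking \cref{assumption:add-geometric}. The only place that assumption enters the proof of \cref{thm:precise-unreg} is the large-radius dual bound \cref{prop:lb_dualvar_unreg_crit} of \cref{sec:criticalunreg}; removing it forces us to restrict to $\radius \le \cstminradius$, the range in which the small-radius bound \cref{prop:lb_dualvar_asymptotic} alone controls the dual variable. Concretely I would instantiate \cref{asm:rho_small} in the unregularized regime by taking $\basereg = 0$ (hence $\reg = 0$, and $\sdev = \basesdev\radius$ with $\basesdev$ an arbitrary positive constant), for which $\basedualvar = \sqrt{8\inf_{\func\in\funcs}\ex_\prob\norm{\grad\func}^2_2} > 0$ by \cref{assumption:gradient-funcs} and $\strong = 1/\basedualvar$. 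With $\basereg = 0$ the condition $\basereg/\basesdev^2 \le \basedualvar/8$ is vacuous, the terms of $\bdRadiusAsympt$ carrying $\basereg$ in a denominator become infinite, and the radius requirement of \cref{asm:rho_small} reduces to a bound of the form $\radius \le \min\!\big(\basedualvar/(c(\init[\dualvar]+\smooth)),\ \strong(\basedualvar)^2/(4096\smooth)\big)$, which is what the hypothesis $\radius \le \cstminradius$ encodes. Similarly, the lower bound on $\radius$ demanded by \cref{prop:lb_dualvar_asymptotic} when applied at confidence level $\half[\thres]$ reads $\radius \ge \frac{8192}{\sqrt\nsamples\,\strong(\basedualvar)^2}\big(12\dudley + (\fbound + \ub\bdcst(\radius))\sqrt{1+\log 2/\thres}\,\big)$, and since $\radius \le \radius_c$ and $\ub\bdcst$ is non-decreasing, this is implied by the lower bound on $\radius$ in the theorem ($\ub\bdcst(\radius_c) < +\infty$ being guaranteed by \cref{prop:lb_dualvar_asymptotic}).

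With those verifications, \cref{prop:lb_dualvar_asymptotic} applied at threshold $\half[\thres]$ gives, with probability at least $1-\half[\thres]$ and for all $\func\in\funcs$, the restricted dual identity $\emprisk[\radius^2] = \inf_{\lbdualvar \le \dualvar \le \ubdualvar}\dualvar\radius^2 + \ex_{\sample\sim\emp}[\dualfunc(\func,\sample,\dualvar,0)]$ with $\lbdualvar = \basedualvar/(32\radius)$ and $\ubdualvar = \basedualvar/(2\radius)$. I would then apply \cref{prop:template_general} at threshold $\thres$ with these $\lbdualvar,\ubdualvar$ and $\reg = 0$: on an event of probability at least $1-\thres$ it yields $\emprisk[\radius^2] \ge \risk[\radius^2 - \minradius(\thres,\lbdualvar,\ubdualvar,0,\sdev)]$ for all $\func\in\funcs$, provided $\radius^2 \ge \minradius(\thres,\lbdualvar,\ubdualvar,0,\sdev)$. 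It then remains to show $\minradius(\thres,\lbdualvar,\ubdualvar,0,\sdev) \le \radius\,\radius_\nsamples$. Since $\reg = 0$ forces $\lbcst(\lbdualvar,\ubdualvar,0,\sdev) = 0$ by \cref{lemma:bound-rv}, and since $\bdcstalt$ is non-increasing (\cref{assumption:funcs}) so that $\bdcstalt(\lbdualvar) = \bdcstalt(\basedualvar/(32\radius)) \le \bdcstalt(\basedualvar/(32\cstminradius))$ whenever $\radius \le \cstminradius$, substituting $1/\lbdualvar = 32\radius/\basedualvar$ into the definition \cref{eq:def-minradius} gives exactly $\minradius \le \radius\,\radius_\nsamples$ with the $\radius_\nsamples$ of the statement (the constant being $117\times 32 = 3744$, and $\bdcstalt(\basedualvar/(32\cstminradius))$ absorbed into $\max(\bdcstalt(\basedualvar/(32\cstminradius)),\ub\lbcst)$). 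The hypothesis $\radius \ge \radius_\nsamples$ then ensures both $\radius^2 \ge \minradius$ and $\radius^2-\minradius \ge \radius(\radius-\radius_\nsamples) \ge 0$.

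Finally, using that the robust risk $\risk[\cdot]$ is non-decreasing in its radius argument, we get on this event, for every $\func\in\funcs$, $\emprisk[\radius^2] \ge \risk[\radius^2-\minradius] \ge \risk[\radius(\radius-\radius_\nsamples)] = \sup\{\ex_{\probalt}[\func] : \probalt\in\probs,\ \wass[2][2]{\prob,\probalt} \le \radius(\radius-\radius_\nsamples)\}$, which is \cref{eq:thm-concentration-wdro-unreg-weak-rob}; specializing to $\probalt = \prob$, for which $\wass[2][2]{\prob,\prob} = 0 \le \radius(\radius-\radius_\nsamples)$, gives \cref{eq:thm-concentration-wdro-unreg-weak-gen}. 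I expect the only genuinely delicate point to be the bookkeeping that $\radius \le \cstminradius$ together with the $\basereg = 0$ specialization really does imply the radius hypothesis of \cref{asm:rho_small} with the precise constants used there, and keeping the confidence budget straight ($\half[\thres]$ for the dual bound, the remaining $\half[\thres]$ absorbed by the concentration step internal to \cref{prop:template_general}); everything else is a verbatim transcription of \cref{sec:concentration} and \cref{sec:small}, with the large-radius material of \cref{sec:criticalunreg} simply omitted.
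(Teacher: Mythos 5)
Your proposal is correct and follows essentially the same route as the paper: invoke \cref{prop:lb_dualvar_asymptotic} (at confidence $\thres/2$, specialized to $\basereg=0$, with $\basedualvar>0$ thanks to \cref{assumption:gradient-funcs}) to get the dual restriction, feed it into \cref{prop:template_general}, observe that $\lbcst=0$ when $\reg=0$ so that $\minradius$ reduces to $\radius\,\radius_\nsamples$ with the stated $3744=117\times 32$ constant, and conclude by monotonicity of the robust risk in its radius. This is exactly the content of the paper's \cref{cor:precise-conclusion-unreg-weak} together with the bookkeeping step in the proof of \cref{thm:precise-unreg-weak}, so no further comparison is warranted.
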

The proof of \cref{thm:precise-unreg-weak} leverages results from the previous sections, combined in \cref{cor:precise-conclusion-unreg-weak}. 
\begin{lemma}\label{cor:precise-conclusion-unreg-weak}
    Under the blanket assumptions \blanketapp,  for any threshold $\thres \in (0, 1)$, define
    \begin{align}
        \lbdualvar(\radius)
        =
        \frac{\basedualvar}{32 \radius}\,,
        \qquad
        \ubdualvar(\radius)
        =
        \frac{\basedualvar}{2 \radius} \,.
    \end{align}
    Assume that
    \begin{equation}
                \radius \geq \frac{8192}{\sqrt \nsamples \strong (\basedualvar)^2} \parens*{
        {12\dudley}
        + \left(\fbound + \ub\bdcst(\radius)\right) \sqrt{{1 + \log \frac{2}{\thres}}}
        }\,,
        \label{eq:generic-concentration-cor-precise-unreg-weak}
    \end{equation}
    and that
    \begin{align}
        \radius^2 &\leq \min \parens*{\regcrit[0,0]  - \frac{2 \tmpbdcst(\thres)}{\sqrt \nsamples}
            ,
        \min \parens*{\frac{\basedualvar}{32(\init[\dualvar] + \smooth)}, \frac{\strong(\basedualvar)^2}{4096 \smooth}}^2}\,.
    \end{align}   

    Then, with probability at least $1 - \half[\thres]$,
    \begin{align}
        \forall \func \in \funcs,\quad
        \emprisk[\radius^2]= \inf_{\lbdualvar(\radius) \leq \dualvar \leq \ubdualvar(\radius)} \dualvar \radius^2 + \ex_{\sample\sim\empirical} \left[ \dualfunc(\func, \sample, \dualvar, 0) \right] 
    \end{align}
    and when $\radius^2 \geq \minradius(\thres, \lbdualvar(\radius), \ubdualvar(\radius), 0)$,
    with probability $1 - \thres$, it holds,
 \begin{align}
     \emprisk[\radius^2] &\geq                                                                           \risk[\radius^2- \minradius(\thres, \lbdualvar(\radius), \ubdualvar(\radius),0)],.
    \end{align}
    Furthermore, with probability $ 1 - \thres$,
    \begin{equation}
        \forall \func \in \funcs,\quad
      \emprisk[\radius^2] \geq
      \sup \setdef*{ \ex_{\probalt}[\func]}{\probalt \in \probs\,, \wass*[2][2]{\prob, \probalt} \leq {\radius^2 - \minradius(\thres, \lbdualvar(\radius), \ubdualvar(\radius), 0)}}\,.
    \end{equation}
\end{lemma}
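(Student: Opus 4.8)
The plan is to argue exactly as for \cref{cor:precise-conclusion-unreg}, but restricted to the regime where \cref{assumption:add-geometric} is unavailable, so that only the ``small radius'' dual estimate \cref{prop:lb_dualvar_asymptotic} is needed---together with the concentration result \cref{prop:template_general}---while the critical-radius lemma \cref{prop:lb_dualvar_unreg_crit} is never invoked. The observation that makes this work is that the second bound imposed on $\radius^2$ in the statement keeps $\radius$ inside the ``small radius'' window: since $\reg=0$ here we also have $\basereg=0$, and the radius bound of \cref{asm:rho_small} collapses---after discarding the terms whose denominators contain $\basereg$, which are then $+\infty$---to exactly $\radius\leq\min\parens*{\frac{\basedualvar}{32(\init[\dualvar]+\smooth)},\frac{\strong(\basedualvar)^2}{4096\smooth}}$, whose square is precisely the assumption made on $\radius^2$.

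First I would verify that \cref{asm:rho_small} holds with $\reg=\basereg=0$ and $\sdev=\basesdev\radius$ for an arbitrary $\basesdev>0$ (its value being immaterial since $\reg=0$): the constraint $\basereg/\basesdev^2\leq\basedualvar/8$ becomes $0\leq\basedualvar/8$; the quantities $\basedualvar=\sqrt{8\inf_{\func\in\funcs}\ex_{\prob}\norm{\grad\func}^2_2}$ and $\strong=1/\basedualvar$ are positive thanks to \cref{assumption:gradient-funcs} (in force in this setting); and the radius bound of \cref{asm:rho_small} is, by the reduction above, exactly the second hypothesis of the lemma. Then I would apply \cref{prop:lb_dualvar_asymptotic} with $\thres$ replaced by $\thres/2$: its additional requirement, $\radius\geq\frac{8192}{\sqrt{\nsamples}\,\strong(\basedualvar)^2}\parens*{12\dudley+(\fbound+\ub\bdcst(\radius))\sqrt{1+\log(2/\thres)}}$, is verbatim \cref{eq:generic-concentration-cor-precise-unreg-weak} (and $\ub\bdcst(\radius)<+\infty$ because at $\reg=0$ the bounding term of \cref{lemma:bound_dualfunc} reduces to $\frac{1}{2\lbdualvar}\vbound^2$). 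The proposition then yields, with probability at least $1-\thres/2$, that for every $\func\in\funcs$, $\emprisk[\radius^2]=\inf_{\lbdualvar(\radius)\leq\dualvar\leq\ubdualvar(\radius)}\dualvar\radius^2+\ex_{\sample\sim\empirical}\left[\dualfunc(\func,\sample,\dualvar,0)\right]$ with $\lbdualvar(\radius)=\frac{\basedualvar}{32\radius}$ and $\ubdualvar(\radius)=\frac{\basedualvar}{2\radius}$, which is the first conclusion.

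Next I would feed this event into \cref{prop:template_general} with $\reg=0$ and the same confidence level $\thres$: its hypothesis \cref{condition_for_concentration} is exactly the $(1-\thres/2)$-probability event just obtained, so whenever $\radius^2\geq\minradius(\thres,\lbdualvar(\radius),\ubdualvar(\radius),0)$ the proposition gives, with probability $1-\thres$, that $\emprisk[\radius^2]\geq\risk[\radius^2-\minradius(\thres,\lbdualvar(\radius),\ubdualvar(\radius),0)]$ for all $\func\in\funcs$; this is the second conclusion. The third, ``furthermore'' conclusion is then merely an unfolding of the definition \cref{eq:emprisk-risk} of the true robust risk, $\risk[\radius^2-\minradius(\thres,\lbdualvar(\radius),\ubdualvar(\radius),0)]=\sup\setdef*{\ex_{\probalt}\left[\func\right]}{\probalt\in\probs,\ \wass*[2][2]{\prob,\probalt}\leq\radius^2-\minradius(\thres,\lbdualvar(\radius),\ubdualvar(\radius),0)}$, which moreover holds trivially when $\radius^2<\minradius(\thres,\lbdualvar(\radius),\ubdualvar(\radius),0)$ under the convention $\sup\emptyset=-\infty$.

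I do not expect a genuine obstacle: the argument is entirely bookkeeping, and the only point deserving a little care is verifying that the long expression defining the radius bound in \cref{asm:rho_small} degenerates, at $\reg=\basereg=0$, to the quantity $\min\parens*{\frac{\basedualvar}{32(\init[\dualvar]+\smooth)},\frac{\strong(\basedualvar)^2}{4096\smooth}}$ that bounds $\radius$ in the statement, so that \cref{prop:lb_dualvar_asymptotic} alone covers the whole admissible range of $\radius$ and \cref{prop:lb_dualvar_unreg_crit} is never needed. Finally, I would note that the remaining hypothesis $\radius^2\leq\regcrit[0,0]-2\tmpbdcst(\thres)/\sqrt{\nsamples}$ is not used anywhere in this proof; it is carried along only so that the downstream \cref{thm:precise-unreg-weak} can additionally extract the critical-radius interpretation of $\radius_c$ from \cref{remark:radius-c}.
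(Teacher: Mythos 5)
Your proposal is correct and takes the same route as the paper: the paper's own proof is a one-liner invoking \cref{prop:lb_dualvar_asymptotic} with $\thres \gets \thres/2$ followed by \cref{prop:template_general}, and you simply spell out the hypothesis-checking—verifying that with $\reg = \basereg = 0$ the radius bound of \cref{asm:rho_small} degenerates (the $\init[\reg]/\basereg$ and $\sqrt{\cdots/\basereg}$ terms become $+\infty$) to precisely the bound imposed in the statement, so the small-radius regime covers the whole admissible range. Your observation that the first constraint $\radius^2 \leq \regcrit[0,0] - 2\tmpbdcst(\thres)/\sqrt{\nsamples}$ is unused here and merely carried forward for \cref{thm:precise-unreg-weak} is also accurate.
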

\begin{proof}
    This result follows directly from \cref{prop:lb_dualvar_asymptotic} that we invoke with $\thres \gets \thres / 2$ and of \cref{prop:template_general}.
\end{proof}
\begin{proof}[Proof of \cref{thm:precise-unreg-weak}]

  The proof consists in simplifying both the assumptions and the result of \cref{cor:precise-conclusion-unreg-weak} and follows the same structure as the proof of \cref{thm:precise-unreg}.

  We begin by examining the condition $\radius^2 \geq \minradius(\thres, \lbdualvar(\radius), \ubdualvar(\radius), 0, 0)$, whose \ac{RHS} was defined by \cref{eq:def-minradius} in \cref{sec:concentration}. We have that, by definition (\cref{assumption:funcs}), $\sup_{0 < \radius \leq \radius_c } \bdcstalt(\basedualvar/(32 \radius)) = \bdcstalt(\basedualvar/(32 \radius_c)) < +\infty$ and,
  \begin{equation}
      \sup_{0 < \radius \leq \cstminradius}\lbcst \parens*{\lbdualvar_\nsamples(\radius), \ubdualvar(\radius), 0, 0}
      =
\sup_{0 < \radiusalt \leq \cstminradius} \lbcst\parens*{\frac{\basedualvar}{32 \radiusalt}, \frac{\basedualvar}{2 \radiusalt}, 0, 0}
        = \ub \lbcst
          < +\infty
  \end{equation}
  by definition (see \cref{lemma:bound-rv}). Hence, we have that
  \begin{align}
          &\minradius(\thres, \lbdualvar(\radius), \ubdualvar(\radius), 0, 0)\\
          &\leq 
        \frac{117}{\sqrt \nsamples \lbdualvar(\radius)}
        \parens*{
             \dudley[\funcs, \norm{\cdot}_\infty]
             + \max \parens*{\bdcstalt \parens*{\frac{\basedualvar}{32 \radius_c}},\ub \lbcst }
            \parens*{
            1
    + \sqrt{ \log \frac{1}{\thres}}}}\\
          &\leq \radius_\nsamples \radius\,,
   \end{align}
   by definition of $\radius_\nsamples$ and with $117 \times 32  = 3744$.
      
    Finally, m \cref{eq:generic-concentration-cor-precise-unreg-weak}
    is implied by
  \begin{equation}
                \radius \geq \frac{8192}{\sqrt \nsamples \strong (\basedualvar)^2} \parens*{
        {12\dudley}
        + \left(\fbound + \ub\bdcst(\cstminradius)\right) \sqrt{{1 + \log \frac{2}{\thres}}}
        }\,,
    \end{equation}
    since $\radius \leq \cstminradius$ and $\ub \bdcst(\cstminradius) < + \infty$ by definition (\cref{prop:lb_dualvar_asymptotic}).
\end{proof}

\subsection{Regularized \ac{WDRO} case} \label{sec:clreg}

\begin{theorem}[Extended version of \cref{thm:informal-reg}]\label{thm:precise-reg}
    For $\sdev = \base[\sdev] \radius$ with $\base[\sdev] > 0$, $\reg = \base[\reg] \radius$ with $\base[\reg] > 0$ such that $\base[\reg] / \base[\sdev]^2 \leq \basedualvar / 8$, and for any $\thres\in(0,1)$ and $\nsamples\geq 1$, define,
   \begin{equation}
        \radius_c \defeq \inf \setdef*{\radius_c\left(\basereg \alt \radius,\basesdev \alt \radius\right)}{\cstminradius \leq \alt \radius \leq  \radius_c\left(\basereg \cstminradius, \basesdev \cstminradius\right)}
    \end{equation}
    and
    \begin{align}
        \cstminradius &\defeq \bdRadiusAsympt\\
        \ub \Var &\defeq \sup_{\cstminradius \leq \radiusalt \leq \radius_c} \Var(\basereg\radiusalt,\basesdev\radiusalt)\\
        \ub \Cost_\funcs &\defeq \sup_{\cstminradius \leq \radiusalt \leq \radius_c} \Cost_\funcs(\basereg \radiusalt, \basesdev \radiusalt)\\
        \ub \lbcst &\defeq \sup_{0 < \radiusalt \leq \radius_c}  \lbcst \parens*{ 
              \frac{1}{\radiusalt}
              \min \parens*{
                  \frac{\basedualvar}{32},
                  \frac{
                    \basereg \cstminradius^2 \radius_c^2
                  }
                  {
                    4 \ub \Var 
                  }
              }, 
              \max \parens*{
                  \frac{\basedualvar}{2 \radiusalt},
                  \frac{12 \basereg \radius_c \log(2 \times 6^{\dims/2})}{\GeoRad^2},
                  \frac{e^{\frac{\sup_{\func \in \funcs} \norm{\func}_\infty}{\basereg \cstminradius}} \basereg}{\cstminradius}
              }
          ,\basereg \radiusalt, \basesdev \radiusalt} \\
          \radius_\nsamples &\defeq \frac{117        \parens*{
             \dudley[\funcs, \norm{\cdot}_\infty]
             + \max \parens*{\bdcstalt \parens*{\frac{\basedualvar}{32 \radius_c}},\ub \lbcst }
            \parens*{
            1
+ \sqrt{ \log \frac{1}{\thres}}}}}
    {\sqrt \nsamples \min \parens*{
                  \frac{\basedualvar}{32},
                  \frac{
                    \basereg \cstminradius^2 \radius_c^2
                  }
                  {
                    4 \ub \Var 
                  }
          }}\,;
    \end{align}
    when
    \begin{align}
        &\max \parens*{
            \radius_\nsamples,
            \frac{8192}{\strong (\basedualvar)^2 \sqrt \nsamples} \parens*{
        {12\dudley}
        + \left(\fbound + \ub\bdcst(\radius_c)\right) \sqrt{{\log \frac{4}{\thres}}}
        },
    \frac{384\sqrt{\ub \Var} \dudley}{\basereg \radius_c^2 \sqrt{\nsamples}}
    }
    \leq \radius \\
    &
    \radius \leq  \half[\radius_c] - \frac{384\sqrt{\ub \Var} \dudley}{\basereg \radius_c^2 \sqrt{\nsamples}}\\
    \text{and}\qquad
    &\radius_c \geq \max \parens*{\parens*{\frac{192\sqrt{\ub \Var} \dudley}{\basereg \sqrt{\nsamples}}}^{1/3}, 
        2 \sqrt{\ub \Cost_\funcs}\parens*{\frac{\log \frac{4}{\thres}}{2\nsamples}}^{1/4}
    }\,,
    \end{align} 
    then, 
    with probability at least $1 - \thres$,
    \begin{align}
        \forall \obj\in\funcs, \quad    \emprisk[\radius^2][\reg] \geq \ex_{\sample \sim \probalt} \left[ \obj(\sample) \right] 
        \qquad \text{for all } \probalt \text{ such that }\wass[2,\regalt(\radius)][2]{\prob, \probalt} \leq \radius(\radius -  \radius_\nsamples)\,,
    \end{align}
    where $\regalt(\radius) \leq \frac{\reg \radius}{
              \min \parens*{
                  \frac{\basedualvar}{32},
                  \frac{
                    \basereg \cstminradius^2 \radius_c^2
                  }
                  {
                    4 \ub \Var 
                  }
              }}$.
    Furthermore, when $\basesdev \leq 1$ and $\sdev \leq \init[\sdev]$ (defined in \cref{lemma:bound_dualfunc}), with probability $1 - \thres$,
    \begin{align}
    \forall \obj\in\funcs, \quad    \emprisk[\radius^2][\reg] \geq \ex_{\sample \sim \prob} \ex_{\samplealt \sim \basecpl(\cdot | \sample)} \left[ \obj(\samplealt) \right] .
    \end{align}
\end{theorem}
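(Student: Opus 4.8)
The plan is to replay the architecture of the proof of \cref{thm:precise-unreg}: an explicit positive lower bound on the optimal dual multiplier feeds the concentration statement \cref{prop:template_general}, which converts $\emprisk[\radius^2][\reg]$ into a robust risk \wrt $\prob$ with a slightly shrunk radius. Two things change relative to the standard case: the near-critical dual bound is now \cref{prop:lb_dualvar_reg_crit} instead of \cref{prop:lb_dualvar_unreg_crit}, and two extra steps are needed to pass from the ``penalty-in-the-supremum'' formulation \cref{eq:emp-reg-risk} to a statement about the ambiguity set of the regularized Wasserstein distance, and then to the convolution bound. Throughout I would fix $\reg=\basereg\radius$, $\sdev=\basesdev\radius$, so that $\basereg/\basesdev^2\le\basedualvar/8$ together with $\radius\le\cstminradius$ put \cref{asm:rho_small} in force for small $\radius$. \emph{Step 1 — a $1/\radius$-type bound on the dual multiplier, uniform in $\radius$.} Mirroring \cref{cor:precise-conclusion-unreg}, split on whether $\radius\le\cstminradius$ or $\cstminradius\le\radius\le\half[\radius_c]$. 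In the first regime, \cref{prop:lb_dualvar_asymptotic} (invoked at confidence $\thres/4$) gives, with high probability, $\emprisk[\radius^2][\reg]=\inf_{\lbdualvar(\radius)\le\dualvar\le\ubdualvar(\radius)}\dualvar\radius^2+\ex_{\sample\sim\empirical}[\dualfunc(\func,\sample,\dualvar,\reg,\sdev)]$ with $\lbdualvar(\radius)=\basedualvar/(32\radius)$, $\ubdualvar(\radius)=\basedualvar/(2\radius)$. In the second regime, the two lower bounds on $\radius_c$ in the statement (an $\nsamples^{-1/6}$-type one controlling the Dudley error and an $\nsamples^{-1/4}$-type one controlling the $\Cost_\funcs$ error), together with $\radius_c^2\le\regcrit[\reg,\sdev]$ for every admissible $\radius$ — which is exactly why $\radius_c$ is defined as an infimum over the range of $\radius$ — make the hypotheses of \cref{prop:lb_dualvar_reg_crit} hold, and applying it (again at confidence $\thres/4$) yields the same restricted-dual identity with $\lbdualvar(\radius)=\lbdualvar_\nsamples$, $\ubdualvar(\radius)=\ub\dualvar$. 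In both regimes one checks $\lbdualvar(\radius)\ge\frac1\radius\min\big(\basedualvar/32,\ \basereg\cstminradius^2\radius_c^2/(4\ub\Var)\big)$ and a uniform upper bound on $\ubdualvar(\radius)$, and uses monotonicity of $\bdcstalt$ and $\lbcst$ (\cref{lemma:bound-rv}) to replace $\bdcstalt(\lbdualvar(\radius))$ and $\lbcst(\lbdualvar(\radius),\ubdualvar(\radius),\reg,\sdev)$ by the $\radius$-free constants $\bdcstalt(\basedualvar/(32\radius_c))$ and $\ub\lbcst$.

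\emph{Step 2 — concentration and shrunk radius.} Plugging the restricted-dual identity into \cref{prop:template_general} and using the sharper form in \cref{rmk:refined-template_general}, with probability $1-\thres$,
\[
\emprisk[\radius^2][\reg]\ \ge\ \inf_{\lbdualvar(\radius)\le\dualvar\le\ubdualvar(\radius)}\Big\{\dualvar\big(\radius^2-\minradius\big)+\ex_{\sample\sim\prob}[\dualfunc(\func,\sample,\dualvar,\reg,\sdev)]\Big\},
\]
with $\minradius=\minradius(\thres,\lbdualvar(\radius),\ubdualvar(\radius),\reg,\sdev)$. Substituting the Step 1 bounds in the definition of $\minradius$ gives $\minradius\le\radius\,\radius_\nsamples$ for the $\radius_\nsamples$ of the statement, whence $\radius^2-\minradius\ge\radius(\radius-\radius_\nsamples)\ge0$.

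\emph{Step 3 — from the penalized objective to the $\regalt$-ball and the convolution.} Set $\regalt(\radius)\defeq\reg/\lbdualvar(\radius)$. For $\probalt$ with $\wass[2,\regalt(\radius)][2]{\prob,\probalt}\le\radius(\radius-\radius_\nsamples)$, pick $\coupling$ with $\coupling_1=\prob$, $\coupling_2=\probalt$ and $\ex_\coupling[\half\sqnorm{\sample-\samplealt}]+\regalt(\radius)\dkl(\coupling|\basecpl)\le\radius^2-\minradius$. The Gibbs variational inequality applied conditionally in $\sample$ to the definition of $\dualfunc$, then integrated against $\prob$ (using $\dkl(\coupling|\basecpl)=\ex_{\sample\sim\prob}[\dkl(\coupling(\cdot|\sample)|\basecpl(\cdot|\sample))]$, valid since $\coupling$ and $\basecpl$ share the first marginal $\prob$), gives for every $\dualvar$
\[
\ex_{\sample\sim\prob}[\dualfunc(\func,\sample,\dualvar,\reg,\sdev)]\ \ge\ \ex_{\probalt}[\func]-\dualvar\,\ex_\coupling[\half\sqnorm{\sample-\samplealt}]-\reg\,\dkl(\coupling|\basecpl).
\]
Hence $\dualvar(\radius^2-\minradius)+\ex_{\sample\sim\prob}[\dualfunc]\ge\ex_{\probalt}[\func]+\dualvar\big(\radius^2-\minradius-\ex_\coupling[\half\sqnorm{\sample-\samplealt}]-(\reg/\dualvar)\dkl(\coupling|\basecpl)\big)$, and since $\dualvar\ge\lbdualvar(\radius)$ forces $\reg/\dualvar\le\regalt(\radius)$ while $\dualvar\ge0$, the bracketed term is nonnegative; combined with Step 2 this gives $\emprisk[\radius^2][\reg]\ge\ex_{\probalt}[\func]$, with $\regalt(\radius)=\reg/\lbdualvar(\radius)\le\reg\radius/\min\big(\basedualvar/32,\ \basereg\cstminradius^2\radius_c^2/(4\ub\Var)\big)$ by Step 1. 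The convolution bound is the special case $\coupling=\basecpl$, whose marginals are $\prob$ and $\probalt=$ the second marginal of $\basecpl$: then $\dkl(\basecpl|\basecpl)=0$ and $\ex_\coupling[\half\sqnorm{\sample-\samplealt}]\le\Cost(\sdev)$, and a direct estimate of the truncated-Gaussian second moment $\Cost(\basesdev\radius)$ under $\basesdev\le1$, $\sdev\le\init[\sdev]$ yields $\Cost(\sdev)\le\radius(\radius-\radius_\nsamples)$, so $\wass[2,\regalt(\radius)][2]{\prob,\probalt}\le\radius(\radius-\radius_\nsamples)$ and $\emprisk[\radius^2][\reg]\ge\ex_{\sample\sim\prob}\ex_{\samplealt\sim\basecpl(\cdot|\sample)}[\func(\samplealt)]$.

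\emph{Main obstacle.} The conceptually new point is Step 3: turning the penalty-in-the-supremum objective into a statement about the ambiguity set of $\wass[2,\regalt]$ is possible only because Step 1 supplies a strictly positive, explicit lower bound $\lbdualvar(\radius)$ on the dual multiplier, which is precisely what dictates the scaling $\regalt=\reg/\lbdualvar$. The genuinely hard part in practice is the bookkeeping of Step 1: verifying that the lower bound on $\radius$ from \cref{prop:lb_dualvar_asymptotic}, the lower bound from the Dudley/$\Cost_\funcs$ error terms in \cref{prop:lb_dualvar_reg_crit}, and the near-critical upper bound $\radius\le\half[\radius_c]$ are simultaneously feasible, and tracking the $\radius$-dependence of $\reg$ and $\sdev$ through the infimum-definition of $\radius_c$ and through the constants $\ub\Var$, $\ub\Cost_\funcs$, $\ub\lbcst$ — this is what forces the heavy form of $\radius_\nsamples$ in the statement.
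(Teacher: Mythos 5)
Your proposal is correct and follows the same overall architecture as the paper's proof (which runs through \cref{cor:precise-conclusion-reg}, \cref{prop:lb_dualvar_asymptotic}, \cref{prop:lb_dualvar_reg_crit}, \cref{prop:template_general} with \cref{rmk:refined-template_general}, and \cref{lemma:reg-nbd}): your Steps 1 and 2, including the case split at $\cstminradius$, the uniform $1/\radius$ lower bound on the dual multiplier via \cref{lemma:choice-radius-reg}-type feasibility checks, and the bound $\minradius\le\radius\,\radius_\nsamples$, match the paper's essentially verbatim. The genuine difference is in Step 3. The paper establishes $\sup_{\func}\{\ex_{\probalt}[\func]-\ex_{\sample\sim\prob}[\dualfunc(\func,\sample,\dualvar,\reg,\sdev)]\}\le\dualvar\,\wass*[2,\reg/\dualvar][2]{\prob,\probalt}$ by enlarging the supremum to all continuous functions, re-deriving the semi-dual formula for entropic \ac{OT} with respect to the reference measure $\basecpl$, and invoking the duality results of \citet{patyRegularizedOptimalTransport2020,feydy2019interpolating}; you instead apply the one-sided Gibbs (Donsker--Varadhan) variational inequality conditionally on $\sample$, integrate against $\prob$ using the chain rule for $\dkl$, and conclude by weak duality. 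Your route is more elementary and self-contained — only the lower-bound direction of the log-partition variational principle is needed, and no external duality theorem is cited — while the paper's route makes explicit that the quantity being controlled is exactly the regularized Wasserstein distance (an equality over all continuous test functions), which clarifies why $\regalt(\radius)=\reg/\lbdualvar(\radius)$ is the natural, tightest choice. Both yield identical conclusions, including the same value of $\regalt(\radius)$. One small remark: in the convolution step you pass from $\Cost(\sdev)\le\sdev^2=\basesdev^2\radius^2$ to $\le\radius(\radius-\radius_\nsamples)$ under $\basesdev\le1$ alone, which leaves a sliver of the same size as the one the paper itself glosses over when it concludes from \cref{lemma:reg-nbd}; this is not a defect relative to the paper's own standard of rigor.
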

The proof of \cref{thm:precise-reg} relies on \cref{cor:precise-conclusion-reg} that makes the regularized Wasserstein distance appear. It also uses \cref{lemma:reg-nbd}, to guarantee that a smoothed version of the true distribution is inside the right neighborhood.
\begin{lemma}\label{cor:precise-conclusion-reg}
    Fix a confidence threshold $\thres \in (0, 1)$, take $\reg = \basereg \radius$, $\sdev = \basesdev \radius$ with $\basereg$ and $\basesdev$ positive constants satisfying $\basereg / \basesdev^2 \leq \basedualvar / 8$ and, define  $\lbdualvar_\nsamples(\radius)$ and $\ubdualvar(\radius)$ as functions of $\radius$ by
    \begin{itemize}
        \item If
            \begin{equation}
                \assumptionRadiusAsympt\,,
            \end{equation}
            then $\lbdualvar_\nsamples(\radius) = \frac{\basedualvar}{32 \radius}$ and $\ubdualvar(\radius) = \frac{\basedualvar}{2 \radius}$,
        \item Otherwise,
            \begin{align}
                \lbdualvar_\nsamples(\radius) &=  \frac{\basereg \radius}{\Var(\basereg\radius,\basesdev\radius)}
        \parens*{
            \radius_c(\basereg\radius,\basesdev\radius)^2 - \radius^2
        - \parens*{ \frac{48\sqrt{\Var(\basereg\radius,\basesdev\radius)} \dudley}{\basereg \radius \sqrt{\nsamples}}
        + 2\Cost_\funcs(\basereg\radius,\basesdev\radius) \sqrt{\frac{\log \frac{4}{\thres}}{2\nsamples}}}
    }\\
    \ubdualvar(\radius) &=  \max \parens*{\frac{12 \basereg\radius}{\GeoRad^2} \log(2\times 6^{\dims/2}),e^{\frac{\sup_{\func \in \funcs}\norm{\func}_\infty}{\basereg\radius}} \frac{\basereg}{\radius}}\,. 
            \end{align} 
    \end{itemize}
    Assume that
    \begin{equation}
                \radius \geq \frac{8192}{\sqrt \nsamples \strong (\basedualvar)^2} \parens*{
        {12\dudley}
        + \left(\fbound + \ub\bdcst(\radius)\right) \sqrt{{1 + \log \frac{4}{\thres}}}
        }\,,
        \label{eq:generic-concentration-cor-precise-reg}
    \end{equation}
    Then, with probability at least $1 - \half[\thres]$,
    \begin{align}
        \forall \func \in \funcs,\quad
        \emprisk[\radius^2][\reg] = \inf_{\lbdualvar_\nsamples(\radius) \leq \dualvar \leq \ubdualvar(\radius)} \dualvar \radius^2 + \ex_{\sample\sim\empirical} \left[ \dualfunc(\func, \sample, \dualvar, \reg,\sdev) \right] 
    \end{align}
    and when $\radius^2 \geq \minradius(\thres, \lbdualvar_\nsamples(\radius), \ubdualvar(\radius), \reg,\sdev)$,
    with probability $1 - \thres$, it holds,
 \begin{align}
        \emprisk[\radius^2][\reg] &\geq                                                                           \risk[\radius^2- \minradius(\thres, \lbdualvar_\nsamples(\radius), \ubdualvar(\radius), \reg,\sdev)][\reg],.
    \end{align}
    Furthermore, with probability $ 1 - \thres$,
    \begin{equation}
        \forall \func \in \funcs,\quad
      \emprisk[\radius^2][\reg]
      \geq
      \sup \setdef*{ \ex_{\probalt}[\func]}{\probalt \in \probs\,, \wass*[2,\regalt(\radius)][2]{\prob, \probalt} \leq {\radius^2 - \minradius(\thres, \lbdualvar_\nsamples(\radius), \ubdualvar(\radius), \reg,\sdev)}}\,,
    \end{equation}
    with $\regalt(\radius) \defeq \frac{\basereg \radius}{\lbdualvar_\nsamples(\radius)}$.
\end{lemma}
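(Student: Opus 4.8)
The plan is to assemble the three building blocks already established — the concentration transfer \cref{prop:template_general} and the two dual lower bounds \cref{prop:lb_dualvar_asymptotic} (valid when $\radius$ is small) and \cref{prop:lb_dualvar_reg_crit} (valid when $\radius$ is near the critical radius) — and then to upgrade the resulting inequality into a statement about regularized-Wasserstein neighbourhoods of $\prob$; throughout I abbreviate $\minradius(\thres, \lbdualvar_\nsamples(\radius), \ubdualvar(\radius), \reg, \sdev)$ to $\minradius$. First I would establish the \emph{dual confinement}, namely that on a high-probability event the infimum defining $\emprisk[\radius^2][\reg]$ in \cref{lemma:strong-duality} is attained inside $[\lbdualvar_\nsamples(\radius), \ubdualvar(\radius)]$, by splitting on whether $\radius$ satisfies the smallness bound of \cref{asm:rho_small}. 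If it does, \cref{prop:lb_dualvar_asymptotic} applies with $\thres \gets \thres/4$ — and the hypothesis \eqref{eq:generic-concentration-cor-precise-reg} is exactly the lower bound on $\radius$ that proposition requires after this substitution (since $\log(4/\thres) = \log(1/(\thres/4))$) — yielding the interval $[\basedualvar/(32\radius), \basedualvar/(2\radius)]$ with probability at least $1 - \thres/4$. Otherwise I would invoke \cref{prop:lb_dualvar_reg_crit} (again with $\thres \gets \thres/4$, and using that in this regime $\radius$ also lies below the critical-radius threshold of that proposition, so $\lbdualvar_\nsamples(\radius) > 0$): its lower bound is precisely the critical-radius formula for $\lbdualvar_\nsamples(\radius)$, while its upper bound $\ubdualvar(\radius)$ holds unconditionally because the optimal multiplier is non-increasing in $\radius$. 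Either way, this produces hypothesis \eqref{condition_for_concentration} of \cref{prop:template_general} with probability at least $1 - \thres/4$.

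Next I would feed this into \cref{prop:template_general}: since the dual-form hypothesis \eqref{condition_for_concentration} holds with probability at least $1 - \thres/4 \geq 1 - \half[\thres]$, the proposition applies and, provided $\radius^2 \geq \minradius$, gives with probability at least $1 - \thres$
\[
    \emprisk[\radius^2][\reg] \geq \risk[\radius^2 - \minradius][\reg]\,,
\]
while on the same event the $\sup$-form stated last follows as explained below. Along the way I would use the monotonicity of the bounding constants $\bdcst$, $\lbcst$, $\Cost_\funcs$ and $\Var$ in their arguments (recorded around \cref{lemma:bound-rv} and in \cref{sec:detailedhyp}) to replace the $\radius$-dependent constants produced by \cref{prop:lb_dualvar_asymptotic,prop:lb_dualvar_reg_crit} by the $\radius_c$-indexed suprema that appear in the statement, so that the bound is uniform over the admissible range of $\radius$.

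The genuinely new ingredient is turning the display above into the neighbourhood statement. Here I would start from the refined conclusion of \cref{rmk:refined-template_general},
\[
    \emprisk[\radius^2][\reg] \geq \inf_{\lbdualvar_\nsamples(\radius) \leq \dualvar \leq \ubdualvar(\radius)} \bigl\{ \dualvar(\radius^2 - \minradius) + \ex_{\sample\sim\prob}[\dualfunc(\func, \sample, \dualvar, \reg, \sdev)] \bigr\}\,,
\]
and lower-bound $\ex_{\sample\sim\prob}[\dualfunc]$ by the Gibbs variational principle: for any $\probalt \in \probs$ and any coupling $\coupling$ with $\coupling_1 = \prob$, $\coupling_2 = \probalt$, applying Donsker--Varadhan to the definition of $\dualfunc$ conditionally on $\sample$, then integrating over $\prob$ and using the chain rule for the KL divergence (the first marginal of $\basecpl$ being $\prob$), gives
\[
    \ex_{\sample\sim\prob}[\dualfunc(\func, \sample, \dualvar, \reg, \sdev)] \geq \ex_\coupling[\func(\samplealt)] - \dualvar\, \ex_\coupling[\half \sqnorm{\sample - \samplealt}] - \reg\, \dkl(\coupling \,|\, \basecpl)\,.
\]
Minimizing $\ex_\coupling[\half \sqnorm{\sample - \samplealt}] + (\reg/\dualvar)\,\dkl(\coupling \,|\, \basecpl)$ over couplings with these marginals produces exactly $\wass[2,\reg/\dualvar][2]{\prob,\probalt}$ (the definition \eqref{eq:reg-wass}), so that $\dualvar(\radius^2 - \minradius) + \ex_{\sample\sim\prob}[\dualfunc] \geq \ex_\probalt[\func] + \dualvar\bigl(\radius^2 - \minradius - \wass[2,\reg/\dualvar][2]{\prob,\probalt}\bigr)$. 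Since $\regalt \mapsto \wass[2,\regalt][2]{\prob,\probalt}$ is non-decreasing and $\dualvar \geq \lbdualvar_\nsamples(\radius)$, the effective parameter $\reg/\dualvar$ never exceeds $\regalt(\radius) \defeq \reg/\lbdualvar_\nsamples(\radius) = \basereg\radius/\lbdualvar_\nsamples(\radius)$; hence whenever $\wass[2,\regalt(\radius)][2]{\prob,\probalt} \leq \radius^2 - \minradius$ the last parenthesis is non-negative for every admissible $\dualvar$, and taking first the infimum over $\dualvar$ and then the supremum over such $\probalt$ yields the final display of the statement.

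The main obstacle is this last step: one has to set up the Gibbs inequality so that precisely the regularized Wasserstein distance with the \emph{fixed} reference $\basecpl$ (rather than a $\func$-dependent exponential tilt of it, which is what appears in the duality formulas) shows up, and then cope with the fact that the effective regularization weight $\reg/\dualvar$ ranges over an interval as $\dualvar$ does — which is resolved by the monotonicity of $\wass[2,\regalt][2]{\prob,\cdot}$ in $\regalt$ together with $\dualvar \geq \lbdualvar_\nsamples(\radius)$. The remainder is bookkeeping: tracking the confidence budget through the union bounds and checking that all the constants can be made $\radius$-uniform by monotonicity.
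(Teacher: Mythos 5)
Your proposal is correct, and for the first two claims it coincides with the paper's proof: the paper likewise obtains the dual confinement by applying \cref{prop:lb_dualvar_asymptotic} and \cref{prop:lb_dualvar_reg_crit} with $\thres \gets \thres/4$ according to the same case split, and then feeds the result into \cref{prop:template_general}. Where you genuinely diverge is the final neighbourhood statement. The paper starts from the same refined bound of \cref{rmk:refined-template_general} but then proves the \emph{equality} $\sup_{\func}\{\ex_{\probalt}[\func] - \ex_{\prob}[\dualfunc(\func,\cdot,\dualvar,\reg,\sdev)]\} = \dualvar\,\wass[2,\reg/\dualvar][2]{\prob,\probalt}$ over all continuous test functions, by deriving a semi-dual formula for the inner log-partition term and invoking the duality theorem for entropy-regularized optimal transport (with external citations to establish it for a general reference measure). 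You instead apply the Gibbs/Donsker--Varadhan inequality $\log\ex_P[e^X]\geq\ex_Q[X]-\dkl(Q\,|\,P)$ conditionally on $\sample$ with $Q$ the disintegration of an arbitrary coupling, integrate against $\prob$, and use the KL chain rule (noting $\coupling_1=\prob=(\basecpl)_1$) before taking the infimum over couplings, which produces $\wass[2,\reg/\dualvar][2]{\prob,\probalt}$ directly from its primal definition \cref{eq:reg-wass}. This only yields the inequality in one direction, but that is all the argument needs, so your route is more elementary and self-contained: it replaces the appeal to the full regularized-OT duality theorem by a one-line Jensen-type bound. The closing monotonicity step ($\reg/\dualvar \leq \regalt(\radius)$ for $\dualvar\geq\lbdualvar_\nsamples(\radius)$ together with $\wass[2,\regalt][2]{\prob,\cdot}$ non-decreasing in $\regalt$) is identical in both proofs. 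The only (shared) gloss is that positivity of $\lbdualvar_\nsamples(\radius)$ in the large-$\radius$ branch implicitly requires the critical-radius condition of \cref{prop:lb_dualvar_reg_crit}, which is only verified later in the proof of \cref{thm:precise-reg}; you flag this, as does the paper.
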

\begin{proof}
    The first part of this result is a consequence of the combination of \cref{prop:lb_dualvar_asymptotic,prop:lb_dualvar_reg_crit}, both applied with $\thres \gets \thres / 4$, and of \cref{prop:template_general}.
    For the second part, note that \cref{rmk:refined-template_general} implies that the above argument actually gives the slightly stronger result:
    with probability $1 - \thres$, for any $\func \in \funcs$,
  \begin{align}
        \emprisk[\radius^2][\reg] \geq 
        \inf_{\lbdualvar_\nsamples(\radius) \leq \dualvar \leq \ubdualvar(\radius)} \dualvar(\radius^2 - \minradius(\thres,\lbdualvar_\nsamples(\radius),\ubdualvar(\radius),\reg,\sdev)) +\ex_{\sample \sim \prob}[\dualfunc(\func, \sample, \dualvar, \reg,\sdev)]
    \end{align}
    Next, take $\probalt \in \probs$ such that $\wass[2,\regalt(\radius)][2]{\prob, \probalt} \leq {\radius^2 - \minradius(\thres, \lbdualvar_\nsamples(\radius), \reg,\sdev)}$. With a similar argument as in the proof of \cref{prop:template_general}, we get that 
 \begin{align}
        \emprisk[\radius^2][\reg] &\geq
                                       \inf_{\lbdualvar_\nsamples(\radius) \leq \dualvar \leq \ubdualvar(\radius)} \dualvar(\radius^2 - \minradius(\thres,\lbdualvar_\nsamples(\radius),\ubdualvar(\radius),\reg,\sdev) +\ex_{\sample \sim \prob}[\dualfunc(\func, \sample, \dualvar, \reg,\sdev)]\\
                                       &= \ex_{\probalt}[\func] + 
                                       \inf_{\lbdualvar_\nsamples(\radius) \leq \dualvar \leq \ubdualvar(\radius)} \dualvar(\radius^2 - \minradius(\thres,\lbdualvar_\nsamples(\radius),\ubdualvar(\radius),\reg,\sdev) -  \braces*{\ex_{\probalt}[\obj] - \ex_{\sample \sim \prob}[\dualfunc(\func, \sample, \dualvar, \reg,\sdev)]}\\
   &= \ex_{\probalt}[\func] + 
   \inf_{\lbdualvar_\nsamples(\radius) \leq \dualvar \leq \ubdualvar(\radius)} \dualvar(\radius^2 - \minradius(\thres,\lbdualvar_\nsamples(\radius),\ubdualvar(\radius),\reg,\sdev)) -  \sup_{\alt\func \in \funcs\,}\braces*{{\ex_{\probalt}[\alt \func] - \ex_{\sample \sim \prob}[\dualfunc(\alt \func, \sample, \dualvar, \reg,\sdev)]}}\,.
    \end{align}
    We now proceed to show, and this will conclude the proof, that 
    \begin{equation}
        \sup_{\func \in \funcs}\braces*{\ex_{\probalt}[\func] - \ex_{\sample \sim \prob}[\dualfunc(\func, \sample, \dualvar, \reg,\sdev)]} \leq \dualvar \wass[2,{\regalt(\radius)}][2]{\prob, \probalt}\,,
    \end{equation}
    for $\dualvar \geq \lbdualvar_\nsamples(\radius)$.

    Indeed,
    \begin{align}
        \sup_{\func \in \funcs}\braces*{\ex_{\probalt}[\func] - \ex_{\sample \sim \prob}[\dualfunc(\func, \sample, \dualvar, \reg,\sdev)]} 
        &\leq 
        \sup_{\func \in \contfuncs[\samples]}\braces*{\ex_{\probalt}[\func] - \ex_{\sample \sim \prob}[\dualfunc(\func, \sample, \dualvar, \reg,\sdev)]}\\
        &= 
        \sup_{\func \in \contfuncs[\samples]}\braces*{\ex_{\probalt}[\func] - \ex_{\sample \sim \prob}[\log \parens*{\ex_{\samplealt \sim \basecpl(\cdot | \sample)}[e^{\frac{\func(\samplealt) - \dualvar \sqnorm{\sample -  \samplealt}/2}{\reg}}]}]}\\
        &=
       \dualvar \sup_{\func \in \contfuncs[\samples]}\braces*{\ex_{\probalt}[\func] - \ex_{\sample \sim \prob}[\log \parens*{\ex_{\samplealt \sim \basecpl(\cdot | \sample)}[e^{\frac{\func(\samplealt) - \sqnorm{\sample -  \samplealt}/2}{\reg / \dualvar }}]}]}\,.
       \label{eq:proof-conclusion-reg-bound}
    \end{align}
    where we performed the change of variable $\func \gets \func / \dualvar$.
    We now show the following equality that will allow us to rewrite the \ac{RHS} of \cref{eq:proof-conclusion-reg-bound}.
    \begin{equation}
- \ex_{\sample \sim \prob}[\log \parens*{\ex_{\samplealt \sim \basecpl(\cdot | \sample)}[e^{\frac{\func(\samplealt) - \half \sqnorm{\sample -  \samplealt}}{\reg / \dualvar }}]}]
        =
        \sup_{\funcalt \in \contfuncs[\samples]} \ex_{\prob}[\funcalt] - \frac{\reg}{\dualvar}\parens*{\ex_{(\sample, \samplealt) \sim \basecpl}[e^{\frac{\funcalt(\sample) + \func(\samplealt) - \half \sqnorm{\sample -  \samplealt}}{\reg / \dualvar}}] - 1}\,.\label{eq:proof-conclusion-reg-bound-funcpb}
    \end{equation}
    Solving the optimality condition of the concave problem of the \ac{RHS} of \cref{eq:proof-conclusion-reg-bound-funcpb}
     gives that its maximum is reached for 
    \begin{equation}
        \funcalt(\sample) = - \log \parens*{\ex_{\samplealt \sim \basecpl(\cdot | \sample)}[e^{\frac{\func(\samplealt) - \half \sqnorm{\sample -  \samplealt}}{\reg / \dualvar }}]}
    \end{equation}
    so that \cref{eq:proof-conclusion-reg-bound-funcpb} holds.
    Hence, we get that
    \begin{align}
       &\sup_{\func \in \contfuncs[\samples]}\braces*{\ex_{\probalt}[\func] - \ex_{\sample \sim \prob}[\log \parens*{\ex_{\samplealt \sim \basecpl(\cdot | \sample)}[e^{\frac{\func(\samplealt) - \half \sqnorm{\sample -  \samplealt}}{\reg / \dualvar }}]}]}\\
       &=
       \sup_{\func, \funcalt \in \contfuncs[\samples]}\braces*{\ex_{\probalt}[\func] + \ex_{\prob}[\funcalt] - \frac{\reg}{\dualvar}\parens*{\ex_{(\sample, \samplealt) \sim \basecpl}[e^{\frac{\funcalt(\sample) + \func(\samplealt) - \half \sqnorm{\sample -  \samplealt}}{\reg / \dualvar}}] - 1}}\\
       &= \wass*[2,\reg/\dualvar][2]{\prob, \probalt}\,,
    \end{align}
    by the duality formula for regularized \ac{OT} \citep{peyre2019computational}\footnote{To get this exact result for a regularization \wrt an arbitrary measure, one can readily combine \citet[Cor.~1]{patyRegularizedOptimalTransport2020} and \citet[Prop.~7]{feydy2019interpolating}. Also, note that we essentially reproved the semi-duality formula of \citet[Prop.~2.1]{genevayStochasticOptimizationLargescale2016} except that the regularization is taken \wrt a general measure.}.

    Combining this equality with the bound of \cref{eq:proof-conclusion-reg-bound} gives
    \begin{equation}
     \sup_{\func \in \funcs}\braces*{\ex_{\probalt}[\func] - \ex_{\sample \sim \prob}[\dualfunc(\func, \sample, \dualvar, \reg,\sdev)]} 
     \leq \dualvar \wass*[2,\reg/\dualvar][2]{\prob, \probalt},
    \end{equation}
    which yields the result since $\wass[2,\regalt][2]{\prob, \probalt}$ is non-decreasing in $\regalt$.
\end{proof}

\begin{lemma}\label{lemma:reg-nbd}
    In the setting of \cref{thm:precise-reg},
when $\probalt_\sdev$ denotes the second marginal of
\begin{equation}
    \prob(\dd \sample) \basecpl(\dd \samplealt | \sample)\,,
\end{equation}
and when $\sdev \leq \init[\sdev]$%
, it holds
\begin{equation}
    \wass*[2,\regalt(\radius)][2]{\prob, \probalt_\sdev} \leq \sdev^2\,.
\end{equation}
\end{lemma}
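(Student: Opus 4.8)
The plan is to upper bound the regularized distance $\wass*[2,\regalt(\radius)][2]{\prob, \probalt_\sdev}$ by testing the infimum in its definition \cref{eq:reg-wass} with a single, well-chosen transport plan, namely the reference plan $\basecpl = \prob(\dd\sample)\,\basecpl(\dd\samplealt | \sample)$ of \cref{eq:true-reg-risk} itself. Its first marginal is $\prob$ by construction, and its second marginal is exactly $\probalt_\sdev$ by the very definition of $\probalt_\sdev$ in the statement, so $\basecpl$ is admissible in the infimum. Since $\dkl(\basecpl | \basecpl) = 0$, the entropic penalty vanishes at this plan and only the transport cost survives, which gives
\begin{equation*}
    \wass*[2,\regalt(\radius)][2]{\prob, \probalt_\sdev}
    \leq
    \ex_{(\sample,\samplealt)\sim\basecpl}[\half \sqnorm{\sample - \samplealt}]
    \leq
    \sup_{\sample \in \supp \prob}\ex_{\samplealt\sim\basecpl(\cdot | \sample)}[\half \sqnorm{\sample - \samplealt}]
    =
    \Cost(\sdev),
\end{equation*}
where $\Cost(\sdev)$ is the quantity introduced in \cref{sec:detailedhyp} and the middle inequality uses that $\basecpl$ has first marginal $\prob$, supported on $\supp \prob$. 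Note this estimate is uniform in the value of $\regalt(\radius)$ coming from \cref{cor:precise-conclusion-reg}, so no delicate tuning of the regularization strength is needed.

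It then remains to show $\Cost(\sdev) \leq \sdev^2$ whenever $\sdev \leq \init[\sdev]$. Fixing $\sample \in \supp \prob$ and spelling out the conditional reference measure \cref{eq:baseconditional}, one writes
\begin{equation*}
    \ex_{\samplealt\sim\basecpl(\cdot | \sample)}[\half \sqnorm{\sample - \samplealt}]
    =
    \frac{\int_\samples \half \sqnorm{\sample - \samplealt}\, e^{-\sqnorm{\sample - \samplealt}/(2\sdev^2)}\,\dd\samplealt}{\partition}.
\end{equation*}
For the numerator, I would enlarge the domain of integration from $\samples$ to $\R^\dims$ and evaluate the resulting untruncated Gaussian second moment; for the denominator, \cref{lemma:bd-normalization-cst} supplies the lower bound $\partition \geq \half (2\pi\sdev^2)^{\dims/2}$, which is exactly the point where the hypothesis $\sdev \leq \init[\sdev]$ enters. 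Combining the two estimates and taking the supremum over $\sample \in \supp \prob$ delivers the bound on $\Cost(\sdev)$, hence on $\wass*[2,\regalt(\radius)][2]{\prob, \probalt_\sdev}$, which concludes the proof.

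I do not anticipate a genuine obstacle here: the argument reduces to the feasibility of $\basecpl$ in the infimum --- immediate once the definitions of $\basecpl$ and of $\probalt_\sdev$ are unwound --- followed by a routine Gaussian moment computation. The only point that requires some care is obtaining an explicit, $\sdev$-dependent lower bound on the normalizing constant $\partition$ (a bare positivity bound would not suffice), which is precisely why the smallness condition $\sdev \leq \init[\sdev]$ on the reference spread is imposed and is already discharged by \cref{lemma:bd-normalization-cst}; in particular, since the test plan $\basecpl$ realizes zero KL divergence, there is no balancing of the transport cost against the entropic term to perform.
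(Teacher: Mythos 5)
Your proposal is correct and follows essentially the same route as the paper: test the infimum defining $\wass[2,\regalt(\radius)][2]{\prob,\cdot}$ with the reference plan $\basecpl$ itself (whose KL term vanishes), then bound the conditional second moment by extending the numerator integral to $\R^\dims$ and lower-bounding the normalizing constant via \cref{lemma:bd-normalization-cst}, which is where $\sdev\leq\init[\sdev]$ enters. One shared caveat worth noting when you actually evaluate the untruncated Gaussian integral: the second moment carries a factor of $\dims$, so the honest conclusion of this computation is a bound of order $\dims\sdev^2$ rather than $\sdev^2$ exactly (the paper's own write-up elides this dimension factor as well).
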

\begin{proof}
   Consider the transport plan $\coupling = \prob(\dd \sample) \basecpl(\dd \samplealt | \sample)$. To show this lemma, it suffices to prove that
   \begin{equation}
       \ex_{\coupling}[\cost] + \regalt(\radius) \dkl(\coupling | \coupling)  = \bigoh \parens*{\sdev^2}\,,
   \end{equation}
   \ie that 
    $\ex_{\coupling}[\cost] = \bigoh \parens*{\sdev^2}$.
Let us first fix $\sample \in \supp \prob$ and consider $\ex_{\samplealt \sim \basecpl(\cdot | \sample)}[\half \sqnorm{\sample -  \samplealt}]$, which is equal to
    \begin{equation}
    \ex_{\samplealt \sim \basecpl(\cdot | \sample)}[\half \sqnorm{\sample -  \samplealt}]
        =
        \frac
        {\int_\samples \half \sqnorm{\sample -  \samplealt} e^{-\frac{\norm{\sample - \samplealt}^2}{2 \sdev^2}} \dd \samplealt}
        {\int_\samples  e^{-\frac{\norm{\sample - \samplealt}^2}{2 \sdev^2}} \dd \samplealt}\,.
    \end{equation}
    The numerator can be upper-bounded as follows:
    \begin{align}
        {\int_\samples \half \sqnorm{\sample -  \samplealt} e^{-\frac{\norm{\sample - \samplealt}^2}{2 \sdev^2}} \dd \samplealt}
        \leq 
        {\int_{\R^\dims} \half \sqnorm{\sample -  \samplealt} e^{-\frac{\norm{\sample - \samplealt}^2}{2 \sdev^2}} \dd \samplealt}
        =  (2 \pi \sdev^2)^{\dims/2}\half[\sdev^2]\,.
    \end{align}
    For the denominator, we have seen in the proof of \cref{lemma:bd-normalization-cst}, and more precisely \cref{eq:proof-bd-normalizing-cst-precise}, that
    \begin{equation}
        \parens*{\int_\samples  e^{-\frac{\norm{\sample - \samplealt}^2}{2 \sdev^2}} \dd \samplealt}^{-1}
    \leq \frac{2}{(2 \pi \sdev^2)^{\dims/2}}\,,
    \end{equation}
    when $\sdev \leq \init[\sdev]$.
    Hence, we have the bound
    \begin{equation}
        \ex_{\samplealt \sim \basecpl(\cdot | \sample)}[\half \sqnorm{\sample -  \samplealt}]
        \leq 
        \sdev^2\,,
    \end{equation}
    and integrating \wrt $\sample \sim \prob$ yields the result.
\end{proof}
\begin{proof}[Proof of \cref{thm:precise-reg}]
    Since we will only consider radii in particular bounded by $\radius_c$, the condition \cref{eq:generic-concentration-cor-precise-reg} is implied by 
\begin{equation}
                  \radius \geq \frac{8192}{\strong (\basedualvar)^2 \sqrt \nsamples} \parens*{
        {12\dudley}
        + \left(\fbound + \ub\bdcst(\radius_c)\right) \sqrt{{\log \frac{4}{\thres}}}
        }\,,
\end{equation}
    with $\ub\bdcst(\radius_c) < + \infty$.

    We  now show that $\lbdualvar_\nsamples(\radius)$ can always be lower-bounded by a quantity proportional to $1/\radius$, \ie that 
    \begin{equation}\label{eq:proof-precise-reg-lbdualvar}
              \lbdualvar_\nsamples(\radius)
              \geq
              \frac{1}{\radius}
              \min \parens*{
                  \frac{\basedualvar}{32},
                  \frac{
                    \basereg \cstminradius^2 \radius_c^2
                  }
                  {
                    4 \ub \Var 
                  }
              }\,.
            \end{equation}

    Let us discuss separately the cases where $\radius \leq \cstminradius$ holds or not.
    \begin{itemize}
        \item When $\radius \leq \cstminradius$, \cref{eq:proof-precise-reg-lbdualvar} holds by definition of $\lbdualvar_\nsamples(\radius)$.
        \item When $\radius > \cstminradius$, by definition, $\lbdualvar_\nsamples(\radius)$ is lower bounded as 
            \begin{equation}
\lbdualvar_\nsamples(\radius) \geq
                 \frac{\basereg \radius}{\ub \Var}
        \parens*{
            \radius_c^2 
        - \radius^2
        - \parens*{ \frac{48\sqrt{\ub \Var} \dudley}{\basereg \radius \sqrt{\nsamples}}
        + 2 \ub \Cost_\funcs\sqrt{\frac{\log \frac{4}{\thres}}{2\nsamples}}}
    }\,.
            \end{equation}
            Applying \cref{lemma:choice-radius-reg} with $\ub \radius \gets \frac{\radius_c}{2}$ and $\const \gets \frac{48\sqrt{\ub \Var} \dudley}{\basereg \sqrt{\nsamples}}$, we obtain that, when
            \begin{equation}
                \radius_c \geq \parens*{\frac{192\sqrt{\ub \Var} \dudley}{\basereg \sqrt{\nsamples}}}^{1/3}
            \end{equation}
            and
            \begin{equation}
                \frac{384\sqrt{\ub \Var} \dudley}{\basereg \radius_c^2 \sqrt{\nsamples}}\leq \radius \leq \half[ \radius_c] - \frac{384\sqrt{\ub \Var} \dudley}{\basereg \radius_c^2 \sqrt{\nsamples}}\,,
            \end{equation} the following lower-bound holds,
            \begin{align}
                \lbdualvar_\nsamples(\radius) &\geq
                 \frac{\basereg \radius}{\ub \Var}
        \parens*{
            \frac{3\radius_c^2}{4}
        - 2 \ub \Cost_\funcs\sqrt{\frac{\log \frac{4}{\thres}}{2\nsamples}}
    }
    \geq
        \frac{\basereg \radius \radius_c^2}{4\ub \Var}
    \geq
    \frac{\basereg \cstminradius^2 \radius_c^2}{4\ub \Var \radius}\,,
            \end{align}
    where we used successively that $\frac{\radius_c^2}{2}
    \geq 2 \ub \Cost_\funcs\sqrt{\frac{\log \frac{4}{\thres}}{2\nsamples}}$ and $\radius \geq \cstminradius$. This concludes the proof of \cref{eq:proof-precise-reg-lbdualvar}. Note that it implies the bound on $\regalt(\radius)$ in the statement.
    \end{itemize}
    Let us finally turn our attention to the condition $\radius^2 \geq \minradius(\thres, \lbdualvar_\nsamples(\radius), \ubdualvar(\radius), \reg,\sdev)$. Since $\sup_{0 < \radius \leq \radius_c } \bdcstalt(\basedualvar/(32 \radius)) = \bdcstalt(\basedualvar/(32 \radius_c)) < +\infty$ by definition (\cref{assumption:funcs}) and
    \begin{align}
        &\sup_{0 < \radiusalt \leq \radius_c}  \lbcst \parens*{ \lbdualvar_\nsamples(\radiusalt), \ubdualvar(\radiusalt),\basereg \radiusalt, \basereg \sdev}\\
        &\leq 
\sup_{0 < \radiusalt \leq \radius_c}  \lbcst \parens*{ 
              \frac{1}{\radiusalt}
              \min \parens*{
                  \frac{\basedualvar}{32},
                  \frac{
                    \basereg \cstminradius^2 \radius_c^2
                  }
                  {
                    4 \ub \Var 
                  }
              }, 
              \max \parens*{
                  \frac{\basedualvar}{2 \radiusalt},
                  \frac{12 \basereg \radius_c \log(2 \times 6^{\dims/2})}{\GeoRad^2},
              \frac{e^{\frac{\sup_{\func \in \funcs} \norm{\func}_\infty}{\basereg \cstminradius}} \basereg}{\cstminradius}
              }
          ,\basereg \radiusalt, \basesdev \radiusalt} \\
          &= \ub \lbcst < + \infty\\
    \end{align}
    where we used the monotonicity properties of $\lbcst$ (\cref{lemma:bound-rv}) and \cref{eq:proof-precise-reg-lbdualvar}.

    In conclusion, along with \cref{eq:proof-precise-reg-lbdualvar}, we obtain that
  \begin{align}
          &\minradius(\thres, \lbdualvar(\radius), \ubdualvar(\radius), \reg, \sdev)\\
          &\leq 
        \frac{117}{\sqrt \nsamples \lbdualvar(\radius)}
        \parens*{
             \dudley[\funcs, \norm{\cdot}_\infty]
             + \max \parens*{\bdcstalt \parens*{\frac{\basedualvar}{32 \radius_c}},\ub \lbcst }
            \parens*{
            1
    + \sqrt{ \log \frac{1}{\thres}}}}\\
          &\leq \radius_\nsamples \radius\,,
   \end{align}
   by definition of $\radius_\nsamples$.
   The last part of the statement then follows from \cref{lemma:reg-nbd}.
\end{proof}

\section{Upper-bound on the empirical robust risk}
\label{sec:upperbounds}

In this section we prove \cref{thm:sandwich} that complements the main results by providing both a lwoer and an upper bound on the empirical robus risk. In view of the previous section, the missing part is ther upper-bound, that we establish in this section.

The proof of the upper-bound is similar to the proof of our main results, yet simpler. Indeed, the bounds on the dual variable are required for the true distribution $\prob$, which is fixed, instead of the empirical distribution $\empirical$.
We slightly modify our main concentration result (\cref{prop:template_general}) in \cref{prop:template_general_alt}. We simplify our bounds on the dual multiplier when the radius is close to the critical radius (\cref{prop:lb_dualvar_unreg_crit,prop:lb_dualvar_reg_crit}) in  \cref{prop:lb_dualvar_unreg_crit_alt,prop:lb_dualvar_reg_crit_alt}.

\subsection{From empirical to true risk}

\begin{proposition}\label{prop:template_general_alt}
    For $\radius > 0$, $\reg \geq 0$, $\sdev > 0$ and $\thres \in (0, 1)$, assume that there is some $0<\lbdualvar\leq\ubdualvar < +\infty$ such that, 
    \begin{align}
        \forall \func \in \funcs,\quad
        \risk[\radius^2][\reg] &= \inf_{\lbdualvar \leq \dualvar \leq \ubdualvar} \dualvar \radius^2 + \ex_{\sample\sim\prob} \left[ \dualfunc(\func, \sample, \dualvar, \reg,\sdev) \right] \,.
    \end{align}
    Then, when $\radius^2 \geq \minradius(\thres, \lbdualvar, \ubdualvar, \reg,\sdev)$,  with probability $1 - \thres$, %
 \begin{align}
    \forall \func \in \funcs,\quad \emprisk[\radius^2 - \minradius(\thres, \lbdualvar, \ubdualvar, \reg,\sdev)][\reg] \leq \risk[\radius^2][\reg] \,.
    \end{align}
\end{proposition}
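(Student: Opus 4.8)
The plan is to mirror the proof of \cref{prop:template_general}, interchanging the roles of $\prob$ and $\empirical$ throughout; the key simplification is that the standing hypothesis here bears on the \emph{true} robust risk $\risk[\radius^2][\reg]$, a deterministic quantity, so no union bound is needed for it and the whole statement holds on the single concentration event. For brevity write $\minradius$ for $\minradius(\thres,\lbdualvar,\ubdualvar,\reg,\sdev)$ and set $\radius_\star^2 \defeq \radius^2 - \minradius$, which is nonnegative by hypothesis.

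First I would invoke strong duality, \cref{lemma:strong-duality}, to write for every $\func\in\funcs$
\[
  \emprisk[\radius_\star^2][\reg]
  = \inf_{\dualvar\geq 0}\braces*{\dualvar\radius_\star^2 + \ex_{\sample\sim\empirical}[\dualfunc(\func,\sample,\dualvar,\reg,\sdev)]}
  \leq \inf_{\lbdualvar\leq\dualvar\leq\ubdualvar}\braces*{\dualvar\radius_\star^2 + \ex_{\sample\sim\empirical}[\dualfunc(\func,\sample,\dualvar,\reg,\sdev)]},
\]
since restricting the feasible set of an infimum can only raise its value. Then I would apply the concentration estimate of \cref{lemma:template_general_lemma}: its proof goes through the two-sided deviation bound of statement $(b)$ of \cref{lemma:concentration_basic}, so in fact, with probability at least $1-\half[\thres]$ (hence a fortiori with probability at least $1-\thres$), one has for all $\func\in\funcs$ and all $\dualvar\in[\lbdualvar,\ubdualvar]$
\[
  \ex_{\sample\sim\emp}[\dualfunc(\func,\sample,\dualvar,\reg,\sdev)] - \ex_{\sample\sim\prob}[\dualfunc(\func,\sample,\dualvar,\reg,\sdev)] \leq \dualvar\,\minradius.
\]

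On this event I would substitute the last display inside the infimum, term by term, and telescope the contribution $\dualvar\,\minradius$ against the $-\dualvar\,\minradius$ contained in $\dualvar\radius_\star^2 = \dualvar\radius^2 - \dualvar\,\minradius$, obtaining for every $\func\in\funcs$
\[
  \emprisk[\radius_\star^2][\reg]
  \leq \inf_{\lbdualvar\leq\dualvar\leq\ubdualvar}\braces*{\dualvar\radius^2 + \ex_{\sample\sim\prob}[\dualfunc(\func,\sample,\dualvar,\reg,\sdev)]}
  = \risk[\radius^2][\reg],
\]
the last equality being precisely the hypothesis of the proposition; this is the claimed bound.

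The only point requiring care — and the main, admittedly minor, obstacle — is checking that \cref{lemma:template_general_lemma} delivers the direction $\ex_\emp - \ex_\prob \leq \dualvar\,\minradius$ and not merely $\ex_\prob - \ex_\emp \leq \dualvar\,\minradius$ as literally stated there. This is immediate since its proof bounds the supremum over $\funcs\times[\ubdualvar^{-1},\lbdualvar^{-1}]$ through the \emph{two-sided} statement $(b)$ of \cref{lemma:concentration_basic}; equivalently, one reruns the short argument of \cref{lemma:template_general_lemma} verbatim with $\prob$ and $\emp$ interchanged, which is legitimate because the Lipschitz and boundedness inputs (\cref{lemma:lipschitz} and \cref{lemma:bound-rv}) are symmetric in the two measures. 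Everything else is the same restriction-of-infimum manipulation already carried out in \cref{prop:template_general}.
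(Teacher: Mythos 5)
Your proposal is correct and follows essentially the same route as the paper's proof: swap the roles of $\prob$ and $\empirical$ in \cref{lemma:template_general_lemma} (the paper does this by invoking statement $(a)$ of \cref{lemma:concentration_basic} in place of $(b)$, which is exactly the symmetric rerun you describe; note that $(b)$ is one-sided, not two-sided, so your fallback argument is the one that actually matters) and then perform the same restriction-of-infimum manipulation, merely presented in the reverse direction (from $\emprisk[\radius^2-\minradius][\reg]$ up to $\risk[\radius^2][\reg]$ rather than from $\risk[\radius^2][\reg]$ down).
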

\begin{proof}
    This proof closely mimics the one of \cref{prop:template_general} but switches the roles of $\prob$ and $\empirical$.
    First, note that by following the proof of \cref{lemma:template_general_lemma} with and replacing $\prob$ by $\empirical$ and \emph{vice versa} (and using statement $(a)$ of \cref{lemma:concentration_basic} instead of $(b)$) yields the following
            \begin{align}
                \sup_{(\func, \dualvar) \in \funcs \times [\lbdualvar, \ubdualvar]} \braces*{\frac{\ex_{\sample \sim \empirical}[\dualfunc(\func, \sample, \dualvar, \reg,\sdev)] - \ex_{\sample \sim \prob}[\dualfunc(\func, \sample, \dualvar, \reg,\sdev)]}{\dualvar}}
            \leq 
            \minradius(\thres, \lbdualvar, \ubdualvar, \reg,\sdev)\,.
        \end{align}
    We can now follow the last part of the proof of \cref{prop:template_general}. On the event above, for any $\func \in \funcs$,
        \begin{align}
            \risk[\radius^2][\reg] &= \inf_{\lbdualvar \leq \dualvar \leq \ubdualvar} \left\{ \dualvar \radius^2 + \ex_{\sample\sim\prob} \left[ \dualfunc(\func, \sample, \dualvar, \reg,\sdev) \right] \right\}\\
                                            &= \inf_{\lbdualvar \leq \dualvar \leq \ubdualvar}\left\{ \dualvar \radius^2 + \ex_{\sample\sim\emp} \left[ \dualfunc(\func, \sample, \dualvar, \reg,\sdev) \right] - \dualvar \frac{\ex_{\sample \sim \emp}[\dualfunc(\func, \sample, \dualvar, \reg,\sdev)] - \ex_{\sample \sim \prob}[\dualfunc(\func, \sample, \dualvar, \reg,\sdev)]}{\dualvar} \right\}\\
                                            &\geq \inf_{\lbdualvar \leq \dualvar \leq \ubdualvar}\left\{ \dualvar \radius^2 + \ex_{\sample\sim\emp} \left[ \dualfunc(\func, \sample, \dualvar, \reg,\sdev) \right] - \dualvar \sup_{\lbdualvar \leq \alt\dualvar \leq \ubdualvar}\frac{\ex_{\sample \sim \emp}[\dualfunc(\func, \sample, \alt\dualvar, \reg,\sdev)] - \ex_{\sample \sim \prob}[\dualfunc(\func, \sample, \alt \dualvar, \reg,\sdev)]}{\alt\dualvar} \right\}\\
                                            &\geq \inf_{\lbdualvar \leq \dualvar \leq \ubdualvar}\left\{ \dualvar \radius^2 + \ex_{\sample\sim\empirical} \left[ \dualfunc(\func, \sample, \dualvar, \reg,\sdev) \right] - \dualvar \minradius(\thres, \lbdualvar, \ubdualvar, \reg,\sdev)\right\} \\
                                            &\geq \emprisk[\radius^2- \minradius(\thres, \lbdualvar, \ubdualvar, \reg,\sdev)][\reg]\,.
        \end{align}

\end{proof}

\subsection{Standard \ac{WDRO} case}

\begin{proposition}\label{prop:lb_dualvar_unreg_crit_alt}
    Let \cref{assumption:add-geometric} hold and fix a threshold $\thres \in (0, 1)$. Assume that $\radius^2 \leq \regcrit[0,0]$.
    Then, we have,   
    \begin{align}
        \forall \func \in \funcs,\quad
        \risk[\radius^2] = \inf_{\lbdualvar \leq \dualvar} \dualvar \radius^2 + \ex_{\sample\sim\prob} \left[ \dualfunc(\func, \sample, \dualvar, 0) \right] 
    \end{align}
    where the dual bound $\lbdualvar$ is defined as
    \begin{align}
        \lbdualvar \defeq \min \parens*{{\afterinit[\dualvar]},
        \frac{
           \regcrit[0,0] -  \radius^2
    }{\tmplipcst}}\,,
    \end{align}
    and $\tmplipcst$ is defined in \cref{prop:lb_dualvar_unreg_crit}.
\end{proposition}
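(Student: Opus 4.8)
The plan is to reproduce the argument behind \cref{prop:lb_dualvar_unreg_crit} in the \emph{population} regime: since $\prob$ is fixed, no concentration inequality is needed and it suffices to study directly the dual objective associated with $\prob$. By the strong duality of \cref{lemma:strong-duality} in the unregularized case, for any $\func \in \funcs$ one has $\risk[\radius^2] = \inf_{\dualvar \geq 0} \dualfuncalt(\dualvar)$ with
\begin{equation}
    \dualfuncalt(\dualvar) \defeq \dualvar \radius^2 + \ex_{\sample \sim \prob}\left[ \dualfunc(\func, \sample, \dualvar, 0) \right]\,.
\end{equation}
As a supremum of functions that are affine in $\dualvar$, each $\dualvar \mapsto \dualfunc(\func, \sample, \dualvar, 0)$ is convex, hence so is $\dualfuncalt$; it is then enough to show that the right derivative $\partial_\dualvar \dualfuncalt$ is non-positive on $[0, \lbdualvar]$, since convexity forces every minimizer over $\R_+$ to lie in $[\lbdualvar, +\infty)$, which is the claimed identity.

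First I would compute $\partial_\dualvar \dualfuncalt$. Exactly as in the proof of \cref{prop:lb_dualvar_unreg_crit}, the right derivative of $\dualvar \mapsto \dualfunc(\func, \sample, \dualvar, 0)$ equals $-\min\setdef*{\half \sqnorm{\sample -  \samplealt}}{\samplealt \in \argmax_\samples \func - \half[\dualvar]\sqnorm{\sample -  \cdot}}$, and interchanging the derivative with the expectation (legitimate since everything is bounded over the compact set $\samples$) gives
\begin{equation}
    \partial_\dualvar \dualfuncalt(\dualvar)
    = \radius^2 - \ex_{\sample \sim \prob}\left[ \min\setdef*{\half \sqnorm{\sample -  \samplealt}}{\samplealt \in \argmax_\samples \func - \half[\dualvar]\sqnorm{\sample -  \cdot}} \right]\,.
\end{equation}
The key input is then \cref{lemma:lipschitz-unreg-one-sample}: for $\dualvar \leq \afterinit[\dualvar]$ and every $\sample \in \supp \prob$, the inner minimum is bounded below by $(1 - 16\dualvar/\strongOpt)\,\half \distance^2(\sample, \argmax \func)$, with this factor non-negative because $\afterinit[\dualvar] \leq \strongOpt/16$. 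Using in addition $\ex_{\sample \sim \prob}[\half \distance^2(\sample, \argmax \func)] \geq \regcrit[0,0]$ together with $\ex_{\sample \sim \prob}[\half \distance^2(\sample, \argmax \func)] \leq \sup_{\func \in \funcs}\ex_{\sample \sim \prob}[\half \distance^2(\sample, \argmax \func)] = \strongOpt \tmplipcst / 16$, I obtain for $\dualvar \leq \afterinit[\dualvar]$
\begin{equation}
    \partial_\dualvar \dualfuncalt(\dualvar) \leq \radius^2 - \regcrit[0,0] + \tmplipcst \dualvar\,.
\end{equation}

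To conclude, the right-hand side is non-positive as soon as $\dualvar \leq (\regcrit[0,0] - \radius^2)/\tmplipcst$, a non-negative threshold precisely because of the standing assumption $\radius^2 \leq \regcrit[0,0]$. Hence $\partial_\dualvar \dualfuncalt(\dualvar) \leq 0$ for every $\dualvar \leq \lbdualvar = \min(\afterinit[\dualvar], (\regcrit[0,0] - \radius^2)/\tmplipcst)$, so $\dualfuncalt$ is non-increasing on $[0, \lbdualvar]$; combined with its convexity on $\R_+$, restricting the infimum of $\dualfuncalt$ to $\dualvar \geq \lbdualvar$ does not change its value, which is the stated equality. I do not anticipate any genuine obstacle: all the delicate analysis — namely, exploiting the quadratic-growth condition of \cref{assumption:add-geometric-simple} to control how $\argmax \func$ moves when the small penalty $\half[\dualvar]\sqnorm{\sample -  \cdot}$ is subtracted — has already been carried out in \cref{lemma:lipschitz-unreg-one-sample}; the only point requiring a touch of care is the degenerate case $\tmplipcst = 0$ (which forces $\argmax \func$ to be $\prob$-almost surely the singleton $\{\sample\}$, hence $\radius = 0$), easily handled separately or excluded in the genuinely robust regime $\radius > 0$.
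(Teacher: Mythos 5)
Your proposal is correct and follows essentially the same route as the paper: strong duality, the Danskin-type expression for the right derivative of the dual objective, the lower bound from \cref{lemma:lipschitz-unreg-one-sample}, and the conclusion via convexity that the derivative being non-positive on $[0,\lbdualvar]$ lets one restrict the infimum to $\dualvar \geq \lbdualvar$. The paper's proof is exactly this (it even cites dominated convergence for the interchange, as you do), so there is nothing to add beyond noting that your sign discussion is right where the paper's wording ("non-negative") is a slip for "non-positive".
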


\begin{proof}
    Let $0 \leq \dualvar\leq \lbdualvar$. 
    By  \cref{lemma:lipschitz-unreg-one-sample} and the dominated convergence theorem, one has that,
        \begin{align}
            \partial_\dualvar\dualfuncalt(\dualvar) &=
            \radius^2 - \ex_{\sample \sim \prob}[\min \setdef*{\half \sqnorm{\sample -  \samplealt}}{\samplealt \in \argmax_{\samples} \func - \half[\dualvar]\sqnorm{\sample -  \cdot}}] \\
            &\leq \radius^2 - \parens*{1 - \frac{16 \dualvar}{\strongOpt}} \ex_{\sample \sim \prob}[ \min \setdef*{\half \sqnorm{\sample -  \samplealt}}{\samplealt \in \argmax_{\samples} \func} ] \\
            &\leq   \radius^2 - \regcrit[0,0] + \tmplipcst \dualvar\,,
            \end{align}
            which is non-negative by definition of $\lbdualvar$ and thus concludes the proof.
\end{proof}

We can now state analogues of \cref{thm:precise-unreg,thm:precise-unreg-weak}.
     Note that the bounds $\lbdualvar(\radius)$ that we obtained in this section are better than the ones we got in the main proof. For the sake of simplicity, we give up this additional precision and use the same bounds as in \cref{thm:precise-unreg,thm:precise-unreg-weak}.

\begin{corollary}\label{cor:precise-conclusion-unreg-alt}
    In the same setting as \cref{thm:precise-unreg},   with probability $1 - \thres$, it holds,
 \begin{align}
     \forall \func \in \funcs,\quad
     \risk[\radius^2] &\geq                                                                           \emprisk[\radius(\radius - \radius_\nsamples)]\,.
    \end{align}
\end{corollary}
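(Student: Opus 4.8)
The plan is to transcribe the proof of \cref{thm:precise-unreg} almost verbatim, simply exchanging the roles of $\prob$ and $\empirical$. Concretely, I would replace the concentration‑based control of the \emph{empirical} dual multiplier (\cref{prop:lb_dualvar_asymptotic,prop:lb_dualvar_unreg_crit}) by its deterministic \emph{population} counterpart, namely \cref{lemma:lb_dualvar_population_asymptotic} and \cref{prop:lb_dualvar_unreg_crit_alt}, and then invoke \cref{prop:template_general_alt} in place of \cref{prop:template_general}. Since we are in the setting of \cref{thm:precise-unreg}, both \cref{assumption:gradient-funcs} and \cref{assumption:add-geometric} are available, so $\basedualvar$ and $\strong$ are positive and both the small‑radius and critical‑radius bounds apply; and $\radius$ satisfies exactly the hypotheses of \cref{thm:precise-unreg}.

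The substantive step is to obtain a \emph{deterministic} dual reformulation of the true risk on a controlled interval of dual variables. I would keep the very same cutoffs $\lbdualvar(\radius)$ and $\ubdualvar(\radius)$ as in \cref{cor:precise-conclusion-unreg}. When $\radius \leq \cstminradius$, \cref{lemma:lb_dualvar_population_asymptotic} specialized to $\reg = 0$ (so that $\basereg = 0$ and $\sdev$ play no role) gives, by convexity of $\dualvar \mapsto \dualvar\radius^2 + \ex_{\sample\sim\prob}[\dualfunc(\func, \sample, \dualvar, 0)]$ together with the strict inequalities of that lemma, that every minimizer of this function over $\R_{+}$ lies in $[\tfrac{\basedualvar}{32\radius}, \tfrac{\basedualvar}{2\radius}] = [\lbdualvar(\radius), \ubdualvar(\radius)]$. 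When $\radius > \cstminradius$, I would instead invoke \cref{prop:lb_dualvar_unreg_crit_alt}, whose hypothesis $\radius^2 \leq \regcrit[0,0]$ is ensured by $\radius \leq \half[\radius_c]$; it yields the lower cutoff $\lbdualvar(\radius) = \min(\afterinit[\dualvar], (\regcrit[0,0] - \radius^2)/\tmplipcst)$, while a finite upper cutoff is harmless since $\dualfunc(\func, \sample, \cdot, 0) \geq \func(\sample) \geq 0$ forces the infimum onto a bounded set — and in fact $\ubdualvar(\radius)$ never enters the subsequent estimates because $\lbcst \equiv 0$ when $\reg = 0$ (see \cref{lemma:bound-rv}). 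In all cases this shows, deterministically, that for all $\func \in \funcs$, $\risk[\radius^2] = \inf_{\lbdualvar(\radius) \leq \dualvar \leq \ubdualvar(\radius)} \dualvar\radius^2 + \ex_{\sample\sim\prob}[\dualfunc(\func, \sample, \dualvar, 0)]$.

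To conclude, I would note that since $\lbdualvar(\radius)$, $\ubdualvar(\radius)$ and the hypotheses on $\radius$ are exactly those of \cref{thm:precise-unreg}, the inequalities $\lbdualvar(\radius) \geq \tfrac1\radius \min(\dots)$ and $\minradius(\thres, \lbdualvar(\radius), \ubdualvar(\radius), 0) \leq \radius\radius_\nsamples$ established there carry over unchanged; in particular $\radius^2 \geq \minradius(\thres, \lbdualvar(\radius), \ubdualvar(\radius), 0)$ and $\radius^2 - \minradius(\thres, \lbdualvar(\radius), \ubdualvar(\radius), 0) \geq \radius(\radius - \radius_\nsamples) \geq 0$. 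Feeding the reformulation of the previous paragraph into \cref{prop:template_general_alt} then gives, with probability at least $1 - \thres$, $\emprisk[\radius^2 - \minradius(\thres, \lbdualvar(\radius), \ubdualvar(\radius), 0)] \leq \risk[\radius^2]$ for all $\func \in \funcs$; using that $\emprisk[\cdot]$ is nondecreasing in the squared radius and that $\radius^2 - \minradius(\thres, \lbdualvar(\radius), \ubdualvar(\radius), 0) \geq \radius(\radius - \radius_\nsamples)$, this yields $\emprisk[\radius(\radius - \radius_\nsamples)] \leq \risk[\radius^2]$, which is the claim. The only real difficulty is bookkeeping: checking that the interval $[\lbdualvar(\radius), \ubdualvar(\radius)]$ produced here is literally the one used in \cref{thm:precise-unreg}, so that the estimate $\minradius \leq \radius\radius_\nsamples$ can be reused without recomputation; the one genuinely new ingredient — that the infimum over $\dualvar \geq \lbdualvar(\radius)$ may be restricted to a bounded interval — is immediate from $\dualfunc(\func, \sample, \cdot, 0) \geq 0$.
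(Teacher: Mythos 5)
Your proposal is correct and follows essentially the same route as the paper: combine \cref{prop:template_general_alt} with deterministic bounds on the \emph{population} dual multiplier, using \cref{prop:lb_dualvar_unreg_crit_alt} near the critical radius, and then translate the resulting inequality into the claimed upper bound on the empirical robust risk via monotonicity of $\emprisk$ in the squared radius. The one place you deviate from the paper's one-line proof is in the small-radius regime: you invoke \cref{lemma:lb_dualvar_population_asymptotic} together with convexity to localize the population dual optimizer, whereas the paper cites \cref{prop:lb_dualvar_asymptotic}. Your citation is arguably the sharper one — \cref{prop:lb_dualvar_asymptotic} formally states a high-probability bound on the \emph{empirical} dual, while what \cref{prop:template_general_alt} requires is a deterministic bound on the \emph{population} dual; that population bound is exactly what \cref{lemma:lb_dualvar_population_asymptotic} (which underlies the proof of \cref{prop:lb_dualvar_asymptotic}) provides. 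Your observations that a finite upper cutoff on $\dualvar$ is automatic because $\dualfunc(\func,\sample,\cdot,0)\ge 0$, and that $\ubdualvar(\radius)$ never enters the final $\minradius$ estimate since $\lbcst\equiv 0$ when $\reg=0$, are both correct and do the necessary bookkeeping to justify reusing the constants of \cref{thm:precise-unreg} verbatim.
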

\begin{proof}
    This result is obtained as a combination of \cref{prop:template_general_alt,prop:lb_dualvar_unreg_crit_alt,prop:lb_dualvar_asymptotic}, which gives the desired result with probability at least $1 - \half[\thres]$ and \emph{a fortiori} $1 - \thres$.%
\end{proof}

\begin{corollary}\label{cor:precise-conclusion-unreg-weak-alt}
    In the same setting as \cref{thm:precise-unreg-weak},  with probability $1 - \thres$, it holds,
 \begin{align}
     \forall \func \in \funcs,\quad
     \risk[\radius^2] &\geq                                                                           \emprisk[\radius(\radius - \radius_\nsamples)]\,.
    \end{align}
\end{corollary}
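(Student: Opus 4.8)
The plan is to run the argument of \cref{cor:precise-conclusion-unreg-alt} under the weaker hypotheses of \cref{thm:precise-unreg-weak}, i.e.\ to combine the reverse concentration \cref{prop:template_general_alt} with a deterministic two-sided bound on the optimal dual multiplier of the \emph{true} robust risk. The simplification here is that \cref{thm:precise-unreg-weak} forces $\radius \le \cstminradius$, so we are always in the ``small $\radius$'' regime of \cref{sec:small} and the critical-radius bound (\cref{prop:lb_dualvar_unreg_crit_alt}) never enters; only \cref{prop:template_general_alt} and the population part of \cref{prop:lb_dualvar_asymptotic} are needed.

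First I would produce the restricted-interval dual representation of $\risk[\radius^2]$ required as the hypothesis of \cref{prop:template_general_alt}. This is exactly the content of the population analysis of \cref{sec:small}: \cref{lemma:lb_dualvar_population_asymptotic} (with $\reg = 0$) holds for an arbitrary $\func \in \funcs$ with constants that do not depend on $\func$, and, being a statement about $\ex_{\sample\sim\prob}$, it is purely deterministic. Together with the convexity of $\dualvar \mapsto \dualvar\radius^2 + \ex_{\sample\sim\prob}[\dualfunc(\func,\sample,\dualvar,0)]$, the sandwich inequalities it supplies confine the minimizer of this map to the interval $[\lbdualvar(\radius),\ubdualvar(\radius)]$ with $\lbdualvar(\radius) = \tfrac{\basedualvar}{32\radius}$, $\ubdualvar(\radius) = \tfrac{\basedualvar}{2\radius}$, and this holds uniformly in $\func$; hence $\risk[\radius^2] = \inf_{\lbdualvar(\radius)\le\dualvar\le\ubdualvar(\radius)}\dualvar\radius^2 + \ex_{\sample\sim\prob}[\dualfunc(\func,\sample,\dualvar,0)]$ for every $\func \in \funcs$.

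Then I would apply \cref{prop:template_general_alt} with these $\lbdualvar,\ubdualvar$ and $\reg = 0$. The two quantitative facts needed to finish, namely $\radius^2 \ge \minradius$ (evaluated at $(\thres,\lbdualvar(\radius),\ubdualvar(\radius),0,0)$) and $\minradius \le \radius\,\radius_\nsamples$, follow exactly as in the proof of \cref{thm:precise-unreg-weak}: one uses $\lbdualvar(\radius) = \tfrac{\basedualvar}{32}\cdot\tfrac{1}{\radius}$, the definition \cref{eq:def-minradius} of $\minradius$, the finiteness of the constants entering $\radius_\nsamples$ (in particular $\bdcstalt(\basedualvar/(32\cstminradius))$ and $\ub\lbcst$, which are finite by \cref{assumption:funcs} and \cref{lemma:bound-rv}), and the hypothesis $\radius \ge \radius_\nsamples$. \cref{prop:template_general_alt} then gives, with probability $1-\thres$, $\emprisk[\radius^2 - \minradius] \le \risk[\radius^2]$ for every $\func \in \funcs$; since $\radius^2 - \minradius \ge \radius^2 - \radius\,\radius_\nsamples = \radius(\radius - \radius_\nsamples)$ and $r \mapsto \emprisk[r]$ is non-decreasing (it is a supremum over a nested family of ambiguity sets), monotonicity yields $\emprisk[\radius(\radius - \radius_\nsamples)] \le \risk[\radius^2]$, which is the claim.

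I expect the only real difficulty to be bookkeeping: one must check that the restricted-interval dual representation of $\risk[\radius^2]$ holds \emph{uniformly} in $\func \in \funcs$ (so the single probability-$1-\thres$ event of \cref{prop:template_general_alt} suffices) and that the explicit constants $\cstminradius$, $\ub\lbcst$, $\radius_\nsamples$ here are literally the ones defined in \cref{thm:precise-unreg-weak}; both are guaranteed by the $\func$-uniformity of the constants in \cref{lemma:lb_dualvar_population_asymptotic} and of the quantities defined in \cref{sec:detailedhyp}. There is no genuinely new randomness compared with \cref{thm:precise-unreg-weak} --- the dual bound is deterministic --- so the confidence stays $1-\thres$ (the argument in fact delivers $1-\half[\thres]$, a fortiori $1-\thres$).
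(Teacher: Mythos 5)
Your proposal is correct and follows essentially the same route as the paper, which proves this corollary by combining \cref{prop:template_general_alt} with the dual-multiplier bound underlying \cref{prop:lb_dualvar_asymptotic} and then reusing the bookkeeping of \cref{thm:precise-unreg-weak}. Your observation that only the deterministic population content of that bound (i.e.\ \cref{lemma:lb_dualvar_population_asymptotic} plus convexity) is needed to feed \cref{prop:template_general_alt} is exactly the right reading of the paper's terse citation, so no new randomness or union bound is required and the confidence level $1-\thres$ stands.
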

\begin{proof}
    This result follows by combining  \cref{prop:template_general_alt,prop:lb_dualvar_asymptotic}, which gives the desired result with probability at least $1 - \half[\thres]$ and \emph{a fortiori} $1 - \thres$. 
\end{proof}

To conclude, in the context of \cref{thm:precise-unreg} (resp.~ \cref{thm:precise-unreg-weak}),  \cref{cor:precise-conclusion-unreg-alt} (resp.~\cref{cor:precise-conclusion-unreg-weak-alt}) with $\radius \gets \radius + \radius_\nsamples$ yields, with probability at least $1 - \thres$,
\begin{equation}
     \forall \func \in \funcs,\quad
  \emprisk[\radius(\radius + \radius_\nsamples)]
        \leq 
        \risk[(\radius + \radius_\nsamples)^2]\,,
\end{equation}
so that, since $\radius \geq \radius_\nsamples$,
\begin{equation}
    \forall \func \in \funcs,\quad
  \emprisk[\radius^2]
        \leq 
        \risk[\radius(\radius + 3\radius_\nsamples)]\,,
\end{equation}

Combining this bound with \cref{thm:precise-unreg} (resp.~\cref{thm:precise-unreg-weak}) completes the bound of \cref{thm:sandwich}. 

\subsection{Regularized case}
In the regularized case, the bound simplifies as well compared to \cref{prop:lb_dualvar_reg_crit}.

\begin{proposition}\label{prop:lb_dualvar_reg_crit_alt}
    Fix a threshold $\thres \in (0, 1)$. When $\radius^2 \leq \regcrit$, we have,
    \begin{align}
        \forall \func \in \funcs,\quad
        \risk[\radius^2][\reg] = \inf_{\lbdualvar_\nsamples \leq \dualvar \leq \ub\dualvar} \dualvar \radius^2 + \ex_{\sample\sim\prob} \left[ \dualfunc(\func, \sample, \dualvar, \reg,\sdev) \right]
    \end{align}
    where the dual bounds are defined by
    \begin{align}
        \lbdualvar &\defeq \frac{\reg}{\Var(\reg,\sdev)}
        \parens*{
        \regcrit  - \radius^2
        }  \\
        \text{and } ~~~  \ub\dualvar &\defeq \max \parens*{\frac{12 \reg}{\GeoRad^2} \log(2\times 6^{\dims/2}),e^{\frac{\sup_{\func \in \funcs}\norm{\func}_\infty}{\reg}} \frac{\basereg}{\radius}} \, ,
    \end{align}
    and $\basedualvar$, $\strong$ were defined in \cref{asm:rho_small}.
\end{proposition}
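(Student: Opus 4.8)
The plan is to follow the proof of \cref{prop:lb_dualvar_reg_crit} essentially verbatim, but with the true distribution $\prob$ in place of the empirical distribution $\empirical$. Since $\prob$ is deterministic, the concentration step of that proof is not needed, so the error terms involving $\dudley$ and $\log(1/\thres)$ drop out and one is left with exactly the stated bounds $\lbdualvar$ and $\ub\dualvar$. Strong duality (\cref{lemma:strong-duality}) gives $\risk[\radius^2][\reg] = \inf_{\dualvar \geq 0} \dualvar\radius^2 + \ex_{\sample\sim\prob}[\dualfunc(\func, \sample, \dualvar, \reg,\sdev)]$, so it suffices to show that, for every $\func \in \funcs$, the minimizers of $\dualvar \mapsto \dualvar\radius^2 + \ex_{\sample\sim\prob}[\dualfunc(\func, \sample, \dualvar, \reg,\sdev)]$ lie in $[\lbdualvar, \ub\dualvar]$.

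First I would record, exactly as at the start of the proof of \cref{prop:lb_dualvar_reg_crit}, that under \cref{assumption:funcs} the map $\dualvar \mapsto \dualfunc(\func, \sample, \dualvar, \reg,\sdev)$ is twice differentiable for every $\func \in \funcs$ and $\sample \in \supp \prob$, with $\partial_\dualvar \dualfunc(\func, \sample, \dualvar, \reg,\sdev) = -\ex_{\samplealt \sim \modbase[\frac{\func - \dualvar \sqnorm{\sample - \cdot}/2}{\reg}](\cdot | \sample)}[\half \sqnorm{\sample - \samplealt}]$ and $0 \leq \partial^2_\dualvar \dualfunc(\func, \sample, \dualvar, \reg,\sdev) = \inv{\reg}\Var_{\samplealt \sim \modbase[\frac{\func - \dualvar \sqnorm{\sample - \cdot}/2}{\reg}](\cdot | \sample)}[\half \sqnorm{\sample - \samplealt}] \leq \inv{\reg}\Var(\reg,\sdev)$, the last bound being the definition of $\Var(\reg,\sdev)$ from \cref{sec:detailedhyp}. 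Using the second-derivative bound to compare $\partial_\dualvar \dualfunc$ at $\dualvar$ with its value at $0$, together with $-\ex_{\sample\sim\prob}[\partial_\dualvar \dualfunc(\func, \sample, 0, \reg,\sdev)] = \ex_{\sample\sim\prob}\ex_{\samplealt \sim \modbase[\func/\reg](\cdot | \sample)}[\half \sqnorm{\sample - \samplealt}] \geq \regcrit$, I obtain
\[
\partial_\dualvar \braces*{\dualvar \radius^2 + \ex_{\sample\sim\prob}[\dualfunc(\func, \sample, \dualvar, \reg,\sdev)]} \leq \radius^2 - \regcrit + \frac{\dualvar}{\reg}\Var(\reg,\sdev) = \frac{\Var(\reg,\sdev)}{\reg}\parens*{\dualvar - \lbdualvar},
\]
where $\lbdualvar \geq 0$ precisely because $\radius^2 \leq \regcrit$. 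Since this function is convex in $\dualvar$ and has non-positive derivative on $[0, \lbdualvar]$, all of its minimizers lie in $[\lbdualvar, +\infty)$.

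For the upper bound I would reuse verbatim the second half of the proof of \cref{prop:lb_dualvar_reg_crit}: bounding $e^{\func(\samplealt)/\reg} \leq e^{\norm{\func}_\infty/\reg}$ inside the expectation defining $-\partial_\dualvar \dualfunc$, lower-bounding the Gaussian normalizing integral over $\samples$ by $\half(2\pi)^{\dims/2}(\dualvar/\reg + 1/\sdev^2)^{-\dims/2}$ via \cref{lemma:approx_laplace} and \cref{assumption:support_interior} whenever $\dualvar \geq \frac{12\reg}{\GeoRad^2}\log(2\times 6^{\dims/2})$, and upper-bounding the numerator by $\reg/\dualvar$ times the same quantity, one gets $-\partial_\dualvar \dualfunc(\func, \sample, \dualvar, \reg,\sdev) \leq e^{\norm{\func}_\infty/\reg}\,\reg/\dualvar$. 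Hence $\partial_\dualvar\{\dualvar\radius^2 + \ex_{\sample\sim\prob}[\dualfunc(\func, \sample, \dualvar, \reg,\sdev)]\} \geq \radius^2 - e^{\norm{\func}_\infty/\reg}\,\reg/\dualvar \geq 0$ as soon as $\dualvar \geq e^{\norm{\func}_\infty/\reg}\,\reg/\radius^2 = e^{\norm{\func}_\infty/\reg}\,\basereg/\radius$, using $\reg = \basereg\radius$; taking the supremum over $\func \in \funcs$ shows every minimizer is at most $\ub\dualvar$. Combining the two bounds with the duality formula for $\risk[\radius^2][\reg]$ gives the claim. There is no genuine obstacle here — the whole argument is a specialization of work already carried out for the empirical case — and the only point needing (minor) attention is that the Laplace estimate in the upper-bound step forces $\dualvar$ to be at least $\frac{12\reg}{\GeoRad^2}\log(2\times 6^{\dims/2})$, which is exactly why that quantity enters the definition of $\ub\dualvar$.
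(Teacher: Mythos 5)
Your proof is correct and follows essentially the same route as the paper's: both specialize the argument of \cref{prop:lb_dualvar_reg_crit} to $\prob$ in place of $\empirical$, dropping the concentration step and reusing the Laplace-estimate upper bound verbatim. (One minor note: you correctly used $\leq$ when replacing $\ex_{\sample\sim\prob}\ex_{\samplealt \sim \modbase[\func/\reg](\cdot | \sample)}[\half \sqnorm{\sample-\samplealt}]$ by its infimum $\regcrit$, whereas the paper writes this as an equality, which is a small slip on its part.)
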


\begin{proof}
    The proof of the upper-bound is exactly the same as in \cref{prop:lb_dualvar_reg_crit} so we focus on the lower-bound.
    Following the same reasoning as the one to get \eqref{eq:beforeConcentration} in \cref{prop:lb_dualvar_reg_crit} but with $\prob$ instead of $\empirical$ we get that
    \begin{align}
        \partial_\dualvar\left\{ \dualvar \radius^2 + \ex_{\sample\sim\prob} \left[ \dualfunc(\func, \sample, \dualvar, \reg,\sdev) \right] \right\} 
        &\leq   \radius^2 - \ex_{\sample\sim\prob} \left[ \ex_{\samplealt \sim \modbase[\func/\reg](\cdot | \sample)}[ \half \sqnorm{\sample -  \samplealt}] \right] +  \frac{\dualvar}{\reg}\Var(\reg,\sdev) \\
        &=   \radius^2 - \regcrit +  \frac{\dualvar}{\reg}\Var(\reg,\sdev)\,,
     \end{align}
     which is non-positive when $\dualvar \leq \lbdualvar$.
     \end{proof}

\begin{corollary}\label{cor:precise-conclusion-reg-alt}
    In the same setting as \cref{thm:precise-reg},  with probability $1 - \thres$, it holds,
 \begin{align}
     \forall \func \in \funcs,\quad
     \risk[\radius^2][\reg] &\geq                                                                           \emprisk[\radius(\radius - \radius_\nsamples)][\reg]\,.
    \end{align}
\end{corollary}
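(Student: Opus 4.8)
The plan is to run the regularized analogue of the arguments behind \cref{cor:precise-conclusion-unreg-alt} and \cref{cor:precise-conclusion-unreg-weak-alt}, with the roles of the true and empirical distributions exchanged: I would combine the concentration statement of \cref{prop:template_general_alt} with \emph{deterministic} localizations of the dual multiplier of the true robust risk. By \cref{lemma:strong-duality}, for each $\func\in\funcs$ the quantity $\risk[\radius^2][\reg]$ is the infimum over $\dualvar\geq 0$ of the convex function $\dualvar\mapsto\dualvar\radius^2+\ex_{\sample\sim\prob}[\dualfunc(\func,\sample,\dualvar,\reg,\sdev)]$, so the first task is to show this infimum is attained inside the very interval $[\lbdualvar_\nsamples(\radius),\ubdualvar(\radius)]$ already used in \cref{cor:precise-conclusion-reg} (hence in the proof of \cref{thm:precise-reg}), so that no new constants are introduced.

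For the localization I would split into the two regimes exactly as in \cref{cor:precise-conclusion-reg}. When $\radius$ satisfies \cref{asm:rho_small}, \cref{lemma:lb_dualvar_population_asymptotic} already compares the value of the population dual objective at an interior point against its values at two flanking points, which by the supplementary inequalities of that lemma lie above $\basedualvar/(32\radius)$ and below $\basedualvar/(2\radius)$, with a strictly positive gap; convexity then forces every minimizer into $[\basedualvar/(32\radius),\basedualvar/(2\radius)]=[\lbdualvar_\nsamples(\radius),\ubdualvar(\radius)]$ --- this is precisely the step carried out, for the empirical objective via concentration, inside the proof of \cref{prop:lb_dualvar_asymptotic}, but here it is purely deterministic. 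When $\radius$ is instead close to the critical radius, \cref{prop:lb_dualvar_reg_crit_alt} supplies the same deterministic localization, with the upper bound $\ub\dualvar$ as in \cref{prop:lb_dualvar_reg_crit}. With the multiplier localized, \cref{prop:template_general_alt} applies: since the hypotheses of \cref{thm:precise-reg} give $\radius\geq\radius_\nsamples$ and hence $\minradius(\thres,\lbdualvar_\nsamples(\radius),\ubdualvar(\radius),\reg,\sdev)\leq\radius\radius_\nsamples\leq\radius^2$, it yields, with probability at least $1-\thres$,
\begin{align}
    \forall\func\in\funcs,\quad \emprisk[\radius^2-\minradius(\thres,\lbdualvar_\nsamples(\radius),\ubdualvar(\radius),\reg,\sdev)][\reg]\leq\risk[\radius^2][\reg]\,.
\end{align}
The bound $\minradius(\thres,\lbdualvar_\nsamples(\radius),\ubdualvar(\radius),\reg,\sdev)\leq\radius\radius_\nsamples$ is obtained word for word as in the proof of \cref{thm:precise-reg} (it only uses the lower bound $\lbdualvar_\nsamples(\radius)\geq\min(\cdots)/\radius$ and the definition of $\radius_\nsamples$), so $\radius(\radius-\radius_\nsamples)\leq\radius^2-\minradius(\cdots)$; since the regularized robust risk is nondecreasing in its squared radius (enlarging it enlarges the feasible set of the supremum), this gives $\emprisk[\radius(\radius-\radius_\nsamples)][\reg]\leq\risk[\radius^2][\reg]$, which is the claim.

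The main obstacle is this very bookkeeping in the localization step: one must verify that the flanking points produced by \cref{lemma:lb_dualvar_population_asymptotic} and the bounds of \cref{prop:lb_dualvar_reg_crit_alt} fall inside \emph{exactly} the interval $[\lbdualvar_\nsamples(\radius),\ubdualvar(\radius)]$ that was used to define $\minradius$ and $\radius_\nsamples$ in \cref{thm:precise-reg}, so that the estimate $\minradius\leq\radius\radius_\nsamples$ transfers verbatim; once this is in place, everything else is a direct transcription of the lower-bound proof. Finally --- although it lies outside the statement of the corollary --- applying it with $\radius\gets\radius+\radius_\nsamples$ and using $\radius\geq\radius_\nsamples$ yields $\emprisk[\radius^2][\reg]\leq\risk[\radius(\radius+3\radius_\nsamples)][\reg]$, which combined with \cref{thm:precise-reg} completes the sandwich bound of \cref{thm:sandwich} in the regularized case.
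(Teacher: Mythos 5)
Your proposal is correct and follows essentially the same route as the paper, which proves the corollary by combining \cref{prop:template_general_alt} with the dual localizations of \cref{prop:lb_dualvar_asymptotic} (whose deterministic population-side core is \cref{lemma:lb_dualvar_population_asymptotic}, as you rightly isolate) and \cref{prop:lb_dualvar_reg_crit_alt}, then transfers the bound $\minradius\leq\radius\radius_\nsamples$ from the proof of \cref{thm:precise-reg}. Your expansion of the bookkeeping — checking that the deterministic localization lands inside the interval defining $\radius_\nsamples$, and invoking monotonicity of the robust risk in the squared radius — is exactly what the paper leaves implicit.
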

\begin{proof}
    This result follows by combining  \cref{prop:template_general_alt,prop:lb_dualvar_asymptotic} and \cref{prop:lb_dualvar_reg_crit_alt}, which gives the desired result with probability at least $1 - \half[\thres]$ and \emph{a fortiori} $1 - \thres$. 
\end{proof}

To conclude, in the context of \cref{thm:precise-reg},  \cref{cor:precise-conclusion-reg-alt} with $\radius \gets \radius + \radius_\nsamples$  and $\basereg \gets \frac{\basereg \radius}{\radius + \radius_\nsamples}$, \ie $\reg \gets \frac{\basereg \radius}{\radius + \radius_\nsamples} \times (\radius + \radius_\nsamples)$, yields,\footnote{
    Though $\basereg$ now formally depends on $\radiusalt$, the same bounds still hold and do not become degenerate since $\basereg$ lies $[\basereg/2, \basereg]$ that avoids zero.
} with probability at least $1 - \thres$,
\begin{equation}
     \forall \func \in \funcs,\quad
\emprisk[\radius(\radius + \radius_\nsamples)][\basereg \radius]
        \leq 
        \risk[(\radius + \radius_\nsamples)^2][\basereg \radius]\,,
        \quad\text{and, in particular,}\quad
  \emprisk[\radius^2][\basereg \radius]
        \leq 
        \risk[\radius(\radius + 3\radius_\nsamples)][\basereg \radius]\,.
\end{equation}
Combining this bound with \cref{thm:precise-reg} completes the bound of \cref{thm:sandwich}.

\section{Technical lemmas}
\label{app:lemmas}

In this section, we recall and adapt known results, as well as establish technical facts, all useful in our developments. They are presented in self-contained lemmas and are arranged in four thematic subsections.

\subsection{Laplace approximation}
\begin{lemma}[Restriction to $\samples$]\label{lemma:approx_laplace}
    Consider $\samples \subset \R^\dims$, $\init[\reg], \init[\regalt] > 0$ and a map $\sol[\samplealt] : [0, \init[\regalt]] \to \samples$ defined by $\sol[\samplealt](\regalt) = \sample + \regalt \gvec$ with $\sample \in \samples$, $\gvec \in \R^\dims$ and assume that there is a positive radius $\GeoRad$ such that,
    \begin{enumerate}
    \item The closed ball $\ball(\sol[\samplealt](0), \GeoRad)$ is included in $\samples$.
    \item $\GeoRad$, $\init[\regalt]$ and $\norm{\gvec}$ satisfy
        $\frac{\GeoRad^2}{6} \geq \init[\regalt]^2 \norm{\gvec}^2$.
    \end{enumerate}
    Then, for $(\reg, \regalt) \in [0, \init[\reg]] \times [0, \init[\regalt]]$,
\begin{align}
       \left|(2 \pi \reg \regalt)^{-\half[\dims]}\int_{\samples} \exp \parens*{-\frac{\norm{\samplealt - \sol[\samplealt](\regalt)}_2^2}{2\reg \regalt}} \dd \samplealt - 1\right| \leq 
    6^{\dims / 2}e^{-\frac{\GeoRad^2}{12 \reg \regalt}}\,,
\end{align}
\end{lemma}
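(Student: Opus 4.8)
The statement is a Gaussian tail estimate in disguise, so the plan is to rewrite the normalized integral as the mass a Gaussian measure places inside $\samples$ and then bound the complementary mass. Set $\sigma^2 \defeq \reg\regalt$ (I treat $\reg\regalt = 0$ as the trivial limiting case, where the displayed quantity is $|1-1| = 0$ by the usual Dirac convention, so from now on $\sigma^2 > 0$). After the change of variables $\samplealt = \sol[\samplealt](\regalt) + \sigma\pvec$, with Jacobian $\sigma^\dims$, the quantity $(2\pi\reg\regalt)^{-\dims/2}\int_{\samples}\exp\parens*{-\norm{\samplealt - \sol[\samplealt](\regalt)}^2/(2\reg\regalt)}\dd\samplealt$ becomes $\int_{\{\pvec\,:\,\sol[\samplealt](\regalt)+\sigma\pvec\in\samples\}}(2\pi)^{-\dims/2}e^{-\norm{\pvec}^2/2}\dd\pvec = \prob\parens*{\sol[\samplealt](\regalt)+\sigma\pvec_0\in\samples}$, where $\pvec_0\sim\mathcal{N}(\zer,\eye_\dims)$. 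Since $(2\pi)^{-\dims/2}e^{-\norm{\cdot}^2/2}$ integrates to $1$ over $\R^\dims$, this lies in $[0,1]$, hence the absolute value in the statement equals $1$ minus it, and it suffices to show $\prob\parens*{\sol[\samplealt](\regalt)+\sigma\pvec_0\notin\samples}\le 6^{\dims/2}e^{-\GeoRad^2/(12\reg\regalt)}$.

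Next I would use the two hypotheses to localize the relevant region. By hypothesis~(1), $\ball(\sol[\samplealt](0),\GeoRad) = \ball(\sample,\GeoRad)\subseteq\samples$, hence $\R^\dims\setminus\samples\subseteq\R^\dims\setminus\ball(\sample,\GeoRad)$, so $\prob\parens*{\sol[\samplealt](\regalt)+\sigma\pvec_0\notin\samples}\le\prob\parens*{\norm{\sol[\samplealt](\regalt)-\sample+\sigma\pvec_0}>\GeoRad}$; as $\sol[\samplealt](\regalt)-\sample = \regalt\gvec$, the right-hand side is $\prob\parens*{\norm{\regalt\gvec+\sigma\pvec_0}>\GeoRad}$. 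Now hypothesis~(2) gives $\norm{\regalt\gvec}^2\le\init[\regalt]^2\norm{\gvec}^2\le\GeoRad^2/6$, and the elementary inequality $\norm{a+b}^2\le 2\norm{a}^2+2\norm{b}^2$ shows that on the event $\{\norm{\regalt\gvec+\sigma\pvec_0}>\GeoRad\}$ one has $2\sigma^2\norm{\pvec_0}^2 \ge \GeoRad^2 - 2\norm{\regalt\gvec}^2 \ge 2\GeoRad^2/3$, i.e.\ $\norm{\pvec_0}^2 > \GeoRad^2/(3\sigma^2)$. Therefore $\prob\parens*{\sol[\samplealt](\regalt)+\sigma\pvec_0\notin\samples}\le\prob\parens*{\norm{\pvec_0}^2>\GeoRad^2/(3\sigma^2)}$.

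Finally I would apply a Chernoff bound to the chi-squared variable $\norm{\pvec_0}^2$: for any $t>0$, $\prob\parens*{\norm{\pvec_0}^2>t}\le e^{-t/4}\int_{\R^\dims}(2\pi)^{-\dims/2}e^{-\norm{\pvec}^2/4}\dd\pvec = 2^{\dims/2}e^{-t/4}$, using that the last Gaussian-type integral equals $2^{\dims/2}$. Taking $t = \GeoRad^2/(3\sigma^2) = \GeoRad^2/(3\reg\regalt)$ yields $\prob\parens*{\sol[\samplealt](\regalt)+\sigma\pvec_0\notin\samples} \le 2^{\dims/2}e^{-\GeoRad^2/(12\reg\regalt)}\le 6^{\dims/2}e^{-\GeoRad^2/(12\reg\regalt)}$, which is exactly the claimed bound; note only $\regalt\le\init[\regalt]$ was used and the right-hand side is decreasing in $\reg\regalt$, so the estimate holds uniformly over $(\reg,\regalt)\in[0,\init[\reg]]\times[0,\init[\regalt]]$.

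\textbf{Main obstacle.} There is no serious difficulty here --- this is a routine Laplace/Gaussian-tail computation. The one point requiring attention is the constant bookkeeping: the factor $6$ in hypothesis~(2), the exponent $\GeoRad^2/12$, and the prefactor $6^{\dims/2}$ are calibrated precisely so that the factor-of-$2$ loss in $\norm{a+b}^2\le 2\norm{a}^2+2\norm{b}^2$ (equivalently, the choice $\lambda = 1/4$ in the Chernoff exponent) still closes the inequality; one should also make sure the degenerate case $\reg\regalt = 0$ and the uniformity over the full parameter box are handled cleanly.
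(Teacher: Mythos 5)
Your proof is correct, and it takes a genuinely different (and slightly tighter) route from the paper's. Both arguments begin by recognizing that the quantity to bound is the Gaussian mass of $\R^\dims\setminus\samples$, but they diverge in how they estimate that mass. The paper's proof works \emph{pointwise}: for $\samplealt\notin\samples$ it establishes $\norm{\samplealt-\sol[\samplealt](\regalt)}^2\geq\tfrac16\norm{\samplealt-\sample}^2+\tfrac16\GeoRad^2$ (via $\norm{a-b}^2\geq\tfrac12\norm{a}^2-\norm{b}^2$, the lower bound $\norm{\samplealt-\sample}\geq\GeoRad$, and hypothesis~(2)), pulls out the constant factor $e^{-\GeoRad^2/(12\reg\regalt)}$, and then integrates the remaining Gaussian of variance $6\reg\regalt$ over all of $\R^\dims$; the variance inflation from $\reg\regalt$ to $6\reg\regalt$ is exactly what produces the prefactor $6^{\dims/2}$. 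Your proof instead reformulates the mass as a probability $\prob(\sol[\samplealt](\regalt)+\sigma\pvec_0\notin\samples)$, reduces it by hypotheses~(1)--(2) to $\prob(\norm{\pvec_0}^2>\GeoRad^2/(3\sigma^2))$, and applies a Chernoff bound on the chi-squared variable with $\lambda=1/4$, yielding $2^{\dims/2}e^{-\GeoRad^2/(12\reg\regalt)}$. This is a strictly sharper constant than the paper's $6^{\dims/2}$, though of course both satisfy the stated bound. The two arguments hinge on the same elementary inequality in mirror form ($\norm{a-b}^2\geq\tfrac12\norm{a}^2-\norm{b}^2$ vs.~$\norm{a+b}^2\leq2\norm{a}^2+2\norm{b}^2$), so the gain comes purely from the Chernoff step being less wasteful than the integral-extension step.

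One minor remark: your closing sentence about ``the right-hand side is decreasing in $\reg\regalt$, so the estimate holds uniformly'' is unnecessary (and the monotonicity direction you state is reversed --- the right-hand side is increasing in $\reg\regalt$). The bound you derived is pointwise in $(\reg,\regalt)$, with the only constraint $\regalt\leq\init[\regalt]$ entering through hypothesis~(2); no uniformity argument is needed, and the paper's proof also never uses the upper bound $\reg\leq\init[\reg]$.
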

\begin{proof}
    The quantity to bound rewrites
    \begin{align}
\left|(2 \pi \reg \regalt)^{-\half[\dims]}\int_{\samples} \exp \parens*{-\frac{\norm{\samplealt - \sol[\samplealt](\regalt)}_2^2}{2\reg \regalt}} \dd \samplealt - 1\right|
    =
(2 \pi \reg \regalt)^{-\half[\dims]}\int_{\R^\dims \setminus \samples} \exp \parens*{-\frac{\norm{\samplealt - \sol[\samplealt](\regalt)}_2^2}{2\reg \regalt}} \dd \samplealt\,,
    \end{align}
    so let us bound this integral.
    Since $\ball(\sol[\samplealt](0), \GeoRad)$ is inside $\samples$, this means that, for any $\samplealt  \notin \samples$, $\norm{\samplealt - \sample}$ is at least equal to $\GeoRad$.
    Hence, for any $\samplealt \notin \samples$, one has that
    \begin{align}
        \norm{\samplealt - \sol[\samplealt](\regalt)}^2 
        &\geq \half \norm{\samplealt - \sample}^2 - \regalt^2 \norm{\gvec}^2\\
        &\geq \frac{1}{6} \norm{\samplealt - \sample}^2 + \frac{1}{6} \GeoRad^2 + \frac{1}{6} \GeoRad^2 - \regalt^2 \norm{\gvec}^2\\
        &\geq \frac{1}{6} \norm{\samplealt - \sample}^2 + \frac{1}{6} \GeoRad^2\,,
    \end{align}
    so that we get the bound
    \begin{align}
        (2 \pi \reg \regalt)^{-\half[\dims]}\int_{\R^\dims \setminus \samples} \exp \parens*{-\frac{\norm{\samplealt - \sol[\samplealt](\regalt)}_2^2}{2\reg \regalt}} \dd \samplealt
        &\leq e^{-\frac{\GeoRad^2}{12 \reg \regalt}} 
    \times (2 \pi \reg \regalt)^{-\half[\dims]}\int_{\R^\dims \setminus \samples} \exp \parens*{-\frac{\norm{\samplealt - \samplealt}_2^2}{12\reg \regalt}} \dd \samplealt \\
    &= 6^{\dims / 2}e^{-\frac{\GeoRad^2}{12 \reg \regalt}}\,.
    \end{align}
\end{proof}

\subsection{Concentration}

We rely on standard concentration tools that we encapsulate in the following lemma for convenience.

\begin{lemma}\label{lemma:concentration_basic}
    Let $(\pspace, \dist)$ be a (totally bounded) separable metric space, $\prob$ a probability distribution on a probability space $\samples$ and $\emp = \empex \dirac{\sample_\ind}$ with $\sample_1,\dots,\sample_\nsamples \sim \prob$ \ac{iid}.
    Consider a mapping $\rv : \pspace \times \samples \to \R$ and assume that,
    \begin{enumerate}
        \item For each $\point \in \pspace$, $\sample \mapsto \rv(\point, \sample)$ is measurable;
        \item There is a constant $\lipcst > 0$ such that, for each $\sample \in \samples$, $\point \mapsto \rv(\point, \sample)$ is $\lipcst$-Lipschitz;
        \item $\rv$ almost surely belongs to $[\lbcst, \ubcst]$.
    \end{enumerate} 
    Then, for any $\thres \in (0, 1)$, 
    \begin{enumerate}[label=(\alph*)]
        \item With probability at least $1 - \thres$,
    \begin{align}
        \forall \point \in \pspace,\quad \ex_{\sample \sim \emp}[\rv(\point, \sample)] - \ex_{\sample \sim \prob}[\rv(\point, \sample)] \leq  \frac{48\lipcst\dudley[\pspace, \dist]}{\sqrt{\nsamples}}
        + 2(\ubcst - \lbcst) \sqrt{\frac{\log \frac{1}{\thres}}{2\nsamples}}\,.
    \end{align}
        \item With probability at least $1 - \thres$,
    \begin{align}
        \forall \point \in \pspace,\quad \ex_{\sample \sim \prob}[\rv(\point, \sample)] - \ex_{\sample \sim \emp}[\rv(\point, \sample)] \leq  \frac{48\lipcst\dudley[\pspace, \dist]}{\sqrt{\nsamples}}
        + 2(\ubcst - \lbcst) \sqrt{\frac{\log \frac{1}{\thres}}{2\nsamples}}\,.
    \end{align}
    \end{enumerate}
\end{lemma}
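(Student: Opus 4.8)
The plan is to bound, with high probability and uniformly over $\point\in\pspace$, the one‑sided deviation $\ex_{\sample\sim\emp}[\rv(\point,\sample)]-\ex_{\sample\sim\prob}[\rv(\point,\sample)]$ (and symmetrically for (b)). The argument is the classical three‑step scheme: (i) concentrate the supremum of the deviation around its mean via a bounded–differences (McDiarmid) inequality; (ii) bound the mean by a Rademacher average through ghost‑sample symmetrization; (iii) bound the Rademacher average by Dudley's entropy integral, after checking the relevant sub‑Gaussian increment condition. Since $\pspace$ is separable and $\sample\mapsto\rv(\point,\sample)$ is measurable with $\point\mapsto\rv(\point,\sample)$ continuous (being $\lipcst$‑Lipschitz), all suprema over $\pspace$ can be replaced by suprema over a fixed countable dense subset, so the random variables below are measurable; I would state this reduction once at the outset. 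Statement (b) is obtained from the same computation with the roles of $\prob$ and $\emp$ interchanged, so I would only write (a) in detail.

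For step (i), set $\Psi \defeq \sup_{\point\in\pspace}\{\ex_{\sample\sim\emp}[\rv(\point,\sample)]-\ex_{\sample\sim\prob}[\rv(\point,\sample)]\}$, viewed as a function of the \ac{iid} sample $(\sample_1,\dots,\sample_\nsamples)$. Replacing one coordinate $\sample_\ind$ changes each empirical average by at most $(\ubcst-\lbcst)/\nsamples$, hence changes $\Psi$ by at most $(\ubcst-\lbcst)/\nsamples$; McDiarmid's inequality then gives, with probability at least $1-\thres$, $\Psi\le \mathbb{E}[\Psi]+(\ubcst-\lbcst)\sqrt{\log(1/\thres)/(2\nsamples)}$, which already produces the second term on the right‑hand side (the stated constant $2$ in front is simply not optimized). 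For step (ii), introducing an independent ghost sample and applying Jensen's inequality and a Rademacher sign symmetrization yields $\mathbb{E}[\Psi]\le 2\,\mathbb{E}_{\sample,\sigma}\big[\sup_{\point\in\pspace}\tfrac1\nsamples\sum_{\ind=1}^\nsamples \sigma_\ind\,\rv(\point,\sample_\ind)\big]$ with $\sigma_\ind$ \ac{iid} Rademacher. For step (iii), condition on $\sample_1,\dots,\sample_\nsamples$: the process $\point\mapsto \tfrac1\nsamples\sum_\ind \sigma_\ind \rv(\point,\sample_\ind)$ is sub‑Gaussian with respect to the (random) pseudometric $\dist_\nsamples(\point,\point')=\big(\tfrac1{\nsamples^2}\sum_\ind(\rv(\point,\sample_\ind)-\rv(\point',\sample_\ind))^2\big)^{1/2}$, which by the uniform Lipschitz assumption satisfies $\dist_\nsamples\le (\lipcst/\sqrt\nsamples)\,\dist$. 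Monotonicity of packing numbers under shrinking the metric and the change of variables $t\mapsto t\sqrt\nsamples/\lipcst$ give $\int_0^{+\infty}\sqrt{\log N(t,\pspace,\dist_\nsamples)}\,\dd t\le (\lipcst/\sqrt\nsamples)\,\dudley[\pspace,\dist]$; Dudley's entropy bound for separable sub‑Gaussian processes then yields $\mathbb{E}_\sigma[\sup_\point \tfrac1\nsamples\sum_\ind\sigma_\ind\rv(\point,\sample_\ind)]\le 12\,(\lipcst/\sqrt\nsamples)\,\dudley[\pspace,\dist]$ (the centering constant drops out since the process has zero mean). Taking expectation over the sample keeps this deterministic bound, so $\mathbb{E}[\Psi]\le 24\,\lipcst\,\dudley[\pspace,\dist]/\sqrt\nsamples\le 48\,\lipcst\,\dudley[\pspace,\dist]/\sqrt\nsamples$. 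Combining with step (i) gives (a), and the symmetric computation with $\Psi'\defeq\sup_\point\{\ex_{\sample\sim\prob}[\rv(\point,\sample)]-\ex_{\sample\sim\emp}[\rv(\point,\sample)]\}$ gives (b).

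The only genuinely delicate point is step (iii): one must verify carefully that the conditional Rademacher process is sub‑Gaussian in $\dist_\nsamples$, that Dudley's inequality applies in this conditional setting, and that the packing‑number version of the entropy integral (as defined in \cref{def:dudley}) upper‑bounds the covering‑number version used in the standard chaining statement — all of which are routine once the metric rescaling $\dist_\nsamples\le(\lipcst/\sqrt\nsamples)\dist$ is in place. Everything else (the bounded‑difference check, the symmetrization, and the measurability/separability reduction) is standard bookkeeping.
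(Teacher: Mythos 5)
Your proof is correct and follows essentially the same route as the paper's: bounded differences (McDiarmid) for concentration around the mean, ghost-sample symmetrization, and Dudley chaining after noting the empirical pseudometric is dominated by $(\lipcst/\sqrt{\nsamples})\,\dist$. The only difference is bookkeeping — the paper first recenters $\rv$ by its $\prob$-mean, which doubles the Lipschitz constant and the range and yields the stated factors $48$ and $2$, whereas you work with the uncentered deviation directly and obtain the sharper constants $24$ and $1$, which of course still imply the claimed bound.
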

\begin{proof}
    First, let us note that we can assume that $\ex_{\sample \sim \prob}[\rv(\point, \sample)] = 0$ provided that we prove the bound above with the left-hand side divided by a factor two. Indeed, %
    considering the random variables $\altrv(\point, \sample) \defeq \rv(\point, \sample) - \ex_{\samplealt \sim \prob}[\rv(\point, \samplealt)]$, we see that $\altrv$ satisfy the assumptions of the lemma, albeit with the constants $\lipcst \gets 2\lipcst$, $\lbcst \gets \lbcst - \ubcst$ and $\ubcst \gets \ubcst - \lbcst$.
    Moreover, we only prove the assertion $(a)$ since the $(b)$ follows from $(a)$ with $\rv \gets - \rv$.
    
    \noindent \underline{Step 1: Bound on the expectation.}
    First, we focus on bounding the expectation of the quantity
    \begin{align}
        \sup_{\point \in \pspace} \{\ex_{\sample \sim \emp} \rv(\point, \sample) \}\,.
    \end{align}
    By the symmetrization principle (\eg \cite[Lem.~11.4]{boucheronconcentrationinequalitiesnonasymptotic2013}), with $\rad_1,\dots,\rad_\nsamples$ i.i.d.~Rademacher random variables,
    \begin{align}
        \ex_{}[\sup_{\point \in \pspace}\left\{ \ex_{\sample \sim \emp} \rv(\point, \sample) \right\}] 
        &\le 
        2\ex_{}[\sup_{\point \in \pspace} \empex \rad_\ind \rv(\point, \sample_\ind)]\,.
    \end{align}
    Take $\point, \pointalt \in \pspace$. By the Lipschitz property of $\rv$, with $\sample \sim \prob$, for any $\ind = 1, \dots, \nsamples$, the random variable
    \begin{align}\label{eq:aux_symmetrization}
        \frac{\rad_\ind \parens*{\rv(\point, \sample) - \rv(\pointalt, \sample)}}{\sqrt{\nsamples} \lipcst}
    \end{align}
    is bounded, in absolute value, by $\frac{\dist(\point, \pointalt)}{\sqrt{ \nsamples} }$ and as such it is sub-Gaussian with parameter $\frac{\dist(\point, \pointalt)^2}{{ \nsamples} }$ by Hoeffding's lemma (\eg \cite[Lem.~2.2]{boucheronconcentrationinequalitiesnonasymptotic2013}).
    As a consequence, by independence, the random variable
\begin{align}
       \sum_{\ind = 1}^\nsamples \frac{\rad_\ind \parens*{\rv(\point, \sample) - \rv(\pointalt, \sample)}}{\sqrt{\nsamples} \lipcst}
\end{align}
    is sub-Gaussian with parameter $\dist(\point, \pointalt)^2$. Since, in addition, it is zero-mean, we can invoke Dudley's bound (\eg \cite[Cor.~13.2]{boucheronconcentrationinequalitiesnonasymptotic2013}) to get that,
    \begin{align}
\ex_{}[\sup_{\point \in \pspace}\inv{\sqrt \nsamples \lipcst} \sum_{\ind = 1}^\nsamples \rad_\ind \rv(\point, \sample_\ind)] \leq 
    12 \dudley[\pspace, \dist]\,,
    \end{align}
    or, in other words, by \eqref{eq:aux_symmetrization}, that
    \begin{align}\label{eq:concentration_first_step}
        \ex_{}[\sup_{\point \in \pspace} \{\ex_{\sample \sim \emp} \rv(\point, \sample) \}]
      \leq  \frac{24 \lipcst \dudley[\pspace, \dist]}{\sqrt{\nsamples} }\,.
    \end{align}
    \noindent \underline{Step 2: Concentration inequality.}
    Since the functions $\rv$ are uniformly bounded,\\
    $\sup_{ \point \in \pspace }\left\{ \ex_{\sample \sim \emp} \rv(\point, \sample)  \right\}$, seen as a function of $(\sample_1,\ldots,\sample_\nsamples)$, satisfies the bounded difference property with constant $\ubcst - \lbcst$.
Therefore, the bounded difference inequality (\eg \cite[Thm.~6.2]{boucheronconcentrationinequalitiesnonasymptotic2013}) readily yields that, with probability at least $1 - \thres$,
    \begin{align}
        \sup_{\point \in \pspace}\left\{ \ex_{\sample \sim \emp} \rv(\point, \sample) \right\}
        &\leq 
        \ex_{}{\sup_{\point \in \pspace}\left\{ \ex_{\sample \sim \emp} \rv(\point, \sample) \right\}}
        + (\ubcst - \lbcst) \sqrt{\frac{\log \frac{1}{\thres}}{2\nsamples}}\\
        &\leq
        \frac{24\lipcst\dudley[\pspace, \dist]}{\sqrt{\nsamples}}
        + (\ubcst - \lbcst)\sqrt{\frac{\log \frac{1}{\thres}}{2\nsamples}}
    \end{align}
    where we plugged in \cref{eq:concentration_first_step}, the bound on the expectation from the first step.
\end{proof}

\subsection{Dudley's integral bounds}

\begin{lemma}\label{lemma:dudley-product-space}
    Let $(\pspace_1, \dist_1)$ and $(\pspace_2, \dist_2)$ be two metric spaces, and consider $\pspace \defeq \pspace_1 \times \pspace_2$ equipped with the distance $\dist \defeq \const_1 \dist_1 + \const_2 \dist_2$ with $\const_1, \const_2 > 0$.
    Then
    \begin{align}
        \dudley[\pspace, \dist] \leq \const_{1} \dudley[\pspace_1, \dist_1] + \const_2 \dudley[\pspace_2, \dist_2]\,.
    \end{align}
\end{lemma}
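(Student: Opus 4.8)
The plan is to reduce the statement to a sub-multiplicative estimate on covering (equivalently packing) numbers of the product, and then to combine this with the elementary inequality $\sqrt{a+b}\le\sqrt a+\sqrt b$ and a change of variables inside Dudley's integral. I would carry this out in two steps, doing the combinatorial work first and the analytic bookkeeping second.

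\textbf{Step 1: a product bound on entropy numbers.} I would first establish that for all $\eps_1,\eps_2>0$,
\begin{equation*}
    N(\const_1\eps_1+\const_2\eps_2,\,\pspace,\,\dist)\;\le\;N(\eps_1,\pspace_1,\dist_1)\cdot N(\eps_2,\pspace_2,\dist_2)\,.
\end{equation*}
This is cleanest at the level of covering numbers: if $C_1$ is an $\eps_1$-cover of $(\pspace_1,\dist_1)$ and $C_2$ is an $\eps_2$-cover of $(\pspace_2,\dist_2)$, then $C_1\times C_2$ is an $(\const_1\eps_1+\const_2\eps_2)$-cover of $(\pspace,\dist)$, since $\dist$ is exactly $\const_1\dist_1+\const_2\dist_2$; the corresponding statement for the packing numbers appearing in \cref{def:dudley} then follows from the standard two-sided comparison between packing and covering numbers (a maximal $\eps$-packing is an $\eps$-net, and each ball of a fine enough cover contains at most one packing point). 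Taking logarithms gives the subadditive estimate $\log N(\const_1\eps_1+\const_2\eps_2,\pspace,\dist)\le\log N(\eps_1,\pspace_1,\dist_1)+\log N(\eps_2,\pspace_2,\dist_2)$.

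\textbf{Step 2: integrating.} Applying $\sqrt{a+b}\le\sqrt a+\sqrt b$ and specializing $\eps_i=t/(2\const_i)$, so that $\const_1\eps_1+\const_2\eps_2=t$, yields $\sqrt{\log N(t,\pspace,\dist)}\le\sqrt{\log N(t/(2\const_1),\pspace_1,\dist_1)}+\sqrt{\log N(t/(2\const_2),\pspace_2,\dist_2)}$. Integrating over $t\in(0,\infty)$ and substituting $s=t/(2\const_i)$ in the $i$-th term turns each integral into a copy of $\dudley[\pspace_i,\dist_i]$ rescaled by $\const_i$, which is exactly the structure of the right-hand side. The only genuinely delicate point is the choice of how the resolution $t$ is allocated between the two coordinates through the functions $\eps_1(t),\eps_2(t)$ constrained by $\const_1\eps_1(t)+\const_2\eps_2(t)=t$: one must pick it so that the change of variables reproduces $\const_1\dudley[\pspace_1,\dist_1]+\const_2\dudley[\pspace_2,\dist_2]$ (and keep the packing/covering comparisons on the ``tight'' side), rather than losing a harmless universal factor; everything else is routine. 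I expect this allocation/bookkeeping to be the main — and essentially the only — obstacle.
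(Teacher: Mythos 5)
Your overall strategy mirrors the paper's (sub-multiplicativity of the packing entropy on the product, $\sqrt{a+b}\le\sqrt a+\sqrt b$, change of variables), but the allocation issue you flag at the end is genuine and cannot be fixed by bookkeeping. With any linear split $\eps_i(t)=\alpha_i t/\const_i$, $\alpha_1+\alpha_2=1$, the $i$-th term after the substitution $s=\eps_i(t)$ becomes $(\const_i/\alpha_i)\dudley[\pspace_i,\dist_i]$, and $\min_\alpha\sum_i(\const_i/\alpha_i)\dudley[\pspace_i,\dist_i]=\bigl(\sqrt{\const_1\dudley[\pspace_1,\dist_1]}+\sqrt{\const_2\dudley[\pspace_2,\dist_2]}\bigr)^2$, which is strictly larger than $\const_1\dudley[\pspace_1,\dist_1]+\const_2\dudley[\pspace_2,\dist_2]$ whenever both integrals are nonzero. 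So the covering-number route cannot reach the stated coefficients, and your Step~1 is not quite right either: the packing inequality $N(\const_1\eps_1+\const_2\eps_2,\pspace,\dist)\le N(\eps_1,\pspace_1,\dist_1)N(\eps_2,\pspace_2,\dist_2)$ does not follow from its covering analogue via the two-sided packing/covering comparison (that comparison injects extra factors of $2$ that do not cancel, and the inequality can fail outright, e.g.\ on two copies of the five-cycle graph metric with $\const_i=1$, $\eps_i=3/2$). As sketched, your argument establishes the bound only up to a universal constant factor.

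The paper sidesteps the split by asserting a packing inequality at \emph{matched} resolution, $N(t,\pspace,\dist)\le N(t,\pspace_1,\const_1\dist_1)\,N(t,\pspace_2,\const_2\dist_2)$, after which the substitution $t\mapsto t/\const_i$ reproduces the $\const_i$ coefficients exactly; closing your proof along the paper's lines would require justifying that inequality rather than the split covering bound. Be aware, however, that it does not hold at face value: with $\pspace_1=\pspace_2=\{0,1\}$, the discrete metric, and $\const_1=\const_2=1$, one has $N(2,\pspace,\dist)=2$ but $N(2,\pspace_1,\dist_1)N(2,\pspace_2,\dist_2)=1$; on that example the lemma's conclusion itself fails, since $\dudley[\pspace,\dist]=(\sqrt{2}+1)\sqrt{\log 2}>2\sqrt{\log 2}=\dudley[\pspace_1,\dist_1]+\dudley[\pspace_2,\dist_2]$. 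So the statement is true only up to a universal multiplicative constant, which is exactly what your covering-number argument delivers and what the paper's downstream use (a numerical constant absorbed into $\bigoh(\cdot)$) actually needs.
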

\begin{proof}
    Note that, for any $t > 0$, the inequality $N(t, \pspace, \dist) \leq N(t, \pspace_1, \const_1 \dist_1) \times N(t, \pspace_2, \const_2 \dist_2)$ holds, so that, by subdadditivity of the square root,
    \begin{align}
        \dudley[\pspace, \dist] &= \int_{0}^{+\infty} \sqrt{ \log N(t, \pspace, \dist)} \dd t\\
                                &\leq 
        \int_{0}^{+\infty} \sqrt{ \log N(t, \pspace_1, \const_1 \dist_1)} \dd t
        +
        \int_{0}^{+\infty} \sqrt{ \log N(t, \pspace_2, \const_2 \dist_2)} \dd t\\
        &=
        \int_{0}^{+\infty} \sqrt{ \log N(t / \const_1, \pspace_1, \dist_1)} \dd t
        +
        \int_{0}^{+\infty} \sqrt{ \log N(t / \const_2, \pspace_2, \dist_2)} \dd t\\
        &=  \const_1 \dudley[\pspace_1, \dist_1] + \const_2 \dudley[\pspace_2, \dist_2]\,,
    \end{align}
    where we performed changes of variable in to obtain the last equality.
\end{proof}

\begin{lemma}\label{lemma:dudley-segment}
    For $\const >0$,
        \begin{align}
            \dudley[{[0, \const], \abs{\cdot}}] \leq \half[\const]\parens*{1 + 2 \log 2}\,.
        \end{align}
\end{lemma}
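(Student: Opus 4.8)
The plan is to make the packing numbers of $[0,\const]$ explicit, reduce Dudley's integral to a single one–dimensional integral of $\sqrt{\log(1+\const/t)}$, and then linearize the square root by an arithmetic–geometric mean inequality so that the remaining integral can be evaluated in closed form.

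First I would bound the packing numbers. Any $t$-packing of $[0,\const]$ consists of finitely many points $x_1 < x_2 < \dots < x_k$ whose consecutive gaps are at least $t$, so $\const \geq x_k - x_1 \geq (k-1)t$, hence $N(t, [0,\const], \abs{\cdot}) \leq 1 + \const/t$; in particular $N(t, [0,\const], \abs{\cdot}) = 1$ whenever $t > \const$. Therefore the tail of Dudley's integral vanishes and $\dudley[{[0,\const], \abs{\cdot}}] = \int_0^{\const} \sqrt{\log N(t, [0,\const], \abs{\cdot})}\, dt \leq \int_0^{\const} \sqrt{\log(1 + \const/t)}\, dt$, the last integrand being integrable near $t=0$ since it behaves like $\sqrt{\log(\const/t)}$.

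Next I would apply $\sqrt{ab} \leq \frac{a+b}{2}$ with $a = \log(1+\const/t) \geq 0$ and $b = 1$, giving $\sqrt{\log(1+\const/t)} \leq \frac12 \log(1+\const/t) + \frac12$, so that $\dudley[{[0,\const], \abs{\cdot}}] \leq \frac12 \int_0^{\const} \log(1+\const/t)\, dt + \frac{\const}{2}$. The remaining integral is evaluated by the substitution $t = \const u$ together with an integration by parts: $\int_0^{\const} \log(1+\const/t)\, dt = \const \int_0^1 \bigl(\log(u+1) - \log u\bigr)\, du = \const\bigl[(2\log 2 - 1) - (-1)\bigr] = 2\const \log 2$. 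Plugging this in yields $\dudley[{[0,\const], \abs{\cdot}}] \leq \const\log 2 + \frac{\const}{2} = \frac{\const}{2}(1 + 2\log 2)$, which is the claim.

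There is no genuine obstacle in this lemma: the only step that is not entirely mechanical is the AM–GM linearization of $\sqrt{\log(\cdot)}$, which is precisely what produces the clean constant $\frac12(1+2\log 2)$; a cruder bound such as $\sqrt{\log 2 + \log(\const/t)} \leq \sqrt{\log 2} + \sqrt{\log(\const/t)}$ would only yield a weaker numerical constant. Everything else is elementary calculus.
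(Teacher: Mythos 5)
Your proof is correct, and it follows the same route as the paper: bound the packing number by $N(t,[0,\const],\abs{\cdot}) \leq 1 + \const/t$, observe that the integrand vanishes for $t$ beyond $\const$, linearize the square root of the logarithm, and evaluate $\int_0^{\const} \log(1+\const/t)\,\dd t = 2\const\log 2$. The one substantive difference is the linearization step, and here your version is the right one: you use $\sqrt{x} = \sqrt{x\cdot 1} \leq \tfrac{1}{2}(1+x)$, which yields exactly the stated constant $\tfrac{\const}{2}(1+2\log 2)$, whereas the paper's write-up uses the cruder $\sqrt{x} \leq 1+x$ and its displayed computation therefore lands on $\const(1+2\log 2)$ — twice the bound claimed in the lemma statement. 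In other words, your argument not only proves the lemma but silently repairs a factor-of-two slip in the paper's own proof (harmless for the paper's purposes, since the lemma is only used to show finiteness up to constants, but worth noting). The rest — the gap-counting argument for the packing bound and the integration by parts — is identical in substance.
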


\begin{proof}
    Noticing that $N(t, [0, \const], \abs{\cdot}) = 1$ whenever $t \geq \const$, we get that
    \begin{align}
        \dudley[{[0, \const], \abs{\cdot}}] &= \int_{0}^{\const} \sqrt{ \log N(t,[0, \const], \abs{\cdot})} \dd t\\
                                            &\leq  \int_{0}^{\const} \parens*{1 +\log N(t,[0, \const], \abs{\cdot})} \dd t\,.
    \end{align}
    Now, a rough bound on $N(t, [0, \const], \abs{\cdot})$ is $1 + \frac{\const}{t}$ which fits our purpose and yields
    \begin{align}
    \dudley[{[0, \const], \abs{\cdot}}] &\leq  \const + \int_{0}^{\const} \log \parens*{ 1 + \frac{\const}{t}} \dd t = \const \parens*{1 + 2 \log 2}\,. \end{align}
\end{proof}

\subsection{Auxiliary results}

We conclude these sections with auxiliary technical results.

The following lemma recalls basic inequalities with the logarithm function.
\begin{lemma}\label{lemma:log_ineq}
    For $0 \leq x \leq \half$, the following inequalities hold,
        \begin{align}
            \log(1 -x) \geq -2x \qquad \text{and} \qquad
            \log(1 + x) \leq x\,.
        \end{align}
\end{lemma}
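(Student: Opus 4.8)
The plan is to prove each of the two inequalities by an elementary single‑variable argument, examining the sign of the derivative of the difference of the two sides on the interval $[0,\half]$.

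First I would handle $\log(1+x)\le x$. Set $h(x)\defeq x-\log(1+x)$; then $h(0)=0$ and $h'(x)=1-\frac{1}{1+x}=\frac{x}{1+x}\ge 0$ for every $x\ge 0$ (the denominator does not vanish since $1+x\ge 1$). Hence $h$ is nondecreasing on $[0,\half]$, so $h(x)\ge h(0)=0$, which is exactly the desired bound. This argument in fact gives the inequality on all of $[0,+\infty)$, but only the restriction to $[0,\half]$ is needed here.

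Next I would handle $\log(1-x)\ge -2x$. Set $g(x)\defeq \log(1-x)+2x$; then $g(0)=0$ and, for $x\in[0,\half)$, $g'(x)=-\frac{1}{1-x}+2=\frac{1-2x}{1-x}$. On $[0,\half]$ the denominator $1-x$ stays at least $\half>0$ and the numerator $1-2x$ is nonnegative, so $g'\ge 0$ there; thus $g$ is nondecreasing on $[0,\half]$ and $g(x)\ge g(0)=0$, which gives $\log(1-x)\ge -2x$. As an alternative that sidesteps differentiation entirely, one can use the power series $-\log(1-x)=\sum_{k\ge 1}x^k/k\le\sum_{k\ge 1}x^k=\frac{x}{1-x}$, valid for $x\in[0,1)$, and then bound $\frac{x}{1-x}\le 2x$ using $1-x\ge\half$ on $[0,\half]$.

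There is no real obstacle in this lemma; the only points to watch are the endpoint $x=\half$ (covered since both inequalities are non‑strict and $g'(\half)=0$) and the non‑vanishing of the denominators $1\pm x$, both of which are automatic on $[0,\half]$.
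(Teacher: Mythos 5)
Your proof is correct; the paper simply states this lemma as a recalled basic fact without giving a proof, and your elementary derivative-sign argument (monotonicity of $x-\log(1+x)$ and of $\log(1-x)+2x$ on $[0,\half]$) is exactly the standard justification one would supply.
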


\begin{lemma}\label{lemma:non-incr-log}
    For $\alpha > 0$, the function $x \mapsto \frac{\log(\alpha + x)}{x}$ is non-increasing on $(\pospart{e^{W(1)} - \alpha}, +\infty)$.
\end{lemma}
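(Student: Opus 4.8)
The plan is to argue via the sign of the derivative. Writing $g(x) \defeq \frac{\log(\alpha+x)}{x}$, a direct computation gives for $x>0$
\begin{equation}
    g'(x) = \frac{1}{x^{2}}\parens*{\frac{x}{\alpha+x} - \log(\alpha+x)}\,,
\end{equation}
so that $g$ is non-increasing on an interval exactly when $\log(\alpha+x) \ge \frac{x}{\alpha+x}$ holds throughout it. Substituting $u = \alpha + x$ and multiplying through by $u > 0$, this is equivalent to
\begin{equation}
    \psi(u) \defeq u\parens*{1 - \log u} \le \alpha\,.
\end{equation}
First I would record that $x > \pospart{e^{W(1)} - \alpha}$ forces $u = \alpha + x > \max\parens*{e^{W(1)},\alpha} \ge e^{W(1)} > 1$ (treating the two cases $\alpha \le e^{W(1)}$ and $\alpha > e^{W(1)}$ of the positive part separately), and that moreover $u > \alpha$ always; so it suffices to prove $\psi(u) \le \alpha$ for all such $u$.

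The core is then a one-dimensional study of $\psi$. Since $\psi'(u) = -\log u$, the function $\psi$ increases on $(0,1]$ and decreases on $[1,+\infty)$, with $\psi(1)=1$, $\psi(e)=0$, and $\psi(u) \le 0$ for $u \ge e$. Hence for $u \ge e$ the inequality $\psi(u) \le 0 \le \alpha$ is immediate. For $e^{W(1)} < u < e$ I would use that $\psi$ is strictly decreasing on that range (since $e^{W(1)} > 1$) to get $\psi(u) < \psi(e^{W(1)})$, and evaluate $\psi(e^{W(1)})$ using the defining identity $W(1)\,e^{W(1)} = 1$, equivalently $e^{W(1)}\log\parens*{e^{W(1)}} = 1$: this yields $\psi(e^{W(1)}) = e^{W(1)} - e^{W(1)}\log\parens*{e^{W(1)}} = e^{W(1)} - 1$. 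One then has to combine this bound with the remaining constraint $u > \alpha$ on that range to conclude $\psi(u) \le \alpha$, which closes the argument and gives $g'(x) \le 0$.

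I expect the last step — controlling $\psi$ on $(e^{W(1)}, e)$, where $\psi$ is strictly positive — to be the main obstacle: the crude bound $\psi \le 0$ only becomes available at $u \ge e$, so one really has to exploit the exact threshold $e^{W(1)}$, namely that it is the root $>1$ of $u\log u = 1$, together with the constraint $u > \alpha$. Everything else (the derivative computation, the monotonicity table for $\psi$, and the case split coming from the positive part) is routine bookkeeping.
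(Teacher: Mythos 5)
Your derivative computation and the reduction to showing $\psi(u) \defeq u\parens*{1-\log u} \le \alpha$ for all $u = \alpha + x > \max\parens*{e^{W(1)}, \alpha}$ are correct, and the obstacle you flag at the end is not merely the hard part of the argument — it is a genuine obstruction, because the statement is false for small $\alpha$. Since $\psi$ is continuous and strictly decreasing on $[1,+\infty)$ with $\psi\parens*{e^{W(1)}} = e^{W(1)} - 1 \approx 0.763$ (using $W(1)e^{W(1)}=1$, as you computed), the inequality $\psi(u) \le \alpha$ for all $u > e^{W(1)}$ holds if and only if $\alpha \ge e^{W(1)} - 1$; the extra constraint $u > \alpha$ buys nothing when $\alpha < e^{W(1)}$, since then $u > e^{W(1)} > \alpha$ automatically. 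Concretely, for $\alpha = 0.1$ the claimed threshold is $e^{W(1)} - \alpha \approx 1.663$, yet $g(1.663) = \log(1.763)/1.663 \approx 0.341$ while $g(2) = \log(2.1)/2 \approx 0.371$: the function increases past the threshold. The true turning point is the unique root $x^*$ of $\frac{x}{\alpha+x} = \log(\alpha+x)$ (about $2.52$ when $\alpha = 0.1$), which depends on $\alpha$ and exceeds $e^{W(1)}-\alpha$ whenever $\alpha < e^{W(1)}-1$; for $\alpha \ge 1$ the function is in fact non-increasing on all of $(0,+\infty)$.

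The reason your (correct) computation clashes with the stated threshold is that the paper's own proof miscomputes the derivative: the quotient rule gives $g'(x) = x^{-2}\parens*{\frac{x}{\alpha+x} - \log(\alpha+x)}$, whereas the paper analyzes the sign of $\frac{1}{x+\alpha} - \log(x+\alpha)$, whose zero (at $u\log u = 1$, i.e.\ $u = e^{W(1)}$) is exactly where the constant $e^{W(1)}-\alpha$ comes from. So your attempt is sound up to the point where you stop, and the step you could not close cannot be closed; the lemma should either be restricted to $\alpha \ge e^{W(1)}-1$ or restated with the correct, $\alpha$-dependent threshold. As far as I can tell, \cref{lemma:non-incr-log} is not invoked anywhere else in the appendix, so the error does not propagate to the main results.
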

\begin{proof}
    Denote by $\func : x \mapsto \frac{\log(\alpha + x)}{x}$ this function, defined on $(0, +\infty)$.
    Its derivative is $\func' : x \mapsto \inv{x}\parens*{\inv{x + \alpha} - \log(x + \alpha)}$.
    But the function $x \mapsto \inv{x + \alpha} - \log(x +\alpha)$ is non-increasing, goes to $-\infty$ at infinity and its only potential zero is $e^{W(1)} -\alpha$ if it is positive,\footnote{$W$ denotes the Lambert function, \ie the inverse of the map $x \mapsto x e^x$.} which yields the result.
\end{proof}

\begin{lemma}\label{lemma:softmax_ineq}
    For $\samples \subset \R^\dims$ a compact set, $\objalt \in \contfuncs[\samples]$ and, $\probalt \in \probs$,
    \begin{align}
        \log \ex_{\sample \sim \probalt}[e^{\objalt(\sample)}] \leq \frac{\ex_{\sample \sim \probalt}[\objalt(\sample) e^{\objalt(\sample)}]}{\ex_{\sample \sim \probalt} [e^{\objalt(\sample)}]}\,.
    \end{align}
\end{lemma}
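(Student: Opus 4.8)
The plan is to reduce the statement to Jensen's inequality through a change of measure. Since $\samples$ is compact and $\objalt$ is continuous, $\objalt$ is bounded on $\samples$, so $Z \defeq \ex_{\sample \sim \probalt}[e^{\objalt(\sample)}]$ is finite and strictly positive and $\ex_{\sample \sim \probalt}[\abs{\objalt(\sample)} e^{\objalt(\sample)}]$ is finite; hence every quantity appearing in the statement is well-defined. This is a routine preliminary rather than the crux.

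Next I would introduce the tilted probability measure $\probalt'$ on $\samples$ defined by $\frac{\dd \probalt'}{\dd \probalt}(\sample) = e^{\objalt(\sample)}/Z$. Under this measure the right-hand side of the claimed inequality is exactly $\ex_{\sample \sim \probalt'}[\objalt(\sample)]$, since $\ex_{\sample \sim \probalt'}[\objalt(\sample)] = \frac{1}{Z}\ex_{\sample \sim \probalt}[\objalt(\sample) e^{\objalt(\sample)}]$; moreover, a direct computation gives $\ex_{\sample \sim \probalt'}[e^{-\objalt(\sample)}] = \ex_{\sample \sim \probalt}[e^{-\objalt(\sample)} e^{\objalt(\sample)}/Z] = 1/Z$.

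Finally I would apply Jensen's inequality to the convex map $\exp$ and the random variable $-\objalt(\sample)$ under $\probalt'$, which yields $e^{-\ex_{\sample \sim \probalt'}[\objalt(\sample)]} \leq \ex_{\sample \sim \probalt'}[e^{-\objalt(\sample)}] = 1/Z$; taking logarithms and rearranging gives $\ex_{\sample \sim \probalt'}[\objalt(\sample)] \geq \log Z$, which is precisely the asserted inequality once one substitutes back $\ex_{\sample \sim \probalt'}[\objalt(\sample)] = \ex_{\sample \sim \probalt}[\objalt(\sample) e^{\objalt(\sample)}] / \ex_{\sample \sim \probalt}[e^{\objalt(\sample)}]$ and $\log Z = \log \ex_{\sample \sim \probalt}[e^{\objalt(\sample)}]$. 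I do not anticipate any real obstacle: the only delicate point is the finiteness of the expectations, which is guaranteed by compactness of $\samples$ together with continuity of $\objalt$. Equivalently, one may note that $\ex_{\sample \sim \probalt'}[\objalt(\sample)] = \log Z + \dkl(\probalt' | \probalt)$ and conclude from the non-negativity of the relative entropy.
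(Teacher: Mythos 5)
Your proof is correct, and it takes a slightly different route from the paper's. The paper defines the cumulant generating function $\phi(t) = \log \ex_{\sample \sim \probalt}[e^{t \objalt(\sample)}]$, notes it is convex and differentiable (by continuity of $\objalt$ and compactness of $\samples$), and applies the tangent-line inequality $\phi(0) \geq \phi(1) + \phi'(1)(0-1)$ to get $\phi'(1) \geq \phi(1)$, which is exactly the statement since $\phi'(1)$ is the tilted average and $\phi(1)$ the log-partition. You instead tilt the measure explicitly to $\probalt'$ with density $e^{\objalt}/Z$ and apply Jensen to $e^{-\objalt}$ under $\probalt'$ (equivalently, invoke $\dkl(\probalt' | \probalt) \geq 0$). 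The two arguments are Legendre-dual formulations of the same fact — the tangent inequality for the convex function $\phi$ at $t=1$ is precisely the nonnegativity of the relative entropy of the tilted measure — so neither is more general, but yours makes the interpretation of the right-hand side as a Gibbs average explicit and avoids having to justify differentiation under the integral sign, while the paper's is marginally more compact. Your handling of finiteness and positivity of $Z$ via compactness and continuity is exactly what is needed; there is no gap.
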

\begin{proof}
    Define $\phi : t \mapsto  \log \ex_{\sample \sim \probalt}[e^{t \objalt(\sample)}]$ which is convex and differentiable, since $\objalt$ is continuous on the compact set $\samples$.
    Hence,
    \begin{align}
        0 = \phi(0) \geq \phi(1) + \phi'(1)(0 - 1)\,,
    \end{align}
    so that $\phi'(1) \geq \phi(1)$ which is the desired inequality.
\end{proof}

\begin{lemma}\label{lemma:study_model}
    For $a,b,c,r >0$ fixed, consider the function defined on $\R_+$ by
    \begin{align}
        \dualfunc(\dualvar) = a \dualvar + \frac{b}{\dualvar + r} - c \log(\dualvar + r)\,.
    \end{align}
    Then, for any $\overline{\dualvar} > 0$, $\dualfunc$ is strongly convex on $\left[0, \overline{\dualvar}\right]$ with strong convexity constant
    \begin{align}
        \strong \defeq \frac{2 b}{ (\overline{\dualvar}+r)^3} + \frac{c}{(\overline{\dualvar}+r)^2}\,.
    \end{align}
    and the unique solution to the minimization problem
    \begin{align}
        \min_{\dualvar \geq 0} \dualfunc(\dualvar) 
    \end{align}
    is given by,
    \begin{align}
        \sol[\dualvar] =
        \pospart*{\frac{c + \sqrt{c^2 + 4ab}}{2a} - r}\,.
    \end{align}
\end{lemma}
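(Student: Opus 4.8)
The plan is to reduce the statement to elementary one-variable calculus after the substitution $t=\dualvar+r>0$. First I would compute the first two derivatives of $\dualfunc$ on $\R_+$:
\begin{align}
    \dualfunc'(\dualvar) &= a - \frac{b}{(\dualvar+r)^2} - \frac{c}{\dualvar+r}\,, &
    \dualfunc''(\dualvar) &= \frac{2b}{(\dualvar+r)^3} + \frac{c}{(\dualvar+r)^2}\,.
\end{align}
Since $a,b,c,r>0$, the map $\dualvar\mapsto\dualfunc''(\dualvar)$ is strictly positive and strictly decreasing on $[0,+\infty)$. Hence on any interval $[0,\overline{\dualvar}]$ it attains its minimum at the right endpoint, giving $\dualfunc''(\dualvar)\geq \frac{2b}{(\overline{\dualvar}+r)^3}+\frac{c}{(\overline{\dualvar}+r)^2}=\strong$ for all $\dualvar\in[0,\overline{\dualvar}]$. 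This is exactly $\strong$-strong convexity on $[0,\overline{\dualvar}]$.

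For the minimizer, I would first note that $\dualfunc''>0$ on all of $\R_+$, so $\dualfunc$ is strictly convex there and the problem $\min_{\dualvar\geq 0}\dualfunc(\dualvar)$ has a unique solution. Setting $t=\dualvar+r$, the stationarity condition $\dualfunc'(\dualvar)=0$ rewrites as the quadratic $at^2-ct-b=0$, whose unique positive root (the discriminant satisfies $\sqrt{c^2+4ab}>c$) is $t^\star=\frac{c+\sqrt{c^2+4ab}}{2a}$. Thus the unconstrained critical point is $\dualvar^\circ=t^\star-r=\frac{c+\sqrt{c^2+4ab}}{2a}-r$. If $\dualvar^\circ\geq 0$, strict convexity makes $\dualvar^\circ$ the minimizer over $\R_+$. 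If $\dualvar^\circ<0$, then, since $\dualfunc'$ is increasing, $\dualfunc'(\dualvar)>\dualfunc'(\dualvar^\circ)=0$ for every $\dualvar\geq 0$, so $\dualfunc$ is strictly increasing on $\R_+$ and the minimizer is $0$. In both cases the minimizer equals $\pospart*{\frac{c+\sqrt{c^2+4ab}}{2a}-r}$.

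Every step here is routine; the only point requiring a little care is the case split for the constrained minimum (hence the positive part in the formula), which is dispatched by the monotonicity of $\dualfunc'$ inherited from $\dualfunc''>0$. I do not expect any genuine obstacle.
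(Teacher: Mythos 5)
Your proof is correct and follows essentially the same route as the paper: compute $\dualfunc'$ and $\dualfunc''$, bound $\dualfunc''$ from below on $[0,\overline{\dualvar}]$ by its value at the right endpoint, and solve the quadratic $at^2-ct-b=0$ in $t=\dualvar+r$ with a case split (handled via monotonicity of $\dualfunc'$) yielding the positive part. No issues.
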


\begin{proof}
$\dualfunc$ is twice differentiable and its derivatives are, for $\dualvar \geq 0$,
\begin{align}
    \dualfunc'(\dualvar) &= a - \frac{b}{(\dualvar + r)^2} - \frac{c}{\dualvar + r}\\
    \dualfunc''(\dualvar) &= \frac{2b}{(\dualvar + r)^3} + \frac{c}{(\dualvar + r)^2}\,,
\end{align}
which shows that $\dualfunc$ is strictly convex on $\R_+$ and yields its strong convexity on compact intervals. 
Then, the first order optimality condition $ \dualfunc'(\dualvar) = 0$ gives us that  
\begin{align}\label{eq:study_model_quadr}
    a(\dualvar + r)^2 - c(\dualvar + r) -{b} = 0\,,
\end{align}
which has an unique solution satisfying $\dualvar + r \geq 0$ which is given by,
\begin{align}
    \sol[\dualvar]  = \frac{c + \sqrt{c^2 + 4ab}}{2a} - r\,.
\end{align}
If $\sol[\dualvar] \geq 0$, then this is the solution we are looking for. If is not, this means that both roots of \eqref{eq:study_model_quadr} are non-positive  and therefore $\dualfunc'(0) \geq 0$ which means that $0$ is the solution to the minimization problem.
\end{proof}

\begin{lemma}\label{lemma:choice-radius-reg}
    For $\const > 0$, $\ub \radius > 0$ such that $\ub \radius \geq (4 \const)^{1/3}$,
    the inequality  
    \begin{equation}
        \ub \radius ^2 - \radius ^2 - \frac{\const}{\radius} \geq 0
    \end{equation}
    holds in particular when
    \begin{equation}
        \frac{2\const}{\ub \radius^2}
        \leq \radius  \leq 
        \ub \radius - \frac{2\const}{\ub \radius^2}
    \end{equation}
\end{lemma}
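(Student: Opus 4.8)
The plan is to fix $\ub\radius$ and $\const$ and regard the left-hand side as a single-variable function $\obj(\radius) \defeq \ub\radius^2 - \radius^2 - \const/\radius$ on the closed interval $I \defeq \bracks*{2\const/\ub\radius^2,\ \ub\radius - 2\const/\ub\radius^2}$, and to show $\obj \geq 0$ throughout $I$ by reducing to its two endpoints $\radius_- \defeq 2\const/\ub\radius^2$ and $\radius_+ \defeq \ub\radius - 2\const/\ub\radius^2$.

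First I would record the elementary consequences of the hypothesis $\ub\radius \geq (4\const)^{1/3}$, equivalently $\ub\radius^3 \geq 4\const$. Since $\const, \ub\radius > 0$, the left endpoint $\radius_-$ is positive, and $\radius_- \leq \radius_+$ holds because it is equivalent to $4\const/\ub\radius^2 \leq \ub\radius$, i.e.\ $4\const \leq \ub\radius^3$; thus $I$ is a nonempty interval contained in $(0,+\infty)$. Moreover $\radius_+ \geq \ub\radius/2$, since $\ub\radius - 2\const/\ub\radius^2 \geq \ub\radius/2$ is again equivalent to $\ub\radius^3 \geq 4\const$.

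Next I would localize the minimum by concavity: on $(0,+\infty)$ one has $\obj''(\radius) = -2 - 2\const/\radius^3 < 0$, so $\obj$ is strictly concave, and hence its minimum over the compact interval $I$ is attained at one of the endpoints $\radius_\pm$. It therefore suffices to check $\obj(\radius_-) \geq 0$ and $\obj(\radius_+) \geq 0$. For $\radius_- = 2\const/\ub\radius^2$, substitution gives $\obj(\radius_-) = \tfrac{1}{2}\ub\radius^2 - 4\const^2/\ub\radius^4$, which is nonnegative because $\ub\radius^6 = (\ub\radius^3)^2 \geq 16\const^2 \geq 8\const^2$. For $\radius_+$, expanding $(\ub\radius - 2\const/\ub\radius^2)^2$ gives $\ub\radius^2 - \radius_+^2 = 4\const/\ub\radius - 4\const^2/\ub\radius^4$, while $\radius_+ \geq \ub\radius/2$ gives $\const/\radius_+ \leq 2\const/\ub\radius$; combining, $\obj(\radius_+) \geq 2\const/\ub\radius - 4\const^2/\ub\radius^4 = \tfrac{2\const}{\ub\radius^4}(\ub\radius^3 - 2\const) \geq 0$, once more using $\ub\radius^3 \geq 4\const$. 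This completes the argument.

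Every step here is a routine computation, so there is no genuine obstacle; the only mildly delicate point is the reduction to the endpoints of $I$, and if one prefers to avoid invoking concavity it follows equally from studying the sign of $\obj'(\radius) = -2\radius + \const/\radius^2$, which vanishes only at $(\const/2)^{1/3}$ and shows $\obj$ is increasing then decreasing on $(0,+\infty)$, so that its minimum on any closed subinterval is attained at an endpoint.
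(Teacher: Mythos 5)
Your proof is correct, but it takes a genuinely different route from the paper's. The paper multiplies the inequality by $\radius$, uses $\radius \leq \ub\radius$ to weaken $\radius^3 \leq \ub\radius\,\radius^2$ and thereby reduce to the quadratic inequality $\ub\radius^2\radius - \ub\radius\,\radius^2 - \const \geq 0$, solves that quadratic exactly (its discriminant $\ub\radius(\ub\radius^3 - 4\const)$ is nonnegative precisely by the hypothesis), and then shows the prescribed interval sits inside the solution interval via the elementary bound $1 - \scalar \leq \sqrt{1-\scalar}$ for $\scalar \in [0,1]$. You instead observe that $\radius \mapsto \ub\radius^2 - \radius^2 - \const/\radius$ is strictly concave (or, equivalently, unimodal) on $(0,+\infty)$, so its minimum over the compact interval is attained at an endpoint, and you then verify nonnegativity at both endpoints by direct substitution. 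Your route avoids square roots entirely and makes the role of the hypothesis $\ub\radius^3 \geq 4\const$ transparent at each endpoint (it also guarantees the interval is nonempty, a point the paper leaves implicit); the paper's route has the minor advantage of exhibiting the full solution interval of the weakened quadratic, of which the stated interval is a convenient inner approximation. Both arguments are complete and all your endpoint computations check out.
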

\begin{proof}
    When $0 < \radius \leq  \ub \radius$, the inequation $\ub \radius ^2 - \radius ^2 - \frac{\const}{\radius} \geq 0$ is implied by
    \begin{equation}
        \ub \radius ^2 \radius - \ub \radius \radius ^2 - {\const} \geq 0\,.
    \end{equation}
    Solving the latter yields the interval
    \begin{equation}
        \bracks*{
            \frac{\ub \radius ^2 - \sqrt{\ub \radius \parens*{\ub \radius ^3 - 4 \const}}}{2 \ub \radius},
            \frac{\ub \radius ^2 + \sqrt{\ub \radius \parens*{\ub \radius ^3 - 4 \const}}}{2 \ub \radius}
        }
    \end{equation}
    and the inequality $1 - \scalar \leq \sqrt{1 - \scalar}$ for $\scalar \in [0, 1]$
    yields the result.
\end{proof}

\WAedit{
    \section{Numerical illustrations}\label{app:num}
We present numerical experiments supporting our theoretical results. On logistic and linear regression models, we illustrate that, provided the radius is large enough, the robust loss on the training distribution is indeed an upper-bound on the true loss. For $\func(\param, \sample)$ as defined in \cref{ex:logistic,ex:l2}, we estimate the following probability, as in \citet[\S7.2.A]{esfahani2018data},

\begin{equation} \prob^{\otimes^\nsamples}\left( \emprisk[\radius^2][\reg] [\func(\widehat{\param}_\nsamples, \cdot)] \geq \ex_{\sample \sim \prob}[\func(\widehat{\param}_\nsamples, \sample)]\right) \quad \text{where} \quad \widehat{\param}_\nsamples = \argmin_{\params} \emprisk[\radius^2][\reg][\func(\param, \cdot)]\,,\end{equation}
and $\prob^{\otimes^\nsamples}$ denotes the distribution of the training set $(\sample_i)_{1 \leq i \leq \nsamples}$ with $\sample_i \sim \prob$ \ac{iid}.

We observe on the plots that, for $\radius$ large enough, the above probability is close to 1, for both models and for both standard and regularized cases (as guaranteed by \cref{thm:informal-unreg,thm:informal-reg}).

\begin{figure}
    \centering
    \includegraphics[clip, trim=2.5cm 0cm 0cm 0cm, width=1\linewidth]{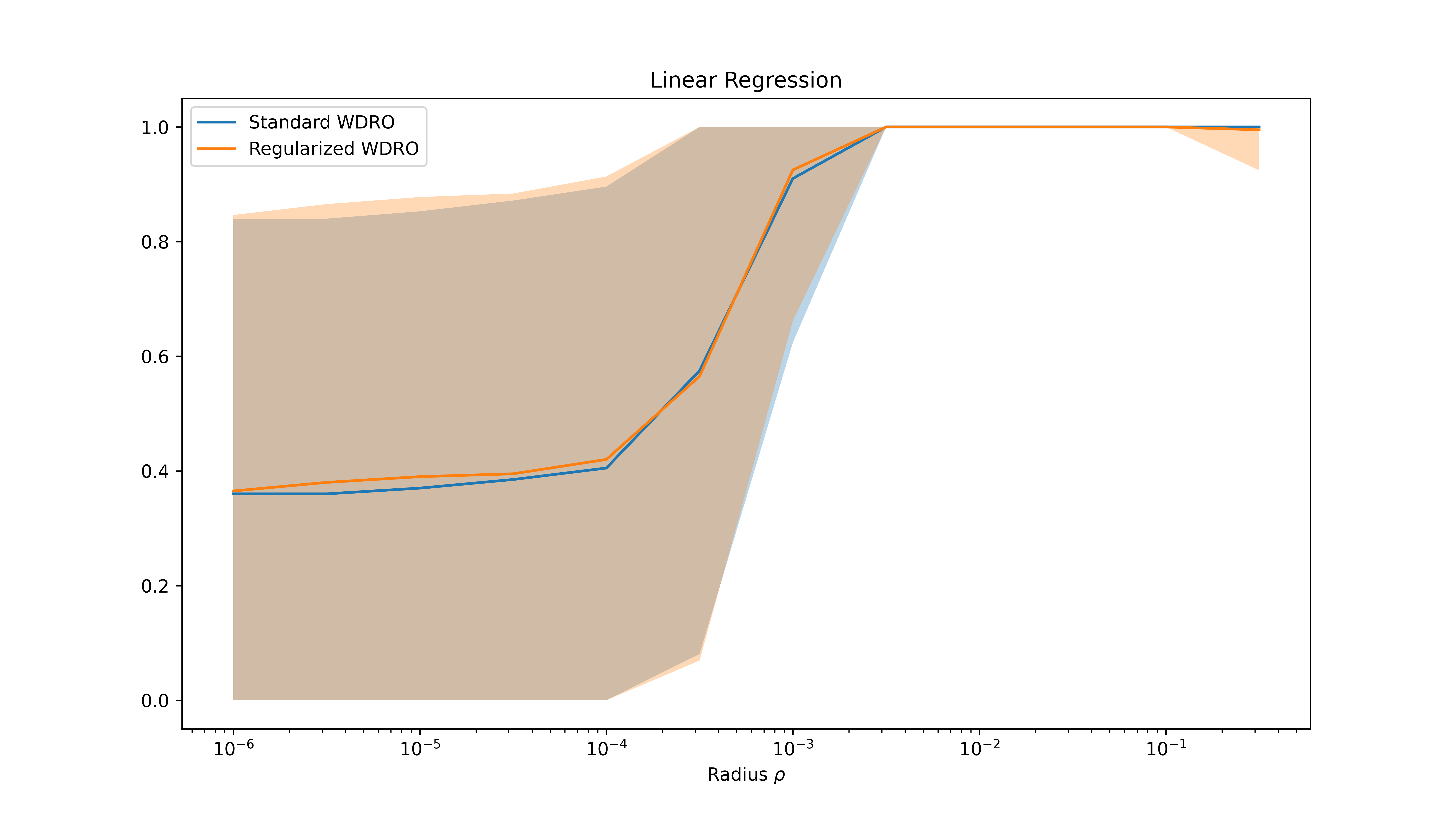}
    \caption*{\scriptsize
        Estimates of the probability $P^{\otimes^\nsamples}\left( \widehat{\mathcal R}^\epsilon_{\rho^2} (f(\hat\theta_n, \cdot)) \geq \mathbb{E}_P[f(\hat \theta_n, \xi)]\right)$ where $\hat\theta_n$ is the robust model with radius $\rho$ for the linear regression model (Example 3.7).
        $\theta$ has dimension $d = 10$, $n = 1000$ synthetic training samples are used, $\sigma$ and $\epsilon$ are chosen proportional to $\rho$ following \cref{thm:informal-reg}. For each value of $\rho$, we sample 200 training datasets and solve the WDRO problem on each of them, to obtain an estimate of the probability above. The solid line is the average over these 200 results, while the shaded area represents the standard deviation.
        As predicted by \cref{thm:informal-unreg,thm:informal-reg}, we observe that for $\rho$ large enough, the probability that the robust loss on the training set upper bounds the true risk is almost 1. We also observe that standard and regularized WDRO have almost identical generalization behaviours.
        The WDRO problems are solved by LBFGS-B combined with the formulas of Example 2 of \citet{wang2021sinkhorn}.
    }
\end{figure}
\begin{figure}
    \centering
    \includegraphics[clip, trim=2.5cm 0cm 0cm 0cm, width=1\linewidth]{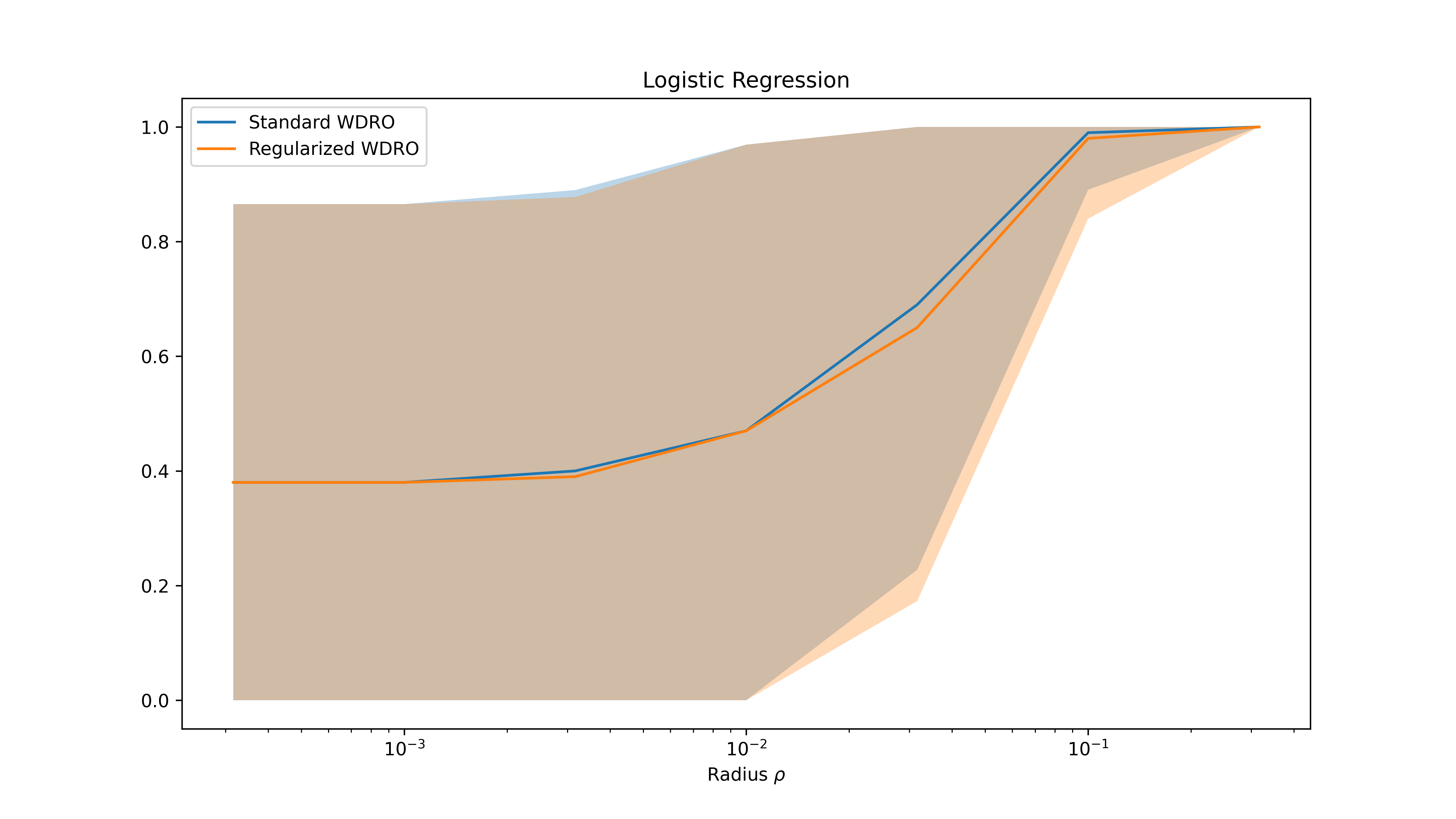}
\caption*{\scriptsize
    Estimates of the probability $P^{\otimes^\nsamples}\left( \widehat{\mathcal R}^\epsilon_{\rho^2} (f(\hat\theta_n, \cdot)) \geq \mathbb{E}_P[f(\hat \theta_n, \xi)]\right)$ where $\hat\theta_n$ is the robust model with radius $\rho$  for the logistic regression model (Example 3.6).
$\theta$ has dimension $d = 5$, $n = 500$ synthetic training samples are used, $\sigma$ and $\epsilon$ are chosen proportional to $\rho$ following \cref{thm:informal-reg}. For each value of $\rho$, we sample 100 training datasets and solve the WDRO problem on each of them, to obtain an estimate of the probability above. The solid line is the average over these 100 results, while the shaded area represents the standard deviation.
As for the previous plot and as predicted by \cref{thm:informal-unreg,thm:informal-reg}, we observe that for $\rho$ large enough, the probability that the robust loss on the training set upper bounds the true risk is almost 1. We also observe that standard and regularized WDRO have almost identical generalization behaviours. The standard WDRO problem is solved using the algorithm of \citet{blanchet2022optimal} while the regularized problem is solved using LBFGS-B combined with an explicit expression of the inner integral in the robust loss.
}

\end{figure}
}

\end{appendices}

\end{document}